\newcommand\code[1]{\texttt{#1}}
\def\Corr{\mathrm{Corr}}
\def\card{\mathrm{card}}
\newcommand{\K}{Q}
\newcommand{\vertiii}[1]{{\left\vert\kern-0.25ex\left\vert\kern-0.25ex\left\vert #1 
    \right\vert\kern-0.25ex\right\vert\kern-0.25ex\right\vert}}
\newcommand\mycolon{{\hspace{0.6mm}:\hspace{0.6mm}}}
\newcommand\asp{\hspace{4mm}}
\def\le{\widehat{l}_e}
\def\lp{\widehat{l}_p}
\def\lb{\widehat{l}_b}
\newcommand\lik{\ell}
\newcommand\altUnit{v'}
\renewcommand\time{n}
\newcommand\given{{\, | \,}}
\newcommand\data[1]{#1}
\newcommand\normal{\mathrm{Normal}}
\newcommand\mystretch{\rule[-2mm]{0mm}{5mm} }
\newtheorem{assumption}[theorem]{Assumption}
\newtheorem{thm}{Theorem}
\newcommand*{\ooverline}[1]{\overline{\overline{#1}}}
\def\scaleint#1{\vcenter{\hbox{\scaleto[3ex]{\displaystyle\int}{#1}}}}
\begin{document}

\title{Iterated Block Particle Filter for High-dimensional Parameter Learning: Beating the Curse of Dimensionality}

\author{\name Ning Ning \email patning@tamu.edu \\
       \addr Department of Statistics\\
       Texas A\&M University\\
       College Station, TX 77843, USA
        \AND
       \name Edward L.\ Ionides \email ionides@umich.edu \\
       \addr Department of Statistics\\
       University of Michigan\\
       Ann Arbor, MI 48109, USA}
       
\editor{}

\maketitle

\begin{abstract}
Parameter learning for high-dimensional, partially observed, and nonlinear stochastic processes is a methodological challenge.
Spatiotemporal disease transmission systems provide examples of such processes giving rise to open inference problems.
We propose the iterated block particle filter (IBPF) algorithm for learning high-dimensional parameters over graphical state space models with general state spaces, measures, transition densities and graph structure.
Theoretical performance guarantees are obtained on beating the curse of dimensionality (COD), algorithm convergence, and likelihood maximization.
Experiments on a highly nonlinear and non-Gaussian spatiotemporal model for measles transmission reveal that the iterated ensemble Kalman filter algorithm \citep{li2020substantial} is ineffective and the iterated filtering algorithm \citep{ionides2015inference} suffers from the COD, while our IBPF algorithm beats COD consistently across various experiments with different metrics. 
\end{abstract}

\begin{keywords}
Sequential Monte Carlo, Parameter learning, Spatiotemporal inference, Curse of dimensionality, Graphical state space models
\end{keywords}


\section{Introduction}
\label{sec:Introduction}
We firstly give the background and motivation in Section \ref{sec:Background_and_motivation} and then state our contributions in Section \ref{sec:Our_contributions}, followed by the organization of the paper in Section \ref{sec:organization}.

\subsection{Background and motivation}
\label{sec:Background_and_motivation}

Spatiotemporal data arises when measurements are made through time at a collection of spatial locations.
Spatiotemporal inference for epidemiological and ecological systems is arguably the last remaining open problem from the six challenges in time series analysis of nonlinear systems posed by \cite{bjornstad2001noisy}. 
A disease transmission system is stochastic and imperfectly observable, thus it is commonly modeled by a partially observed Markov process (POMP), otherwise known as state space model or hidden Markov model, which consists of a latent Markov process representing the time evolution of the system and a measurement process by which stochastic observations of this latent process are collected at specified time points. 
Particle filters (PFs), also known as sequential Monte Carlo (SMC) methods, are recursive algorithms that enable estimation of the likelihood of observed data and the conditional distribution of the latent process given data from a POMP model \citep{doucet2001introduction,cappe2007overview,doucet2009tutorial}. 

For the purpose of parameter learning, two iterated filtering (IF) approaches were developed, \cite{ionides2006inference} and its subsequent improvement \cite{ionides2015inference} referred to as 
IF$1$ and IF$2$ algorithms respectively, which coerce a particle filter into maximizing the likelihood function for unknown parameters.
PF methods and the parameter learning algorithms based on them (such as IF$2$) are capable of handling highly nonlinear latent processes \citep{king2008inapparent,ionides2011iterated}.
In epidemiological applications, IF$1$ and IF$2$ can considerably increase the accuracy of outbreak predictions while also allowing models whose structures reflect different underlying assumptions to be compared \citep{dobson2014mathematical}.
Unfortunately, PF suffers from rapid deterioration in performance as the model dimension increases \citep{bengtsson2008curse,snyder2008obstacles}. 
\cite{rebeschini2015can} rigorously showed that PF suffers the curse of dimensionality (COD) phenomenon, which says that the upper bound of the algorithmic filter error is exponential in the dimension of the state space of the underlying model. As expected, PF-based parameter learning algorithms suffer from the COD, limiting their applicability in high-dimensional problems.

The ensemble Kalman filter (EnKF) is a recursive filter suitable for problems with a large number of variables.
EnKF represents uncertainty in the latent state space using a finite collection of state values, and we refer to these ensemble members as particles by analogy with PF.
EnKF differs from PF by adopting a Gaussian approximation in the rule used to update the particles when filtering.
EnKF methods have been used for geophysical models in data assimilation due to their computational scalability to high dimensions \citep{houtekamer2001sequential, evensen1994sequential,katzfuss2020ensemble}.
For the parameter learning purpose, the iterated EnKF (IEnKF) algorithm extends the IF$2$ approach for parameter estimation by replacing a PF with an EnKF; it propagates the ensemble members by simulation from the dynamic model and then updates the ensemble to assimilate observations using a Gaussian-inspired rule \citep{li2020substantial}.
Given that EnKF relies on locally linear and Gaussian approximations, it can be ineffective for highly nonlinear and non-Gaussian systems \citep{ades2015equivalent,lei2010comparison,miller1999data}.
Unsurprisingly, the corresponding unsuitability carries to EnKF-based parameter learning algorithms (such as IEnKF). 

Block sampling strategies for PF were proposed with temporal blocks in \citep{doucet2006efficient}.  \cite{rebeschini2015can} investigated spatial blocks and proved that a block PF (BPF) beats the COD under certain conditions.
However, the beautiful work of \cite{rebeschini2015can} is theoretical in nature and was not anticipated to be applicable to real high-dimensional problems (page $2812$ therein).
In recent years, many efforts have been undertaken to develop practical methods for these problems by developing the  ``block'' concept, which include, but are not limited to, the following:
\cite{johansen2015blocks} proposed a method for systems identification based on both the block sampling idea and the annealed importance sampling approach;
\cite{singh2017blocking} applied the particle Gibbs algorithm inside a generic Gibbs sampler over temporal blocks to handle long time series; \cite{park2020inference} proposed a twisted particle filter model \citep{whiteley2014twisted} with iterated auxiliary PFs \citep{guarniero2017iterated} to infer on moderately high-dimensional spatiotemporal models where its particle filtering corresponds to an adapted version of the block sampling method; \cite{goldman2021spatiotemporal} proposed a blocked sampling scheme for latent state inference in high-dimensional state space models; \cite{ionides2020bagged} proposed the bagged filter for  partially observed interacting systems and showed that BPF can perform well on practical scientific models.

So far there is no high-dimensional parameter learning approach that is generically applicable over partially observed, highly nonlinear, stochastic spatiotemporal processes.
The goal of this paper is to develop such an  algorithm that is generically applicable and able to beat the COD.
Unlike the limited theoretical understanding of EnKF and hence IEnKF, the proposed algorithm has rigorous convergence analysis with a precise error bound.

\subsection{Our contributions}
\label{sec:Our_contributions}
In this paper, we propose the iterated BPF (IBPF) algorithm.
The contributions of the paper fall into four distinct categories:

\begin{enumerate}
	\item \textbf{General graphical model structure.} In this paper, we consider a general graphical POMP model having general state spaces and measures, general transition densities, and a general graph structure. Specifically, 
the latent state $(X_n)_{n\geq 0}$, the observation sequence $(Y_n)_{n\geq 1}$ that is conditionally independent given $(X_n)_{n\geq 1}$, and the auxiliary Markov chain $(\Theta_n)_{n\geq 0}$ for parameter learning purpose, have their own state spaces which are all Polish spaces endowed with their own general reference measures.
The transition densities of $X_n$, $Y_n$, and $\Theta_n$ are all time-inhomogeneous and in the general form where only standard conditions are required. 

	\vspace{-0.22 cm}
	
	\item \textbf{Innovative methodology.} 
		IBPF embeds BPF on an extended state space in an iterative scheme that constructs parameter values approaching the maximum of the likelihood function.
		It inherits from BPF the property that only observations in each block are used to update predictions, which is the key to scalability. When all vertices are in a single block, IBPF is in nature IF$2$; when there is inference on the latent process, all vertices are in a single block, and parameters are known, IBPF is PF; when there is inference on the latent process and parameters are known, IBPF is BPF; when all vertices are in a single block and there is no particle involved, IBPF is the iterated importance sampling. 
		
		\vspace{-0.22 cm}
		
	\item \textbf{Theoretical contribution.} Our IBPF algorithm has rigorous performance guarantees in terms of graph dimensions, time steps, algorithm convergence, and likelihood maximization.  In Theorem \ref{thm:main_theorem}, under standard assumptions, we rigorously show that the algorithmic error can be bounded using the dimension of a local block, uniformly both in time and in the model dimension.  Our result generalizes that of \cite{rebeschini2015can} to the time-inhomogeneous setting.  They introduced the mathematical machinery of a local particle filtering algorithm in high dimension that had not previously been applied in the study of nonlinear filtering, however being time-homogeneous is a limitation in practical applications. Furthermore, with our precise bound, we provide exact sufficient conditions needed and reveal the influences of crucial  quantities (such as the range of interacting neighborhoods and the maximal number of blocks of interaction) on the error bound. At last, we rigorously show that by iterating a Bayes map together with perturbations of the parameter variable, the algorithm converges to a point mass at the maximum likelihood estimate.
	
			
	\item \textbf{Excellent performance.}
	For spatiotemporal modeling, it is appropriate and sometimes necessary to have some parameters vary across locations, for instance, for measles transmission modeling the basic reproduction number regarding the epidemic transmission speed. To demonstrate how to use IBPF and compare its performance with IF$2$ and IEnKF, we generalize the spatiotemporal model for measles transmission covered in \cite{park2020inference} and \cite{ionides2020bagged}, by allowing location-specific parameters. Extensive experiments reveal that  IF$2$ does not scale well which confirms the phenomenon that PF does not scale well with dimensions (e.g. \cite{bengtsson2008curse}), and IEnKF performs very badly for highly nonlinear and non-Gaussian problems confirming the same phenomenon of EnKF  (e.g. \cite{ades2015equivalent}). In all experiments, IBPF is able to 
	find parameter values with a likelihood on or better than that of the true parameters consistently. The performances of IBPF with respect to iterations and block sizes are further examined, and confirm our theoretical findings.

	%
\end{enumerate}

\subsection{Organization of the paper}
\label{sec:organization}
The rest of the paper proceeds as follows.
In Section~\ref{sec:setups}, we set up our general model and provide necessary definitions.
In Section \ref{sec:Main_results}, we provide the main results of this paper, by firstly describing IBPF in Section \ref{sec:Algorithm}, conducting preliminary algorithmic analyses in Section \ref{sec:Preliminary_analysis},  providing theoretical results in Section \ref{sec:beat_COD}, and then investigating likelihood convergence in Section \ref{sec:Block_MLE}.
In Section \ref{sec:Performance}, we conduct performance analysis through a generalized spatiotemperal model for measles covered in Section \ref{sec:measles_model}, over the dataset covered in Section \ref{sec:Spatiotemporal_illustration}, and evaluate the performance of IBPF, IF$2$ and IEnKF in Section \ref{sec:Performance_analysis}. 
We conclude with discussion and extensions in Section \ref{sec:Discussion_and_extensions}.
In Appendix \ref{sec:IF2}, we provide the algorithms of IF$2$ and IEnKF for comparison. In Appendix \ref{sec:Existing_results}, existing technical results are provided which are needed for rigorous proofs following. 
In Appendix \ref{sec:Preparation}, we prepare for mathematical derivations by properly defining filtering and correlation measurement quantities. 
We defer all the lemmas and propositions in bounding the bias and variance of the algorithmic error, 
to Appendix \ref{sec:bounding_bias_lemmas} and Appendix \ref{sec:bounding_variance_lemmas}, respectively. 
In Appendix \ref{sec:proof_main_theorem}, we provide a rigorous proof of Theorem \ref{thm:main_theorem} whose detailed technical discussions are provided in Appendix \ref{appendix:discussion}. 
In Appendix \ref{appendix:block_MLE}, we provide a rigorous proof of Theorem \ref{thm:ionidesA2_modified}.
Original parameter learning results without rescaling to account for spatial and time dimensions, are provided in
Appendix \ref{Sec:training_result_data}.

\section{Model and analysis setups}
\label{sec:setups}
In this section, we firstly describe the extended POMP model $(X_n, Y_n \;;\Theta_n)$ on graph $G$ in Section \ref{sec:model_structure}, and then the partition $\mathcal{K}$ that separates $G$ into nonoverlapping blocks in Section \ref{sec:Partition}, followed by the global and local metrics necessary to conduct analysis  in Section \ref{sec:metric}.

\subsection{Extended POMP model on graph}
\label{sec:model_structure}
A general POMP model is a Markov chain $(X_n, Y_n)$, where $(X_n)_{n\geq 0}$ is a Markov chain in a Polish state space $\mathbb{X}$, while $(Y_n)_{n\geq 1}$ is conditionally independent given $(X_n)_{n\geq 1}$ in a Polish state space $\mathbb{Y}$. Here, $X_n$ is not directly observable while $Y_n$ serves as its partial and noisy observations made at time $n$. 
Define the reference measure of $X_n$ (resp. $Y_n$) on its state space $\mathbb{X}$ (resp. $\mathbb{Y}$) as $\psi$ (resp. $\phi$).
Suppose that there is an unknown auxiliary Markov chain $\Theta_n$ for parameter learning, which has its own Polish state space $\mathbb{\Theta}$ and reference measure $\lambda$.
 For $n\geq 1$, with respect to $\psi$ we define the transition density of $X_n$ as $f_{X_n|X_{n-1}}(x_n|x_{n-1}\;; \theta_n)$,  with respect to $\phi$ we define the emission density (or measurement density) of $Y_n$ as $f_{Y_n|X_n}(y_n|x_n\;; \theta_n)$, and with respect to $\lambda$ we define the transition density of $\Theta_n$ as $f_{\Theta_n|\Theta_{n-1}}(\theta_n|\theta_{n-1}\,;\sigma)$ where $\sigma$ is a nonnegative constant. That is, with our extended Markov chain model $(X_n, Y_n\;;\Theta_n)$, its transition probability is given by
\begin{align*}
&P(A|(x_{n-1},y_{n-1})\;;\theta_n)\\
&=\int \mathbbm{1}_A(x_n,y_n) f_{X_n|X_{n-1}}(x_n|x_{n-1}\;; \theta_n) f_{Y_n|X_n}(y_n|x_n\;; \theta_n)\psi(dx_n)\phi(dy_n).
\end{align*}

The state of $(X_n,Y_n,\Theta_n)$ at each time $n$ is a random field $(X_n^v,Y_n^v,\Theta_n^v)_{v\in V}$ indexed by a finite undirected graph $G=(V,E)$, where $V$ stands for the set of vertices and $E$ stands for the set of edges. The graph describes the location relationship of data and the spatial degrees of freedom of the model. Based on the network structure, the state spaces $\mathbb{X}$, $\mathbb{Y}$, and $\mathbb{\Theta}$ can be written as the product forms
$$\mathbb{X}=\prod_{v\in V}\mathbb{X}^v, \quad\quad \mathbb{Y}=\prod_{v\in V}\mathbb{Y}^v, \quad\text{and}\quad \mathbb{\Theta}=\prod_{v\in V}\mathbb{\Theta}^v.$$
Define the reference measure of $X_n^v$ on its state space $\mathbb{X}^v$ as $\psi^v$. Define the reference measure of $Y_n^v$ on its state space $\mathbb{Y}^v$ as $\phi^v$. Define the reference measure of $\Theta_n^v$ on its state space $\mathbb{\Theta}^v$ as $\lambda^v$. Similarly, based on the network structure, we have the following product-formed expressions:
$$\psi=\prod_{v\in V}\psi^v, \quad\quad \phi=\prod_{v\in V}\phi^v,  \quad\quad \lambda=\prod_{v\in V}\lambda^v.$$
With respect to $\psi^v$ we define the transition density of $X_n^v$ as $f_{X_n^v|X_{n-1}}$, with respect to $\phi^v$ we define the transition density of $Y_n^v$ as $f_{Y_n^v|X_n^v}$, and with respect to $\lambda^v$ we define the transition density of $\Theta_n^v$ as $f_{\Theta_n^v|\Theta_{n-1}}$. Similarly, based on the network structure, we have the following product-formed expressions:
\begin{align}
\label{eqn:density_product_form}
f_{X_n|X_{n-1}}(x_n|x_{n-1}\;; \theta_n)&=\prod_{v\in V}f_{X_n^v|X_{n-1}}(x_n^v|x_{n-1}\;; \theta_n^v),\nonumber\\
f_{Y_n|X_n}(y_n|x_n\;; \theta_n)&=\prod_{v\in V}f_{Y_n^v|X_n^v}(y_n^v|x_n^v\;; \theta_n^v),\\
f_{\Theta_n|\Theta_{n-1}}(\theta_n|\theta_{n-1}\,;\sigma)&=\prod_{v\in V}f_{\Theta_n^v|\Theta_{n-1}}(\theta_n^v|\theta_{n-1}\,;\sigma).\nonumber
\end{align}

\subsection{Partition of the graph}
\label{sec:Partition}
We consider a partition $\mathcal{K}$ that partitions $V$ into nonoverlapping blocks, i.e.,
\[V=\bigcup_{K\in \mathcal{K}}K, \quad\quad K\cap K'=\emptyset\; \text{for}\; K\neq K',\; K,K'\in \mathcal{K}.\]
Based on the partition, we can write 
$$\Xi=(\Xi^K)_{K\in \mathcal{K}}=(\Xi^v)_{v\in V}, \quad\quad \Xi^{W}:=(\Xi^v)_{v\in W}\;\text{for}\; \forall W\subseteq V,$$
where $\Xi$ can be $X_n$, $Y_n$ or $\Theta_n$,
as well as the associated state space $\mathbb{X}$, $\mathbb{Y}$, or $\mathbb{\Theta}$.
For any set $W\subseteq V$, we use
$(\mathbb{X}\times\mathbb{\Theta})^W$ and $\mathbb{X}^W\times\mathbb{\Theta}^W$ interchangeably and
define
 \begin{equation}
\label{eqn:block_measure}
\psi^W(dx_n^W):=\prod_{v\in W}\psi^v (dx_n^v)\quad\text{and}\quad \lambda^W(d\theta_n^W):=\prod_{v\in W}\lambda^v (d\theta_n^v).
\end{equation}

We define the distance $d$ as the length of the shortest path in the graph $G$ connecting two vertices, based on which we define for each vertex $v\in V$ the $r$-neighborhood $N(v)$
as 
$$N(v):=\{v'\in V: d(v,v')\leq r\}.$$ 
For integers $0\leq m\leq n$, denote $$\Xi_{m:n}:=\{\Xi_m,\Xi_{m+1},\cdots,\Xi_n\}, $$
where $\Xi$ can be $X$, $Y$, or $\Theta$.
Suppose that, for $n\geq 1$, the conditional distribution of $X_n^v$ given $X_{0:n-1}$ depends on $X_{n-1}^{N(v)}$ only and then we have
$$f_{X_n^v|X_{n-1}}(x_n^v|x_{n-1}\;; \theta_n^v)=f_{X_n^v|X_{n-1}}(x_n^v|\overline{x}_{n-1}\;; \theta_n^v),$$
whenever $x_{n-1}^{N(v)}=\overline{x}_{n-1}^{N(v)}$ where $x,\overline{x}\in \mathbb{X}$ and $x\neq \overline{x}$. 
That is, if $x_{n-1}$ and $\overline{x}_{n-1}$ coincide on the neighbouring vertices of $v$, then their associated transition densities are the same.
Similarly suppose that, for $n\geq 1$,  the conditional distribution of $\Theta_n^v$ given $\Theta_{0:n-1}$ depends on  $\Theta_{n-1}^{N(v)}$ only and then we have
$$f_{\Theta_n^v|\Theta_{n-1}}(\theta_n^v|\theta_{n-1}\,;\sigma)=f_{\Theta_n^v|\Theta_{n-1}}(\theta_n^v|\overline{\theta}_{n-1}\,;\sigma),$$
whenever $\theta_{n-1}^{N(v)}=\overline{\theta}_{n-1}^{N(v)}$  where $\theta,\overline{\theta}\in \mathbb{\Theta}$ and $\theta\neq \overline{\theta}$. 
An illustration of the dependence with $r=1$ is provided in Figure \ref{f:Dependency_graph}.

\begin{figure}[htb!]
\centering
\includegraphics[width=0.68\textwidth,height=8.5cm]{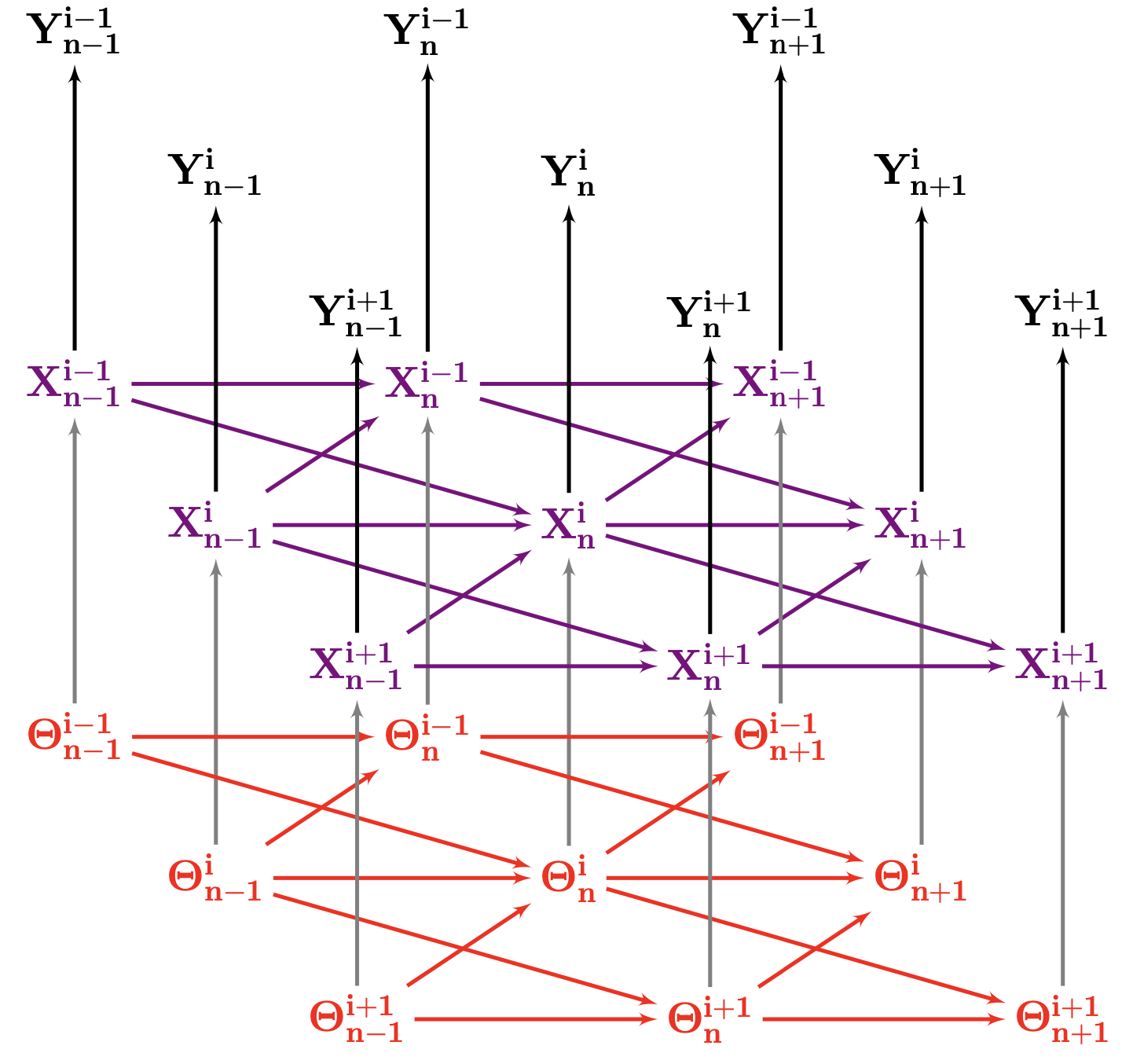}
\caption{\footnotesize Illustration of $1$-neighborhood dependence.}
\label{f:Dependency_graph}
\end{figure}

For any sets $W,W'\subseteq V$, define
\begin{equation}
\label{eqn:distance_blocks_def}
d(W,W'):=\min_{v\in W}\min_{v'\in W'} d(v,v'),
\end{equation}
based on which we can define the collection of blocks that interact with any block $K\in \mathcal{K}$ in one step of the dynamics as 
\begin{equation}
\label{eqn:block_neighborhood}
N(K):=\{K'\in \mathcal{K}: d(K,K')\leq r\}.
\end{equation}
Given a set $W\subseteq V$, denote the $r$-inner boundary of $W$ as the subset of vertices in $W$ that can interact with vertices outside $W$,
\begin{equation}
\label{eqn:partial_definition}
\partial W:=\{v\in W: N(v)\nsubseteq W \},
\end{equation}
and denote the interior of $W$ as
\begin{equation}
\label{eqn:interior_definition}
\operatorname{int}(W):=W\backslash\partial W.
\end{equation}
We now denote some quantities which will be used frequently throughout the paper:
the maximal size of one single block in the partition $\mathcal{K}$ 
\begin{equation}
\label{eqn:maxsize_block}
|\mathcal{K}|_{\infty}:=\max_{K\in\mathcal{K}}\card(K),
\end{equation}
where $\card(K)$ denotes the cardinality of $K$;
the maximal number of vertices that interact with one single vertex in its $r$-neighborhood  in one step of the dynamics
\begin{equation}
\label{eqn:maxn_inter_vertices}
\Delta:=\max_{v\in V}\card\{v'\in V:d(v,v')\leq r\};
\end{equation}
the maximal number of blocks that interact with one single block  in one step of the dynamics
\begin{equation}
\label{eqn:maxn_inter_blocks}
\Delta_\mathcal{K}:=\max_{K\in\mathcal{K}}\card\{K'\in\mathcal{K}:d(K,K')\leq r\}.
\end{equation}


\subsection{Metrics}
\label{sec:metric}
We assume that the process $(X,Y;\;\Theta)$ is realized on its
canonical probability space; denote $\mathbf{P}$ and $\mathbf{E}$ as the  probability measure and expectation on that space, respectively.
We use the functional analytic notation $\rho(g)$ for the integral of a measurable function $g$ with respect to the measure $\rho$ (provided this integral exists),
$$\rho(g):=\int g d\rho=\int g(x) d\rho(x)=\int g(x) \rho(dx).$$
Between two random measures $\rho$ and $\rho'$ on space $\mathbb{S}$, we define the distance
\begin{equation}
\label{eqn:vertiii_definition}
\vertiii{\rho- \rho'}:=\sup_{g\in \mathcal{S}:|g|\leq 1}\left[\mathbf{E}|\rho(g)-\rho'(g)|^2\right]^{1/2},
\end{equation}
where $\mathcal{S}$ denotes the class of measurable
functions $g: \mathbb{S}\rightarrow \mathbb{R}$,
and define the local distance, for $\K\subseteq V$, 
\begin{equation}
\label{eqn:vertiii_K_definition}
\vertiii{\rho- \rho'}_{\K}:=\sup_{g\in \mathcal{S}^\K:|g|\leq 1}\left[\mathbf{E}|\rho(g)-\rho'(g)|^2\right]^{1/2}.
\end{equation}
Here, $\mathcal{S}^\K$ denotes the class of measurable
functions $g: \mathbb{S}\rightarrow \mathbb{R}$ such that $g(x)=g(\overline{x})$ whenever $x^{\K}=\overline{x}^{\K}$.
That is, $\mathcal{S}^\K$ is the class of measurable
functions such that when $x$ and $\overline{x}$ coincide on the set $\K$ then their associated function values are the same.
Similarly, we define the total variation distance between two probability measures $\rho$ and $\rho'$ on $\mathbb{S}$
\begin{equation}
\label{eqn:tvd}
\|\rho-\rho'\|:=\sup_{g\in \mathcal{S}:|g|\leq 1}|\rho(g)-\rho'(g)|,
\end{equation}
and define the local total
variation distance, for $\K\subseteq V$, 
\begin{equation}
\label{eqn:ltvd}
\|\rho-\rho'\|_{\K}:=\sup_{g\in \mathcal{S}^{\K}:|g|\leq 1}|\rho(g)-\rho'(g)|.
\end{equation}

\section{Main results}
\label{sec:Main_results}
In this section, we describe our IBPF algorithm for parameter learning over general graphical POMP models in Section \ref{sec:Algorithm}, conduct its preliminary algorithmic analyses in Section \ref{sec:Preliminary_analysis},  establish its theoretical guarantees on algorithm performances and convergences in Section \ref{sec:beat_COD}, and then investigate maximum
likelihood estimates (MLEs) in Section \ref{sec:Block_MLE}.

\subsection{Algorithm}
\label{sec:Algorithm} 
We propose the IBPF algorithm in Algorithm \ref{Algorithm_IBPF}.
For notational convenience, we set 
$$1{:}N:=\{1,2,\ldots,N\} \quad\quad\mbox{ for $N\in \mathbb{N}$}$$
throughout the paper.
In Algorithm \ref{Algorithm_IBPF}, $\Theta^{F,m}_{n,j}$ (resp. $X^{F,m}_{n,j}$) is the $j$-th particle in the Monte Carlo
representation of the $m$-th iteration of a filtering recursion at time $n$, where this filtering recursion is coupled with a prediction recursion represented by $\Theta^{P,m}_{n,j}$ (resp. $X^{P,m}_{n,j}$). 
The IBPF algorithm allows users to infer initial values of latent states by incorporating initial values into the parameter set.
That is, let $\theta=(\theta_{\text{init}}, \theta_{\text{dynamic}})$, where $\theta_{\text{init}}$  stands for  initial values of latent states and
 $\theta_{\text{dynamic}}$ stands for the parameters affecting $f_{X_n|X_{n-1}}$ and $f_{Y_n|X_n}$ for some or all $n\in 1:N$.
In this case, the initial density  $f_{X_0}(x_0;\theta)$ is a Dirac mass function at $x_0=\theta_{\text{init}}$.
This case is common for scientific modeling and also is the situation addressed by our theory.
Numerical experiments on learning initial values can be seen in Section \ref{sec:Performance_analysis}. 

\begin{algorithm}[!htb]
\noindent\begin{tabular}{l}
Initial value function $f_{X_0}(\theta)$ \\
Simulator for $f_{X_n|X_{n-1}}(x_n \mid x_{n-1}\,; \theta)$, $n\in 1{:}N$ \\
Evaluator for $f_{Y_n|X_n}(y_n\mid x_n\,;\theta)$, $n\in 1{:}N$ \\
Data, $y_{1:N}$ \\
Number of iterations, $M$ \\
Number of particles, $J$ \\
Partition, $\mathcal{K}$ \\
Initial parameter swarm, $\{\Theta^0_j, \mbox{ $j\in 1{:}J$}\}$ \\
Perturbation density, $f_{\Theta_n|\Theta_{n-1}}(\theta\mid \vartheta\,;\sigma)$, $n\in 1{:}N$\\
Perturbation sequence, $\sigma_{1:M}$ \\
{\bf Output:}\rule[-1.5mm]{0mm}{6mm} 
Final parameter swarm, $\{\Theta^M_j, \mbox{ $j\in 1{:}J$}\}$ and log-likelihood $\lb$\\
1. \, For $m$ in $1\mycolon M$\rule[0mm]{0mm}{5mm}\\
2. \asp     Set $\Theta^{F,m}_{0,j}=\Theta^{m-1}_{j}$ for $j\in 1{:}J$\mystretch\\
3. \asp Set $X_{0,j}^{F,m}=f_{X_0}(\Theta^{F,m}_{0,j})$ for $j\in 1{:}J$\mystretch\\
4. \asp For $n$ in $1\mycolon N$\\
5. \asp\asp Draw $\Theta^{P,m}_{n,j}\sim f_{\Theta_n|\Theta_{n-1}}(\theta_n\mid \Theta^{F,m}_{n-1,j}\,; \sigma_m)$ for $j\in 1{:}J$\mystretch\\
6. \asp\asp   Draw $X_{n,j}^{P,m}\sim f_{X_n|X_{n-1}}(x_n\mid X^{F,m}_{n-1,j}\;; \Theta^{P,m}_{n,j})$ for $j\in 1{:}J$
\mystretch\\
7. \asp\asp For $K\in\mathcal{K}$\\
8. \asp\asp\asp\asp   Compute $w_{n,j}^{K,m} = \prod_{v\in K}f_{Y_n^v|X_n^v}(y^{v}_n \mid X_{n,j}^{v,P,m}\; ; \Theta^{v,P,m}_{n,j})$ for $j\in 1{:}J$  \mystretch\\
9. \asp\asp\asp\asp   Draw $s_{1:J}^{K,m}$ with $\operatorname{Prob}(s_j^{K,m}=i)=  w_{n,i}^{K,m}\Big/\sum_{j=1}^J w_{n,j}^{K,m}$  \\
10. \asp\asp End For\\ %
11. \asp\asp  Set $X^{F,m}_{n,j}=(X^{K,F,m}_{n,j})_{K\in \mathcal{K}}$ where $X^{K,F,m}_{n,j}=X^{K,P,m}_{n,s_j^{K,m}}$ for $j\in 1{:}J$\\  
12. \asp\asp  Set $\Theta^{F,m}_{n,j}=(\Theta^{K,F,m}_{n,j})_{K\in \mathcal{K}}$ where $\Theta^{K,F,m}_{n,j}=\Theta^{K,P,m}_{n,s_j^{K,m}}$ for $j\in 1{:}J$ \mystretch\\
13. \asp End For\\ %
14. \asp   Set $\Theta^{m}_{j}=(\Theta^{F,m}_{n,j})$ for $j\in 1{:}J$\\
15. End For\\
16. Set $\lb=\sum_{n=1}^N \sum_{K\in\mathcal{K}}\log (\frac{1}{J}\sum_{j=1}^J w_{n,j}^{K,M})$
\end{tabular}
\caption{(The IBPF algorithm)}
\label{Algorithm_IBPF}
\end{algorithm}

\subsection{Preliminary analysis}
\label{sec:Preliminary_analysis}
Inspecting the IBPF pseudocode (Algorithm~\ref{Algorithm_IBPF}), we can see that  the same set of observations $Y_1,\ldots,Y_n$ is used in each of the $M$ iterations. Let us first focus on one of the $M$ iterations, say  $m=1$, and ignore the $m$ superscript/subscript.

Given the observations $Y_1,\ldots,Y_n$, we aim to approximate the joint nonlinear filter, for $n\geq 1$,
\begin{align*}
\pi_n(A)=\pi_n(A_x \times A_{\theta})=\mathbf{P}[X_n \in A_x, \Theta_n\in A_{\theta} \mid Y_1,\ldots,Y_n].
\end{align*}
The filter $\pi_n$ for $n\geq 1$ that we also call the true filter to differentiate with the IBPF approximated filter, can be expressed in a recursive way
\begin{equation}
\label{eqn:pi_recursion}
\pi_n= \mathsf{F}_n \pi_{n-1}, \quad\quad \pi_0=\delta(x,\theta)=\delta_x\delta_{\theta},
\end{equation}
where $\delta_x$ stands for the point mass at $x$,
with 
\begin{equation}
\label{eqn:pi_def}
\mathsf{F}_n=\mathsf{C}_n \mathsf{P}_n
\end{equation}
 evolving as follows:
\begin{align*}
\pi_{n-1} \xrightarrow[]{\text{prediction}} \pi_{n|n-1}=\mathsf{P}_n \pi_{n-1}
\xrightarrow[]{\text{correction}} \pi_n=\mathsf{C}_n\pi_{n|n-1}
\end{align*}
where $\mathsf{P}_n$ is defined as the prediction operator 
\begin{equation}
\begin{split}
\label{eqn:prediction_operator_definition}
(\mathsf{P}_n\rho)(g)=&\int  g(x_n,\theta_n)f_{X_n|X_{n-1}}(x_n\mid x_{n-1}\,;\theta_n)f_{\Theta_n|\Theta_{n-1}}(\theta_n\mid \theta_{n-1}\,; \sigma)\\
&\quad\quad\quad\quad\quad\quad\quad\quad\times\psi(dx_n)\lambda(d\theta_n)\rho(dx_{n-1},d\theta_{n-1}),
\end{split}
\end{equation}
and $\mathsf{C}_n$ is defined as the correction operator
\begin{equation}
\begin{split}
\label{eqn:correction_operator_definition}
(\mathsf{C}_n\rho)(g)=&\dfrac{
\int g(x_n,\theta_n)f_{Y_n|X_n}(Y_n\mid x_n\,;\theta_n)\rho(dx_n,d\theta_n)}
{\int f_{Y_n|X_n}(Y_n\mid x_n\,;\theta_n)\rho(dx_n,d\theta_n)},
\end{split}
\end{equation}
for any probability measure $\rho$ on $\mathbb{X}\times \mathbb{\Theta}$.

To facilitate analysis, we define an intermediate filter $\widetilde{\pi}_n$ which can be expressed in a recursive way
\begin{equation}
\label{eqn:widetilde_pi_recursion}
\widetilde{\pi}_n=\widetilde{\mathsf{F}}_n \widetilde{\pi}_{n-1},\quad\quad \widetilde{\pi}_0=\delta_x\delta_{\theta},
\end{equation}
with
\begin{equation}
\label{eqn:widetilde_pi_def} 
\widetilde{\mathsf{F}}_n=\mathsf{C}_n \mathsf{B}\mathsf{P}_n
\end{equation}
 evolving as follows:
\begin{align*}
\widetilde{\pi}_{n-1} \xrightarrow[]{\text{prediction}} \widetilde{\pi}_{n|n-1}=\mathsf{P}_n\widetilde{\pi}_{n-1}
\xrightarrow[\text{correction}]{\text{blocking}}  \widetilde{\pi}_n=\mathsf{C}_n \mathsf{B} \widetilde{\pi}_{n|n-1},
\end{align*}
where, for any measure $\rho$ on $\mathbb{X}\times \mathbb{\Theta}$, $\mathsf{B}$ is defined as the blocking operator 
\begin{equation}
\label{eqn:blockingoperator}
\mathsf{B}\rho:=\bigotimes_{K\in \mathcal{K}}\mathsf{B}^K \rho,
\end{equation}
with $\mathsf{B}^K \rho$ being the marginal of $\rho$ on $(\mathbb{X}\times \mathbb{\Theta})^K$. 
Before we can explicitly show the effect of $\mathsf{B}^K$ on $\widetilde{\mathsf{F}}_n\rho$ for any $n\geq 1$ and any measure $\rho$ on $\mathbb{X}\times \mathbb{\Theta}$, we need to first define the block versions of the prediction operator $\mathsf{P}_n$ given in \eqref{eqn:prediction_operator_definition} and correction operator $\mathsf{C}_n$ given in \eqref{eqn:correction_operator_definition}, as follows: 
define $\mathsf{P}_n^K$ as the 
prediction operator specific to block $K$ 
\begin{equation}
\begin{split}
\label{eqn:block_prediction_operator_definition}
(\mathsf{P}_n^K\rho_1)(g)=&\int  g(x_n^K,\theta_n^K)\prod_{v\in K}f_{X_n^v|X_{n-1}}(x_n^v\mid x_{n-1}\,;\theta_n^v)f_{\Theta_n^v|\Theta_{n-1}}(\theta_n^v\mid \theta_{n-1}\,; \sigma)\\
&\quad\quad\quad\quad\quad\quad\quad\quad\times\psi^v(dx_n^v)\lambda^v(d\theta_n^v)\rho_1(dx_{n-1},d\theta_{n-1}),
\end{split}
\end{equation}
for any measure $\rho_1$ on $(\mathbb{X}\times \mathbb{\Theta})^{\cup_{K'\in N(K)}K'}$;
define $\mathsf{C}_n^K$ as the correction operator specific to block $K$
\begin{equation}
\begin{split}
\label{eqn:block_correction_operator_definition}
(\mathsf{C}_n^K\rho_2)(g)=&\dfrac{
\int g(x_n^K,\theta_n^K)\prod_{v\in K}f_{Y_n^v|X_n^v}(Y_n^v\mid x_n^v\,;\theta_n^v)\rho_2(dx_n^K,d\theta_n^K)}
{\int \prod_{v\in K}f_{Y_n^v|X_n^v}(Y_n^v\mid x_n^v\,;\theta_n^v)\rho_2(dx_n^K,d\theta_n^K)},
\end{split}
\end{equation}
for any measure $\rho_2$ on $(\mathbb{X}\times \mathbb{\Theta})^K$. Then we can write 
\begin{equation}
\label{eqn:B_K_effect}
\mathsf{B}^K\widetilde{\mathsf{F}}_n\rho=\mathsf{C}_n^K \mathsf{P}_n^K \bigotimes_{K'\in N(K)}\rho^{K'},
\end{equation}
for any measure $\rho$ on $\mathbb{X}\times \mathbb{\Theta}$.

The IBPF approximated filter denoted as $\widehat{\pi}_n$ can be expressed in a recursive way
\begin{equation}
\label{eqn:widehat_pi_recursion}
\widehat{\pi}_n=\widehat{\mathsf{F}}_n \widehat{\pi}_{n-1},
\quad\quad \widehat{\pi}_0=\delta_x\delta_{\theta},
\end{equation} 
with 
\begin{equation}
\label{eqn:widehat_pi_def} 
\widehat{\mathsf{F}}_n=\mathsf{C}_n \mathsf{B}\mathsf{S}^{J}\mathsf{P}_n
\end{equation}
evolving as follows:
\begin{align*}
\widehat{\pi}_{n-1} \xrightarrow[\text{sampling}]{\text{prediction}} \widehat{\pi}_{n|n-1}=\mathsf{S}^{J}\mathsf{P}_n\widehat{\pi}_{n-1}
\xrightarrow[\text{correction}]{\text{blocking}} \widehat{\pi}_n=\mathsf{C}_n \mathsf{B} \widehat{\pi}_{n|n-1},
\end{align*}
where $\mathsf{S}^{J}$ is defined as the sampling operator 
\begin{equation}
\label{eqn:samplingoperator}
\mathsf{S}^{J}\rho=\frac{1}{J}\sum_{j=1}^J\delta_{x_j}.
\end{equation}
for any probability measure $\rho$ and $\{x_j\}_{\{j=1,\cdots,J\}}$ being i.i.d. samples distributed according to $\rho$. We note that $\mathsf{S}^{J}\mathsf{P}_n$ corresponds to lines $5-6$ in Algorithm \ref{Algorithm_IBPF}, and $\mathsf{C}_n \mathsf{B}$  corresponds to lines $7-12$ in Algorithm \ref{Algorithm_IBPF}.

\subsection{Beating the curse of dimensionality}
\label{sec:beat_COD}

The following assumption is enforced in obtaining our main theoretical result (Theorem \ref{thm:main_theorem}): 
\begin{assumption} 
	\label{assumption}
	For any $v\in V$, $x_{n-1},x_n\in \mathbb{X}$, $y_n\in \mathbb{Y}$, $\theta_{n-1},\theta_n \in \mathbb{\Theta}$, and $n\geq 1$, we impose the following conditions:
	\begin{enumerate}[(1)]
		\item Suppose there exists $\epsilon_x>0$ such that
		$$\epsilon_x\leq f_{X_n^v|X_{n-1}}(x_n^v|x_{n-1}\;; \theta_n^v)\leq \epsilon_x^{-1}.$$
		\item Suppose there exists $\epsilon_y>0$ such that
		$$\epsilon_y \leq f_{Y_n^v|X_n^v}(y_n^v|x_n^v\;; \theta_n^v)\leq \epsilon_y^{-1}.$$
		\item Suppose there exist $\epsilon_{\theta}(\sigma)>0$ and $\sigma\geq 0$ such that 
		$$\epsilon_{\theta}(\sigma)\leq f_{\Theta_n^v|\Theta_{n-1}}(\theta_n^v\mid \theta_{n-1}\,; \sigma)\leq [\epsilon_{\theta}(\sigma)]^{-1}.$$
	\end{enumerate}
\end{assumption}
In Assumption \ref{assumption}, $(1)$ and $(2)$ are the same as the conditions enforced in Theorem $2.1$ of \cite{rebeschini2015can} which are localized versions of standard assumptions that are routinely employed in the analysis of PFs, and $(3)$ is the same condition on the transition density of $\Theta$. Similar to the global mixing assumption implying that the underlying Markov chain is strongly ergodic, a local transition density being bounded above and below as a local counterpart of the global mixing assumption, could be viewed as a local ergodicity assumption on the model.

Recall that $|\mathcal{K}|_{\infty}$ defined in 
\eqref{eqn:maxsize_block} is the maximal size of one single block in the partition $\mathcal{K}$, $\Delta$ defined in 
\eqref{eqn:maxn_inter_vertices} is the maximal number of vertices that interact with one single vertex in its $r$-neighborhood, and $\Delta_\mathcal{K}$ defined in \eqref{eqn:maxn_inter_blocks} is the maximal number of blocks that interact with a single block (including itself). 
In the following theorem, we bound the error generated by our IBPF algorithmic filter $\widehat{\pi}_n$ defined in \eqref{eqn:widehat_pi_recursion}, to the unalgorithmic true filter $\pi_n$ defined in \eqref{eqn:pi_recursion}, uniformly both in time $n$ and
in the model dimension $\card(V)$:
\begin{thm}
\label{thm:main_theorem}
With $\epsilon_x$, $\epsilon_{\theta}(\sigma)$, and $\epsilon_y$ satisfying Assumption \ref{assumption}, when 
\begin{align}
\label{eqn:main_thm_condition}
\epsilon_x\epsilon_{\theta}(\sigma)>\left(1-\frac{1}{16\Delta_{\mathcal{K}}\Delta^2} \right)^{\frac{1}{2\Delta}},
\end{align}
 for every $n\geq 0$, $K\in \mathcal{K}$ and $\K \subseteq K$, we have
\begin{align*}
\vertiii{\widehat{\pi}_n-\pi_n}_{\K}
\leq & \frac{\card(\K)}{1-e^{-\beta}}\bigg[ 7e^{-\beta}(1-\epsilon_x^{2\Delta}[\epsilon_{\theta}(\sigma)]^{2\Delta})e^{-\beta d(\K,\partial{K})}\\
&\hspace*{2cm}+\frac{40}{\sqrt{J}}[\epsilon_{\theta}(\sigma)]^{-4|\mathcal{K}|_{\infty}}\epsilon_x^{-4|\mathcal{K}|_{\infty}}\epsilon_y^{-2|\mathcal{K}|_{\infty}(\Delta_{\mathcal{K}}+1)}\Delta_{\mathcal{K}}\bigg],
\end{align*}
where $\card(\,\cdot\,)$ stands for cardinality and
\begin{align}
\label{eqn:beta_definition}
\beta=\frac{1}{2r}\log\left(\frac{1}{16\Delta_{\mathcal{K}}\Delta^2(1-\epsilon_x^{2\Delta}[\epsilon_{\theta}(\sigma)]^{2\Delta})} \right).
\end{align}
\end{thm}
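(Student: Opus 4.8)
The plan is to follow the local-stability program of \cite{rebeschini2015can}, adapted to the time-inhomogeneous extended model $(X_n,Y_n\,;\Theta_n)$, by splitting the total error through the triangle inequality into a deterministic bias contributed by the blocking operator $\mathsf{B}$ and a stochastic variance contributed by the sampling operator $\mathsf{S}^{J}$:
\begin{align*}
\vertiii{\widehat{\pi}_n-\pi_n}_{\K}\leq \vertiii{\widehat{\pi}_n-\widetilde{\pi}_n}_{\K}+\vertiii{\widetilde{\pi}_n-\pi_n}_{\K}.
\end{align*}
The first term is the blocking bias of \eqref{eqn:widetilde_pi_def} relative to \eqref{eqn:pi_def}, and the second is the Monte Carlo error of \eqref{eqn:widehat_pi_def} relative to \eqref{eqn:widetilde_pi_def}; these two terms will produce the two summands inside the bracket of the claimed bound, namely the $e^{-\beta d(\K,\partial K)}$ decay and the $J^{-1/2}$ decay respectively.

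For each term I would telescope along the filtering recursion. Writing $\Phi_{m:n}:=\widetilde{\mathsf{F}}_n\circ\cdots\circ\widetilde{\mathsf{F}}_{m+1}$ (the identity when $m=n$) and using $\widetilde{\pi}_0=\pi_0$, the hybrid measures $\Phi_{m:n}\pi_m$ interpolate between $\pi_n$ at $m=n$ and $\widetilde{\pi}_n$ at $m=0$, so that
\begin{align*}
\widetilde{\pi}_n-\pi_n=\sum_{m=1}^{n}\Big[\Phi_{m:n}\big(\mathsf{C}_m\mathsf{B}\mathsf{P}_m\,\pi_{m-1}\big)-\Phi_{m:n}\big(\mathsf{C}_m\mathsf{P}_m\,\pi_{m-1}\big)\Big].
\end{align*}
Each summand is the image under $\Phi_{m:n}$ of two measures differing only by the insertion of $\mathsf{B}$ at time $m$, and an identical telescoping with the operators $\widehat{\mathsf{F}}$ reduces the variance term to one-step images of the sampling discrepancy created by inserting $\mathsf{S}^{J}$ before the block correction.

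The crux is a local-stability estimate showing that $\Phi_{m:n}$ contracts local perturbations: a discrepancy created in a block $K'$ at time $m$ affects the block $K\supseteq\K$ at time $n$ only up to a factor decaying geometrically both in the elapsed time $n-m$ and in the block distance $d(\K,\partial K)$, with rate $\beta$ from \eqref{eqn:beta_definition}. I would obtain this by applying the Dobrushin comparison theorem to the one-step blocked kernels on the enlarged space $\mathbb{X}\times\mathbb{\Theta}$: Assumption \ref{assumption} bounds the joint one-step transition density above and below, so the interdependence coefficients between blocks are controlled uniformly in $n$, and condition \eqref{eqn:main_thm_condition} is exactly what forces the associated interdependence matrix to have norm strictly below one, whence the Neumann series converges and yields exponential decay at rate $\beta$. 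This is where the parameter chain $\Theta$ enters: because prediction acts jointly through $f_{X_n^v|X_{n-1}}$ and $f_{\Theta_n^v|\Theta_{n-1}}$, the mixing is governed by the \emph{product} $\epsilon_x\epsilon_{\theta}(\sigma)$ rather than by $\epsilon_x$ alone, which is precisely the substitution distinguishing \eqref{eqn:main_thm_condition} from its counterpart in \cite{rebeschini2015can}.

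It then remains to bound the two one-step discrepancies locally and sum. For the bias, the difference $\mathsf{B}\mathsf{P}_m\rho-\mathsf{P}_m\rho$ quantifies the cross-block correlation destroyed by blocking; bounding it through the same interdependence estimates produces the factor $1-\epsilon_x^{2\Delta}[\epsilon_{\theta}(\sigma)]^{2\Delta}$. For the variance, conditional on the predicted swarm the block resampling in $\mathsf{S}^{J}$ is an average of $J$ conditionally independent draws, so a standard Monte Carlo bound gives an $O(J^{-1/2})$ one-step error, while tracking the correction $\mathsf{C}_m$ over a block of size at most $|\mathcal{K}|_{\infty}$ together with its $\Delta_{\mathcal{K}}-1$ interacting neighbors accumulates the density bounds into $[\epsilon_{\theta}(\sigma)]^{-4|\mathcal{K}|_{\infty}}\epsilon_x^{-4|\mathcal{K}|_{\infty}}\epsilon_y^{-2|\mathcal{K}|_{\infty}(\Delta_{\mathcal{K}}-1)}\Delta_{\mathcal{K}}$. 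Propagating each one-step error through the contraction $\Phi_{m:n}$ and summing the geometric series in $n-m$ yields the uniform-in-$n$ prefactor $e^{-\beta}/(1-e^{-\beta})$, and summing per-coordinate contributions over $\K$ gives the factor $\card(\K)$. The hard part will be the Dobrushin step on the extended space: computing the interdependence matrix of the blocked, time-inhomogeneous filter jointly in $(x,\theta)$, verifying the uniform smallness of its norm from Assumption \ref{assumption}, and confirming that \eqref{eqn:main_thm_condition} produces a contraction with the explicit rate $\beta$ of \eqref{eqn:beta_definition}; the exponent bookkeeping in the variance term, though routine, must also be tracked carefully to recover the stated constants.
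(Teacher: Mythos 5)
Your plan matches the paper's proof in all essentials: the same bias/variance split through the intermediate filter $\widetilde{\pi}_n$, the same telescoping of the error along the filtering recursion, the Dobrushin comparison theorem to establish decay of correlations under \eqref{eqn:main_thm_condition} with the product $\epsilon_x\epsilon_{\theta}(\sigma)$ governing the mixing on the extended space, and the same accounting of the one-step blocking and sampling discrepancies summed as a geometric series to get the uniform-in-$n$ prefactor $e^{-\beta}/(1-e^{-\beta})$ and the $\card(\K)$ factor. The only cosmetic difference is the direction of the bias telescoping: the paper writes the $s$-th summand as $\mathsf{F}_n\cdots\mathsf{F}_{s+1}\widetilde{\mathsf{F}}_s\widetilde{\pi}_{s-1}-\mathsf{F}_n\cdots\mathsf{F}_{s+1}\mathsf{F}_s\widetilde{\pi}_{s-1}$ (perturbation applied to the blocked filter, propagated by the exact operators, whose decay of correlations is what the Dobrushin step controls), whereas you apply the perturbation to $\pi_{m-1}$ and propagate with the blocked semigroup $\Phi_{m:n}$; both directions work and recover the same constants.
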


In Theorem \ref{thm:main_theorem}, we provide a tighter bound than the important result of \cite{rebeschini2015can} (Theorem $2.1$ therein),  generalizes their results to a time-inhomogeneous setting, and made their proofs clearer. The rigorous proof of Theorem \ref{thm:main_theorem} is postponed to Appendix \ref{sec:proof_main_theorem}. 
Here, we merely interpret the upper bound in Theorem \ref{thm:main_theorem} in terms of the graph dimension and the time dimension, further discussions are provided in Appendix \ref{appendix:discussion}. For the standard PF algorithm where all observations are used to update the filtering distribution, the algorithmic error is exponential in the dimension of the model under the global metric $\vertiii{\;\cdot\;}$ defined in \eqref{eqn:vertiii_definition}.  
For the IBPF algorithm, only observations in a block, say $K$, are used to update the filtering distribution in that block. From Theorem \ref{thm:main_theorem}, we can see that under the local metric $\vertiii{\;\cdot\;}_{\K}$ defined in \eqref{eqn:vertiii_K_definition}, the algorithmic error is  merely exponential in the dimension of a set $\card(\K)$ instead of the dimension of the graph $\card(V)$. That is, our IBPF algorithm has a  rigorous performance guarantee in terms of the graph dimension, thus beating the COD.
Next,  since the upper bound in Theorem \ref{thm:main_theorem} is uniform on all the time steps $n$, our IBPF algorithm has a rigorous performance guarantee in terms of the time dimension. The second term of the bound quantifies the error due to the variance of the Monte Carlo sampling of the IBPF algorithm. As in the standard PF analysis, Monte Carlo sampling provides the $\frac{1}{\sqrt{J}}$ factor under the local metric in this local update setting. Given that each block interacts with at most $\Delta_\mathcal{K}$ neighbors in the previous time step, the $\Delta_\mathcal{K}$ factor in the second term is expected.

\subsection{MLEs}
\label{sec:Block_MLE}
The IBPF algorithm generalizes the data cloning method \citep{lele2007data,lele2010estimability}, which is based on the observation that iterating a Bayes map converges to a point mass at the maximum likelihood estimate. Combining such iterations with perturbations of model parameters improves the numerical stability of data cloning and provides a foundation for stable algorithms \citep{ionides2015inference}. To be specific, the same set of data $Y_1,\ldots,Y_N$ is used in any one of the $M$ iterations of the IBPF algorithm, given the result of the $m$-th iteration for $m\in 1{:}(M-1)$ is simply the initial value of the $(m+1)$-th iteration, we can see that  
all $M$ iterations together can be represented as
a filtering problem on $M$ replications of the data as follows:
$$\bigg\{ \underbrace{\{ Y_1,\ldots,Y_N\}}_{1},\underbrace{\{ Y_1,\ldots,Y_N\}}_{2}, \ldots, \underbrace{\{ Y_1,\ldots,Y_N\}}_{M} \bigg\}.$$
As in the previous subsections, our strategy is to analyze the original theoretical quantity and then explore its algorithmic approximation. 

The joint density of the classical POMP model can be written as
\begin{align*}
f_{X_{0:N},Y_{1:N}}(x_{0:N},y_{1:N}\;;\theta)
=f_{X_0}(x_0\;; \theta)\prod_{n=1}^N f_{X_n|X_{n-1}}(x_n|x_{n-1}\;; \theta) f_{Y_n|X_n}(y_n|x_n\;; \theta).
\end{align*}
We write $f_{Y_{1:N}}(y_{1:N}\;;\theta)$ for the marginal density of $Y_{1:N}$. Then the likelihood function is defined to be $\lik(\theta)=f_{Y_{1:N}}(y_{1:N}\;;\theta)$, where the data is a sequence of observations $y_{1:N}$. A MLE is a value $\widehat{\theta}$ that maximizes $\lik(\theta)$. We define an extended likelihood function on $\mathbb{\Theta}^{N+1}$ by
$$\breve\lik(\theta_{0:N})=\!\! \int\!\!\dots\!\!\int  dx_0\dots dx_N \bigg\{ f_{X_0}(x_0\;;\theta_0)
\prod_{n=1}^N f_{X_n|X_{n-1}}(x_n\mid x_{n-1}\;;\theta_{n})f_{Y_n|X_n}(y_n\mid x_{n}\;;\theta_{n})\bigg\}.$$
Each $m$ iteration of data cloning corresponds to an operator $T_\sigma$, which is a composition of a parameter perturbation with a Bayes map that multiplies the likelihood and renormalizes (page $2$ of \cite{ionides2015inference}), i.e.,
\begin{equation}\label{recursion:if} 
T_{\sigma} g(\theta_N)=
\frac{\int \breve\lik(\theta_{0:N})f_{\Theta_{0:N}}(\theta_{0:N}|\vartheta\,;\sigma)g(\vartheta)\,d\vartheta \, d\theta_{0:N-1}}{\int \breve\lik(\theta_{0:N})f_{\Theta_{0:N}}(\theta_{0:N}|\vartheta\,;\sigma)g(\vartheta)\,d\vartheta \, d\theta_{0:N}},
\end{equation} 
with $g$ and $T_\sigma g$ approximating the initial and final density of the
parameter swarm, where $$f_{\Theta_{0:N}}(\theta_{0:N}\given\vartheta\,;\sigma)=f_{\Theta_{0}}(\theta_0 \given\vartheta\,;\sigma)\prod_{n=1}^Nf_{\Theta_n|\Theta_{n-1}}(\theta_n\given\theta_{n-1}\,;\sigma)$$ 
and $\vartheta$ is the mean of the distribution of $\Theta_{0}$.
Iteration of the Bayes map alone has a central limit theorem that forms the theoretical basis for the data cloning methodology \citep{lele2007data,lele2010estimability}. 

IBPF is the approximation to $T_\sigma^M$ which is the $M$-th iterate of $T_\sigma$.
Following \citet{ionides2015inference}, we first show that $\lim_{m\to\infty}T_\sigma^m g=g_\sigma$ exists for every fixed $\sigma>0$, and 
as the noise intensity becomes small $\lim_{\sigma\to 0}g_\sigma$ approaches a point mass at the MLE if it exists. 
Then we show that when the number of particles $J$ and the number of iterations $M$ become large, the IBPF algorithm numerically approximates $g_\sigma$.
Let $\{\breve{\Theta}_{0:N}^m,\, m=1,2,\ldots\}$ be a Markov chain such that $\breve{\Theta}_{0:N}^1$
has density $\int f_{\Theta_{0:N}}(\theta_{0:N}\given\vartheta\,;\sigma)g(\vartheta)\,d\vartheta$
and $\breve{\Theta}_{0:N}^m$ has conditional density 
$f_{\Theta_{0:N}}(\theta_{0:N}\given\vartheta_N\,;\sigma)$ given $\breve{\Theta}_{0:N}^{m-1}=\vartheta_{0:N}$
for $m\geq 2$. Suppose that $\{\breve{\Theta}_{0:N}^m,\, m=1,2,\ldots\}$ is constructed on the canonical probability space $\Omega=\{({\theta}_{0:N}^1, {\theta}_{0:N}^2,\ldots)\}$ with ${\theta}_{0:N}^m=\breve{\Theta}_{0:N}^m(\tilde{\theta})$ for $\tilde{\theta}=({\theta}_{0:N}^1, {\theta}_{0:N}^2,\ldots)\in \Omega$. 
To consider a time-rescaled limit of $\{\breve{\Theta}_{0:N}^m,\, m=1,2,\ldots\}$
as $\sigma\rightarrow 0$, let $\{W_\sigma(t),0 \leq t\}$ be a continuous-time, right-continuous, piecewise 
constant process defined at its points of discontinuity by $W_\sigma(k\sigma^2)=\breve{\Theta}_N^{k+1}$ when $k$
is a nonnegative integer.
Consider the following assumptions, following \citet{ionides2015inference}:
\begin{enumerate}
	\item [(B$1$)] $\{W_\sigma(t),0 \leq t\leq 1\}$ converges weakly as $\sigma\to 0$ to a diffusion $\{W(t),0\leq t\leq 1\}$, in the space of right-continuous functions with left limits equipped with the uniform convergence topology.
	For any open set $A\subset \Theta$ with positive Lebesgue measure and $\epsilon>0$, there is a $\delta(A,\epsilon)>0$ such that $\mathbf{P}\big[W(t)\in A \mbox{ for all } \epsilon\leq t\leq 1\given W(0)\big]>\delta$.
	
	\item [(B$2$)] For some $t_0(\sigma)$ and $\sigma_0>0$, $W_\sigma(t)$ has a positive density on $\mathbb{\Theta}$, uniformly over the distribution of $W(0)$ for all $t>t_0$ and $\sigma<\sigma_0$.
	
	\item [(B$3$)] $\lik(\theta)$ is continuous in a neighborhood $\{\theta:\lik(\theta)>\lambda_1\}$ for some $\lambda_1<\sup_\vartheta\lik(\vartheta)$.
	\item [(B$4$)] There is an $\epsilon>0$ with $\epsilon^{-1}>f_{Y_n|X_n}(y_n\given x_n\,;\theta)>\epsilon$ for all $1\leq n\leq N$, $x_n\in\mathbb{X}$ and $\theta\in\mathbb{\Theta}$.
	\item [(B$5$)] There is a $C_1$ such that $f_{\Theta_n|\Theta_{n-1}}(\theta_n|\theta_{n-1}\,;\sigma)=0$ when $|\theta_n-\theta_{n-1}|>C_1\sigma$, for all $\sigma$.
	\item [(B$6$)] There is a $C_2$ such that $\sup_{1\leq n \leq N}|\theta_n-\theta_{n-1}|<C_1\,\sigma$ implies
	$|\breve\lik(\theta_{0:N})-\lik(\theta_N)| <C_2\, \sigma$, for all $\sigma$ and all $n$. 
\end{enumerate}

For the theoretical operator $T_\sigma^m$, \cite{ionides2015inference} proved that as $m$ goes to infinity, $T_\sigma^m$ converges to $g_\sigma$, for every fixed $\sigma>0$. Since there is no algorithmic approximation with this result, it also applies here. 
\begin{thm} [Theorem $1$, \cite{ionides2015inference}]
	\label{thm:ionides2015inference1}
	Let $T_\sigma$ be the operator in \eqref{recursion:if} and suppose (B$2$) and (B$4$) hold. There is a unique probability density $g_\sigma$ such that for any probability density $g$ on $\mathbb{\Theta}$,
	$$\lim_{m\rightarrow \infty}\| T_\sigma^m g-g_\sigma\|_1=0,$$
	where $\|g\|_1$ is the $L^1$ normal of $g$. 
\end{thm}
\cite{ionides2015inference} also showed that as the noise intensity $\sigma$ goes to zero, $g_\sigma$ approaches a point mass at the MLE if it exists, under (B$1$) -  (B$6$). Since this is about a theoretical operator, it can be applied here. But a slight modification on (B$5$) is needed, because in Assumption \ref{assumption} we assume that there exist $\epsilon_{\theta}(\sigma)>0$ and $\sigma\geq 0$ such that for any $v\in V$, $\theta_{n-1},\theta_n \in \mathbb{\Theta}$, and $n\geq 1$,
$$\epsilon_{\theta}(\sigma)\leq f_{\Theta_n^v|\Theta_{n-1}}(\theta_n^v\mid \theta_{n-1}\,; \sigma)\leq [\epsilon_{\theta}(\sigma)]^{-1},$$
which is a clear violation of (B$5$).
\begin{thm} 
	\label{thm:ionidesA3_modified}
	Suppose (B$1$) -  (B$4$) and (B$6$) hold. Suppose the following (B$5'$) holds:
	\begin{enumerate}
		\item [(B$5'$)] There is a $C_1$ such that $f_{\Theta_n|\Theta_{n-1}}(\theta_n|\theta_{n-1}\,;\sigma)=o(\sigma)$ when $|\theta_n-\theta_{n-1}|>C_1\sigma$, for all $\sigma$.
	\end{enumerate}
	Then, for $\lambda_2<\sup_{\vartheta}\lik(\vartheta)$,
	$$\lim_{\sigma\rightarrow 0}\int g_\sigma(\theta)\mathbbm{1}_{\{\lik(\theta)<\lambda_2 \}}d\theta=0.$$
\end{thm}
\begin{proof}
	Note that under the limit that $\sigma$ goes to zero, (B$5$) and (B$5'$) are the same. Then the proof can be finished simply by following that of Theorem $2$ in \cite{ionides2015inference}. 
\end{proof}

At last, we aim to show that when the number of particles $J$ and the number of iterations $M$ become large, the IBPF algorithm numerically approximates $g_\sigma$. Our proof (provided in Appendix \ref{appendix:block_MLE}) is similar to that of the second part of Theorem $1$ in \citep{ionides2015inference}, where the difference is caused by replacing equation [S26] therein which is a simplified form of Theorem $2$ in \citep{crisan2002survey} with a simplified form of our Theorem \ref{thm:main_theorem}.
\begin{thm} 
	\label{thm:ionidesA2_modified}
	Suppose (B$2$) and  (B$4$) hold. Let $\{\Theta_j^{M},\, j=1,\ldots,J\}$ be the output of IBPF, with $\sigma_m=\sigma>0$. There are positive finite constants $\widetilde{C}_{\alpha}$, $C_{\beta_1}$, and $C_{\beta_2}$ such that under conditions imposed in Theorem \ref{thm:main_theorem}, for any function $\breve{g}:\mathbb{\Theta}\rightarrow \mathbb{R}$, all $M$, every $n\geq 0$, $K\in \mathcal{K}$ and $\K \subseteq K$,
	$$\limsup_{M\rightarrow \infty}\mathbf{E}\left| \frac{1}{J}\sum_{j=1}^{J}\breve{g}(\Theta_j^{M})-\int \breve{g}(\theta)g_{\sigma}(\theta)d\theta \right| \leq \widetilde{C}_{\alpha} \|\breve{g}\|_{\infty}\card(\K)\left[e^{-C_{\beta_1} d(\K,\partial{K})}+\frac{e^{C_{\beta_2} |\mathcal{K}|_{\infty}}}{\sqrt{J}}\right],$$
	where $\|\breve{g}\|_{\infty}=\sup_{\theta}|\breve{g}(\theta)|$.
\end{thm}

\section{Application and performance analysis}
\label{sec:Performance}
In this section, we illustrate how IBPF can be used and compare its performances with those of IF$2$ and IEnKF, using the spatiotemporal model covered in Section \ref{sec:measles_model}, over the dataset covered in Section \ref{sec:Spatiotemporal_illustration}.
We gradually increase the parameter learning difficulties in $4$ cases, with consistent fairness in all experiments, in Section \ref{sec:Performance_analysis}. We implement IF$2$ and IEnKF through the \textit{spatPomp} package \citep{asfaw2023partially}. 

\subsection{Generalized spatiotemporal modeling for measles}
\label{sec:measles_model}
Measles is a highly contagious infectious disease caused by the measles virus; it spreads easily from one person to the next through coughs and sneezes of infected people. 
 In this section, we consider a generalized spatiotemporal model for disease transmission dynamics of measles within and between multiple cities.  
 
A compartment modeling framework for spatiotemporal population dynamics divides the population at each spatial location into compartments. Specifically, measles transmission at each location is modeled according to the SEIR model with $6$ compartments:
($S$) represents susceptible individuals who have not been infected yet but may experience infection later, ($E$) represents individuals exposed and carrying a latent infection, 
($I$) represents infectious individuals that have been infected and are infectious to others, ($R$) represents recovered individuals that are no longer infectious and are immune,
($B$) represents
the birth of individuals, and ($D$) represents
the death of individuals. 
 \cite{park2020inference} generalized the compartment model presented by \cite{he2010plug} to the spatiotemporal modeling setting, which is analyzed in other literature such as \cite{ionides2020bagged}. We further generalize that spatiotemporal compartment model by allowing the dynamics in each spatial location to have their own specific parameters, including different disease transmission speeds across locations. 
By demonstrating a methodology that scales to vertex-specific parameters, we open up new possibilities for spatiotemporal inference, though we focus here on testing statistical tools and so we do not engage directly in the scientific debates.

Different to the discrete-time based modelling and algorithm having time index $n\in \mathbb{N}$ in the previous sections, here the spatiotemporal model under consideration is a continuous-time Markov chain model with time index $t\in [0,\infty)$. Time discretization is a common and natural practice to link these two kinds of time notations. Specifically, for the continuous-time latent process $X(t)=(S(t),E(t),I(t),R(t))$, the corresponding discrete-time latent process is given by 
	$X_n=X(t_n)$
with 
$t_{0:n}$ being observation time. 
The number of individuals in compartments $S$, $E$, $I$, and $R$ of city $v$ at time $t$ are denoted by integer-valued random variables $S^v(t)$, $E^v(t)$, $I^v(t)$, and $R^v(t)$ respectively.
Denote $N_{ij}^{v}(t)$ as the counting process 
enumerating cumulative transitions from compartment $i$ to compartment $j$ where $i,j\in \{B,S,E,I,R,D\}$ and $i\neq j$, in city $v$ up to time $t$. 
We model the $40$ largest cities in the UK, ordered in decreasing size with $v=1$ corresponding to London.
Our model is described by the following system of stochastic differential equations, for $v\in \{1,\ldots,40\}$,
\begin{equation}
\label{eq:measles:system}
\begin{array}{lllllll}
\displaystyle dS^v(t) &=& dN_{BS}^{v}(t) &-& dN_{SE}^{v}(t) &-& dN_{SD}^{v}(t), \\
\displaystyle dE^v(t) &=& dN_{SE}^{v}(t) &-& dN_{EI}^{v}(t) &-& dN_{ED}^{v}(t), \\
\displaystyle dI^v(t) &=& dN_{EI}^{v}(t) &-& dN_{IR}^{v}(t) &-& dN_{ID}^{v}(t).
\end{array}
\end{equation} 
The total population 
\begin{equation}
\label{eq:population_requirement}
P^v(t)=S^v(t)+E^v(t)+I^v(t)+R^v(t)
\end{equation} 
 is calculated by smoothing census data and is treated as known.
Hence, by \eqref{eq:measles:system}, the number of recovered individuals $R^v(t)$ in city $v$ is defined implicitly.

The birth process $N_{BS}^{v}(t)$ is a time-inhomogeneous Poisson process with rate $\mu_{BS}^{v}(t)$ given by interpolated census data.
The transition processes $N_{EI}^{v}(t)$, $N_{IR}^{v}(t)$, $N_{S D}^{v}(t)$, $N_{E D}^{v}(t)$, and $N_{I D}^{v}(t)$ are modeled as conditional Poisson processes with per-capita rates $\mu_{EI}$, $\mu_{IR}$, $\mu_{S D}$, $\mu_{E D}$, and $\mu_{I D}$, respectively. Throughout this section, we consistently use an overline to indicate average across time and use a tilde to indicate average across time across cities.
The transition process $N_{SE}^{v}(t)$ is modeled as a negative binomial death process according to \cite{breto2009time} and \cite{breto2011compound}
with rate given by
\begin{eqnarray}
\label{eq:dEdt}
\mu_{SE}^{v}(t)
= 
\beta^v(t)
\left[ 
\left( \frac{I^v(t)+\widetilde{\iota}}{P^v(t)} \right)^{\overline{\alpha}^{v}}
+ \sum_{\altUnit \neq v} \frac{\overline{\theta}_{v\altUnit}}{P^v(t)} 
\left\{ 
\left(
\frac{ I^{\altUnit} (t)}{ P^{\altUnit}(t) }
\right)^{\overline{\alpha}^{\altUnit}} - 
\left(
\frac{I^v(t)}{P^v(t) } 
\right)^{\overline{\alpha}^{v}}
\right\}
\right] \frac{d\Gamma_{SE}^{v}(t)}{dt},\nonumber
\end{eqnarray}
where $\Gamma_{SE}^{v}(t)$ is a gamma process, whose mean is $t$ and variance is $\overline{\sigma}_{SE}^{v}t$ with $\overline{\sigma}_{SE}^{v}$ being the
over-dispersion parameter.
Here, $\beta^v(t)$ models seasonality driven by high contact rates between children at school, described by
\begin{equation}
\beta^v(t)=\begin{cases}
\big(1+\widetilde{\theta}_{a}(1-\widetilde{p})\widetilde{p}^{-1} \big)\, \overline{R}_0^v \mu_{IR} & \mbox{ during school term},\\
\big( 1-\widetilde{\theta}_{a}\big) \, \overline{R}_0^v \mu_{IR}& \mbox{ during vacation},
\end{cases} \label{eq:term}
\end{equation}
where $\widetilde{p}$ is the proportion of the year taken up by the school terms, $\overline{R}_0^v$ is the annual average basic reproductive ratio,  and $\widetilde{\theta}_{a}$ measures the reduction of transmission during school holidays.
In \eqref{eq:dEdt}, $\overline{\alpha}^v$ is a mixing exponent modeling inhomogeneous contact rates within the city $v$, and $\widetilde{\iota}$ models immigration of infected individuals which is appropriate when analyzing a subset of cities that cannot be treated as a closed system.
In \eqref{eq:dEdt}, the number of travelers from city $v$ to $\altUnit$ is denoted by $\overline{\theta}_{v\altUnit}$, constructed as fixed through time and symmetric between any two arbitrary cities, using the gravity model of \cite{xia2004measles}, 
$$
\overline{\theta}_{v\altUnit} = \widetilde{G} \cdot \frac{\;\widetilde{d}\;}{\widetilde{P}^2} \cdot \frac{\overline{P}^v \cdot \overline{P}^{\altUnit}}{d(v,\altUnit)},
$$
where $\widetilde{G}$ is a coupling parameter, $d(v,\altUnit)$ denotes the distance between city $v$ and city $\altUnit$, $\overline{P}^v$ is the average population for city $v$ across time, $\widetilde{p}$ is the average population across time across cities, and $\widetilde{d}$ is the average distance between a randomly chosen pair of cities. 

To complete the model specification, the measurement process for modeling the partial observability is defined as follows:
for $t_{0:\time}$ being observation time and 
\begin{equation}
\label{eqn:X_dynamic_experiment}
Z_{\time}^{v}=N_{IR}^{v}(t_\time)-N_{IR}^{v}(t_{\time-1})
\end{equation}
 being the number of removed infected individuals in the $n$th reporting interval,
suppose that they are quarantined once they are identified, so that reported counts comprise a fraction $\widetilde{\varrho}$ of these removal events;
the case report $\data{y}_{\time}^{v}$ is modeled as a realization of a conditionally Gaussian random variable $Y_{\time}^{v}$ via
\begin{equation}
\begin{split}
	\label{eq:obs_experiments}
	&\mathbf{P}\big[Y_{\time}^{v}{=}y\mid Z_{\time}^{v}{=}z\big]\\
	&= \mathcal{N}\big(y+0.5; \widetilde{\varrho} z,\widetilde{\varrho}(1-\widetilde{\varrho})z+\widetilde{\psi}^2\widetilde{\varrho}^2z^2\big)
	- \mathcal{N}\big(y-0.5; \widetilde{\varrho} z,\widetilde{\varrho}(1-\widetilde{\varrho})z+\widetilde{\psi}^2\widetilde{\varrho}^2z^2\big),
\end{split}
\end{equation}
where $\mathcal{N}(\,\cdot\,;\mu,\sigma^2)$ is the  cumulative distribution function of $\normal(\mu,\sigma^2)$ and $\widetilde{\psi}$ models overdispersion relative to the binomial distribution.

\subsection{Spatiotemporal illustration}
\label{sec:Spatiotemporal_illustration}

Figure \ref{f:illustration} shows a simulation (Plot A) from our model covered in Section \ref{sec:measles_model} and the real measles data (Plot B). We note that the spatiotemporal model considered in \cite{park2020inference} and \cite{ionides2020bagged} is a special case of ours, by taking location-specific parameters $\overline{\alpha}^{v}=1$ (in equation \eqref{eq:dEdt}), $\overline{R}_0^v=30$ (in equation \eqref{eq:term}) and $\overline{\sigma}_{SE}^{v}=0.15 \mbox{ year}^{1/2}$ the same for all locations, and take all the initializations $\theta_{S_0^v}=0.032$, $\theta_{E_0^v}=0.00005$, $\theta_{I_0^v}=0.00004$, and $\theta_{R_0^v}=1-\theta_{S_0^v}-\theta_{E_0^v}-\theta_{I_0^v}$ the same for all locations. In our simulation, for each location $v$, 
we draw the corresponding variables according to uniform distributions of the $[0.99, 1.0355]$-scaled range of those in \cite{ionides2020bagged}. That is, our $\overline{\alpha}^{v} \sim \text{Unif} [0.99\times 1, 1.0355\times 1]$ for each $v$.
We take the other parameters as fixed values as those of \cite{ionides2020bagged}: $\mu_{S D}=\mu_{E D}=\mu_{I D}=0.02 \mbox{ year}^{-1}$, $\mu_{EI}=\mu_{IR}=52$, 
$\widetilde{p} = 0.759$, $\widetilde{\varrho}=0.5$, $\widetilde{\psi}=0.15$, $\widetilde{\theta}_a=0.5$, $\widetilde{\iota}=0$,  $\widetilde{G}=400$. From Figure \ref{f:illustration}, we can see that the simulation shares the biennial pattern with most cities locked in phase most of the time. 
In both plots, each row is associated with a city, each column is associated with a date, and each pixel in a row represents Log(reported counts $+ 1$) of the epidemics.
We note that although the simulated data and real data in Figure \ref{f:illustration} are for $40$ cities, in the next subsection we gradually increase the number of cities in modeling and stop when the number of cities involved in the spatiotemporal data analysis is sufficient to clearly reveal the performances of algorithms in terms of COD. 
For example, in Figure \ref{f:case_1_2}, city number being $2$ indicates that we infer $4$ parameters for each of these $2$ cities only ($8$ parameters in total), and city number being $20$ indicates that we infer $4$ parameters for each of these $20$ cities only  ($80$ parameters in total). 
We would stop the test if the performances of algorithms (measured by log-likelihood) are sufficiently clear using spatiotemporal data of $14$ cities (Case $2$ in Figure \ref{f:case_1_2}) instead of testing up to $40$ cities.

\begin{figure}[htb!]
\centering
\includegraphics[width=0.7\textwidth,height=8cm]{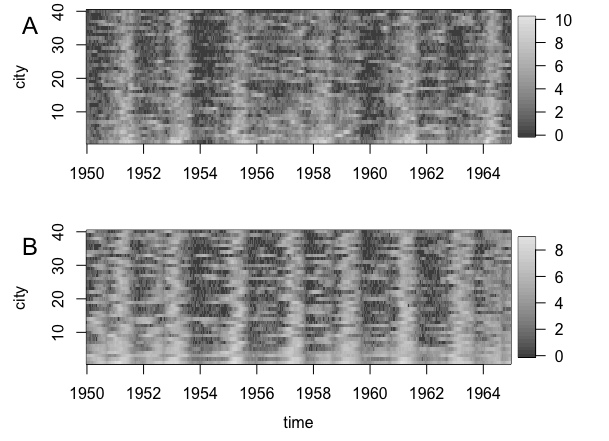}
\caption{\footnotesize Log(reported counts $+ 1$) for (A) the measles simulation from our spatiotemperal model and (B) the corresponding UK measles data.}
\label{f:illustration}
\end{figure}

\subsection{Performance analysis}
\label{sec:Performance_analysis}
We gradually increase the parameter learning difficulties in $4$ cases: in the first case, the goal is to learn initial value parameters for measles transmission dynamics for each location: $\theta_{S_0^v}$, $\theta_{E_0^v}$, $\theta_{I_0^v}$, and $\theta_{R_0^v}$; in the second case, in addition to these $4$ initial values the goal is to also learn the nonlinear parameter $\overline{R}_0^{v}$ for each location; in the third case, in addition to these $4$ initial value parameters the goal is to also learn the highly nonlinear parameter $\overline{\alpha}^{v}$ for each location; in the fourth case, the goal is to learn all the location-specific parameters, namely, $\theta_{S_0^v}$, $\theta_{E_0^v}$, $\theta_{I_0^v}$, $\theta_{R_0^v}$, $\overline{\alpha}^{v}$, $\overline{\sigma}_{SE}^{v}$, and $\overline{R}_0^{v}$. Table \ref{table:parameter_to_learn} provides an illustration of the parameters to infer for each location $v$ in four cases.
\begin{table}[t!]
	\centering
	\renewcommand{\arraystretch}{1.5}
	\begin{tabular}{|c|c|c|c|c|}
		\hline
		Parameters & Case $1$ &Case $2$  & Case $3$ & Case $4$ \\
		\hline
		$\theta_{S_0^v}$ & Yes & Yes & Yes &  Yes\\
		$\theta_{E_0^v}$ &  Yes & Yes & Yes &Yes\\
		$\theta_{I_0^v}$  &  Yes & Yes &  Yes& Yes\\
		$\theta_{R_0^v}$&  Yes &  Yes & Yes & Yes\\
		$\overline{R}_0^{v}$ &  & Yes &  & Yes \\
		$\overline{\alpha}^{v}$ &  &  & Yes &  Yes\\
		$\overline{\sigma}_{SE}^{v}$ &  &  &  &  Yes\\
		\hline
	\end{tabular}
	\caption{Parameters to infer for each location $v$ in four cases.}
	\label{table:parameter_to_learn}
\end{table}
Fairness is obtained over all experiments on all these three algorithms, as follows:
\begin{itemize}
\item Each algorithm uses the same number of iterations $M=100$ (see Algorithm \ref{Algorithm_IBPF} for notations) and the same number of particles $J=80000$.
\item We conduct $10$ replicates of all the parameter learning performance comparisons, in the way that in each replicate all algorithms start with the same initial search values drawn uniformly as follows:
$$\overline{\alpha}^{v} \sim \text{Unif}[0,2],\quad \overline{\sigma}_{SE}^{v} \sim \text{Unif}[0,1],\quad \overline{R}_0^{v} \sim \text{Unif}[25,35],$$
$$\theta_{S_0^v},\theta_{E_0^v},\theta_{I_0^v},\theta_{R_0^v}\sim \text{Unif}[0,1].$$
Here, we consider latent states of each city as portions of the population of that specific city such that $S^v(0)=\theta_{S_0^v}P^v(0)$, $E^v(0)=\theta_{E_0^v}P^v(0)$, $I^v(0)=\theta_{I_0^v}P^v(0)$, $R^v(0)=\theta_{R_0^v}P^v(0)$, and 
$$\theta_{S_0^v}+\theta_{E_0^v}+\theta_{I_0^v}+\theta_{R_0^v}=1.$$
\item Mathematically, there is only one likelihood function for the model and data in question, and different algorithms are making various approximations to estimate this quantity. The algorithms each compute the likelihood corresponding to an approximation to the exact filter distribution, and therefore they have a negative bias. The highest estimated likelihood among the available filters may therefore be anticipated to have the lowest bias. We note that this reasoning assumed that the model is correctly specified---substantial model misspecification could result in an approximate filter estimating a substantially higher likelihood than the exact filter. Thus, each algorithm has its own metric of log-likelihood: IEnKF uses the metric of EnKF on log-likelihood estimation, denoted as $\le$ whose algorithmic definition is provided in Algorithm \ref{Algorithm_IEnKF} in Appendix \ref{sec:IF2}; IF$2$ uses the metric of PF on log-likelihood estimation, denoted as $\lp$ whose algorithmic definition is provided in Algorithm \ref{Algorithm_IF2} in Appendix \ref{sec:IF2}; IBPF uses the metric of BPF on log-likelihood estimation, denoted as $\lb$ whose algorithmic definition is provided in Algorithm \ref{Algorithm_IBPF} in Section \ref{sec:Algorithm}. We evaluate the best parameters learned using each algorithm with the metrics of the other two algorithms, in all the experiments. For example, we evaluate the best parameters learned using the IEnKF algorithm through its $\le$ metric, with the other two metrics ($\lp$ and $\lb$).
\item One additional setup needed with IBPF is to set up the block sizes. In all the comparisons with IF$2$ and IEnKF, we simply allow each block in our IBPF algorithm to have exactly $2$ cities in all the experiments. That is, the first block is city $1$ and city $2$, the second block is city $3$ and city $4$, and so on. Hence, the number of blocks $\card(\mathcal{K})=\text{Number of cities}/2$. 
\end{itemize}
\begin{figure}[htb!]
	\includegraphics[width=\textwidth]{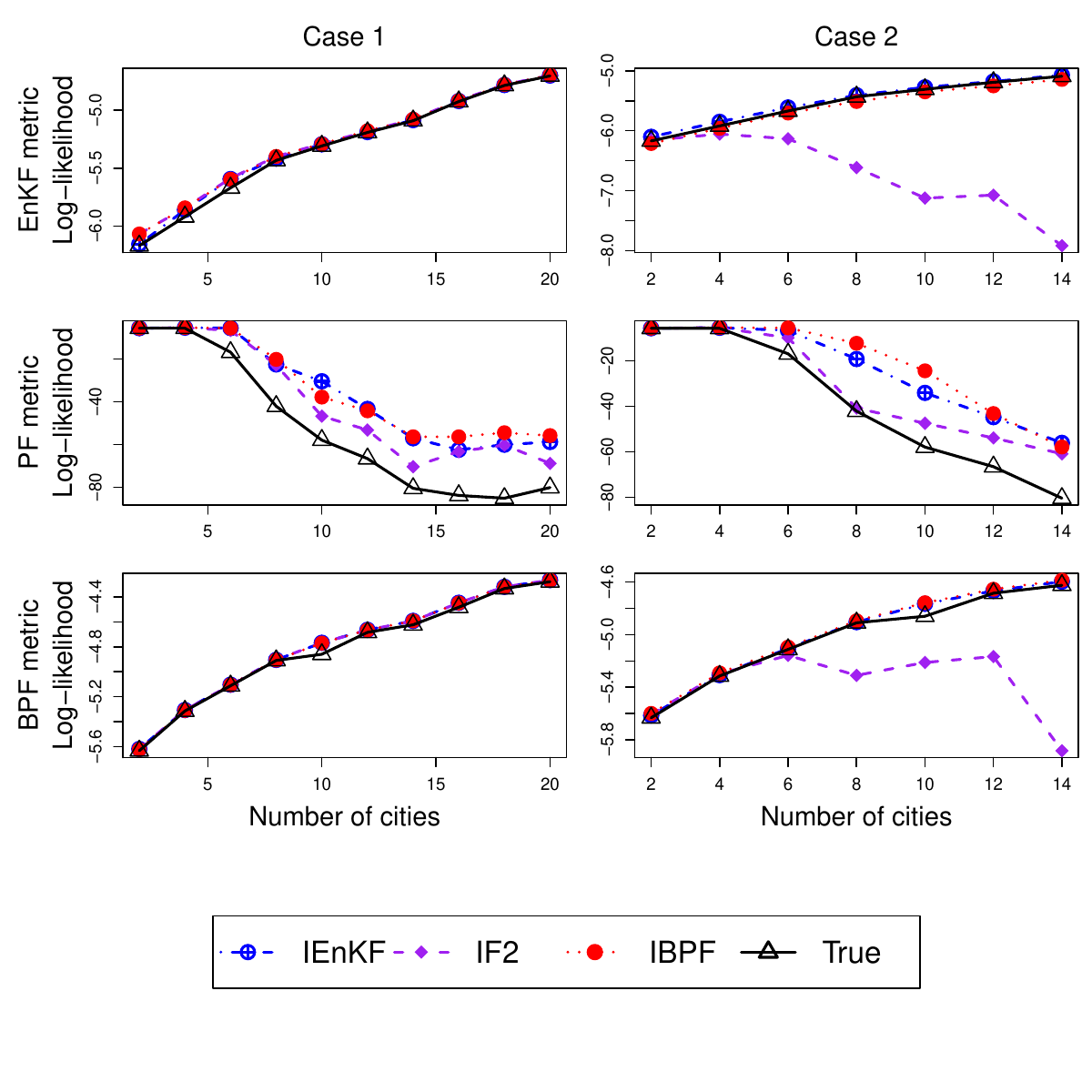}
	\caption{Log-likelihood estimates per city per time step of various dimensions for cases $1$ and $2$. Original parameter learning results are reported in Tables \ref{table:parameter_learning_results1} and \ref{table:parameter_learning_results2} in Appendix \ref{Sec:training_result_data}. Here, we divide the original results by the number of cities and then by $15\times 26$ (biweekly data in years $1950-1965$).}
	\label{f:case_1_2}
\end{figure}

\begin{figure}[htb!]
	\includegraphics[width=\textwidth]{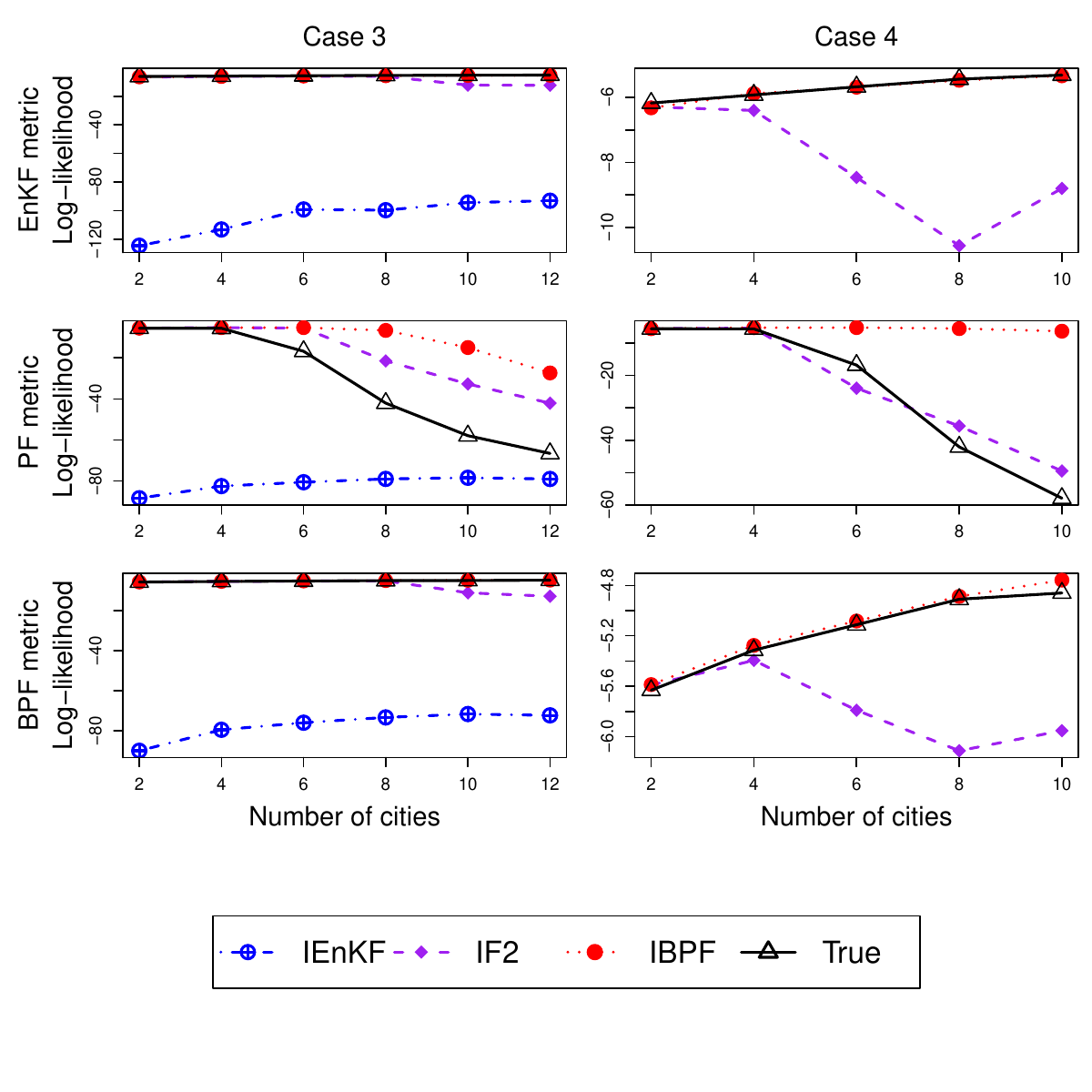}
	\caption{Log-likelihood estimates per city per time step of various dimensions for cases $3$ and $4$. Original parameter learning results are reported in Tables \ref{table:parameter_learning_results3} and \ref{table:parameter_learning_results4} in Appendix \ref{Sec:training_result_data}. Here, we divide the original results by the number of cities and then by $15\times 26$ (biweekly data in years $1950-1965$).}
	\label{f:case_3_4}
\end{figure}

Figure \ref{f:case_1_2} reports the log-likelihood estimates per city per time step of various dimensions for cases $1$ and $2$.  The corresponding original parameter learning results are reported in Tables \ref{table:parameter_learning_results1} and \ref{table:parameter_learning_results2} in Appendix \ref{Sec:training_result_data}. In Figure \ref{f:case_1_2}, the original results are divided by the number of cities and then by time steps which is
$15\times 26$ for biweekly data in years $1950-1965$. We can see in case $1$ that when we only learn the initial values ($\theta_{S_0^v}$, $\theta_{E_0^v}$, $\theta_{I_0^v}$, and $\theta_{R_0^v}$), the best parameter learning results from $10$ replicates of experiments of all three algorithms are as good as the true parameter using the EnKF metric and the BPF metric, while the best parameter learning results outperform the true parameters using the PF metric consistently. We can see that for nonlinear and non-Gaussian problems but not highly nonlinear (both case $1$ and case $2$), the IEnKF algorithm performs very well; IF$2$ scales well at least up to $20$ cities for simpler initial values learning problem in case $1$, while its performances start to drop from $4$ cities with all metrics in case $2$ which confirms the phenomenon that PF may not scale well with dimensions \citep{bengtsson2008curse,snyder2008obstacles}; our IBPF is as good as the true parameter consistently with the EnKF metric and the BPF metric, and much better than the true parameter with the PF metric consistently, in both case $1$ and case $2$. 

Figure \ref{f:case_3_4} reports the log-likelihood estimates per city per time step of various dimensions for cases $3$ and $4$, with the same transformation done as that of Figure \ref{f:case_1_2} upon the corresponding original parameter learning results that are reported in Tables \ref{table:parameter_learning_results3} and \ref{table:parameter_learning_results4} in Appendix \ref{Sec:training_result_data}. Recall that 
the goal of case $3$ is to learn the initial values ($\theta_{S_0^v}$, $\theta_{E_0^v}$, $\theta_{I_0^v}$, $\theta_{R_0^v}$) and the highly non-linear parameter $\overline{\alpha}^{v}$. In equation \eqref{eq:dEdt}, we can see that through $\overline{\alpha}^{v}$, the dynamic of one city has direct interactions with other cities. Case $4$ is the hardest of all these $4$ cases. Its goal is to learn all the location-specific parameters ($\theta_{S_0^v}$, $\theta_{E_0^v}$, $\theta_{I_0^v}$, $\theta_{R_0^v}$, $\overline{\alpha}^{v}$, $\overline{\sigma}_{SE}^{v}$, and $\overline{R}_0^{v}$). 
From Figure \ref{f:case_3_4}, we can see that,  even for $2$ cities using all these $3$ metrics, IEnKF performs very badly in case $3$ and fails completely in case $4$. The reason is that IEnKF is based on EnKF, and hence it implicitly assumes a linear Gaussian
state space model. Specifically, when new observations become
available, the ensemble is updated by a linear ``shift” based on the
assumption of a linear Gaussian state space model. However, cases $3$ and $4$ focus on highly nonlinear and non-Gaussian problems. 
This phenomenon that EnKF may not perform well for highly nonlinear and non-Gaussian problems, was observed earlier, such as in \cite{ades2015equivalent,lei2010comparison,miller1999data}. In general,
the EnKF-based parameter learning approaches are applicable to problems with a relatively
small number of parameters but more work is needed for cases
where the parameter and state are both high dimensional \citep{katzfuss2016understanding}. 
From Figure \ref{f:case_3_4}, we can see that the performance of IF$2$ drops in both cases using all three metrics. The performance of our IBPF in both cases, is as good as that of the true parameter consistently with the EnKF metric and the BPF metric, and much better than that of the true parameter with the PF metric consistently. 

\begin{table}[hbt!]
	\centering
	\renewcommand{\arraystretch}{1.5}
	\begin{tabular}{|c|c|c|c|c|c|c|c|}
		\hline
		City &$\theta_{S_0^v}$&$\theta_{E_0^v}$& $\theta_{I_0^v}$& $\theta_{R_0^v}$& $\overline{\alpha}^{v}$& $\overline{\sigma}_{SE}^{v}$& $\overline{R}_0^{v}$\\
		\hline
		1 & 0.15622 & 1e-04 & 3e-05 & 0.95509 & 0.91527 & 0.14969 & 21.2511\\
		2 & 0.0634 & 3e-05 & 4e-05 & 0.98683 & 0.85549 & 0.13765 & 10.268 \\
		3 & 0.01233 & 7e-05 & 0.00016 & 0.99165 & 0.84884 & 0.16178 & 7.39276 \\
		4 & 0.03517 & 0.00026 & 4e-05 & 0.9827 & 0.91475 & 0.15096 & 21.88148\\
		5 & 0.02691 & 3e-05 & 8e-05 & 0.80367 & 0.89105 & 0.1423 & 13.50283 \\
		6 & 0.02422 & 2e-05 & 1e-04 & 0.98636 & 0.94772 & 0.15283 & 18.7391 \\
		7 & 0.05803 & 6e-05 & 1e-05 & 0.98599 & 0.86154 & 0.15373 & 10.20071 \\
		8 & 0.00911 & 2e-04 & 7e-05 & 0.96099 & 0.85265 & 0.16587 & 8.94473 \\
		9 & 0.04624 & 2e-05 & 0.00015 & 0.97072 & 0.95923 & 0.17158 & 25.75231 \\
		10 & 0.00895 & 2e-05 & 0.00013 & 0.96844 & 0.81455 & 0.16696 & 7.08004 \\
		\hline
	\end{tabular}
	\caption{Parameter learning results in terms of parameter values for $10$ cities of case $4$. This set of parameter values gives log-likelihood $-18555$ in $\lb$ metric in Table \ref{table:parameter_learning_results4}.}
	\label{table:parameter_values}
\end{table}

\begin{table}[hbt!]
	\centering
	\renewcommand{\arraystretch}{1.5}
	\begin{tabular}{|c|c|c|c|c|c|}
		\hline
		Replicate & 2 cities &4 cities &6 cities & 8 cities & 10 cities \\
		\hline
		1 & -4358 & -8231 & -11893 & -15248 & -18555\\
		2&-4358 & -8231 & -11893 & -15248 & -18555\\
		3&-4359 & -8233 & -11894 & -15249 & -18557\\
		4&-4360 & -8233 & -11894 & -15250 & -18558\\
		5&-4361 & -8233 & -11896 & -15251 & -18559\\
		6&-4362 & -8235 & -11897 & -15252 & -18559\\
		7&-4362 & -8238 & -11900 & -15255 & -18559\\
		8&-4364 & -8239 & -11901 & -15257 & -18569\\
		9&-4364 & -8241 & -11904 & -15258 & -18570\\
		10&-4366 & -8243 & -11906 & -15263 & -18583\\
		\hline
	\end{tabular}
	\caption{Parameter learning results in terms of log-likelihood for case $4$. We conducted $10$ replicates of all the parameter learning performance comparisons. In each replicate, all algorithms start with the same initial search values drawn uniformly. We listed results ranked from highest to lowest.}
	\label{table:Robustness}
\end{table}

\begin{table}[hbt!]
	\centering
	\renewcommand{\arraystretch}{1.5}
	\begin{tabular}{|c|c|c|c|c|c|c|c|c|}
		\hline
		\multirow{2}{*}{Replicate}    & \multicolumn{4}{c|}{IEnKF} & %
		\multicolumn{4}{c|}{IF2} \\
		\cline{2-9}
		& Time & $\le$ & $\lp$ & $\lb$ &Time & $\le$ & $\lp$ & $\lb$ \\
		\hline
		1 & 15.78 & \bf{-97124} & -70730 & -70118 &  17.44 & -4970 & \bf{-4358} & -4373\\
		2 & 15.82 & -97334 & -70763 & -70099 & 17.8 & -4925 & -4359 & -4387\\
		3 & 15.79 & -97446 & -70752 & -71398 &  17.77 & -4944 & -4359 & -4388\\
		4 & 15.81 & -97582 & -70729 & -70127 &  17.15 & -4934 & -4359 & -4361\\
		5 &  15.75 & -97962 & -70099 & -70762 & 17.13 & -4904 & -4360 & -4385\\
		\hline
	\end{tabular}

	\begin{tabular}{|c|c|c|c|c|}
		\hline
		\multirow{2}{*}{Replicate}    & \multicolumn{4}{c|}{IBPF} \\
		\cline{2-5}
		& Time & $\le$ & $\lp$ & $\lb$  \\
		\hline
		1 & 18.46 & -4929 & -4380 & \bf{-4358}\\
		2  & 18.45 & -5051 & -4381 & -4358\\
		3  & 18.41 & -4948 & -4381 & -4359 \\
		4  &  18.39 & -5019 & -4399 & -4360\\
		5  & 18.4 & -4976 & -4374 & -4361\\
		\hline
	\end{tabular}
	\caption{Performance and runtime comparison of top five replicates in terms of log-likelihood for two cities in case $4$. 
		Time is measured in hours used to finish the job. 
		Three log-likelihood metrics: $\le$ representing the EnKF metric, $\lp$   representing the PF metric, and $\lb$ representing the BPF metric, were applied to the best parameters learned from IEnKF, IF$2$, and IBPF as well as the true parameter $\theta$. The highest log-likelihood values in each metric are highlighted.}
	\label{table:Performance and runtime}
\end{table}

Now we explore more properties of IBPF through experiments in case $4$. Table \ref{table:parameter_values} reports IBPF's parameters learned for $10$ cities. This is the set of parameter values that produce the maximum log-likelihood $-18555$ in the $\lb$ metric in Table \ref{table:parameter_learning_results4}. We can see that all learned parameter values conform to common sense. Table \ref{table:Robustness} reports IBPF's parameter learning results in terms of log-likelihood for all $10$ replicates conducted, where each replicate starts with an initial search value drawn uniformly. The resulting log-likelihood values are ranked from highest to lowest; for example, the value of replicate $1$ for $10$ cities in Table \ref{table:Robustness} is the log-likelihood $-18555$ in $\lb$ metric in Table \ref{table:parameter_learning_results4}. When we calculate the per city per time step values as that in Figures \ref{f:case_1_2} and \ref{f:case_3_4}, all values are almost the same across replicates. For example, the highest value of replicate $1$ gives $-18555/(26\times 15)/10=-5.947$ and the lowest value of replicate $1$ gives $-18583/(26\times 15)/10=-5.956$. Thus the parameter learning results are robust.
Figure \ref{f:likelihood_itera} reports IBPF's parameter learning results in terms of log-likelihood for iterations $\{20, 40, 60, 80, 100, 120, 160, 180\}$ with different block sizes $\{1,2,3,4\}$ for $12$ cities. We also conducted analysis with block size $6$, i.e., there are two blocks and each has $6$ cities. The log-likelihood values for block size $6$ are $\{-75793, -96487, -88973, -84087, -76356\}$ for $\{20, 40, 60, 80, 100\}$ iterations, respectively. These values are not included in Figure \ref{f:likelihood_itera}, since they are much smaller than the values with block sizes $\{1,2,3,4\}$. From Figure \ref{f:likelihood_itera}, we can see that the performance of IBPF increases as block sizes decrease. In common sense, an increase in the block size for a fixed $J$ will make the variance term worse and the bias term better in the error bound. A closer observation of the bias term, one can see that $e^{-\beta d(\K,\partial{K})}$ measures the distance to the boundary of the block, while in Figure \ref{f:likelihood_itera} we have that fixed and vary $|\mathcal{K}|_{\infty}$ which is the maximal size of one single block in the partition. That is, in Figure \ref{f:likelihood_itera}, block size being $1$ means that $|\mathcal{K}|_{\infty}=1$. Hence, Figure \ref{f:likelihood_itera} confirms our technical finding of $|\mathcal{K}|_{\infty}$. From Figure \ref{f:likelihood_itera}, we can also see that IBPF achieves likelihood maximization as iterations increase which is consistent with the analysis in Section \ref{sec:Block_MLE}. For any block size, the block particle filter computes the log-likelihood corresponding to a one-step probabilistic forecast, with the forecast corresponding to the empirical distribution of the particles.
Log-likelihood is a proper scoring rule for such forecasts
\citep{gneiting2007strictly}, meaning that the likelihood of a forecast cannot, on average, exceed that of the ideal but inaccessible Bayesian filter.
This provides a justification for comparing filters by their corresponding log-likelihoods and preferring the highest.
\begin{figure}[htb!]
	\centering
	\includegraphics[width=0.73\textwidth,height=6.1cm]{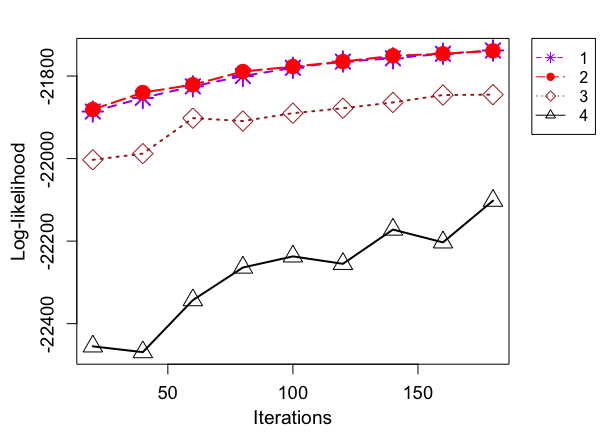}
	\caption{\footnotesize Performance analysis of IBPF for iterations $\{20, 40, 60, 80, 100, 120, 160, 180\}$ with different block sizes $\{1,2,3,4\}$ for $12$ cities of case $4$.}
	\label{f:likelihood_itera}
\end{figure}

At last, we discuss the computational cost. 
The computational cost of IF$2$ and IEnKF are $\mathcal{O}(MJN|V|)$  \citep{asfaw2023partially} and IBPF as well, where $M$ is the number of iterations, $J$ is the number of particles, $N$ is the number of time steps, and $|V|$ is the number of vertices. 
Table \ref{table:Performance and runtime} provides the performance and runtime comparison of top five replicates in terms of log-likelihood for two cities in case $4$. Each replicate is conducted with one node and $4$GB CPU on the Great Lakes Slurm cluster of the University of Michigan, Ann Arbor. We can see that the time consumed is comparable: IEnKF is in the range of $15-16$ hours, IF$2$ is in the range of $17-18$ hours, and IBPF is in the range of $18-19$ hours. Furthermore, we can see that IEnKF is still able to run but provides very bad results, and for this reason we call it ``Failed".

A demonstration on real data is beyond the scope of the current manuscript, since it requires further investigation of model misspecification and its consequences.
We refer interested readers to \cite{ionides2022iterated} for corresponding real data analysis.
The IBPF algorithm has been contributed to the \textit{spatPomp} package \citep{asfaw2023partially}  as the function  \code{ibpf}, publicly available at (\url{https://cran.r-project.org/web/packages/spatPomp/}). The source code on the measles experiments has been complied as the function  \code{he10} therein, followed with an illustration file \code{he10.R} in the ``tests" folder.
A tutorial \citep{ning2023using} publicly available at (\url{https://kidusasfaw.github.io/spatPomp/vignettes/ibpf.pdf}),  introduces \code{ibpf} and validates its correctness on a simple Gaussian example which is tractable using the Kalman filter. 

\section{Discussion and extensions}
\label{sec:Discussion_and_extensions}

In this paper, we have proposed the IBPF algorithm for high-dimensional parameter learning over partially observed, nonlinear, stochastic spatiotemporal processes, established its theoretical performance guarantees, and compared it with mainstream algorithms. 

\subsection{Discussion}
\label{sec:Discussion}
In this paper, we have only compared the EnKF-based parameter learning approach using IF. There are two main approaches to parameter estimation
within the EnKF framework without using IF. The first is called state augmentation \citep{anderson2001ensemble} which
works well in many examples but implicitly assumes that
the states and parameters jointly follow a linear Gaussian POMP
model. Hence, when some parameters violate
this assumption, the method fails completely
\citep{stroud2007sequential}. The second approach is based on
approximating likelihood functions constructed using the output
from EnKF, which estimates parameters either by maximum
likelihood or Bayesian methods. Examples of these approaches
include sequential maximum
likelihood \citep{mitchell2000adaptive}, offline
maximum
likelihood \citep{stroud2010ensemble}, and sequential Bayesian methods
\citep{stroud2007sequential,frei2012sequential}. In general,
these methods have been successful in examples with a relatively
small number of parameters, and more work is needed for cases
where the parameter and state are both high dimensional \citep{katzfuss2016understanding}. We note that our IBPF is designed to learn a large number of parameters for models with high-dimensional parameters and states. 

In recent years, approaches developed on learning high-dimensional parameters applying the Bayesian method (specifically, Markov chain Monte Carlo (MCMC)) with multidimensional time series data, including \citep{qiu2018multivariate,qiu2020multivariate,jammalamadaka2019predicting}.
We note that these literatures are limited to only working for linear models while IBPF is designed to work on general nonlinear models. Particle MCMC (PMCMC) methods, introduced by \cite{andrieu2010particle}, make use of PF to construct efficient proposals for the MCMC sampler, working for non-linear models. Particle Gibbs (PG) as a particularly widely used PMCMC algorithm,
modifies the PF step in the PMCMC algorithm to sample the latent variables conditioned on an
existing particle trajectory, resulting in what is called a conditional SMC (CSMC) step. A drawback of PG is that it can be particularly adversely affected by path degeneracy in the
CSMC step, in the way that conditioning on an existing trajectory means that whenever resampling of the trajectories results in a common ancestor who must correspond to this trajectory \citep{rainforth2016interacting}. Efforts on combating the path degeneracy effect, include but are not limited to, \citep{whiteley2010efficient, lindsten2013backward, lindsten2014particle, chopin2015particle, lindsten2015uniform}. We note that IBPF does not have the path degeneracy problem.

\subsection{Extensions}
\label{sec:extensions}
Algorithm~\ref{Algorithm_IBPF} describes the IBPF algorithm in which the initial values of the latent states are deterministically determined by model parameters, hence the theoretical treatment needs to consider only Dirac measures as initial distributions are needed. \cite{rebeschini2015can} also considered a  nonrandom initial condition which is a choice of convenience. For comments on extending the results to general initial conditions, we refer interested readers to Remark $2.3$ (page $2823$) of \cite{rebeschini2015can}.
A corresponding extension of Algorithm~\ref{Algorithm_IBPF} permits $f_{X_0}$ to be a probability density function.

At first glance, it may seem that BPF needs an approximation of spatial independence to justify the factorized distribution.
The reality is more delicate.
There is a possibility for considerable dependence to arise during the proposal stage of the filter, for which the particles follow the full joint transition density.
The block resampling then imposes a conditional independence approximation given the data, but conditional independence can be a much weaker assumption than unconditional independence.  
Recently, \citet{MIN2022108727} investigated the choice of blocks for a block particle filter, formulating the partitioning problem as a clustering problem and proposed a data-driven partitioning method based on constrained spectral clustering to automatically provide an appropriate partition.
For our measles model, we have found it surprisingly successful to have each city in its own block, and we have not found an advantage from larger block sizes.
To reason about how this might be consistent with the previous section, consider two cities (say, London and Birmingham).
Suppose they are tightly coupled dynamically, such that there will be an outbreak in London if and only if there is one in Birmingham.
Now suppose that, once an outbreak occurs, the dynamics in each city become dominated by local noise, and further that each city has a reasonably effective case reporting system.
Plausibly, the discrepancy between the actual and reported counts in London and Birmingham may be well modeled as close to independent, conditional on the reported counts.
That is enough to suggest that a block particle filter having Birmingham and London in different blocks may be successful despite the close relationship between their epidemic outcomes.

\appendix
\section{The IF$2$ algorithm and the IEnKF algorithm}
\label{sec:IF2}
In Algorithm \ref{Algorithm_IF2}, we provide the IF$2$ pseudocode in \cite{ionides2015inference}. In Algorithm \ref{Algorithm_IEnKF}, we provide the IEnKF pseudocode in \cite{asfaw2023partially}.
\begin{algorithm}[!htb]
\noindent\begin{tabular}{l}
		{\bf Input:}\rule[-1.5mm]{0mm}{6mm} \\
		Simulator for $f_{X_0}(x_0\,;\theta)$ \\
		Simulator for $f_{X_n|X_{n-1}}(x_n \mid x_{n-1}\,; \theta)$, $n\in 1{:}N$ \\
		Evaluator for $f_{Y_n|X_n}(y_n\mid x_n\,;\theta)$, $n\in 1{:}N$ \\
		Data, $y_{1:N}$ \\
		Number of iterations, $M$ \\
		Number of particles, $J$ \\
		Initial parameter swarm, $\{\Theta^0_j, \mbox{ $j\in 1{:}J$}\}$ \\
		Perturbation density, $f_{\Theta_n|\Theta_{n-1}}(\theta\mid \vartheta\,;\sigma)$, $n\in 1{:}N$\\
		Perturbation sequence, $\sigma_{1:M}$ \\
		{\bf Output:}\rule[-1.5mm]{0mm}{6mm} 
		Final parameter swarm $\{\Theta^M_j, \mbox{ $j\in 1{:}J$}\}$ and log-likelihood $\lp$\\
		For $m$ in $1\mycolon M$\rule[0mm]{0mm}{5mm}\\
		\asp     Draw $\Theta^{F,m}_{0,j}\sim h_0(\theta\mid\Theta^{m-1}_{j}\,; \sigma_m)$ for $j\in 1{:}J$\mystretch\\
		\asp     Draw $X_{0,j}^{F,m}\sim f_{X_0}(x_0 ; \Theta^{F,m}_{0,j})$ for $j\in 1{:}J$\mystretch\\
		\asp For $n$ in $1\mycolon N$\\
		\asp\asp Draw $\Theta^{P,m}_{n,j}\sim f_{\Theta_n|\Theta_{n-1}}(\theta_n\mid \Theta^{F,m}_{n-1,j}\,; \sigma_m)$ for $j\in 1{:}J$\mystretch\\
		\asp\asp   Draw $X_{n,j}^{P,m}\sim f_{X_n|X_{n-1}}(x_n \mid X^{F,m}_{n-1,j}\;; \Theta^{P,m}_{n,j})$ for $j\in 1{:}J$  \mystretch\\
		\asp\asp Compute $w_{n,j}^{m} =  f_{Y_n|X_n}(y_n\mid X_{n,j}^{P,m}\; ; \Theta^{P,m}_{n,j})$ for $j\in 1{:}J$  \mystretch\\
		\asp\asp  Draw $s_{1:J}$ with $\operatorname{Prob}(s_j=i)=  w_{n,i}^{m}\Big/\sum_{u=1}^n w_{n,u}^{m}$  \\
		\asp\asp Set  $X^{F,m}_{n,j}=X^{P,m}_{n,s_j}$ for $j\in 1{:}J$\\  
		\asp\asp  Set $\Theta^{F,m}_{n,j}=\Theta^{P,m}_{n,s_j}$ for $j\in 1{:}J$ \mystretch\\
		\asp End For\\ %
		\asp   Set $\Theta^{m}_{j}=(\Theta^{F,m}_{n,j})$ for $j\in 1{:}J$\\
		End For\\
		Set $\lp=\sum_{n=1}^N \log (\frac{1}{J}\sum_{j=1}^J w_{n,j}^{M})$
	\end{tabular}
\caption{(The IF$2$ algorithm)}
	\label{Algorithm_IF2}
\end{algorithm}

\begin{algorithm}[!htb]
	\noindent\begin{tabular}{l}
		{\bf Input:}\rule[-1.5mm]{0mm}{6mm} \\
		Simulator for $f_{X_0}(x_0\,;\theta)$ \\
		Simulator for $f_{X_n|X_{n-1}}(x_n \mid x_{n-1}\,; \theta)$, $n\in 1{:}N$ \\
		Evaluators for $e_{u,n}(x,\theta)$ and $\mathrm{v}_{u,n}(x,\theta)$, $n\in 1{:}N$ \\
		Data, $y_{1:N}$ \\
		Number of iterations, $M$ \\
		Number of particles, $J$ \\
		Initial parameter, $\theta^0$ \\
		Random walk intensities, $\sigma_{0:N,1:D_{\theta}}$ where $D_{\theta}$ is the dimension of $\theta$\\
		Cooling fraction in $50$ interations, $a$ \\
		{\bf Output:}\rule[-1.5mm]{0mm}{6mm} 
		Monte Carlo maximum likelihood estimate $\theta_M$  and log-likelihood $\le$\\
		Initialize parameters $\Theta^{F,0}_{N,j}=\theta^0$\rule[0mm]{0mm}{5mm}\\
		For $m$ in $1\mycolon M$\\
		\asp    Initialize parameters $\Theta^{F,m}_{0,j}\sim \mathcal{N}(\Theta^{F,m-1}_{N,j},a^{2m/50}\Sigma_0)$\\
		\asp\asp             where $(\Sigma_n)_{d_{\theta}, d_{\theta}'}=\sigma_{n,d_{\theta}}^2\mathbbm{1}_{d_{\theta}=d_{\theta}'}$  for $j\in 1{:}J$\mystretch\\		            
		\asp    Initialize filter particles $X_{0,j}^{F,m}\sim f_{X_0}(x_0 ; \Theta^{F,m}_{0,j})$ for $j\in 1{:}J$\mystretch\\
		\asp For $n$ in $1\mycolon N$\\
		\asp\asp Draw $\Theta^{P,m}_{n,j}\sim \mathcal{N}(\Theta^{F,m}_{n-1,j}, a^{2m/50}\Sigma_n)$ for $j\in 1{:}J$\mystretch\\
		\asp\asp   Draw $X_{n,j}^{P,m}\sim f_{X_n|X_{n-1}}(x_n \mid X^{F,m}_{n-1,j}\;; \Theta^{P,m}_{n,j})$ for $j\in 1{:}J$  \mystretch\\
		\asp\asp   Process and ensemble parameter $Z_{n,j}^{P,m}=\left( \begin{array}{c} X_{n,j}^{P,m}\\ \Theta^{P,m}_{n,j} \end{array} \right)$  for $j\in 1{:}J$ \mystretch\\
		\asp\asp   Centered process and parameter ensemble $\widetilde{Z}_{n,j}^{P,m}=Z_{n,j}^{P,m}-\frac{1}{J}\sum_{q=1}^J Z_{n,q}^{P,m}$   for $j\in 1{:}J$\mystretch\\
		\asp\asp   Forecast ensemble $\widehat{Y}_{n,j}^{u,m}=e_u(X_{n,j}^{u,P,m}, \Theta^{P,m}_{n,j})$   for $j\in 1{:}J$\mystretch\\
		\asp \asp  Centered forecast ensemble $\widetilde{Y}_{n,j}^{m}=\widehat{Y}_{n,j}^{m}-\frac{1}{J}\sum_{q=1}^J \widehat{Y}_{n,q}^{m}$  for $j\in 1{:}J$\mystretch\\
	    \asp \asp  Forecast measurement variance $R_{u,\tilde{u}}^m=\left(\frac{1}{J}\sum_{j=1}^J \mathrm{v}_u(X_{n,j}^{u, P,m},\Theta^{P,m}_{n,j})\right)_{u,\tilde{u}}$ \mystretch\\
		\asp \asp  Forecast sample covariance $\Sigma_Y^m=\frac{1}{J-1}\sum_{j=1}^J (\widetilde{Y}_{n,j}^{m})(\widetilde{Y}_{n,j}^{m})^T+R^m$ \mystretch\\
		\asp \asp  Prediction and forecast sample covarince $\Sigma_{ZY}^m=\frac{1}{J-1}\sum_{j=1}^J (\widetilde{Z}_{n,j}^{P,m})(\widetilde{Y}_{n,j}^{m})^T$ \mystretch\\
		\asp \asp  Kalman gain $K^m=\Sigma_{ZY}^m(\Sigma_Y^m)^{-1}$   \mystretch\\
		\asp \asp   Artificial measurement noise $\epsilon_{n,j}^{m}\sim \mathcal{N}(0,R)$  for $j\in 1{:}J$\mystretch\\
		\asp \asp   Errors $r_{n,j}^{m}=\widehat{Y}_{n,j}^{m}-y_n$ for $j\in 1{:}J$ \mystretch\\
		\asp \asp   Filter update $Z_{n,j}^{F,m}=\left( \begin{array}{c} X_{n,j}^{F,m}\\ \Theta^{F,m}_{n,j} \end{array} \right)=Z_{n,j}^{P,m}+K(r_{n,j}^{m}+\epsilon_{n,j}^{m})$   for $j\in 1{:}J$\mystretch\\
		\asp End For\\ %
		End For\\
		Set $\theta_{M}=\frac{1}{J}\sum_{j=1}^J (\Theta^{F,M}_{N,j})$\\
		Set $\le=\sum_{n=1}^N \log (\phi(y_n\, ;\, \frac{1}{J}\sum_{j=1}^J\widehat{Y}_{n,j}^{M}, \Sigma_Y^m))$ where $\phi$ is the normal density
	\end{tabular}
	\caption{(The IEnKF algorithm)}
	\label{Algorithm_IEnKF}
\end{algorithm}

\section{Existing results}
\label{sec:Existing_results}

The following Dobrushin comparison theorem can be seen in Theorem $3.1$ in \cite{rebeschini2015can} and Theorem $8.20$ in \cite{georgii2011gibbs}.
\begin{thm}[Dobrushin comparison theorem]
\label{thm:Dobrushin}
Let $I$ be a finite set. Let $\mathbb{S}=\prod_{i\in I}\mathbb{S}^i$ where $\mathbb{S}^i$ is a Polish space for each $i\in I$. Define the coordinate projections $X^i: x \rightarrow x^i$ for $x\in \mathbb{S}$ and $i\in I$. For probability measures $\rho$ and $\overline{\rho}$ on $\mathbb{S}$, define
$$\rho_x^i(A)=\rho(X^i\in A|X^{I\backslash \{i\}}=x^{I\backslash \{i\}}),$$
  $$\rho_{\overline{x}}^i(A)=\rho(X^i\in A|X^{I\backslash \{i\}}={\overline{x}}^{I\backslash \{i\}}),$$
$$\overline{\rho}_x^i(A)=\overline{\rho}(X^i\in A|X^{I\backslash \{i\}}=x^{I\backslash \{i\}}),$$
    $$C_{ij}=\frac{1}{2}\sup_{x,\overline{x}\in \mathbb{S}:x^{I\backslash \{j\}}=\overline{x}^{I\backslash \{j\}}}\| \rho_{x}^i- \rho_{\overline{x}}^i\| \quad\text{and}\quad b_j=\sup_{x\in \mathbb{S}}\|\rho_{x}^j-\overline{\rho}_{x}^j\|.$$
      If the Dobrushin condition 
    $$\max_{i\in I}\sum_{j\in I}C_{ij}<1,$$
      holds, then for every $J \subseteq I$,
    $$\|\rho-\overline{\rho}\|_J\leq \sum_{i\in J}\sum_{j\in I}D_{ij}b_j,$$
      where $D:=\sum_{n\geq 0}C^n<\infty.$
      \end{thm}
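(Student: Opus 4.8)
The plan is to derive a single master comparison inequality bounding $|\rho(f)-\overline{\rho}(f)|$ for an arbitrary bounded measurable $f$ on $\mathbb{S}$, expressed through the single-site oscillations of $f$, the matrix $D$, and the discrepancy coefficients $b_j$, and then to specialize it to functions depending only on the coordinates in $J$ with $|f|\leq 1$. For each $i\in I$ introduce the oscillation seminorm $\delta_i(f)=\sup|f(x)-f(\overline{x})|$, the supremum over $x,\overline{x}$ agreeing off coordinate $i$; thus $\delta_i(f)=0$ precisely when $f$ does not depend on $x^i$, and $|f|\leq 1$ forces $\delta_i(f)\leq 2$. I would first record that $D=\sum_{n\geq 0}C^n$ is well defined: the Dobrushin condition says the maximal row sum of the nonnegative matrix $C$ is strictly below $1$, i.e. its $\ell^\infty$ operator norm satisfies $\|C\|_\infty<1$, so $\|C^n\|_\infty\leq\|C\|_\infty^{\,n}\to 0$ geometrically, the Neumann series converges entrywise, and $D=(I-C)^{-1}$ has nonnegative entries.

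Next I would set up the single-site averaging kernels. For $i\in I$ define $(\gamma_i f)(x)=\int f(z^i,x^{I\backslash\{i\}})\,\rho_x^i(dz^i)$, the Gibbs-sampler kernel that resamples coordinate $i$ from the conditional $\rho_x^i$; by construction $\rho(\gamma_i f)=\rho(f)$, and $\gamma_i f$ no longer depends on $x^i$, so $\delta_i(\gamma_i f)=0$. The two estimates driving everything are: (a) an oscillation-transfer bound $\delta_k(\gamma_i f)\leq \delta_k(f)+C_{ik}\,\delta_i(f)$ for $k\neq i$, which holds because varying $x^k$ changes $\gamma_i f$ both through $f$ itself and through the resampling law $\rho_x^i$, the latter contributing at most $\delta_i(f)$ times $\tfrac12\|\rho_x^i-\rho_{\overline{x}}^i\|\leq C_{ik}$; and (b) a discrepancy bound $\|\gamma_i f-\overline{\gamma}_i f\|_\infty\leq \tfrac12\,\delta_i(f)\,b_i$ comparing $\gamma_i$ with the analogous kernel $\overline{\gamma}_i$ built from $\overline{\rho}$, since the resampling laws differ by at most $b_i$ in total variation. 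Combining (b) with the invariances $\rho(\gamma_i f)=\rho(f)$ and $\overline{\rho}(\overline{\gamma}_i f)=\overline{\rho}(f)$ yields the one-step recursion $|\rho(f)-\overline{\rho}(f)|\leq |\rho(\gamma_i f)-\overline{\rho}(\gamma_i f)|+\tfrac12\,\delta_i(f)\,b_i$.

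The heart of the argument, and the step I expect to be the main obstacle, is the iteration. I would fix an enumeration of $I$ and apply the kernels in repeated sweeps, at each stage peeling off a term $\tfrac12\,\delta\,b$ while letting the oscillation vector evolve under the transfer rule (a). Under the Dobrushin condition the composed oscillation vector $\delta(\gamma_{i_1}\cdots\gamma_{i_m}f)$ tends to zero, so both $\rho(\gamma_{i_1}\cdots\gamma_{i_m}f)$ and $\overline{\rho}(\gamma_{i_1}\cdots\gamma_{i_m}f)$ converge to a common constant and the remaining difference telescopes to $0$ in the limit. The delicate bookkeeping is to verify that the total oscillation ever transferred to a site $j$, accumulated over the whole infinite sweep, equals $\sum_{i\in I}\delta_i(f)\,\big(\sum_{n\geq 0}C^n\big)_{ij}=\sum_{i\in I}\delta_i(f)\,D_{ij}$; this is exactly where the Neumann series $D$ enters, and controlling the order of updates and the geometric convergence is the real work. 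Granting it, summing all collected error terms gives the master inequality $|\rho(f)-\overline{\rho}(f)|\leq \tfrac12\sum_{i\in I}\sum_{j\in I}\delta_i(f)\,D_{ij}\,b_j$.

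Finally I would specialize. For $f$ depending only on the coordinates in $J$ with $|f|\leq 1$ one has $\delta_i(f)=0$ for $i\notin J$ and $\delta_i(f)\leq 2$ for $i\in J$, so the master inequality collapses to $|\rho(f)-\overline{\rho}(f)|\leq \sum_{i\in J}\sum_{j\in I}D_{ij}\,b_j$, the factor $\tfrac12$ being absorbed by the bound $\delta_i(f)\leq 2$. Taking the supremum over all such $f$ gives $\|\rho-\overline{\rho}\|_J\leq \sum_{i\in J}\sum_{j\in I}D_{ij}\,b_j$, which is the assertion.
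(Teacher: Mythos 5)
The paper does not prove this theorem at all: it is imported verbatim from the literature (Theorem~3.1 of Rebeschini and Van Handel, 2015, and Theorem~8.20 of Georgii, 2011), so the only meaningful comparison is against the classical proof, and your sketch follows exactly that classical route. Your ingredients are all correct: the Gibbs-sampler kernel $\gamma_i$ satisfies $\rho(\gamma_i f)=\rho(f)$ and kills the $i$-th oscillation; the transfer bound $\delta_k(\gamma_i f)\leq \delta_k(f)+C_{ik}\,\delta_i(f)$ follows from splitting $\gamma_if(x)-\gamma_if(\overline{x})$ into the change of the integrand (bounded by $\delta_k(f)$) plus the change of the resampling law (bounded by $\tfrac12\|\rho^i_x-\rho^i_{\overline{x}}\|\cdot\delta_i(f)\leq C_{ik}\delta_i(f)$); the discrepancy bound and the one-step recursion are the standard total-variation estimate; and the specialization to $f\in\mathcal{X}^J$ with $|f|\leq 1$ via $\delta_i(f)\leq 2\cdot\mathbbm{1}_{i\in J}$ correctly recovers the stated conclusion from the master inequality $|\rho(f)-\overline{\rho}(f)|\leq\tfrac12\sum_{i,j}\delta_i(f)D_{ij}b_j$.

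The one thing you have not actually proved is the step you yourself flag as the main obstacle, and it is genuinely the entire content of the theorem: the claim that over the infinite sweep the total oscillation collected at site $j$ is dominated by $\sum_i\delta_i(f)D_{ij}$. As written this is an assertion, not an argument. To close it, one tracks a row vector $\delta^{(m)}$ of oscillation bounds: after applying $\gamma_{i_m}$ one collects the error $\tfrac12\,\delta^{(m-1)}_{i_m}b_{i_m}$, sets $\delta^{(m)}_{i_m}=0$, and updates $\delta^{(m)}_k\leq\delta^{(m-1)}_k+C_{i_m k}\,\delta^{(m-1)}_{i_m}$ for $k\neq i_m$. Unfolding this recursion shows that each collected term $\delta^{(m-1)}_{i_m}$ is a sum over transfer paths $i=i_{(0)}\to\cdots\to i_{(n)}=i_m$ of $\delta_i(f)\prod C$, each path being counted at most once, which is precisely $\sum_i\delta_i(f)(C^n)_{i\,i_m}$ summed over path lengths $n$; the Dobrushin condition $\|C\|_\infty<1$ makes the residual oscillation vector vanish along a round-robin sweep and the accumulated series converge to $\tfrac12\sum_{i,j}\delta_i(f)D_{ij}b_j$. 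Since you identified the correct mechanism and only deferred this bookkeeping, I would call your proposal an accurate outline of the standard proof rather than a complete one; if you intend it as a proof, this induction over sweeps must be written out.
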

      
\begin{thm} [Lemma $4.1$ of \cite{rebeschini2015can}]
\label{thm:VanHandel4.1}
Let probability measures $\nu,\nu',\mathsf{F},\mathsf{F}'$ and $\epsilon>0$ be such that $\nu(A)\geq \epsilon \mathsf{F}(A)$ and $\nu'(A)\geq \epsilon \mathsf{F}'(A)$ for every measurable set $A$. Then
$$\|\nu-\nu'\|\leq 2(1-\epsilon)+\epsilon \|\mathsf{F}-\mathsf{F}'\|.$$
\end{thm}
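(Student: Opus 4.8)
The plan is to convert the two minorization hypotheses into explicit mixture representations of $\nu$ and $\nu'$, after which the bound will follow from the triangle inequality for the total variation norm. First I would note that $\epsilon\leq 1$: taking $A$ to be the entire space in $\nu(A)\geq \epsilon\mathsf{F}(A)$ gives $1\geq \epsilon$. The boundary case $\epsilon=1$ is immediate, since then $\nu(A)\geq \mathsf{F}(A)$ for all $A$ (applied also to complements) forces $\nu=\mathsf{F}$, and likewise $\nu'=\mathsf{F}'$, because all four are probability measures, so the bound holds with equality. I therefore assume $\epsilon<1$ and define
$$\mu:=\frac{\nu-\epsilon\mathsf{F}}{1-\epsilon},\qquad \mu':=\frac{\nu'-\epsilon\mathsf{F}'}{1-\epsilon}.$$

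The key step is to check that $\mu$ and $\mu'$ are genuine probability measures. Nonnegativity, namely $\mu(A)\geq 0$ for every measurable $A$, is exactly the hypothesis $\nu(A)\geq\epsilon\mathsf{F}(A)$, and correspondingly for $\mu'$; total mass one follows because $\nu$, $\mathsf{F}$, $\nu'$, $\mathsf{F}'$ are probability measures, giving $\mu$ full mass $(1-\epsilon)/(1-\epsilon)=1$. This produces the representations $\nu=\epsilon\mathsf{F}+(1-\epsilon)\mu$ and $\nu'=\epsilon\mathsf{F}'+(1-\epsilon)\mu'$.

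The conclusion is then routine. Subtracting yields
$$\nu-\nu'=\epsilon(\mathsf{F}-\mathsf{F}')+(1-\epsilon)(\mu-\mu'),$$
so by the triangle inequality for the total variation norm $\|\cdot\|$ defined in \eqref{eqn:tvd},
$$\|\nu-\nu'\|\leq \epsilon\|\mathsf{F}-\mathsf{F}'\|+(1-\epsilon)\|\mu-\mu'\|.$$
Since $\mu$ and $\mu'$ are probability measures, any $f$ with $|f|\leq 1$ satisfies $\mu(f),\mu'(f)\in[-1,1]$, whence $|\mu(f)-\mu'(f)|\leq 2$ and thus $\|\mu-\mu'\|\leq 2$. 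Substituting gives $\|\nu-\nu'\|\leq 2(1-\epsilon)+\epsilon\|\mathsf{F}-\mathsf{F}'\|$, as claimed.

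I do not anticipate a serious obstacle here: the argument is purely measure-theoretic and does not interact with the graphical or dynamical structure of the model. The only point needing genuine attention is recognizing that the minorization inequality is precisely what certifies that $\mu$ and $\mu'$ are probability measures rather than merely signed measures; once the mixture decomposition is in hand, the remainder is the triangle inequality together with the trivial diameter bound $\|\mu-\mu'\|\leq 2$ for probability measures.
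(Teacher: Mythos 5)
Your proof is correct and complete: the mixture decomposition $\nu=\epsilon\mathsf{F}+(1-\epsilon)\mu$, $\nu'=\epsilon\mathsf{F}'+(1-\epsilon)\mu'$, followed by the triangle inequality and the diameter bound $\|\mu-\mu'\|\leq 2$, is exactly the standard argument for this lemma (and you correctly handle the edge case $\epsilon=1$ and verify that the residuals are genuine probability measures). The paper itself gives no proof --- it imports the statement verbatim as Lemma $4.1$ of \cite{rebeschini2015can} in its appendix of existing results --- and your argument matches the one in that source, so there is nothing further to reconcile.
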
 

\begin{thm} [Lemma $4.2$ of \cite{rebeschini2015can}]
\label{thm:VanHandel4.2}
Let $\rho$ and $\rho'$ be probability measures and let $\Lambda$ be a bounded and strictly positive measurable function. 
Define $$\rho_{\Lambda}(A):=\frac{\int \mathbbm{1}_A(x)\Lambda(x)\rho(x)}{\int \Lambda(x)\rho(x)}\quad\text{and}\quad \rho'_{\Lambda}(A):=\frac{\int \mathbbm{1}_A(x)\Lambda(x)\rho'(x)}{\int \Lambda(x)\rho'(x)}.$$
Then 
$$\|\rho_{\Lambda}-\rho'_{\Lambda}\|\leq 2 \frac{\sup_x\Lambda(x)}{\inf_x\Lambda(x)}\|\rho-\rho'\|\quad\text{and}\quad \vertiii{\rho_{\Lambda}-\rho'_{\Lambda}}\leq 2 \frac{\sup_x\Lambda(x)}{\inf_x\Lambda(x)}\vertiii{\rho-\rho'}.$$
\end{thm}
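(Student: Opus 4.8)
The plan is to derive both inequalities from a single elementary estimate on the difference of the two tilted integrals, produced by a standard ``add-and-subtract'' telescoping of a ratio. Fix a measurable test function $f$ with $|f|\leq 1$ and write $\rho_{\Lambda}(f)=\rho(f\Lambda)/\rho(\Lambda)$, and likewise for $\rho'_{\Lambda}$. The starting point is the identity
\begin{equation*}
\rho_{\Lambda}(f)-\rho'_{\Lambda}(f)=\frac{(\rho-\rho')(f\Lambda)}{\rho(\Lambda)}+\rho'_{\Lambda}(f)\,\frac{(\rho'-\rho)(\Lambda)}{\rho(\Lambda)},
\end{equation*}
which one checks by clearing denominators. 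This isolates the numerator perturbation in the first summand and the denominator perturbation in the second, so that each can be controlled separately.

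Next I would record the three pointwise bounds that drive the estimate. Because $\Lambda$ is strictly positive and bounded, every realization satisfies $\rho(\Lambda)\geq \inf_x\Lambda(x)>0$, hence $1/\rho(\Lambda)\leq 1/\inf_x\Lambda(x)$. Because $|f|\leq 1$ and $\rho'_{\Lambda}$ is a probability measure, $|\rho'_{\Lambda}(f)|\leq 1$. Finally, since $|f\Lambda|\leq \sup_x\Lambda(x)$ and $|\Lambda|\leq\sup_x\Lambda(x)$, the rescaled functions $f\Lambda/\sup_x\Lambda(x)$ and $\Lambda/\sup_x\Lambda(x)$ are admissible test functions, giving $|(\rho-\rho')(f\Lambda)|\leq \sup_x\Lambda(x)\,\|\rho-\rho'\|$ and the analogous bound for $(\rho-\rho')(\Lambda)$.

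For the total variation inequality I would combine the identity with these three bounds to obtain $|\rho_{\Lambda}(f)-\rho'_{\Lambda}(f)|\leq 2\,\tfrac{\sup_x\Lambda(x)}{\inf_x\Lambda(x)}\,\|\rho-\rho'\|$, and then take the supremum over all $f$ with $|f|\leq 1$. For the $\vertiii{\,\cdot\,}$ inequality I would apply the same identity, but now treating $\rho,\rho'$ as random measures: apply Minkowski's inequality in $L^2(\mathbf{E})$ to the two summands, extract the prefactors $1/\rho(\Lambda)$ and $\rho'_{\Lambda}(f)$ from each $L^2$ norm using their almost-sure bounds $1/\inf_x\Lambda(x)$ and $1$, and recognize each remaining quantity $[\mathbf{E}|(\rho-\rho')(f\Lambda/\sup_x\Lambda(x))|^2]^{1/2}$ as at most $\vertiii{\rho-\rho'}$.

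The one point requiring care, rather than a genuine obstacle, is precisely this passage to the $\vertiii{\,\cdot\,}$ norm: the prefactors I wish to pull out of the $L^2$ norm are themselves random, so I must confirm that $1/\rho(\Lambda)\leq 1/\inf_x\Lambda(x)$ and $|\rho'_{\Lambda}(f)|\leq 1$ hold pointwise in the randomness. Both are immediate, since each realization of $\rho$ and of $\rho'_{\Lambda}$ is a probability measure; with these uniform bounds secured, Minkowski applies term by term and the deterministic computation carries over verbatim to the second-moment setting, yielding the stated constant $2\,\sup_x\Lambda(x)/\inf_x\Lambda(x)$ in both inequalities.
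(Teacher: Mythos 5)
Your argument is correct. This statement is imported verbatim from Lemma 4.2 of \cite{rebeschini2015can} and the paper gives no proof of its own, but your decomposition $\rho_{\Lambda}(f)-\rho'_{\Lambda}(f)=\frac{(\rho-\rho')(f\Lambda)}{\rho(\Lambda)}+\rho'_{\Lambda}(f)\frac{(\rho'-\rho)(\Lambda)}{\rho(\Lambda)}$, the three pointwise bounds, and the observation that the random prefactors are bounded almost surely (so Minkowski in $L^2(\mathbf{E})$ transfers the computation to the $\vertiii{\,\cdot\,}$ norm with the same constant) reproduce exactly the standard proof in that reference; nothing is missing.
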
 

\begin{thm} [Lemma $4.3$ of \cite{rebeschini2015can}]
\label{thm:VanHandel4.3}
Let $I$ be a finite set and let $m$ be a pseudometric on $I$. Let $C=(C_{ij})_{i,j\in I}$ be a matrix with nonnegative entries. Suppose that
$$\max_{i\in I}\sum_{j\in I}e^{m(i,j)}C_{ij}\leq c<1.$$
Then the matrix $D=\sum_{n\geq 0}C^n$ satisfies
$$\max_{i\in I}\sum_{j\in I}e^{m(i,j)}D_{ij}\leq \frac{1}{1-c}.$$
\end{thm}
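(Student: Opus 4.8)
The plan is to reduce everything to a single submultiplicativity estimate for the weighted row sums of the powers $C^n$, and then to sum a geometric series. For a matrix $A$ with nonnegative entries, introduce the weighted row-sum functional $\|A\|_m := \max_{i\in I}\sum_{j\in I}e^{m(i,j)}A_{ij}$ (a functional on nonnegative matrices, not to be confused with the measure distances of Section \ref{sec:metric}); the hypothesis is then exactly $\|C\|_m\leq c<1$, and the conclusion is $\|D\|_m\leq (1-c)^{-1}$. Because $m\geq 0$ forces $e^{m(i,j)}\geq 1$, the ordinary maximal row sum of $C$ is at most $c<1$, so the Neumann series $D=\sum_{n\geq 0}C^n$ converges entrywise and $D$ is finite; this justifies all the interchanges below.

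The key step is to show $\|C^n\|_m\leq c^n$ for every $n\geq 0$, which I would establish by induction. The base case is $\|I\|_m=\max_i e^{m(i,i)}=1=c^0$, since $m(i,i)=0$ for a pseudometric. For the inductive step I would write $(C^{n+1})_{ik}=\sum_j C_{ij}(C^n)_{jk}$ and invoke the triangle inequality $m(i,k)\leq m(i,j)+m(j,k)$, hence $e^{m(i,k)}\leq e^{m(i,j)}e^{m(j,k)}$, to factor the weight along the intermediate vertex $j$:
\[
\sum_k e^{m(i,k)}(C^{n+1})_{ik}\leq \sum_j e^{m(i,j)}C_{ij}\sum_k e^{m(j,k)}(C^n)_{jk}\leq \sum_j e^{m(i,j)}C_{ij}\,\|C^n\|_m\leq c\cdot c^n,
\]
where the inner sum is bounded by the inductive hypothesis $\|C^n\|_m\leq c^n$ uniformly in $j$ and the outer factor by $\|C\|_m\leq c$. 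Taking the maximum over $i$ gives $\|C^{n+1}\|_m\leq c^{n+1}$. (Equivalently, one may expand $C^n$ as a sum over length-$n$ paths and telescope the exponential weight along each path, which yields the same bound in one stroke.)

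Finally, by nonnegativity of all entries I can interchange the summation over $n$ with the weighted row sum for each fixed $i$, and then bound the maximum of a sum by the sum of the maxima:
\[
\|D\|_m=\max_i\sum_k e^{m(i,k)}\sum_{n\geq 0}(C^n)_{ik}=\max_i\sum_{n\geq 0}\sum_k e^{m(i,k)}(C^n)_{ik}\leq \sum_{n\geq 0}\|C^n\|_m\leq \sum_{n\geq 0}c^n=\frac{1}{1-c}.
\]

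The only genuine obstacle is the inductive submultiplicativity step, and there the single idea that makes it work is the triangle inequality of the pseudometric, which lets the exponential weight $e^{m(i,k)}$ split and be absorbed factor by factor along each intermediate index. Everything else — convergence of the Neumann series, the geometric summation, and the max/sum interchange — is routine and relies only on the nonnegativity of the entries of $C$.
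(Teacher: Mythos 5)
Your proof is correct; the paper itself states this result without proof (it is imported verbatim as Lemma 4.3 of \cite{rebeschini2015can}), and your argument — submultiplicativity of the weighted row-sum functional via the triangle inequality of the pseudometric, followed by geometric summation of the Neumann series — is exactly the standard argument given in that reference. Nothing is missing: the base case uses $m(i,i)=0$, the interchange of sums is justified by nonnegativity, and convergence of $D$ follows from $e^{m(i,j)}\geq 1$ forcing the unweighted row sums of $C$ below $c<1$.
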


\begin{thm} [Lemma $4.16$ of \cite{rebeschini2015can}]
\label{thm:VanHandel4.16}
Let $$\mu=\mu^1\otimes \cdots \otimes \mu^d \quad\text{and}\quad \nu=\nu^1\otimes \cdots \otimes \nu^d$$ be product probability measures on 
$$\mathbb{S}=\mathbb{S}^1\times \cdots \times \mathbb{S}^d$$
and let $\Lambda: \mathbb{S}\rightarrow \mathbb{R}$ be a bounded and
strictly positive measurable function. Let $\mu_{\Lambda}$ and $\nu_{\Lambda}$ be probability measures
$$ \mu_{\Lambda}=\frac{\int \mathbbm{1}_A(x)\Lambda(x)\mu(dx)}{\int \Lambda(x)\mu(dx)}\quad\text{and}\quad \nu_{\Lambda}=\frac{\int \mathbbm{1}_A(x)\Lambda(x)\nu(dx)}{\int \Lambda(x)\nu(dx)}.$$
Suppose that there exists a constant $\epsilon>0$ such that the following holds: for every
$i = 1, \cdots, d$, there is a measurable function $\Lambda^i: \mathbb{S}\rightarrow \mathbb{R}$ such that
$$\epsilon \Lambda^i(x)\leq \Lambda\leq \epsilon^{-1} \Lambda^i(x),$$
for all $x\in \mathbb{S}$ such that $\Lambda^i(x)=\Lambda^i(\tilde{x})$ whenever $x^{\{1,\ldots,d\}\backslash \{i\}}=\tilde{x}^{\{1,\ldots,d\}\backslash \{i\}}$. Then
$$ \|\mu_{\Lambda}-\nu_{\Lambda}\|\leq \frac{2}{\epsilon^2}\sum_{i=1}^d\|\mu^i-\nu^i\|.$$
\end{thm}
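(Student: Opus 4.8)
The plan is to reduce the general statement to the case where $\mu$ and $\nu$ differ in a single coordinate, and then to exploit the product structure together with Theorem~\ref{thm:VanHandel4.2}. First I would introduce the hybrid product measures $\mu_{(0)}=\mu$ and $\mu_{(k)}=\nu^1\otimes\cdots\otimes\nu^k\otimes\mu^{k+1}\otimes\cdots\otimes\mu^d$ for $k=1,\dots,d$, so that $\mu_{(d)}=\nu$ and each consecutive pair $\mu_{(k-1)},\mu_{(k)}$ are product measures agreeing in every coordinate except $k$, where $\mu^k$ is replaced by $\nu^k$. Since the weight $\Lambda$ is common to the two tilted measures, the triangle inequality for $\|\cdot\|$ gives $\|\mu_\Lambda-\nu_\Lambda\|\le\sum_{k=1}^d\|(\mu_{(k-1)})_\Lambda-(\mu_{(k)})_\Lambda\|$, so it suffices to prove the single-coordinate bound $\|(\mu_{(k-1)})_\Lambda-(\mu_{(k)})_\Lambda\|\le\tfrac{2}{\epsilon^2}\|\mu^k-\nu^k\|$ for each $k$.

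For the single-coordinate step I would write the two measures as $P=P^k\otimes R$ and $Q=Q^k\otimes R$, where $P^k=\mu^k$, $Q^k=\nu^k$, and $R$ is the common product of the remaining marginals. The naive route, applying Theorem~\ref{thm:VanHandel4.2} directly to $P$ and $Q$ tilted by $\Lambda$, fails because the prefactor $\sup\Lambda/\inf\Lambda$ is a global ratio and is uncontrolled. The main obstacle, and the key idea, is therefore to absorb the offending factor into the common marginal: using the hypothesis function $\Lambda^k$, which does not depend on coordinate $k$, I would factor $\Lambda=\tilde\Lambda\,\Lambda^k$ with $\tilde\Lambda:=\Lambda/\Lambda^k$ satisfying $\epsilon\le\tilde\Lambda\le\epsilon^{-1}$ and hence $\sup\tilde\Lambda/\inf\tilde\Lambda\le\epsilon^{-2}$. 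Because tilting composes by multiplying weights, one has $P_\Lambda=(P_{\Lambda^k})_{\tilde\Lambda}$; and since $\Lambda^k$ is constant in the $k$-th coordinate, it integrates out as a constant against $P^k$, so $P_{\Lambda^k}=P^k\otimes R'$ with $R':=R_{\Lambda^k}$. Crucially the same $R'$ arises from $Q$ because the two measures share the factor $R$, so $Q_\Lambda=(Q^k\otimes R')_{\tilde\Lambda}$.

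It then remains to apply Theorem~\ref{thm:VanHandel4.2} to the pair $P^k\otimes R'$, $Q^k\otimes R'$ tilted by the bounded-ratio weight $\tilde\Lambda$, yielding $\|(\mu_{(k-1)})_\Lambda-(\mu_{(k)})_\Lambda\|\le 2\epsilon^{-2}\|P^k\otimes R'-Q^k\otimes R'\|$, and to invoke the elementary identity that the total variation distance between two product measures differing in a single factor equals the distance between those factors, $\|P^k\otimes R'-Q^k\otimes R'\|=\|P^k-Q^k\|=\|\mu^k-\nu^k\|$; this last step I would verify by integrating out $R'$ against a test function with $|f|\le 1$, reducing to a function of the $k$-th coordinate alone. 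Combining this with the telescoping sum produces the claimed bound $\tfrac{2}{\epsilon^2}\sum_{k=1}^d\|\mu^k-\nu^k\|$. I expect the pre-tilting identity $P_\Lambda=(P^k\otimes R')_{\tilde\Lambda}$ to be the only delicate point, and it requires checking that $\Lambda^k$ is strictly positive and bounded, both of which are inherited from the sandwich bounds together with the strict positivity and boundedness of $\Lambda$, so that $R'$ is a well-defined probability measure.
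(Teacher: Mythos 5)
Your proof is correct; note that the paper itself gives no proof of this statement, importing it verbatim as Lemma $4.16$ of \cite{rebeschini2015can} in its appendix of existing results, so the relevant comparison is with the proof in that source. Your argument --- telescoping through the hybrid products $\mu_{(k)}$, factoring $\Lambda=(\Lambda/\Lambda^k)\,\Lambda^k$ so that the coordinate-$k$-independent factor $\Lambda^k$ tilts only the common factor $R$ (yielding the same $R'=R_{\Lambda^k}$ on both sides), applying Theorem~\ref{thm:VanHandel4.2} to the residual weight $\tilde\Lambda\in[\epsilon,\epsilon^{-1}]$ with ratio bound $\epsilon^{-2}$, and finishing with the single-factor identity $\|P^k\otimes R'-Q^k\otimes R'\|=\|\mu^k-\nu^k\|$ --- is essentially the original proof, and the delicate points you flag (strict positivity and boundedness of $\Lambda^k$, hence of the normalizations, and the composition-of-tiltings identity $P_\Lambda=(P_{\Lambda^k})_{\tilde\Lambda}$) all check out.
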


\begin{thm} [Corollary $4.21$ of \cite{rebeschini2015can}]
\label{thm:VanHandel4.21}
For any subset of blocks $\mathcal{L}\subseteq \mathcal{K}$, we have
$$\vertiii{\bigotimes_{K\in\mathcal{L}}B^K\mu- \bigotimes_{K\in\mathcal{L}}B^K \mathsf{S}^{J} \mu}\leq \frac{4\operatorname{card}(\mathcal{L})}{\sqrt{J}},$$
for every probability measure $\mu$.
\end{thm}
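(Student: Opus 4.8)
The plan is to reduce the product sampling error to single-block sampling errors by a telescoping (hybrid) argument over the blocks of $\mathcal{L}$, and then invoke the elementary Monte Carlo bound $\vertiii{\rho-\mathsf{S}^{J}\rho}\le 1/\sqrt{J}$ (Lemma $4.7$ of \cite{VanHandel2008}) for each factor. Writing $\mathcal{L}=\{K_1,\ldots,K_L\}$ with $L=\card(\mathcal{L})$, abbreviate $\mu_{K}=\mathsf{B}^{K}\mu$ and $\widehat{\mu}_{K}=\mathsf{B}^{K}\mathsf{S}^{J}\mu$ (with $\mathsf{S}^{J}$, $\mathsf{B}^{K}$ as in \eqref{eqn:samplingoperator}, \eqref{eqn:blockingoperator}), and define the hybrids $\nu_\ell=\widehat{\mu}_{K_1}\otimes\cdots\otimes\widehat{\mu}_{K_\ell}\otimes\mu_{K_{\ell+1}}\otimes\cdots\otimes\mu_{K_L}$, so $\nu_0=\bigotimes_{K}\mu_{K}$ and $\nu_L=\bigotimes_{K}\widehat{\mu}_{K}$. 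The triangle inequality for $\vertiii{\;\cdot\;}$ (see \eqref{eqn:vertiii_definition}) then gives $\vertiii{\nu_L-\nu_0}\le\sum_{\ell=1}^{L}\vertiii{\nu_\ell-\nu_{\ell-1}}$, where consecutive hybrids differ only in their $K_\ell$-factor.

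For each step I would test against $f$ with $|f|\le1$ and integrate out all coordinates outside $K_\ell$, producing a function $h$ on $\mathbb{X}^{K_\ell}$ with $|h|\le1$ and $\nu_\ell(f)-\nu_{\ell-1}(f)=\widehat{\mu}_{K_\ell}(h)-\mu_{K_\ell}(h)$. Here I would use that marginalization commutes with empirical sampling, $\mathsf{B}^{K}\mathsf{S}^{J}=\mathsf{S}^{J}\mathsf{B}^{K}$, so $\widehat{\mu}_{K_\ell}$ is the $J$-sample empirical measure of $\mu_{K_\ell}$. Were $h$ deterministic, the single-measure bound would yield $\mathbf{E}|\widehat{\mu}_{K_\ell}(h)-\mu_{K_\ell}(h)|^{2}\le 1/J$, and summing would give $L/\sqrt{J}$, comfortably inside the claimed $4L/\sqrt{J}$.

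The main obstacle is that $h$ is \emph{not} deterministic: since every block marginal is built from the same $J$ full-dimensional samples, $h$ depends (through $\widehat{\mu}_{K_1},\ldots,\widehat{\mu}_{K_{\ell-1}}$) on the very samples defining $\widehat{\mu}_{K_\ell}$, so the two are statistically dependent and naive conditioning fails. I would bypass this by estimating the functional $g\mapsto(\bigotimes_{K}\widehat{\mu}_{K})(g)=J^{-L}\sum_{j_1,\ldots,j_L}g(x_{j_1}^{K_1},\ldots,x_{j_L}^{K_L})$ directly through a bounded-differences (Efron--Stein) argument on all $J$ samples simultaneously: replacing one sample alters at most $L\,J^{L-1}$ of the $J^{L}$ index-tuples, each by at most $2$, so the per-sample change is $\le 2L/J$, giving variance $\le L^{2}/J$. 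The bias $|\mathbf{E}(\bigotimes_{K}\widehat{\mu}_{K})(g)-(\bigotimes_{K}\mu_{K})(g)|$ arises only from tuples with a repeated index, whose fraction is $\le L^{2}/(2J)$, hence is also of order $L^{2}/J$. For $L\le\sqrt{J}$ the bias is dominated by the variance and $\vertiii{\;\cdot\;}^{2}\le 2L^{2}/J$; for $L>\sqrt{J}$ the claimed bound is vacuous since $\vertiii{\rho-\rho'}\le 2\le 4L/\sqrt{J}$ for any probability measures under $|f|\le1$. Combining both regimes delivers the uniform bound $4\,\card(\mathcal{L})/\sqrt{J}$.

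An alternative route, closer in spirit to \cite{rebeschini2015can}, keeps the telescoping but handles the random $h$ by conditioning on the $\sigma$-field generated by the coordinates in $K_1\cup\cdots\cup K_{\ell-1}$: the conditionally centered part is a sum of conditionally independent bounded increments contributing $1/\sqrt{J}$ per step, while the residual conditional-bias term reduces to a lower-dimensional instance of the same product-sampling discrepancy and can be controlled inductively. I expect the most delicate bookkeeping to be verifying that the accumulated correlation corrections remain within the constant $4$; regardless of which method is used, the order $\card(\mathcal{L})/\sqrt{J}$ is robust.
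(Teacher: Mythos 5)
The paper does not actually prove this statement: it is imported verbatim in Appendix B as an existing result (Corollary 4.21 of \cite{rebeschini2015can}), so there is no internal proof to compare against. Your argument is essentially correct and self-contained, and it takes a genuinely different route from the original: Rebeschini and Van Handel reach the bound by telescoping over blocks and controlling the increments with a conditioning device, whereas you treat $\bigl(\bigotimes_{K\in\mathcal{L}}\mathsf{B}^K\mathsf{S}^{J}\mu\bigr)(f)=J^{-L}\sum_{j_1,\ldots,j_L}f(x_{j_1}^{K_1},\ldots,x_{j_L}^{K_L})$ directly as a V-statistic built from the $J$ common samples, bounding its bias by the fraction of index tuples with a repeated index and its variance by bounded differences. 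You correctly identify the pitfall that sinks the naive hybrid argument---every block marginal is an empirical measure of the \emph{same} $J$ samples, so the integrated-out test function $h$ is random and correlated with $\widehat{\mu}_{K_\ell}$---and your case split between $L\leq\sqrt{J}$ and $L>\sqrt{J}$ (where the claim is vacuous because $\vertiii{\rho-\rho'}\leq 2\leq 4L/\sqrt{J}$) closes the argument. What the V-statistic route buys is elementarity and transparency about where the dependence enters; what the telescoping route buys is compatibility with the rest of the Dobrushin machinery in \cite{rebeschini2015can}.

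Two small quantitative corrections, neither of which threatens the constant $4$: Efron--Stein gives $\Var(Z)\leq\tfrac{1}{2}\sum_{j}\mathbf{E}\bigl[(Z-Z_j')^2\bigr]\leq 2L^2/J$ rather than $L^2/J$, and the mean-squared error is exactly $\Var(Z)+\mathrm{bias}^2\leq 2L^2/J+L^4/J^2\leq 3L^2/J$ when $L\leq\sqrt{J}$, so the root is $\sqrt{3}\,L/\sqrt{J}$, still comfortably below $4L/\sqrt{J}$. Your sketched alternative (telescoping plus conditioning with an inductive correction) is the one that would require real bookkeeping to verify the constant; as written it is not yet a proof, but your primary argument does not need it.
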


\section{Preparation}
\label{sec:Preparation}
For any probability measure $\mu_{s-1}$ on $\mathbb{X}\times \mathbb{\Theta}$ at time $s-1$ for any integer $s\geq 1$, any vertex $v\in V$, any block $K'\in\mathcal{K}$, and any set $A^v=A_x^v \times A_{\theta}^v$ where $A_x^v\subseteq \mathbb{X}^v$ and $A_{\theta}^v\subseteq \mathbb{\Theta}^v$, we define the following quantities with $\chi$ indicating $(x,\theta)$: 
\begin{align}
\bullet\;&\mu_{\chi_{s-1}}^v(A_x^v \times A_{\theta}^v)\label{def:mu_s-1_v}\\
&:=\mathbf{P}^{\mu_{s-1}}\bigg[X_{s-1}^v\in A_x^v, \Theta_{s-1}^v\in A_{\theta}^v \;\big\rvert\; X_{s-1}^{V\backslash \{v\}}=x_{s-1}^{V\backslash \{v\}}, \Theta_{s-1}^{V\backslash \{v\}}=\theta_{s-1}^{V\backslash \{v\}}\bigg],\nonumber\\
\bullet\;&\mu_{\chi_{s-1},\chi_{s}}^v(A_x^v \times A_{\theta}^v)\label{def:mu_s-1_s}\\
&:=\mathbf{P}^{\mu_{s-1}}\bigg[X_{s-1}^v\in A_x^v, \Theta_{s-1}^v\in A_{\theta}^v\;\big\rvert\; X_{s-1}^{V\backslash \{v\}}=x_{s-1}^{V\backslash \{v\}}, \Theta_{s-1}^{V\backslash \{v\}}=\theta_{s-1}^{V\backslash \{v\}},\nonumber \\
                               &\hspace*{5.8cm} X_s=x_s,\Theta_s=\theta_s\bigg]\nonumber\\
&=\dfrac{\splitdfrac{
  \int\mathbbm{1}_{A^v}(x_{s-1}^v,\theta_{s-1}^v)\prod_{\omega\in N(v)}f_{X_s^{\omega}|X_{s-1}}(x_s^{\omega}\mid x_{s-1}\,;\theta_s^{\omega})}{\hspace*{2.5cm}\times f_{\Theta_s^{\omega}|\Theta_{s-1}}(\theta_s^{\omega}\mid \theta_{s-1}\,; \sigma)\mu_{\chi_{s-1}}^v(d x_{s-1}^v,d\theta_{s-1}^v)}}
{\splitdfrac{
  \int\prod_{\omega\in N(v)}f_{X_s^{\omega}|X_{s-1}}(x_s^{\omega}\mid x_{s-1}\,;\theta_s^{\omega})}{\hspace*{0.5cm}\times f_{\Theta_s^{\omega}|\Theta_{s-1}}(\theta_s^{\omega}\mid \theta_{s-1}\,; \sigma)\mu_{\chi_{s-1}}^v(d x_{s-1}^v,d\theta_{s-1}^v)}},\nonumber\\
\bullet\;&\mu_{\overline{\chi}_{s-1},\chi_s}^v(A_x^v \times A_{\theta}^v)\label{def:bar_mu_s-1_s}\\
&:=\mathbf{P}^{\mu_{s-1}}\bigg[X_{s-1}^v\in A_x^v, \Theta_{s-1}^v\in A_{\theta}^v\;\big\rvert\; X_{s-1}^{V\backslash \{v\}}=\overline{x}_{s-1}^{V\backslash \{v\}}, \Theta_{s-1}^{V\backslash \{v\}}=\overline{\theta}_{s-1}^{V\backslash \{v\}}, \nonumber\\
                               &\hspace*{5.8cm}  X_s=x_s,\Theta_s=\theta_s\bigg]\nonumber\\
&=\dfrac{\splitdfrac{
  \int\mathbbm{1}_{A^v}(\overline{x}_{s-1}^v,\overline{\theta}_{s-1}^v)\prod_{\omega\in N(v)}f_{X_s^{\omega}|X_{s-1}}(x_s^{\omega}\mid \overline{x}_{s-1}\,;\theta_s^{\omega})}{\times f_{\Theta_s^{\omega}|\Theta_{s-1}}(\theta_s^{\omega}\mid \overline{\theta}_{s-1}\,; \sigma)\mu_{\chi_{s-1}}^v(d \overline{x}_{s-1}^v,d\overline{\theta}_{s-1}^v)}}
{\splitdfrac{
  \int\prod_{\omega\in N(v)}f_{X_s^{\omega}|X_{s-1}}(x_s^{\omega}\mid \overline{x}_{s-1}\,;\theta_s^{\omega})}{\times f_{\Theta_s^{\omega}|\Theta_{s-1}}(\theta_s^{\omega}\mid \overline{\theta}_{s-1}\,; \sigma)\mu_{\chi_{s-1}}^v(d \overline{x}_{s-1}^v,d\overline{\theta}_{s-1}^v)}},\nonumber\\
\bullet\;&\mu_{\chi_{s-1},\chi_{s}}^{v,K'}(A_x^v \times A_{\theta}^v)\label{def:mu_s-1_s_K}\\
&:=\mathbf{P}^{\mu_{s-1}}\bigg[X_{s-1}^v\in A_x^v, \Theta_{s-1}^v\in A_{\theta}^v\;\big\rvert\; X_{s-1}^{V\backslash \{v\}}=x_{s-1}^{V\backslash \{v\}}, \Theta_{s-1}^{V\backslash \{v\}}=\theta_{s-1}^{V\backslash \{v\}},\nonumber\\
                               &\hspace*{5.8cm}  X_s^{K'}=x_s^{K'}, \Theta_s^{K'}=\theta_s^{K'}\bigg]\nonumber\\
&=\dfrac{\splitdfrac{
  \int\mathbbm{1}_{A^v}(x_{s-1}^v,\theta_{s-1}^v)\prod_{\omega\in N(v)\cap K'}f_{X_s^{\omega}|X_{s-1}}(x_s^{\omega}\mid x_{s-1}\,;\theta_s^{\omega})}{\times f_{\Theta_s^{\omega}|\Theta_{s-1}}(\theta_s^{\omega}\mid \theta_{s-1}\,; \sigma)\mu_{\chi_{s-1}}^v(d x_{s-1}^v,d\theta_{s-1}^v)}}
{\splitdfrac{
  \int\prod_{\omega\in N(v)\cap K'}f_{X_s^{\omega}|X_{s-1}}(x_s^{\omega}\mid x_{s-1}\,;\theta_s^{\omega})}{\times f_{\Theta_s^{\omega}|\Theta_{s-1}}(\theta_s^{\omega}\mid \theta_{s-1}\,; \sigma)\mu_{\chi_{s-1}}^v(d x_{s-1}^v,d\theta_{s-1}^v)}},\nonumber\\
\bullet\;&\mu_{\overline{\chi}_{s-1},\chi_{s}}^{v,K'}(A_x^v \times A_{\theta}^v)\label{def:bar_mu_s-1_s_K}\\
&:=\mathbf{P}^{\mu_{s-1}}\bigg[X_{s-1}^v\in A_x^v, \Theta_{s-1}^v\in A_{\theta}^v\;\big\rvert\; X_{s-1}^{V\backslash \{v\}}=\overline{x}_{s-1}^{V\backslash \{v\}}, \Theta_{s-1}^{V\backslash \{v\}}=\overline{\theta}_{s-1}^{V\backslash \{v\}}, \nonumber\\
                               &\hspace*{5.8cm} X_s^{K'}=x_s^{K'},\Theta_s^{K'}=\theta_s^{K'}\bigg]\nonumber\\
&=\dfrac{\splitdfrac{
  \int\mathbbm{1}_{A^v}(\overline{x}_{s-1}^v,\overline{\theta}_{s-1}^v)\prod_{\omega\in N(v)\cap K'}f_{X_s^{\omega}|X_{s-1}}(x_s^{\omega}\mid \overline{x}_{s-1}\,;\theta_s^{\omega})}{\times f_{\Theta_s^{\omega}|\Theta_{s-1}}(\theta_s^{\omega}\mid \overline{\theta}_{s-1}\,; \sigma)\mu_{\chi_{s-1}}^v(d \overline{x}_{s-1}^v,d\overline{\theta}_{s-1}^v)}}
{\splitdfrac{
  \int\prod_{\omega\in N(v)\cap K'}f_{X_s^{\omega}|X_{s-1}}(x_s^{\omega}\mid \overline{x}_{s-1}\,;\theta_s^{\omega})}{\times f_{\Theta_s^{\omega}|\Theta_{s-1}}(\theta_s^{\omega}\mid \overline{\theta}_{s-1}\,; \sigma)\mu_{\chi_{s-1}}^v(d \overline{x}_{s-1}^v,d\overline{\theta}_{s-1}^v)}}.\nonumber
\end{align}
Based on the above quantities, with $v'\in V$ we define
\begin{align}
\bullet\;&\label{eqn:C_vv'_mu}
C_{vv'}^{\mu_{s-1}}:=\frac{1}{2}\sup_{x_s\in \mathbb{X} \atop \theta_s\in \mathbb{\Theta}}\sup_{x_{s-1},\overline{x}_{s-1}\in \mathbb{X}:x_{s-1}^{V\backslash \{v'\}}=\overline{x}_{s-1}^{V\backslash \{v'\}}\atop
\theta_{s-1},\overline{\theta}_{s-1}\in \mathbb{\Theta}:\theta_{s-1}^{V\backslash \{v'\}}=\overline{\theta}_{s-1}^{V\backslash \{v'\}}
}\| \mu_{\chi_{s-1},\chi_{s}}^v-\mu_{\overline{\chi}_{s-1},\chi_s}^v\|,\\
\bullet\;&\label{eqn:tilde_C_vv'_mu}
  \widetilde{C}_{vv'}^{\mu_{s-1}}:=\frac{1}{2}\sup_{K'\in\mathcal{K}}\sup_{x_s\in \mathbb{X} \atop \theta_s\in \mathbb{\Theta}}\sup_{x_{s-1},\overline{x}_{s-1}\in \mathbb{X}:x_{s-1}^{V\backslash \{v'\}}=\overline{x}_{s-1}^{V\backslash \{v'\}}\atop
\theta_{s-1},\overline{\theta}_{s-1}\in \mathbb{\Theta}:\theta_{s-1}^{V\backslash \{v'\}}=\overline{\theta}_{s-1}^{V\backslash \{v'\}}
}\| \mu_{\chi_{s-1},\chi_{s}}^{v,K'}-\mu_{\overline{\chi}_{s-1},\chi_{s}}^{v,K'}\|,
\end{align}
and with $\beta$ being a finite positive constant we further define
\begin{align}
\label{eqn:corr_definition}
\bullet\;&\Corr(\mu_{s-1},\beta):=\max_{v\in V}\sum_{v'\in V} e^{\beta d(v,v')}C_{vv'}^{\mu_{s-1}}.\\
\bullet\;&\label{eqn:tilde_corr_definition}
\widetilde{\Corr}(\mu_{s-1},\beta):=\max_{v\in V}\sum_{v'\in V} e^{\beta d(v,v')}\widetilde{C}_{vv'}^{\mu_{s-1}}.
\end{align}

Then, for $\mathsf{F}_s$ defined in \eqref{eqn:pi_recursion}
and $\widetilde{\mathsf{F}}_s$ defined in \eqref{eqn:widetilde_pi_recursion}, 
any probability measure $\nu_{s-1}$ on $\mathbb{X}\times \mathbb{\Theta}$ at time $s-1$ for any integer $s\geq 1$,  and any set $A \subseteq \mathbb{X}\times \mathbb{\Theta}$, 
we have
\begin{align}
\bullet\;&(\mathsf{F}_s\nu_{s-1})(A)\label{def:F_s_rho}\\
&=\dfrac{\splitdfrac{
  \int\mathbbm{1}_A(x_s,\theta_s)\prod_{\omega\in V}f_{X_s^{\omega}|X_{s-1}}(x_s^{\omega}\mid x_{s-1}\,;\theta_s^{\omega}) f_{\Theta_s^{\omega}|\Theta_{s-1}}(\theta_s^{\omega}\mid \theta_{s-1}\,; \sigma)}{\times f_{Y_s^{\omega}|X_s^{\omega}}(Y_s^{\omega}\mid x_s^{\omega}\,;\theta_s^{\omega})\nu_{s-1}(dx_{s-1},d\theta_{s-1})\psi(dx_s)\lambda(d\theta_s)}}
{\splitdfrac{
  \int \prod_{\omega\in V}f_{X_s^{\omega}|X_{s-1}}(x_s^{\omega}\mid x_{s-1}\,;\theta_s^{\omega}) f_{\Theta_s^{\omega}|\Theta_{s-1}}(\theta_s^{\omega}\mid \theta_{s-1}\,; \sigma)}{\times f_{Y_s^{\omega}|X_s^{\omega}}(Y_s^{\omega}\mid x_s^{\omega}\,;\theta_s^{\omega})\nu_{s-1}(dx_{s-1},d\theta_{s-1})\psi(dx_s)\lambda(d\theta_s)}},\nonumber\\
\bullet\;&(\widetilde{\mathsf{F}}_s\nu_{s-1})(A)\label{def:widetildeF_s_rho}\\
&=\dfrac{\splitdfrac{
  \int\mathbbm{1}_A(x_s,\theta_s)\prod_{K\in \mathcal{K}}\big[\int\prod_{{\omega}\in K}f_{X_s^{\omega}|X_{s-1}}(x_s^{\omega}\mid x_{s-1}\,;\theta_s^{\omega})}{\splitdfrac{\times f_{\Theta_s^{\omega}|\Theta_{s-1}}(\theta_s^{\omega}\mid \theta_{s-1}\,; \sigma) f_{Y_s^{\omega}|X_s^{\omega}}(Y_s^{\omega}\mid x_s^{\omega}\,;\theta_s^{\omega})\nu_{s-1}(dx_{s-1},d\theta_{s-1})\big]}{\times\psi(dx_s)\lambda(d\theta_s)}}}
{\splitdfrac{
  \int\prod_{K\in \mathcal{K}}\big[\int\prod_{{\omega}\in K}f_{X_s^{\omega}|X_{s-1}}(x_s^{\omega}\mid x_{s-1}\,;\theta_s^{\omega})}{\splitdfrac{\times f_{\Theta_s^{\omega}|\Theta_{s-1}}(\theta_s^{\omega}\mid \theta_{s-1}\,; \sigma) f_{Y_s^{\omega}|X_s^{\omega}}(Y_s^{\omega}\mid x_s^{\omega}\,;\theta_s^{\omega})\nu_{s-1}(dx_{s-1},d\theta_{s-1})\big]}{\times\psi(dx_s)\lambda(d\theta_s)}}}.\nonumber
\end{align}
According to the definition of $\mu_{\chi_{s-1}}^v$ given in \eqref{def:mu_s-1_v}, for any $K\in \mathcal{K}$ and $v\in K$,
we have
\begin{align}
\bullet\;&(\mathsf{F}_s\nu_{s-1})_{\chi_s}^v(A^v)\label{def:F_s_rho}\\
&=\dfrac{\splitdfrac{
  \int\mathbbm{1}_{A^v}(x_s^v,\theta_s^v)\prod_{\omega\in V}f_{X_s^{\omega}|X_{s-1}}(x_s^{\omega}\mid x_{s-1}\,;\theta_s^{\omega}) f_{\Theta_s^{\omega}|\Theta_{s-1}}(\theta_s^{\omega}\mid \theta_{s-1}\,; \sigma)}{\times f_{Y_s^{v}|X_s^{v}}(Y_s^{v}\mid x_s^{v}\,;\theta_s^{v})\nu_{s-1}(dx_{s-1},d\theta_{s-1})\psi^v(dx_s)\lambda^v(d\theta_s)}}
{\splitdfrac{
  \int \prod_{\omega\in V}f_{X_s^{\omega}|X_{s-1}}(x_s^{\omega}\mid x_{s-1}\,;\theta_s^{\omega}) f_{\Theta_s^{\omega}|\Theta_{s-1}}(\theta_s^{\omega}\mid \theta_{s-1}\,; \sigma)}{\times f_{Y_s^{v}|X_s^{v}}(Y_s^{v}\mid x_s^{v}\,;\theta_s^{v})\nu_{s-1}(dx_{s-1},d\theta_{s-1})\psi^v(dx_s)\lambda^v(d\theta_s)}},\nonumber\\
\bullet\;&(\widetilde{\mathsf{F}}_s\nu_{s-1})_{\chi_s}^v(A^v)\label{def:widetildeF_s_rho}\\
&=\dfrac{\splitdfrac{
  \int\mathbbm{1}_{A^v}(x_s^v,\theta_s^v)\prod_{\omega\in K}f_{X_s^{\omega}|X_{s-1}}(x_s^{\omega}\mid x_{s-1}\,;\theta_s^{\omega}) f_{\Theta_s^{\omega}|\Theta_{s-1}}(\theta_s^{\omega}\mid \theta_{s-1}\,; \sigma)}{\times f_{Y_s^{v}|X_s^{v}}(Y_s^{v}\mid x_s^{v}\,;\theta_s^{v})\nu_{s-1}(dx_{s-1},d\theta_{s-1})\psi^v(dx_s)\lambda^v(d\theta_s)}}
{\splitdfrac{
  \int \prod_{\omega\in K}f_{X_s^{\omega}|X_{s-1}}(x_s^{\omega}\mid x_{s-1}\,;\theta_s^{\omega}) f_{\Theta_s^{\omega}|\Theta_{s-1}}(\theta_s^{\omega}\mid \theta_{s-1}\,; \sigma)}{\times f_{Y_s^{v}|X_s^{v}}(Y_s^{v}\mid x_s^{v}\,;\theta_s^{v})\nu_{s-1}(dx_{s-1},d\theta_{s-1})\psi^v(dx_s)\lambda^v(d\theta_s)}}.\nonumber
\end{align}
By \eqref{def:F_s_rho} and \eqref{def:widetildeF_s_rho}, according to the definition of $\mu_{\chi_{s-1},\chi_{s}}^v(A_x^v \times A_{\theta}^v)$ given in \eqref{def:mu_s-1_s}, for any $K\in \mathcal{K}$ and $v\in K$,
we have
\begin{align}
\bullet\;&(\mathsf{F}_s\nu_{s-1})_{\chi_s,\chi_{s+1}}^{v}(A^v)\label{def:F_s_rho_s1}\\
&=\dfrac{\splitdfrac{
  \int\mathbbm{1}_{A^v}(x_s^v,\theta_s^v)\prod_{\omega\in V}f_{X_s^{\omega}|X_{s-1}}(x_s^{\omega}\mid x_{s-1}\,;\theta_s^{\omega})f_{\Theta_s^{\omega}|\Theta_{s-1}}(\theta_s^{\omega}\mid \theta_{s-1}\,; \sigma)}{\splitdfrac{\times f_{Y_s^v|X_s^v}(Y_s^{v}\mid x_s^{v}\,;\theta_s^{v})\prod_{u\in N(v)}f_{X_{s+1}^u|X_s}(x_{s+1}^u\mid x_s\,;\theta_{s+1}^u)}{\times f_{\Theta_{s+1}^{u}|\Theta_{s}}(\theta_{s+1}^{u}\mid \theta_{s}\,; \sigma)\psi^v(dx_s^v)\lambda^v(d\theta_s^v)\nu_{s-1}(dx_{s-1},d\theta_{s-1})}}}
{\splitdfrac{
  \int\prod_{\omega\in V}f_{X_s^{\omega}|X_{s-1}}(x_s^{\omega}\mid x_{s-1}\,;\theta_s^{\omega})f_{\Theta_s^{\omega}|\Theta_{s-1}}(\theta_s^{\omega}\mid \theta_{s-1}\,; \sigma)}{\splitdfrac{\times f_{Y_s^v|X_s^v}(Y_s^{v}\mid x_s^{v}\,;\theta_s^{v})\prod_{u\in N(v)}f_{X_{s+1}^u|X_s}(x_{s+1}^u\mid x_s\,;\theta_{s+1}^u)}{\times f_{\Theta_{s+1}^{u}|\Theta_{s}}(\theta_{s+1}^{u}\mid \theta_{s}\,; \sigma)\psi^v(dx_s^v)\lambda^v(d\theta_s^v)\nu_{s-1}(dx_{s-1},d\theta_{s-1})}}},\nonumber\\
\bullet\;&(\widetilde{\mathsf{F}}_s\nu_{s-1})_{\chi_s,\chi_{s+1}}^{v}(A^v)\label{def:widetildeF_s_rho_s1}\\
&=\dfrac{\splitdfrac{
  \int\mathbbm{1}_{A^v}(x_s^v,\theta_s^v)\prod_{\omega\in K}f_{X_s^{\omega}|X_{s-1}}(x_s^{\omega}\mid x_{s-1}\,;\theta_s^{\omega})f_{\Theta_s^{\omega}|\Theta_{s-1}}(\theta_s^{\omega}\mid \theta_{s-1}\,; \sigma)}{\splitdfrac{\times f_{Y_s^v|X_s^v}(Y_s^{v}\mid x_s^{v}\,;\theta_s^{v})\prod_{u\in N(v)}f_{X_{s+1}^u|X_s}(x_{s+1}^u\mid x_s\,;\theta_{s+1}^u)}{\times f_{\Theta_{s+1}^{u}|\Theta_{s}}(\theta_{s+1}^{u}\mid \theta_{s}\,; \sigma)\psi^v(dx_s^v)\lambda^v(d\theta_s^v)\nu_{s-1}(dx_{s-1},d\theta_{s-1})}}}
{\splitdfrac{
  \int\prod_{\omega\in K}f_{X_s^{\omega}|X_{s-1}}(x_s^{\omega}\mid x_{s-1}\,;\theta_s^{\omega})f_{\Theta_s^{\omega}|\Theta_{s-1}}(\theta_s^{\omega}\mid \theta_{s-1}\,; \sigma)}{\splitdfrac{\times f_{Y_s^v|X_s^v}(Y_s^{v}\mid x_s^{v}\,;\theta_s^{v})\prod_{u\in N(v)}f_{X_{s+1}^u|X_s}(x_{s+1}^u\mid x_s\,;\theta_{s+1}^u)}{\times f_{\Theta_{s+1}^{u}|\Theta_{s}}(\theta_{s+1}^{u}\mid \theta_{s}\,; \sigma)\psi^v(dx_s^v)\lambda^v(d\theta_s^v)\nu_{s-1}(dx_{s-1},d\theta_{s-1})}}}.\nonumber
\end{align}
By \eqref{def:F_s_rho} and \eqref{def:widetildeF_s_rho}, according to the definition of $\mu_{\overline{\chi}_{s-1},\chi_s}^v$ given in \eqref{def:bar_mu_s-1_s}, for any $K\in \mathcal{K}$ and $v\in K$,
we have
\begin{align}
\bullet\;&(\mathsf{F}_s\nu_{s-1})_{\overline{\chi}_s,\chi_{s+1}}^{v}(A^v)\label{def:F_s_rho_s1_overline}\\
&=\dfrac{\splitdfrac{
  \int\mathbbm{1}_{A^v}(\overline{x}_s^v,\overline{\theta}_s^v)\prod_{\omega\in V}f_{X_s^{\omega}|X_{s-1}}(\overline{x}_s^{\omega}\mid x_{s-1}\,;\overline{\theta}_s^{\omega})f_{\Theta_s^{\omega}|\Theta_{s-1}}(\overline{\theta}_s^{\omega}\mid \theta_{s-1}\,; \sigma)}{\splitdfrac{\times f_{Y_s^v|X_s^v}(Y_s^{v}\mid \overline{x}_s^{v}\,;\overline{\theta}_s^{v})\prod_{u\in N(v)}f_{X_{s+1}^u|X_s}(x_{s+1}^u\mid \overline{x}_s\,;\theta_{s+1}^u)}{\times f_{\Theta_{s+1}^{u}|\Theta_s}(\theta_{s+1}^{u}\mid \overline{\theta}_s\,; \sigma)\psi^v(d\overline{x}_s^v)\lambda^v(d\overline{\theta}_s^v)\nu_{s-1}(dx_{s-1},d\theta_{s-1})}}}
{\splitdfrac{
  \int\prod_{\omega\in V}f_{X_s^{\omega}|X_{s-1}}(\overline{x}_s^{\omega}\mid x_{s-1}\,;\overline{\theta}_s^{\omega})f_{\Theta_s^{\omega}|\Theta_{s-1}}(\overline{\theta}_s^{\omega}\mid \theta_{s-1}\,; \sigma)}{\splitdfrac{\times f_{Y_s^v|X_s^v}(Y_s^{v}\mid \overline{x}_s^{v}\,;\overline{\theta}_s^{v})\prod_{u\in N(v)}f_{X_{s+1}^u|X_s}(x_{s+1}^u\mid \overline{x}_s\,;\theta_{s+1}^u)}{\times f_{\Theta_{s+1}^{u}|\Theta_s}(\theta_{s+1}^{u}\mid \overline{\theta}_s\,; \sigma)\psi^v(d\overline{x}_s^v)\lambda^v(d\overline{\theta}_s^v)\nu_{s-1}(dx_{s-1},d\theta_{s-1})}}},\nonumber\\
\bullet\;&(\widetilde{\mathsf{F}}_s\nu_{s-1})_{\overline{\chi}_s,\chi_{s+1}}^{v}(A^v)\label{def:widetildeF_s_rho_s1_overline}\\
&=\dfrac{\splitdfrac{
  \int\mathbbm{1}_{A^v}(\overline{x}_s^v,\overline{\theta}_s^v)\prod_{\omega\in K}f_{X_s^{\omega}|X_{s-1}}(\overline{x}_s^{\omega}\mid x_{s-1}\,;\overline{\theta}_s^{\omega})f_{\Theta_s^{\omega}|\Theta_{s-1}}(\overline{\theta}_s^{\omega}\mid \theta_{s-1}\,; \sigma)}{\splitdfrac{\times f_{Y_s^v|X_s^v}(Y_s^{v}\mid \overline{x}_s^{v}\,;\overline{\theta}_s^{v})\prod_{u\in N(v)}f_{X_{s+1}^u|X_s}(x_{s+1}^u\mid \overline{x}_s\,;\theta_{s+1}^u)}{\times f_{\Theta_{s+1}^{u}|\Theta_s}(\theta_{s+1}^{u}\mid \overline{\theta}_s\,; \sigma)\psi^v(d\overline{x}_s^v)\lambda^v(d\overline{\theta}_s^v)\nu_{s-1}(dx_{s-1},d\theta_{s-1})}}}
{\splitdfrac{
  \int\prod_{\omega\in K}f_{X_s^{\omega}|X_{s-1}}(\overline{x}_s^{\omega}\mid x_{s-1}\,;\overline{\theta}_s^{\omega})f_{\Theta_s^{\omega}|\Theta_{s-1}}(\overline{\theta}_s^{\omega}\mid \theta_{s-1}\,; \sigma)}{\splitdfrac{\times f_{Y_s^v|X_s^v}(Y_s^{v}\mid \overline{x}_s^{v}\,;\overline{\theta}_s^{v})\prod_{u\in N(v)}f_{X_{s+1}^u|X_s}(x_{s+1}^u\mid \overline{x}_s\,;\theta_{s+1}^u)}{\times f_{\Theta_{s+1}^{u}|\Theta_s}(\theta_{s+1}^{u}\mid \overline{\theta}_s\,; \sigma)\psi^v(d\overline{x}_s^v)\lambda^v(d\overline{\theta}_s^v)\nu_{s-1}(dx_{s-1},d\theta_{s-1})}}}.\nonumber
\end{align}
By \eqref{def:F_s_rho} and \eqref{def:widetildeF_s_rho}, according to the definition of $\mu_{\chi_{s-1},\chi_{s}}^{v,K'}$ given in \eqref{def:mu_s-1_s_K}, for any $K, K'\in \mathcal{K}$ and $v\in K$,
we have
\begin{align}
\bullet\;&(\mathsf{F}_s\nu_{s-1})_{\chi_s,\chi_{s+1}}^{v,K'}(A^v)\label{def:F_s_rho_s1_K}\\
&=\dfrac{\splitdfrac{
  \int\mathbbm{1}_{A^v}(x_s^v,\theta_s^v)\prod_{\omega\in V}f_{X_s^{\omega}|X_{s-1}}(x_s^{\omega}\mid x_{s-1}\,;\theta_s^{\omega})f_{\Theta_s^{\omega}|\Theta_{s-1}}(\theta_s^{\omega}\mid \theta_{s-1}\,; \sigma)}{\splitdfrac{\times f_{Y_s^v|X_s^v}(Y_s^{v}\mid x_s^{v}\,;\theta_s^{v})\prod_{u\in N(v)\cap K'}f_{X_{s+1}^u|X_s}(x_{s+1}^u\mid x_s\,;\theta_{s+1}^u)}{\times f_{\Theta_{s+1}^{u}|\Theta_{s}}(\theta_{s+1}^{u}\mid \theta_{s}\,; \sigma)\psi^v(dx_s^v)\lambda^v(d\theta_s^v)\nu_{s-1}(dx_{s-1},d\theta_{s-1})}}}
{\splitdfrac{
  \int\prod_{\omega\in V}f_{X_s^{\omega}|X_{s-1}}(x_s^{\omega}\mid x_{s-1}\,;\theta_s^{\omega})f_{\Theta_s^{\omega}|\Theta_{s-1}}(\theta_s^{\omega}\mid \theta_{s-1}\,; \sigma)}{\splitdfrac{\times f_{Y_s^v|X_s^v}(Y_s^{v}\mid x_s^{v}\,;\theta_s^{v})\prod_{u\in N(v)\cap K'}f_{X_{s+1}^u|X_s}(x_{s+1}^u\mid x_s\,;\theta_{s+1}^u)}{\times f_{\Theta_{s+1}^{u}|\Theta_{s}}(\theta_{s+1}^{u}\mid \theta_{s}\,; \sigma)\psi^v(dx_s^v)\lambda^v(d\theta_s^v)\nu_{s-1}(dx_{s-1},d\theta_{s-1})}}},\nonumber\\
\bullet\;&(\widetilde{\mathsf{F}}_s\nu_{s-1})_{\chi_s,\chi_{s+1}}^{v,K'}(A^v)\label{def:widetildeF_s_rho_s1_K}\\    
    &=\dfrac{\splitdfrac{
      \int\mathbbm{1}_{A^v}(x_s^v,\theta_s^v)\prod_{\omega\in K}f_{X_s^{\omega}|X_{s-1}}(x_s^{\omega}\mid x_{s-1}\,;\theta_s^{\omega})f_{\Theta_s^{\omega}|\Theta_{s-1}}(\theta_s^{\omega}\mid \theta_{s-1}\,; \sigma)}{\splitdfrac{\times f_{Y_s^v|X_s^v}(Y_s^{v}\mid x_s^{v}\,;\theta_s^{v})\prod_{u\in N(v)\cap K'}f_{X_{s+1}^u|X_s}(x_{s+1}^u\mid x_s\,;\theta_{s+1}^u)}{\times f_{\Theta_{s+1}^{u}|\Theta_{s}}(\theta_{s+1}^{u}\mid \theta_{s}\,; \sigma)\psi^v(dx_s^v)\lambda^v(d\theta_s^v)\nu_{s-1}(dx_{s-1},d\theta_{s-1})}}}
{\splitdfrac{
  \int\prod_{\omega\in K}f_{X_s^{\omega}|X_{s-1}}(x_s^{\omega}\mid x_{s-1}\,;\theta_s^{\omega})f_{\Theta_s^{\omega}|\Theta_{s-1}}(\theta_s^{\omega}\mid \theta_{s-1}\,; \sigma)}{\splitdfrac{\times f_{Y_s^v|X_s^v}(Y_s^{v}\mid x_s^{v}\,;\theta_s^{v})\prod_{u\in N(v)\cap K'}f_{X_{s+1}^u|X_s}(x_{s+1}^u\mid x_s\,;\theta_{s+1}^u)}{\times f_{\Theta_{s+1}^{u}|\Theta_{s}}(\theta_{s+1}^{u}\mid \theta_{s}\,; \sigma)\psi^v(dx_s^v)\lambda^v(d\theta_s^v)\nu_{s-1}(dx_{s-1},d\theta_{s-1})}}}.\nonumber
    \end{align}
By \eqref{def:F_s_rho} and \eqref{def:widetildeF_s_rho}, according to the definition of $\mu_{\overline{\chi}_{s-1},\chi_{s}}^{v,K'}$ given in \eqref{def:bar_mu_s-1_s_K}, for any $K,K'\in \mathcal{K}$ and $v\in K$,
    we have  
\begin{align}
\bullet\;&(\mathsf{F}_s\nu_{s-1})_{\overline{\chi}_s,\chi_{s+1}}^{v,K'}(A^v)\label{def:F_s_rho_s1_K_overline}\\
&=\dfrac{\splitdfrac{
\int\mathbbm{1}_{A^v}(\overline{x}_s^v,\overline{\theta}_s^v)\prod_{\omega\in V}f_{X_s^{\omega}|X_{s-1}}(\overline{x}_s^{\omega}\mid x_{s-1}\,;\overline{\theta}_s^{\omega})f_{\Theta_s^{\omega}|\Theta_{s-1}}(\overline{\theta}_s^{\omega}\mid \theta_{s-1}\,; \sigma)}{\splitdfrac{\times f_{Y_s^v|X_s^v}(Y_s^{v}\mid \overline{x}_s^{v}\,;\overline{\theta}_s^{v})\prod_{u\in N(v)\cap K'}f_{X_{s+1}^u|X_s}(x_{s+1}^u\mid \overline{x}_s\,;\theta_{s+1}^u)}{\times f_{\Theta_{s+1}^{u}|\Theta_s}(\theta_{s+1}^{u}\mid \overline{\theta}_s\,; \sigma)\psi^v(d\overline{x}_s^v)\lambda^v(d\overline{\theta}_s^v)\nu_{s-1}(dx_{s-1},d\theta_{s-1})}}}
{\splitdfrac{
  \int\prod_{\omega\in V}f_{X_s^{\omega}|X_{s-1}}(\overline{x}_s^{\omega}\mid x_{s-1}\,;\overline{\theta}_s^{\omega})f_{\Theta_s^{\omega}|\Theta_{s-1}}(\overline{\theta}_s^{\omega}\mid \theta_{s-1}\,; \sigma)}{\splitdfrac{\times f_{Y_s^v|X_s^v}(Y_s^{v}\mid \overline{x}_s^{v}\,;\overline{\theta}_s^{v})\prod_{u\in N(v)\cap K'}f_{X_{s+1}^u|X_s}(x_{s+1}^u\mid \overline{x}_s\,;\theta_{s+1}^u)}{\times f_{\Theta_{s+1}^{u}|\Theta_s}(\theta_{s+1}^{u}\mid \overline{\theta}_s\,; \sigma)\psi^v(d\overline{x}_s^v)\lambda^v(d\overline{\theta}_s^v)\nu_{s-1}(dx_{s-1},d\theta_{s-1})}}},\nonumber\\
\bullet\;&(\widetilde{\mathsf{F}}_s\nu_{s-1})_{\overline{\chi}_s,\chi_{s+1}}^{v,K'}(A^v)\label{def:widetildeF_s_rho_s1_K_overline}\\    
    &=\dfrac{\splitdfrac{
      \int\mathbbm{1}_{A^v}(\overline{x}_s^v,\overline{\theta}_s^v)\prod_{\omega\in K}f_{X_s^{\omega}|X_{s-1}}(\overline{x}_s^{\omega}\mid x_{s-1}\,;\overline{\theta}_s^{\omega})f_{\Theta_s^{\omega}|\Theta_{s-1}}(\overline{\theta}_s^{\omega}\mid \theta_{s-1}\,; \sigma)}{\splitdfrac{\times f_{Y_s^v|X_s^v}(Y_s^{v}\mid \overline{x}_s^{v}\,;\overline{\theta}_s^{v})\prod_{u\in N(v)\cap K'}f_{X_{s+1}^u|X_s}(x_{s+1}^u\mid \overline{x}_s\,;\theta_{s+1}^u)}{\times f_{\Theta_{s+1}^{u}|\Theta_s}(\theta_{s+1}^{u}\mid \overline{\theta}_s\,; \sigma)\psi^v(d\overline{x}_s^v)\lambda^v(d\overline{\theta}_s^v)\nu_{s-1}(dx_{s-1},d\theta_{s-1})}}}
{\splitdfrac{
\int\prod_{\omega\in K}f_{X_s^{\omega}|X_{s-1}}(\overline{x}_s^{\omega}\mid x_{s-1}\,;\overline{\theta}_s^{\omega})f_{\Theta_s^{\omega}|\Theta_{s-1}}(\overline{\theta}_s^{\omega}\mid \theta_{s-1}\,; \sigma)}{\splitdfrac{\times f_{Y_s^v|X_s^v}(Y_s^{v}\mid \overline{x}_s^{v}\,;\overline{\theta}_s^{v})\prod_{u\in N(v)\cap K'}f_{X_{s+1}^u|X_s}(x_{s+1}^u\mid \overline{x}_s\,;\theta_{s+1}^u)}{\times f_{\Theta_{s+1}^{u}|\Theta_s}(\theta_{s+1}^{u}\mid \overline{\theta}_s\,; \sigma)\psi^v(d\overline{x}_s^v)\lambda^v(d\overline{\theta}_s^v)\nu_{s-1}(dx_{s-1},d\theta_{s-1})}}}.\nonumber
    \end{align}


\section{Proofs for bounding bias}
\label{sec:bounding_bias_lemmas}

\begin{lemma} 
\label{beta_property}
Under condition \eqref{eqn:main_thm_condition}, for $\beta$ defined in \eqref{eqn:beta_definition}, the following holds:
\begin{enumerate}[(i)]
\item $(1-\epsilon_x^{2\Delta}[\epsilon_{\theta}(\sigma)]^{2\Delta})e^{2\beta r}\Delta^2\leq \frac{1}{16}$;
\item 
$(1-\epsilon_x^2[\epsilon_{\theta}(\sigma)]^2)e^{\beta(r+1)}\Delta \leq (1-\epsilon_x^{2\Delta}[\epsilon_{\theta}(\sigma)]^{2\Delta})e^{2\beta r}\Delta^2$.
\end{enumerate}
\end{lemma}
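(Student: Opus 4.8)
The plan is to reduce both inequalities to the explicit closed form of $\beta$ and then to elementary monotonicity. Write $a:=\epsilon_x\epsilon_{\theta}(\sigma)$, so that $\epsilon_x^{2\Delta}[\epsilon_{\theta}(\sigma)]^{2\Delta}=a^{2\Delta}$ and $\epsilon_x^2[\epsilon_{\theta}(\sigma)]^2=a^2$. By Assumption \ref{assumption}, each transition density is sandwiched between its lower bound and the reciprocal of that bound, which forces $\epsilon_x,\epsilon_{\theta}(\sigma)\in(0,1]$ and hence $a\in(0,1]$. Condition \eqref{eqn:main_thm_condition} rearranges to $1-a^{2\Delta}<\frac{1}{18\Delta_{\mathcal{K}}\Delta^2}$, and together with $\Delta,\Delta_{\mathcal{K}}\geq 1$ this gives $0<18\Delta_{\mathcal{K}}\Delta^2(1-a^{2\Delta})<1$ exactly as in \eqref{eqn:positive_beta}, so that $\beta>0$ is well defined and in particular $1-a^{2\Delta}>0$. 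The single computation I would record first is
$$e^{2\beta r}=\frac{1}{18\Delta_{\mathcal{K}}\Delta^2(1-a^{2\Delta})},$$
obtained by reading off $2\beta r$ from the definition \eqref{eqn:beta_definition}.

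For part (i), I would substitute this expression directly; the factor $(1-a^{2\Delta})$ cancels, yielding
$$(1-a^{2\Delta})e^{2\beta r}\Delta^2=\frac{(1-a^{2\Delta})\Delta^2}{18\Delta_{\mathcal{K}}\Delta^2(1-a^{2\Delta})}=\frac{1}{18\Delta_{\mathcal{K}}}\leq\frac{1}{18},$$
where the final step uses $\Delta_{\mathcal{K}}\geq 1$. Thus (i) is an identity followed by one harmless bound.

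For part (ii), I would compare the two sides through their ratio rather than expanding each. Dividing the left-hand side by the right-hand side and simplifying the exponent via $\beta(r+1)-2\beta r=-\beta(r-1)$ gives
$$\frac{(1-a^2)e^{\beta(r+1)}\Delta}{(1-a^{2\Delta})e^{2\beta r}\Delta^2}=\frac{1-a^2}{1-a^{2\Delta}}\cdot e^{-\beta(r-1)}\cdot\frac{1}{\Delta}.$$
I would then bound each of the three factors by $1$ separately: since $a\in(0,1]$ and $\Delta\geq 1$ we have $a^{2\Delta}\leq a^2$, hence $0\leq 1-a^2\leq 1-a^{2\Delta}$ and the first factor is at most $1$; since $\beta>0$ and $r\geq 1$ the exponent $-\beta(r-1)\leq 0$, so the second factor is at most $1$; and the third factor is at most $1$ because $\Delta\geq 1$. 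Multiplying, the ratio is at most $1$, which is precisely the claimed inequality (ii).

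The arguments are elementary, so there is no substantive obstacle; the only points requiring care are justifying $a\leq 1$ from Assumption \ref{assumption} (needed for the monotonicity $a^{2\Delta}\leq a^2$) and confirming $\beta>0$ from condition \eqref{eqn:main_thm_condition} (needed so that $e^{-\beta(r-1)}\leq 1$). Both are already available from the setup and from the derivation recorded in \eqref{eqn:positive_beta}.
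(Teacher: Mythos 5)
Your proof is correct and follows essentially the same route as the paper's: both start from the identity $e^{2\beta r}=\bigl(18\Delta_{\mathcal{K}}\Delta^2(1-\epsilon_x^{2\Delta}[\epsilon_{\theta}(\sigma)]^{2\Delta})\bigr)^{-1}$, obtain (i) by cancellation plus $\Delta_{\mathcal{K}}\geq 1$, and obtain (ii) from the three observations $e^{\beta(r+1)}\leq e^{2\beta r}$, $1-a^2\leq 1-a^{2\Delta}$, and $\Delta\leq\Delta^2$ (you package them as a ratio of factors each at most $1$, the paper as a chain of inequalities, which is only a cosmetic difference). Your explicit justifications of $a\in(0,1]$ from Assumption \ref{assumption} and of $\beta>0$ from condition \eqref{eqn:main_thm_condition} are slightly more careful than the paper's, which simply asserts $0\leq\epsilon_x,\epsilon_{\theta}(\sigma)\leq 1$.
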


\begin{proof}
By the definition of $\beta$, we have
$$e^{\beta}=\left(\frac{1}{16\Delta_{\mathcal{K}}\Delta^2(1-\epsilon_x^{2\Delta}[\epsilon_{\theta}(\sigma)]^{2\Delta})} \right)^{\frac{1}{2r}}.$$
That is, 
$$e^{2r\beta}=\frac{1}{16\Delta_{\mathcal{K}}\Delta^2(1-\epsilon_x^{2\Delta}[\epsilon_{\theta}(\sigma)]^{2\Delta})}.$$
Since $r, \Delta, \Delta_{\mathcal{K}}\geq 1$ and $0\leq \epsilon_x,\epsilon_{\theta}(\sigma)\leq 1$, we have
$$(1-\epsilon_x^{2\Delta}[\epsilon_{\theta}(\sigma)]^{2\Delta})e^{2\beta r}\Delta^2\leq \frac{(1-\epsilon_x^{2\Delta}[\epsilon_{\theta}(\sigma)]^{2\Delta}) \Delta^2}{16\Delta_{\mathcal{K}}\Delta^2(1-\epsilon_x^{2\Delta}[\epsilon_{\theta}(\sigma)]^{2\Delta})} \leq \frac{1}{16}$$
\begin{align*}
(1-\epsilon_x^2[\epsilon_{\theta}(\sigma)]^2)e^{\beta(r+1)}\Delta &\leq (1-\epsilon_x^2[\epsilon_{\theta}(\sigma)]^2)e^{2\beta r}\Delta\leq (1-\epsilon_x^{2\Delta}[\epsilon_{\theta}(\sigma)]^{2\Delta})e^{2\beta r}\Delta^2.
\end{align*}

\end{proof}

\begin{lemma}
\label{lemma:block_corr}
Under Assumption \ref{assumption} in Section \ref{sec:beat_COD}, for any $n \geq 0$, 
we have
\begin{align*}
\widetilde{\Corr}(\widetilde{\pi}_{n},\beta)< \frac{1}{8},
\end{align*}
where $\widetilde{\Corr}(\,\cdot\,,\beta)$ is defined in \eqref{eqn:tilde_corr_definition} and $\beta$ is given in \eqref{eqn:beta_definition}.
\end{lemma}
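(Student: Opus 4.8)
The plan is to establish the bound by induction on $n$, viewing $\widetilde{\Corr}(\widetilde{\pi}_n,\beta)=\max_{v\in V}\sum_{v'\in V}e^{\beta d(v,v')}\widetilde{C}_{vv'}^{\widetilde{\pi}_n}$ as a weighted Dobrushin row-sum and controlling it through the uniform density bounds of Assumption \ref{assumption}. First I would record the structural fact that $\widetilde{\pi}_n$ factorizes over the blocks, $\widetilde{\pi}_n=\bigotimes_{K\in\mathcal{K}}\mathsf{B}^K\widetilde{\pi}_n$; this follows from $\widetilde{\mathsf{F}}_n=\mathsf{C}_n\mathsf{B}\mathsf{P}_n$ in \eqref{eqn:widetilde_pi_def} together with the fact that the correction weights factor over vertices, so that the single-site conditional $\mu_n^v$ appearing in \eqref{def:mu_s-1_s_K} feels only coordinates in the block $K\ni v$. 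The base case $n=0$ is immediate: since $\widetilde{\pi}_0=\delta_x\delta_{\theta}$ is a point mass, every reweighting of the deterministic conditional $\mu_0^v$ is again that point mass, forcing $\widetilde{C}_{vv'}^{\widetilde{\pi}_0}=0$ and hence $\widetilde{\Corr}(\widetilde{\pi}_0,\beta)=0<\tfrac{1}{9}$.

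For the individual coefficients I would split the effect of a perturbation at $v'$ on $\mu_{n|n+1}^{v,K'}$ into two channels. By \eqref{def:mu_s-1_s_K} this measure is the reweighting of $\mu_n^v$ by $\prod_{\omega\in N(v)\cap K'}f_{X_{n+1}^{\omega}|X_n}f_{\Theta_{n+1}^{\omega}|\Theta_n}$, a product of at most $2\Delta$ factors each pinned between $\epsilon_x\epsilon_{\theta}(\sigma)$ and its reciprocal by Assumption \ref{assumption}. When $v'$ enters only through these weights, which requires $v'\in N(\omega)$ for some $\omega\in N(v)$ and hence $d(v,v')\le 2r$, Theorem \ref{thm:VanHandel4.1} converts the perturbation into the factor $1-\epsilon_x^{2\Delta}[\epsilon_{\theta}(\sigma)]^{2\Delta}$; when $v'$ enters only through the base measure $\mu_n^v$, Theorem \ref{thm:VanHandel4.2} bounds the effect by $(\epsilon_x\epsilon_{\theta}(\sigma))^{-2\Delta}$ times the intrinsic sensitivity of $\mu_n^v$ itself to $v'$.

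To close the second channel I would invoke the Dobrushin comparison theorem (Theorem \ref{thm:Dobrushin}) on the joint law of $(X_n,X_{n+1})$: the sensitivity of $\mu_n^v$ to $v'$ is governed by the local one-step coefficients composed with the correlations already carried by $\widetilde{\pi}_{n-1}$, which by the inductive hypothesis satisfy $\widetilde{\Corr}(\widetilde{\pi}_{n-1},\beta)<\tfrac{1}{9}$. Theorem \ref{thm:VanHandel4.3} then turns the $e^{\beta d}$-weighted one-step condition into a bound on $D=\sum_{k\ge 0}C^k$, with the hypothesis \eqref{eqn:main_thm_condition} guaranteeing the weighted Dobrushin constant $c<1$ required there. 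Finally, since at most $\Delta^2$ vertices lie within distance $2r$ of $v$, each weighted by at most $e^{2\beta r}$ and each coefficient controlled by $1-\epsilon_x^{2\Delta}[\epsilon_{\theta}(\sigma)]^{2\Delta}$, Lemma \ref{beta_property}(i) gives $\Delta^2 e^{2\beta r}(1-\epsilon_x^{2\Delta}[\epsilon_{\theta}(\sigma)]^{2\Delta})\le\tfrac{1}{18}$, leaving room to absorb the propagated contributions and conclude $\widetilde{\Corr}(\widetilde{\pi}_n,\beta)<\tfrac{1}{9}$.

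The hard part will be the second channel: a perturbation at a far within-block vertex does genuinely move $\mu_n^v$ through the correlations that $\widetilde{\pi}_n$ inherits from $\widetilde{\pi}_{n-1}$, and these need not vanish beyond range $2r$. Controlling them uniformly, without letting the block size $|\mathcal{K}|_{\infty}$ re-enter the estimate, is precisely what forces the use of the Dobrushin comparison machinery together with the quantitative gap built into \eqref{eqn:main_thm_condition} and Lemma \ref{beta_property}, rather than a naive count of neighbors.
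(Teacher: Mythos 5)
Your skeleton coincides with the paper's: induction on $n$ with base case $\widetilde{\Corr}(\widetilde{\pi}_0,\beta)=0$, a Dobrushin comparison on a two-time-layer system whose within-layer coefficients at time $n-1$ are the inherited $\widetilde{C}_{tt'}^{\widetilde{\pi}_{n-1}}$, Theorem \ref{thm:VanHandel4.3} to control $D=\sum_k C^k$, and the final geometric count of at most $\Delta^2$ vertices within distance $2r$ of $v$ combined with Lemma \ref{beta_property}(i). You also correctly isolate the crux: a perturbation at a far within-block vertex $v'\in K$ moves the conditional at $v$ through correlations inherited from $\widetilde{\pi}_{n-1}$, and this cannot be handled by a neighborhood count alone. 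One small slip: the comparison must be run on the joint law over times $n-1$ and $n$ (index set $(\{n-1\}\times V)\cup\{(n,v)\}$), with the time-$(n+1)$ coordinates entering only as fixed conditioning inside the weights; your reference to the joint law of $(X_n,X_{n+1})$ points at the wrong pair of layers.

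The genuine gap is in how you combine the two channels. You propose a triangle-inequality split in which the ``base measure'' channel is paid for by Theorem \ref{thm:VanHandel4.2}, i.e.\ a multiplicative prefactor $(\epsilon_x\epsilon_{\theta}(\sigma))^{-2\Delta}>1$ in front of the intrinsic sensitivity of $\widetilde{\pi}_n^v$ to $v'$. The paper does not do this: both channels are folded into a \emph{single} application of Theorem \ref{thm:Dobrushin}, with the perturbation at $v'$ entering only through the $b_j$ terms ($b_j=2(1-\epsilon_x^{2}[\epsilon_{\theta}(\sigma)]^{2})$ at time-$(n-1)$ sites $t$ with $v'\in N(t)\cap K$, and $b_j=2(1-\epsilon_x^{2\Delta}[\epsilon_{\theta}(\sigma)]^{2\Delta})$ at $(n,v)$ when $v'$ meets the forward weights), while the inductive hypothesis $\widetilde{\Corr}(\widetilde{\pi}_{n-1},\beta)<\tfrac19$ is used \emph{only} to verify the weighted Dobrushin condition $<\tfrac12$, which caps the weighted row sums of $D$ at $2$. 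The resulting bound is exactly $2(1-\epsilon_x^{2\Delta}[\epsilon_{\theta}(\sigma)]^{2\Delta})e^{2\beta r}\Delta^2\le 2\cdot\tfrac1{18}=\tfrac19$: there is no slack. Since your channel~1 alone already consumes $\tfrac1{18}$, the propagated channel must come in strictly under $\tfrac1{18}$, and any extra multiplicative loss from the Theorem \ref{thm:VanHandel4.2} step (or double counting on the overlap of the two channels, which are not disjoint) threatens to push the total past $\tfrac19$ and break the induction. This is not a hypothetical worry: the companion estimate in Lemma \ref{thm:onethird}, which does use precisely your multiplicative split, only closes at $\tfrac13$. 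To land at $\tfrac19$ you need the paper's unified one-shot comparison, or else a careful verification--absent from your sketch--that the $(\epsilon_x\epsilon_{\theta}(\sigma))^{-2\Delta}<(1-\tfrac{1}{18\Delta_{\mathcal{K}}\Delta^2})^{-1}$ bound from \eqref{eqn:main_thm_condition} leaves exactly enough room.
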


\begin{proof}
Since $\widetilde{\pi}_0={\pi}_0=\delta_x\delta_{\theta}$ which is non-random, by the definition of $\widetilde{\Corr}(\,\cdot\,,\beta)$ given in \eqref{eqn:tilde_corr_definition}, we have $\widetilde{\Corr}(\widetilde{\pi}_0,\beta)=0$. In the following, we prove by the method of induction and assume that for $n\geq 1$ 
  $$\widetilde{\Corr}(\widetilde{\pi}_{n-1},\beta)< \frac{1}{8}.$$
  
  Let $K,K'\in \mathcal{K}$, $v\in K$, $v'\in V$ and $v\neq v'$. Let $x_n,\overline{x}_n \in \mathbb{X}$ such that $x_n^{V\backslash \{v'\}}=\overline{x}_n^{V\backslash \{v'\}}.$
Let $\theta_n,\overline{\theta}_n \in \mathbb{\Theta}$ such that $\theta_n^{V\backslash \{v'\}}=\overline{\theta}_n^{V\backslash \{v'\}}$.
Define $I=(\{n-1\}\times V)\cup (n,v)$ and $\mathbb{S}=(\mathbb{X}\times \mathbb{\Theta})\times (\mathbb{X}^v\times \mathbb{\Theta}^v)$. Define
\begin{align*}
\rho(A)
=\dfrac{\splitdfrac{
\int\mathbbm{1}_A(x_{n-1},\theta_{n-1},x_n^v,\theta_n^v)\prod_{\omega\in K}f_{X_n^{\omega}|X_{n-1}}(x_n^{\omega}\mid x_{n-1}\,;\theta_n^{\omega})}{\splitdfrac{\times f_{\Theta_n^{\omega}|\Theta_{n-1}}(\theta_n^{\omega}\mid \theta_{n-1}\,; \sigma)
\prod_{u\in N(v)\cap K'}f_{X_{n+1}^u|X_n}(x_{n+1}^u\mid x_n\;;\theta_{n+1}^u)}{\splitdfrac{\times  f_{\Theta_{n+1}^u\mid \Theta_n}(\theta_{n+1}^u\mid \theta_n\,; \sigma)f_{Y_n^v|X_n^v}(Y_n^v\mid x_n^v\,;\theta_n^v)}{ 
 \times \psi^v(dx_n^v)\lambda^v(d\theta_n^v)\widetilde{\pi}_{n-1}(dx_{n-1},d\theta_{n-1})}}}}
{\splitdfrac{
\int\prod_{\omega\in K}f_{X_n^{\omega}|X_{n-1}}(x_n^{\omega}\mid x_{n-1}\,;\theta_n^{\omega})}{\splitdfrac{\times f_{\Theta_n^{\omega}|\Theta_{n-1}}(\theta_n^{\omega}\mid \theta_{n-1}\,; \sigma)
\prod_{u\in N(v)\cap K'}f_{X_{n+1}^u|X_n}(x_{n+1}^u\mid x_n\;;\theta_{n+1}^u)}{\splitdfrac{\times  f_{\Theta_{n+1}^u\mid \Theta_n}(\theta_{n+1}^u\mid \theta_n\,; \sigma)f_{Y_n^v|X_n^v}(Y_n^v\mid x_n^v\,;\theta_n^v)}{ 
 \times \psi^v(dx_n^v)\lambda^v(d\theta_n^v)\widetilde{\pi}_{n-1}(dx_{n-1},d\theta_{n-1})}}}}
\end{align*}
and
\begin{align*}
\overline{\rho}(A)
=\dfrac{\splitdfrac{
\int\mathbbm{1}_A(x_{n-1},\theta_{n-1},\overline{x}_n^v,\overline{\theta}_n^v)\prod_{\omega\in K}f_{X_n^{\omega}|X_{n-1}}(\overline{x}_n^{\omega}\mid x_{n-1}\,;\theta_n^{\omega})}{\splitdfrac{\times f_{\Theta_n^{\omega}|\Theta_{n-1}}(\overline{\theta}_n^{\omega}\mid \theta_{n-1}\,; \sigma)
\prod_{u\in N(v)\cap K'}f_{X_{n+1}^u|X_n}(x_{n+1}^u\mid \overline{x}_n\;;\theta_{n+1}^u) }{\splitdfrac{\times f_{\Theta_{n+1}^u\mid \Theta_n}(\theta_{n+1}^u\mid \overline{\theta}_n\,; \sigma) f_{Y_n^v|X_n^v}(Y_n^v\mid \overline{x}_n^v\,;\overline{\theta}_n^v) }{
\times  \psi^v(d\overline{x}_n^v)\lambda^v(d\overline{\theta}_n^v)\widetilde{\pi}_{n-1}(dx_{n-1},d\theta_{n-1})}}}}
{\splitdfrac{
\int \prod_{\omega\in K}f_{X_n^{\omega}|X_{n-1}}(\overline{x}_n^{\omega}\mid x_{n-1}\,;\theta_n^{\omega})}{\splitdfrac{\times f_{\Theta_n^{\omega}|\Theta_{n-1}}(\overline{\theta}_n^{\omega}\mid \theta_{n-1}\,; \sigma)
\prod_{u\in N(v)\cap K'}f_{X_{n+1}^u|X_n}(x_{n+1}^u\mid \overline{x}_n\;;\theta_{n+1}^u) }{\splitdfrac{\times f_{\Theta_{n+1}^u\mid \Theta_n}(\theta_{n+1}^u\mid \overline{\theta}_n\,; \sigma) f_{Y_n^v|X_n^v}(Y_n^v\mid \overline{x}_n^v\,;\overline{\theta}_n^v) }{
\times  \psi^v(d\overline{x}_n^v)\lambda^v(d\overline{\theta}_n^v)\widetilde{\pi}_{n-1}(dx_{n-1},d\theta_{n-1})}}}}.
\end{align*}
Then according to the definitions of 
$(\widetilde{\mathsf{F}}_s\nu_{s-1})_{\chi_s,\chi_{s+1}}^{v,K'}$in \eqref{def:widetildeF_s_rho_s1_K} and 
    $(\widetilde{\mathsf{F}}_s\nu_{s-1})_{\overline{\chi}_s,\chi_{s+1}}^{v,K'}$in \eqref{def:widetildeF_s_rho_s1_K_overline} for $s\geq 1$,
we have
$$\|(\widetilde{\mathsf{F}}_n\widetilde{\pi}_{n-1})_{\chi_n,\chi_{n+1}}^{v,K'}-(\widetilde{\mathsf{F}}_n\widetilde{\pi}_{n-1})_{\overline{\chi}_{n},\chi_{n+1}}^{v,K'} \|=\|\rho-\overline{\rho}\|_{(n,v)}.$$

In order to use Theorem \ref{thm:Dobrushin} (Dobrushin comparison theorem)  in Appendix \ref{sec:Existing_results} to bound $\|\rho-\overline{\rho}\|_{(n,v)}$, we need to bound $C_{ij}$ and $b_i$ with $i=(k,t)$ and $j=(k',t')$. 
Set 
$$\rho^i=\rho_{(x_{n-1},\theta_{n-1},{x}_n^v,{\theta}_n^v)}^i\quad\text{and}\quad\overline{\rho}^i=\overline{\rho}_{(x_{n-1},\theta_{n-1},\overline{x}_n^v,\overline{\theta}_n^v)}^i,$$
whose definitions are given in Theorem \ref{thm:Dobrushin} in Appendix \ref{sec:Existing_results}. Note that
$$\overline{\rho}^i\neq \rho_{(\overline{x}_{n-1},\overline{\theta}_{n-1},\overline{x}_n^v,\overline{\theta}_n^v)}^i.$$
We display our discussions as follows:
\begin{itemize}
\item When $k=n-1$, we have
\begin{align*}
\rho^i(A)=\dfrac{\splitdfrac{
\int\mathbbm{1}_A(x_{n-1}^t,\theta_{n-1}^t)\prod_{\omega\in N(t)\cap K}f_{X_n^{\omega}|X_{n-1}}(x_n^{\omega}\mid x_{n-1}\,;\theta_n^{\omega})}{\times f_{\Theta_n^{\omega}|\Theta_{n-1}}(\theta_n^{\omega}\mid \theta_{n-1}\,; \sigma) \widetilde{\pi}_{n-1}^t(dx_{n-1}^t,d\theta_{n-1}^t)}}
{\splitdfrac{
\int \prod_{\omega\in N(t)\cap K}f_{X_n^{\omega}|X_{n-1}}(x_n^{\omega}\mid x_{n-1}\,;\theta_n^{\omega})}{\times f_{\Theta_n^{\omega}|\Theta_{n-1}}(\theta_n^{\omega}\mid \theta_{n-1}\,; \sigma) \widetilde{\pi}_{n-1}^t(dx_{n-1}^t,d\theta_{n-1}^t)}}.
\end{align*}
We can see that $\rho^i=\widetilde{\pi}_{\chi_{n-1},\chi_n}^{t,K}$, by the definition of $\widetilde{\pi}_{\chi_{n-1},\chi_n}^{t,K}$ given in \eqref{def:mu_s-1_s_K}. Therefore, if $k'=n-1$, by the definition of $\widetilde{C}_{tt'}^{\widetilde{\pi}_{n-1}}$ given in \eqref{eqn:tilde_C_vv'_mu}, we know that $C_{ij}\leq \widetilde{C}_{tt'}^{\widetilde{\pi}_{n-1}}$.
Note that
\begin{align*}
\rho^i(A)\geq \epsilon_x^2[\epsilon_{\theta}(\sigma)]^2\dfrac{\splitdfrac{
\int\mathbbm{1}_A(x_{n-1}^t,\theta_{n-1}^t)\prod_{\omega\in N(t)\cap (K\backslash \{v\})}f_{X_n^{\omega}|X_{n-1}}(x_n^{\omega}\mid x_{n-1}\,;\theta_n^{\omega})}{\times f_{\Theta_n^{\omega}|\Theta_{n-1}}(\theta_n^{\omega}\mid \theta_{n-1}\,; \sigma) \widetilde{\pi}_{n-1}^t(dx_{n-1}^t,d\theta_{n-1}^t)}}
{\splitdfrac{
\int \prod_{\omega\in N(t)\cap (K\backslash \{v\})}f_{X_n^{\omega}|X_{n-1}}(x_n^{\omega}\mid x_{n-1}\,;\theta_n^{\omega})}{\times f_{\Theta_n^{\omega}|\Theta_{n-1}}(\theta_n^{\omega}\mid \theta_{n-1}\,; \sigma) \widetilde{\pi}_{n-1}^t(dx_{n-1}^t,d\theta_{n-1}^t)}},
\end{align*}
and then we have $C_{ij}\leq 1-\epsilon_x^2[\epsilon_{\theta}(\sigma)]^2$ if $k'=n$ and $v\in N(t)$ by Theorem \ref{thm:VanHandel4.1} in Appendix \ref{sec:Existing_results}, and $C_{ij}=0$ otherwise. Recalling that at the beginning of this proof we set $v\in K$, $v'\in V$, $v\neq v'$, $x_n,\overline{x}_n \in \mathbb{X}$ such that $x_n^{V\backslash \{v'\}}=\overline{x}_n^{V\backslash \{v'\}}$, and $\theta_n,\overline{\theta}_n \in \mathbb{\Theta}$ such that $\theta_n^{V\backslash \{v'\}}=\overline{\theta}_n^{V\backslash \{v'\}}$, then we have $\rho^i=\overline{\rho}^i$ if $v'\notin N(t)\cap K$. Furthermore, note that
\begin{align*}
\rho^i(A)
\geq \epsilon_x^2[\epsilon_{\theta}(\sigma)]^2\dfrac{\splitdfrac{
\int\mathbbm{1}_A(x_{n-1}^t,\theta_{n-1}^t)\prod_{\omega\in N(t)\cap (K\backslash \{v'\})}f_{X_n^{\omega}|X_{n-1}}(x_n^{\omega}\mid x_{n-1}\,;\theta_n^{\omega})}{\times f_{\Theta_n^{\omega}|\Theta_{n-1}}(\theta_n^{\omega}\mid \theta_{n-1}\,; \sigma) \widetilde{\pi}_{n-1}^t(dx_{n-1}^t,d\theta_{n-1}^t)}}
  {\splitdfrac{
    \int \prod_{\omega\in N(t)\cap (K\backslash \{v'\})}f_{X_n^{\omega}|X_{n-1}}(x_n^{\omega}\mid x_{n-1}\,;\theta_n^{\omega})}{\times f_{\Theta_n^{\omega}|\Theta_{n-1}}(\theta_n^{\omega}\mid \theta_{n-1}\,; \sigma) \widetilde{\pi}_{n-1}^t(dx_{n-1}^t,d\theta_{n-1}^t)}},
\end{align*}
and
\begin{align*}
\overline{\rho}^i(A)
&=\dfrac{\splitdfrac{
\int\mathbbm{1}_A(x_{n-1}^t,\theta_{n-1}^t)\prod_{\omega\in N(t)\cap K}f_{X_n^{\omega}|X_{n-1}}(\overline{x}_n^{\omega}\mid x_{n-1}\,;\overline{\theta}_n^{\omega})}{\times f_{\Theta_n^{\omega}|\Theta_{n-1}}(\overline{\theta}_n^{\omega}\mid \theta_{n-1}\,; \sigma) \widetilde{\pi}_{n-1}^t(dx_{n-1}^t,d\theta_{n-1}^t)}}
{\splitdfrac{
\int \prod_{\omega\in N(t)\cap K}f_{X_n^{\omega}|X_{n-1}}(\overline{x}_n^{\omega}\mid x_{n-1}\,;\overline{\theta}_n^{\omega})}{\times f_{\Theta_n^{\omega}|\Theta_{n-1}}(\overline{\theta}_n^{\omega}\mid \theta_{n-1}\,; \sigma) \widetilde{\pi}_{n-1}^t(dx_{n-1}^t,d\theta_{n-1}^t)}}\\
& \geq \epsilon_x^2[\epsilon_{\theta}(\sigma)]^2\dfrac{\splitdfrac{
\int\mathbbm{1}_A(x_{n-1}^t,\theta_{n-1}^t)\prod_{\omega\in N(t)\cap (K\backslash \{v'\})}f_{X_n^{\omega}|X_{n-1}}(\overline{x}_n^{\omega}\mid x_{n-1}\,;\overline{\theta}_n^{\omega})}{\times f_{\Theta_n^{\omega}|\Theta_{n-1}}(\overline{\theta}_n^{\omega}\mid \theta_{n-1}\,; \sigma) \widetilde{\pi}_{n-1}^t(dx_{n-1}^t,d\theta_{n-1}^t)}}
  {\splitdfrac{
    \int \prod_{\omega\in N(t)\cap (K\backslash \{v'\})}f_{X_n^{\omega}|X_{n-1}}(\overline{x}_n^{\omega}\mid x_{n-1}\,;\overline{\theta}_n^{\omega})}{\times f_{\Theta_n^{\omega}|\Theta_{n-1}}(\overline{\theta}_n^{\omega}\mid \theta_{n-1}\,; \sigma) \widetilde{\pi}_{n-1}^t(dx_{n-1}^t,d\theta_{n-1}^t)}}.
\end{align*}
Hence, we have $b_i=0$ if $v'\notin N(t)\cap K$, and by Theorem \ref{thm:VanHandel4.1} in Appendix \ref{sec:Existing_results} we have $b_i=2(1-\epsilon_x^{2}[\epsilon_{\theta}(\sigma)]^2)$ otherwise.\\

\item When $k=n$, we have
\begin{align*}
\rho^i(A)
&=\dfrac{\splitdfrac{
  \int\mathbbm{1}_A(x_n^v,\theta_n^v) f_{X_n^v|X_{n-1}}(x_n^v\mid x_{n-1}\,;\theta_n^v)f_{\Theta_n^v|\Theta_{n-1}}(\theta_n^v\mid \theta_{n-1}\,; \sigma)}{\splitdfrac{\times 
    \prod_{u\in N(v)\cap K'}f_{X_{n+1}^u|X_n}(x_{n+1}^u\mid x_n\;;\theta_{n+1}^u) f_{\Theta_{n+1}^u\mid \Theta_n}(\theta_{n+1}^u\mid \theta_n\,; \sigma)}{\times f_{Y_n^v|X_n^v}(Y_n^v\mid x_n^v\,;\theta_n^v) 
\psi^v(dx_n^v)\lambda^v(d\theta_n^v)}}}
{\splitdfrac{
\int  f_{X_n^v|X_{n-1}}(x_n^v\mid x_{n-1}\,;\theta_n^v)f_{\Theta_n^v|\Theta_{n-1}}(\theta_n^v\mid \theta_{n-1}\,; \sigma)}{\splitdfrac{\times 
\prod_{u\in N(v)\cap K'}f_{X_{n+1}^u|X_n}(x_{n+1}^u\mid x_n\;;\theta_{n+1}^u) f_{\Theta_{n+1}^u\mid \Theta_n}(\theta_{n+1}^u\mid \theta_n\,; \sigma)}{\times f_{Y_n^v|X_n^v}(Y_n^v\mid x_n^v\,;\theta_n^v) 
  \psi^v(dx_n^v)\lambda^v(d\theta_n^v)}}}\\
&\geq \epsilon_x^2[\epsilon_{\theta}(\sigma)]^2\dfrac{\splitdfrac{
  \int\mathbbm{1}_A(x_n^v,\theta_n^v)\prod_{u\in N(v)\cap K'}f_{X_{n+1}^u|X_n}(x_{n+1}^u\mid x_n\;;\theta_{n+1}^u)  }{\times f_{\Theta_{n+1}^u\mid \Theta_n}(\theta_{n+1}^u\mid \theta_n\,; \sigma)
 f_{Y_n^v|X_n^v}(Y_n^v\mid x_n^v\,;\theta_n^v) 
\psi^v(dx_n^v)\lambda^v(d\theta_n^v)}}
{\splitdfrac{
\int \prod_{u\in N(v)\cap K'}f_{X_{n+1}^u|X_n}(x_{n+1}^u\mid x_n\;;\theta_{n+1}^u)  }{\times f_{\Theta_{n+1}^u\mid \Theta_n}(\theta_{n+1}^u\mid \theta_n\,; \sigma)
  f_{Y_n^v|X_n^v}(Y_n^v\mid x_n^v\,;\theta_n^v) 
  \psi^v(dx_n^v)\lambda^v(d\theta_n^v)}}.
\end{align*}
Then we have $C_{ij}\leq 1-\epsilon_x^2[\epsilon_{\theta}(\sigma)]^2$ if $k'=n-1$ and $t'\in N(v)$ by Theorem \ref{thm:VanHandel4.1} in Appendix \ref{sec:Existing_results}, and $C_{ij}=0$ otherwise. Note that $b_i=0$ if $v'\notin \cup_{\omega\in N(v)\cap K'}N(\omega)$. Furthermore,
\begin{align*}
\rho^i(A)
&\geq \epsilon_x^{2\Delta}[\epsilon_{\theta}(\sigma)]^{2\Delta} \dfrac{\splitdfrac{
  \int\mathbbm{1}_A({x}_n^v,{\theta}_n^v) f_{X_n^v|X_{n-1}}({x}_n^v\mid x_{n-1}\,;{\theta}_n^v)f_{\Theta_n^v|\Theta_{n-1}}({\theta}_n^v\mid \theta_{n-1}\,; \sigma)}{\times f_{Y_n^v|X_n^v}(Y_n^v\mid {x}_n^v\,;{\theta}_n^v) 
    \psi^v(d{x}_n^v)\lambda^v(d{\theta}_n^v)}}
{\splitdfrac{
  \int  f_{X_n^v|X_{n-1}}({x}_n^v\mid x_{n-1}\,;{\theta}_n^v)f_{\Theta_n^v|\Theta_{n-1}}({\theta}_n^v\mid \theta_{n-1}\,; \sigma)}{\times f_{Y_n^v|X_n^v}(Y_n^v\mid {x}_n^v\,;{\theta}_n^v) 
    \psi^v(d{x}_n^v)\lambda^v(d{\theta}_n^v)}}
\end{align*}
and
\begin{align*}
\overline{\rho}^i(A)
&=\dfrac{\splitdfrac{
  \int\mathbbm{1}_A(\overline{x}_n^v,\overline{\theta}_n^v) f_{X_n^v|X_{n-1}}(\overline{x}_n^v\mid x_{n-1}\,;\overline{\theta}_n^v)f_{\Theta_n^v|\Theta_{n-1}}(\overline{\theta}_n^v\mid \theta_{n-1}\,; \sigma)}{\splitdfrac{\times 
    \prod_{u\in N(v)\cap K'}f_{X_{n+1}^u|X_n}(x_{n+1}^u\mid \overline{x}_n\;;\theta_{n+1}^u) f_{\Theta_{n+1}^u\mid \Theta_n}(\theta_{n+1}^u\mid \overline{\theta}_n\,; \sigma)}{\times f_{Y_n^v|X_n^v}(Y_n^v\mid \overline{x}_n^v\,;\overline{\theta}_n^v) 
\psi^v(d\overline{x}_n^v)\lambda^v(d\overline{\theta}_n^v)}}}
{\splitdfrac{
\int f_{X_n^v|X_{n-1}}(\overline{x}_n^v\mid x_{n-1}\,;\overline{\theta}_n^v)f_{\Theta_n^v|\Theta_{n-1}}(\overline{\theta}_n^v\mid \theta_{n-1}\,; \sigma)}{\splitdfrac{\times 
\prod_{u\in N(v)\cap K'}f_{X_{n+1}^u|X_n}(x_{n+1}^u\mid \overline{x}_n\;;\theta_{n+1}^u) f_{\Theta_{n+1}^u\mid \Theta_n}(\theta_{n+1}^u\mid \overline{\theta}_n\,; \sigma)}{\times f_{Y_n^v|X_n^v}(Y_n^v\mid \overline{x}_n^v\,;\overline{\theta}_n^v) 
  \psi^v(d\overline{x}_n^v)\lambda^v(d\overline{\theta}_n^v)}}}\\
&\geq \epsilon_x^{2\Delta}[\epsilon_{\theta}(\sigma)]^{2\Delta} \dfrac{\splitdfrac{
  \int\mathbbm{1}_A(\overline{x}_n^v,\overline{\theta}_n^v) f_{X_n^v|X_{n-1}}(\overline{x}_n^v\mid x_{n-1}\,;\overline{\theta}_n^v)f_{\Theta_n^v|\Theta_{n-1}}(\overline{\theta}_n^v\mid \theta_{n-1}\,; \sigma)}{\times f_{Y_n^v|X_n^v}(Y_n^v\mid \overline{x}_n^v\,;\overline{\theta}_n^v) 
\psi^v(d\overline{x}_n^v)\lambda^v(d\overline{\theta}_n^v)}}
{\splitdfrac{
  \int  f_{X_n^v|X_{n-1}}(\overline{x}_n^v\mid x_{n-1}\,;\overline{\theta}_n^v)f_{\Theta_n^v|\Theta_{n-1}}(\overline{\theta}_n^v\mid \theta_{n-1}\,; \sigma)}{\times f_{Y_n^v|X_n^v}(Y_n^v\mid \overline{x}_n^v\,;\overline{\theta}_n^v) 
\psi^v(d\overline{x}_n^v)\lambda^v(d\overline{\theta}_n^v)}}.
\end{align*}
By Theorem \ref{thm:VanHandel4.1} in Appendix \ref{sec:Existing_results}, we have $b_i=2(1-\epsilon_x^{2\Delta}[\epsilon_{\theta}(\sigma)]^{2\Delta})$  if $v'\in \cup_{\omega\in N(v)\cap K'}N(\omega)$.
\end{itemize}

Define the matrix $(C_{ij}(v))_{i,j\in I}$ whose entries are given below:
  \begin{align*}
C_{(n-1,t)(n-1,t’)}(v)=&\widetilde{C}_{tt'}^{\widetilde{\pi}_{n-1}},\\
C_{(n-1,t)(n,v)}(v)=&C_{(n,v)(n-1,t)}(v)=(1-\epsilon_x^2[\epsilon_{\theta}(\sigma)]^2)\mathbbm{1}_{t\in N(v)},\\
C_{(n,v)(n,v)}(v)=&0.
\end{align*}
In sum, we have that
\begin{align}
\label{eqn:block_corr_half}
\sum_{(k',t')\in I}e^{\beta |k-k'|}e^{\beta d(t,t')}C_{(k,t)(k',t')}&\leq \sum_{(k',t')\in I}e^{\beta |k-k'|}e^{\beta d(t,t')}C_{(k,t)(k',t')}(v)\nonumber\\
&\leq \widetilde{\Corr}(\widetilde{\pi}_{n-1},\beta)+(1-\epsilon_x^2[\epsilon_{\theta}(\sigma)]^2)e^{\beta(r+1)}\Delta\nonumber\\
&< \frac{1}{8}+\frac{1}{16}\nonumber\\
&< \frac{1}{2},
\end{align}
where the third inequality is by the assumption of the induction method and Lemma \ref{beta_property} in Appendix \ref{sec:bounding_bias_lemmas}.

Next, 
applying Theorem \ref{thm:Dobrushin} (Dobrushin comparison theorem)  in Appendix \ref{sec:Existing_results},
we obtain that
\begin{align*}
\|\rho-\overline{\rho}\|_{(n,v)}
\leq & 2(1-\epsilon_x^2[\epsilon_{\theta}(\sigma)]^2)\mathbbm{1}_{\{v'\in K\}}\sum_{t'\in N(v')}D_{(n,v)(n-1,t')}(v)\\
&+2(1-\epsilon_x^{2\Delta}[\epsilon_{\theta}(\sigma)]^{2\Delta})\mathbbm{1}_{\{v'\in \cup_{\omega\in N(v)\cap K'}N(\omega)
\}}D_{(n,v)(n,v)}(v),
\end{align*}
where $D(v)=\sum_{s\geq 0}[C(v)]^s$. 
Therefore,
\begin{align*}
&\hspace*{-0.7cm}\frac{1}{2}\sup_{K'\in\mathcal{K}}\sup_{x_{n+1}\in \mathbb{X}\atop
  \theta_{n+1}\in \mathbb{\Theta}}\sup_{x_n,\overline{x}_n\in \mathbb{X}:x_n^{V\backslash \{v'\}}=\overline{x}_n^{V\backslash \{v'\}}\atop
    \theta_n,\overline{\theta}_n\in \mathbb{\Theta}:\theta_n^{V\backslash \{v'\}}=\overline{\theta}_n^{V\backslash \{v'\}}
  }\|(\widetilde{\mathsf{F}}_n\widetilde{\pi}_{n-1})_{\chi_n,\chi_{n+1}}^{v,K'}-(\widetilde{\mathsf{F}}_n\widetilde{\pi}_{n-1})_{\overline{\chi}_{n},\chi_{n+1}}^{v,K'} \| \\
  &\leq  (1-\epsilon_x^2[\epsilon_{\theta}(\sigma)]^2)\mathbbm{1}_{\{v'\in K\}}\sum_{t'\in N(v')}D_{(n,v)(n-1,t')}(v)\\
    &\quad\quad+(1-\epsilon_x^{2\Delta}[\epsilon_{\theta}(\sigma)]^{2\Delta})\mathbbm{1}_{\{v'\in \cup_{\omega\in N(v)\cap K'}N(\omega)\}
  }D_{(n,v)(n,v)}(v).
  \end{align*}
  By the definition of $\widetilde{C}_{vv'}^{\mu_{s-1}}$ in equation \eqref{eqn:tilde_C_vv'_mu}, we have
  \begin{align*}
  \widetilde{C}_{vv'}^{\widetilde{\pi}_n}
\leq & (1-\epsilon_x^2[\epsilon_{\theta}(\sigma)]^2)\mathbbm{1}_{\{v'\in K\}}\sum_{t'\in N(v')}D_{(n,v)(n-1,t')}(v)\\
&+(1-\epsilon_x^{2\Delta}[\epsilon_{\theta}(\sigma)]^{2\Delta})\mathbbm{1}_{\{v'\in \cup_{\omega\in N(v)\cap K'}N(\omega)\}
}D_{(n,v)(n,v)}(v).
\end{align*}
Note that
\begin{align*}
\sum_{v'\in V} e^{\beta d(v,v')}\widetilde{C}_{vv'}^{\widetilde{\pi}_n}
                                                                                              \leq & (1-\epsilon_x^2[\epsilon_{\theta}(\sigma)]^2)\sum_{v'\in K} e^{\beta d(v,v')}\sum_{t'\in N(v')}D_{(n,v)(n-1,t')}(v)\\
&+(1-\epsilon_x^{2\Delta}[\epsilon_{\theta}(\sigma)]^{2\Delta})\sum_{v'\in \cup_{\omega\in N(v)\cap K'}N(\omega)} e^{\beta d(v,v')}D_{(n,v)(n,v)}(v)\\
                                                                                                                                                                                                   \leq & (1-\epsilon_x^2[\epsilon_{\theta}(\sigma)]^2)\sum_{v'\in K} \sum_{t'\in N(v')}e^{\beta d(v,v')}D_{(n,v)(n-1,t')}(v)\\
&+(1-\epsilon_x^{2\Delta}[\epsilon_{\theta}(\sigma)]^{2\Delta})\sum_{v'\in \cup_{\omega\in N(v)\cap K'}N(\omega)} e^{\beta d(v,v')}D_{(n,v)(n,v)}(v)\\
                                                                                                                                                                                                                                                                                                        \leq & (1-\epsilon_x^2[\epsilon_{\theta}(\sigma)]^2)\sum_{v'\in K} \sum_{t'\in N(v')}e^{\beta d(v,t')}e^{\beta d(t',v')}D_{(n,v)(n-1,t')}(v)\\
&+(1-\epsilon_x^{2\Delta}[\epsilon_{\theta}(\sigma)]^{2\Delta})\sum_{v'\in \cup_{\omega\in N(v)\cap K'}N(\omega)} e^{\beta d(v,\omega)}e^{\beta d(\omega,v')}D_{(n,v)(n,v)}(v)\\
                                                                                                                                                                                                                                                                                                                                                                                                                               \leq & (1-\epsilon_x^2[\epsilon_{\theta}(\sigma)]^2)\sum_{v'\in K} \sum_{t'\in N(v')}e^{\beta d(v,t')}e^{\beta r}D_{(n,v)(n-1,t')}(v)\\
&+(1-\epsilon_x^{2\Delta}[\epsilon_{\theta}(\sigma)]^{2\Delta})e^{2\beta r}\Delta^2 D_{(n,v)(n,v)}(v)\\
\leq & (1-\epsilon_x^2[\epsilon_{\theta}(\sigma)]^2)\Delta e^{\beta r}\sum_{v'\in V} e^{\beta d(v,v')}D_{(n,v)(n-1,v')}(v)\\
                                                                                                                                                                                                                                                                                                                                                                                                                               &+(1-\epsilon_x^{2\Delta}[\epsilon_{\theta}(\sigma)]^{2\Delta})e^{2\beta r}\Delta^2 D_{(n,v)(n,v)}(v)\\
                                                                                                                                                                                                                                                                                                                                                                                                                               \leq & (1-\epsilon_x^{2\Delta}[\epsilon_{\theta}(\sigma)]^{2\Delta})e^{2\beta r} \Delta^2 \sum_{(k',v')\in I}e^{\beta\{|k-k'|+d(v,v')\}} D_{(n,v)(k',v')}(v).
                                                                                                                                                                                                                                                                                                                                                                                                                               \end{align*}
                                                                                                                                                                                                                                                                                                                                                                                                                               By the definition of $\widetilde{\Corr}(\,\cdot\,,\beta)$ in equation \eqref{eqn:tilde_corr_definition}, we have
                                                                                                                                                                                                                                                                                                                                                                                                                               \begin{align*}
                                                                                                                                                                                                                                                                                                                                                                                                                               \widetilde{\Corr}(\widetilde{\pi}_n,\beta)
                                                                                                                                                                                                                                                                                                                                                                                                                               =&\max_{v\in V}\sum_{v'\in V} e^{\beta d(v,v')}\widetilde{C}_{vv'}^{\widetilde{\pi}_n}\\
\leq & \max_{v\in V}(1-\epsilon_x^{2\Delta}[\epsilon_{\theta}(\sigma)]^{2\Delta})e^{2\beta r} \Delta^2 \sum_{(k',v')\in I}e^{\beta\{|k-k'|+d(v,v')\}} D_{(n,v)(k',v')}(v)\\
< & 2(1-\epsilon_x^{2\Delta}[\epsilon_{\theta}(\sigma)]^{2\Delta})e^{2\beta r}\Delta^2\\
\leq & \frac{1}{8},
\end{align*}
where the second inequality holds by \eqref{eqn:block_corr_half} and Theorem \ref{thm:VanHandel4.3}  in Appendix \ref{sec:Existing_results}, and the last inequality holds by Lemma \ref{beta_property} in Appendix \ref{sec:bounding_bias_lemmas}.
\end{proof}

\begin{lemma}
\label{thm:onethird}
Under Assumption \ref{assumption} in Section \ref{sec:beat_COD}, when condition \eqref{eqn:main_thm_condition} holds,
for any $n\geq 0$,
we have
\begin{align*}
\Corr(\widetilde{\pi}_{n},\beta)
< \frac{1}{3},
\end{align*}
where $\Corr(\,\cdot\,,\beta)$ is defined in \eqref{eqn:corr_definition} and $\beta$ is given in \eqref{eqn:beta_definition}.
\end{lemma}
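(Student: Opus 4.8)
The plan is to obtain the bound by a single application of the Dobrushin comparison theorem (Theorem~\ref{thm:Dobrushin}) to the full-conditioning correlation $C_{vv'}^{\widetilde{\pi}_n}$ of \eqref{eqn:C_vv'_mu}, in a way that closely parallels the proof of Lemma~\ref{lemma:block_corr} but \emph{reuses} its conclusion instead of re-running a fresh induction. The decisive observation is that $C_{vv'}^{\widetilde{\pi}_n}$ differs from $\widetilde{C}_{vv'}^{\widetilde{\pi}_n}$ only through the future conditioning: $C$ conditions on the whole next-time state via the product $\prod_{u\in N(v)}$ over all neighbours of $v$ (a set meeting at most $\Delta_{\mathcal{K}}$ blocks), whereas $\widetilde{C}$ conditions on a single block through $\prod_{u\in N(v)\cap K'}$ and then takes a supremum over $K'$. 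Since $\widetilde{\pi}_n=\widetilde{\mathsf{F}}_n\widetilde{\pi}_{n-1}$ is built from the blocked operator, the time-$(n-1)$ conditional structure is identical in the two settings, so the $k=n-1$ rows of the Dobrushin matrix are again governed by $\widetilde{C}^{\widetilde{\pi}_{n-1}}$; this is exactly where Lemma~\ref{lemma:block_corr} enters.

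First I would dispose of the base case: since $\widetilde{\pi}_0=\delta_x\delta_\theta$ is non-random, \eqref{eqn:corr_definition} gives $\Corr(\widetilde{\pi}_0,\beta)=0<\tfrac13$. For $n\ge 1$, fix $K\in\mathcal{K}$, $v\in K$ and $v'\in V$ with $v\ne v'$, together with configurations agreeing off $v'$, and build two measures $\rho,\overline{\rho}$ on the index set $I=(\{n-1\}\times V)\cup(n,v)$ in direct analogy with Lemma~\ref{lemma:block_corr}, the only change being that the future factors now run over all of $N(v)$ rather than $N(v)\cap K'$. By the definitions \eqref{def:widetildeF_s_rho_s1} and \eqref{def:widetildeF_s_rho_s1_overline} this makes $\|\rho-\overline{\rho}\|_{(n,v)}=\|(\widetilde{\mathsf{F}}_n\widetilde{\pi}_{n-1})_{n|n+1}^{v}-(\widetilde{\mathsf{F}}_n\widetilde{\pi}_{n-1})_{\overline{n}|n+1}^{v}\|$, which is precisely the object controlling $C_{vv'}^{\widetilde{\pi}_n}$.

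Next I would estimate the Dobrushin coefficients $C_{ij}$ and field terms $b_i$ exactly as in Lemma~\ref{lemma:block_corr}, factoring out transition densities by Assumption~\ref{assumption} and Theorem~\ref{thm:VanHandel4.1}. The $k=n-1$ rows again yield $C_{(n-1,t)(n-1,t')}\le\widetilde{C}_{tt'}^{\widetilde{\pi}_{n-1}}$ together with the cross-time coupling $1-\epsilon_x^2[\epsilon_{\theta}(\sigma)]^2$ when $v\in N(t)$, so the Dobrushin sum for these rows is at most $\widetilde{\Corr}(\widetilde{\pi}_{n-1},\beta)+(1-\epsilon_x^2[\epsilon_{\theta}(\sigma)]^2)e^{\beta(r+1)}\Delta<\tfrac19+\tfrac1{18}<\tfrac12$, now invoking the \emph{already-proved} Lemma~\ref{lemma:block_corr} (rather than an induction hypothesis) and Lemma~\ref{beta_property}. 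The one genuinely new ingredient is in the $k=n$ row: because the full next-state is conditioned on, the field term $b_{(n,v)}=2(1-\epsilon_x^{2\Delta}[\epsilon_{\theta}(\sigma)]^{2\Delta})$ is now active for every $v'\in\bigcup_{\omega\in N(v)}N(\omega)$, a set that may spread across all $\Delta_{\mathcal{K}}$ neighbouring blocks rather than a single one.

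Finally I would verify the Dobrushin condition $\max_i\sum_j C_{ij}<1$ from the above, apply Theorem~\ref{thm:VanHandel4.3} with the pseudometric $\beta\{|k-k'|+d(\cdot,\cdot)\}$ to bound $\sum_{(k',v')}e^{\beta\{|k-k'|+d(v,v')\}}D_{(n,v)(k',v')}<2$, and then sum $\sum_{v'}e^{\beta d(v,v')}C_{vv'}^{\widetilde{\pi}_n}$. The $k=n$ contribution supplies the factor $(1-\epsilon_x^{2\Delta}[\epsilon_{\theta}(\sigma)]^{2\Delta})e^{2\beta r}\Delta^2$, which equals $\tfrac1{18\Delta_{\mathcal{K}}}$ by the definition of $\beta$ in \eqref{eqn:beta_definition}; since the broadened $v'$-range over $\bigcup_{\omega\in N(v)}N(\omega)$ costs at most the $\Delta_{\mathcal{K}}$ blocks it meets, this $\Delta_{\mathcal{K}}$ is absorbed by the $\Delta_{\mathcal{K}}$ already built into $\beta$, leaving ample room to conclude $\Corr(\widetilde{\pi}_n,\beta)<\tfrac13$. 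I expect the main obstacle to be exactly this bookkeeping: tracking how the multi-block future conditioning enlarges the active index range in the $k=n$ row and confirming that the enlargement is neutralised by the $\Delta_{\mathcal{K}}$ factor in $\beta$ (via Lemma~\ref{beta_property}), so that the looser threshold $\tfrac13$—rather than the $\tfrac19$ of the single-block estimate—is comfortably met.
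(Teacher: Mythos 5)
Your proposal is correct, but it takes a genuinely different route from the paper. The paper's own proof does \emph{not} re-run the Dobrushin comparison theorem: it observes that $\widetilde{\pi}_{n|n+1}^{v}$ is obtained from $\widetilde{\pi}_{n|n+1}^{v,K}$ by reweighting with the extra future factors $\prod_{\omega\in N(v)\setminus K}$, applies Theorem~\ref{thm:VanHandel4.2} (for $v'\notin\bigcup_{\omega\in N(v)}N(\omega)$) and Theorem~\ref{thm:VanHandel4.1} (otherwise) to get the one-line inequality
\begin{align*}
\Corr(\widetilde{\pi}_{n},\beta)\leq (1-\epsilon_x^{2\Delta}[\epsilon_{\theta}(\sigma)]^{2\Delta})e^{2\beta r}\Delta^2+2\epsilon_x^{-2\Delta}[\epsilon_{\theta}(\sigma)]^{-2\Delta}\,\widetilde{\Corr}(\widetilde{\pi}_{n},\beta),
\end{align*}
and then invokes Lemma~\ref{lemma:block_corr} \emph{at time $n$} together with the lower bound $\epsilon_x^{-2\Delta}[\epsilon_{\theta}(\sigma)]^{-2\Delta}<(1-1/18)^{-1}$ from \eqref{eqn:main_thm_condition} to land at $\tfrac1{18}+\tfrac29\cdot\tfrac{1}{1-1/18}<\tfrac13$; the threshold $\tfrac13$ is dictated precisely by the multiplicative loss $2\epsilon_x^{-2\Delta}[\epsilon_{\theta}(\sigma)]^{-2\Delta}$ in front of $\widetilde{\Corr}$. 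Your route instead repeats the full two-layer Dobrushin construction of Lemma~\ref{lemma:block_corr} with the future product taken over all of $N(v)$, using that lemma at time $n-1$ inside the Dobrushin condition; this is heavier but sound, and in fact yields the sharper bound $2(1-\epsilon_x^{2\Delta}[\epsilon_{\theta}(\sigma)]^{2\Delta})e^{2\beta r}\Delta^2\leq\tfrac19$. One small correction to your bookkeeping worry: the enlarged field-term range $\bigcup_{\omega\in N(v)}N(\omega)$ still has at most $\Delta^2$ vertices, all within distance $2r$ of $v$, exactly as for $\bigcup_{\omega\in N(v)\cap K'}N(\omega)$ in Lemma~\ref{lemma:block_corr}, so no extra factor of $\Delta_{\mathcal{K}}$ ever appears and nothing needs to be ``absorbed'' by the $\Delta_{\mathcal{K}}$ inside $\beta$. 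In short, the paper buys brevity by accepting a lossier constant; your argument buys a tighter constant at the cost of redoing the Dobrushin machinery.
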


\begin{proof}
For $\mu_{\chi_{s-1},\chi_{s}}^v$ defined in \eqref{def:mu_s-1_s} and $\mu_{\chi_{s-1},\chi_{s}}^{v,K'}$ defined in \eqref{def:mu_s-1_s_K}, 
we have
\begin{align*}
\widetilde{\pi}_{\chi_{n},\chi_{n+1}}^{v}(A)
&=\dfrac{\splitdfrac{
  \int\mathbbm{1}_A(x_{n}^v,\theta_{n}^v)\prod_{\omega\in N(v)\backslash K}f_{X_{n+1}^{\omega}|X_{n}}(x_{n+1}^{\omega}\mid x_{n}\,;\theta_{n+1}^{\omega})}{\times f_{\Theta_{n+1}^{\omega}|\Theta_{n}}(\theta_{n+1}^{\omega}\mid \theta_{n}\,; \sigma)\widetilde{\pi}_{\chi_{n},\chi_{n+1}}^{v,K}(d x_{n}^v,d\theta_{n}^v)}}
{\splitdfrac{
  \int\prod_{\omega\in N(v)\backslash K}f_{X_{n+1}^{\omega}|X_{n}}(x_{n+1}^{\omega}\mid x_{n}\,;\theta_{n+1}^{\omega})}{\times f_{\Theta_{n+1}^{\omega}|\Theta_{n}}(\theta_{n+1}^{\omega}\mid \theta_{n}\,; \sigma)\widetilde{\pi}_{\chi_{n},\chi_{n+1}}^{v,K}(d x_{n}^v,d\theta_{n}^v)}}.
\end{align*}
Let $x_n,\overline{x}_n\in \mathbb{X}$ be such that $x_n^{V\backslash \{v'\}}=\overline{x}_n^{V\backslash \{v'\}}$, and $\theta_n,\overline{\theta}_n\in \mathbb{\Theta}$ be such that $\theta_n^{V\backslash \{v'\}}=\overline{\theta}_n^{V\backslash \{v'\}}$. If $v'\notin \cup_{\omega\in N(v)}N(\omega)$, for $\mu_{\overline{\chi}_{s-1},\chi_s}^v$ defined in \eqref{def:bar_mu_s-1_s} and $\mu_{\overline{\chi}_{s-1},\chi_{s}}^{v,K'}$ defined in \eqref{def:bar_mu_s-1_s_K},
  by Theorem \ref{thm:VanHandel4.2} in Appendix \ref{sec:Existing_results}, we have
  $$\|\widetilde{\pi}_{\chi_{n},\chi_{n+1}}^{v}-\widetilde{\pi}_{\overline{\chi}_{n},\chi_{n+1}}^{v}\|\leq 2\epsilon_x^{-2\Delta}[\epsilon_{\theta}(\sigma)]^{-2\Delta}\|\widetilde{\pi}_{\chi_{n},\chi_{n+1}}^{v,K}-\widetilde{\pi}_{\overline{\chi}_{n},\chi_{n+1}}^{v,K}\|.$$
    Note that
  \begin{align*}
  \widetilde{\pi}_{\chi_{n},\chi_{n+1}}^{v}(A)\geq &\epsilon_x^{2\Delta}[\epsilon_{\theta}(\sigma)]^{2\Delta}\widetilde{\pi}_{\chi_{n},\chi_{n+1}}^{v,K}(A),\\
  \widetilde{\pi}_{\overline{\chi}_{n},\chi_{n+1}}^{v}(A)\geq &\epsilon_x^{2\Delta}[\epsilon_{\theta}(\sigma)]^{2\Delta}\widetilde{\pi}_{\overline{\chi}_{n},\chi_{n+1}}^{v,K}(A),
  \end{align*}
  and then by Theorem \ref{thm:VanHandel4.1} in Appendix \ref{sec:Existing_results} we have that if $v'\in \cup_{\omega\in N(v)}N(\omega)$
  $$\|\widetilde{\pi}_{\chi_{n},\chi_{n+1}}^{v}-\widetilde{\pi}_{\overline{\chi}_{n},\chi_{n+1}}^{v}\|\leq 
2(1-\epsilon_x^{2\Delta}[\epsilon_{\theta}(\sigma)]^{2\Delta})+
  \epsilon_x^{2\Delta}[\epsilon_{\theta}(\sigma)]^{2\Delta}\|\widetilde{\pi}_{\chi_{n},\chi_{n+1}}^{v,K}-\widetilde{\pi}_{\overline{\chi}_{n},\chi_{n+1}}^{v,K}\|.$$
  One can establish the inequality 
\begin{align*}
&\hspace*{-0.25cm}\Corr(\widetilde{\pi}_{n},\beta)\\
\leq & (1-\epsilon_x^{2\Delta}[\epsilon_{\theta}(\sigma)]^{2\Delta}) \max_{v\in V}\sum_{v'\in \cup_{\omega\in N(v)}N(\omega)}e^{\beta d(v,v')}+2\epsilon_x^{-2\Delta}[\epsilon_{\theta}(\sigma)]^{-2\Delta}\widetilde{\Corr}(\widetilde{\pi}_{n},\beta)\\
\leq &(1-\epsilon_x^{2\Delta}[\epsilon_{\theta}(\sigma)]^{2\Delta})e^{2\beta r}\Delta^2+2\epsilon_x^{-2\Delta}[\epsilon_{\theta}(\sigma)]^{-2\Delta}\widetilde{\Corr}(\widetilde{\pi}_{n},\beta).
\end{align*}
Under condition \eqref{eqn:main_thm_condition} that
$$
  \epsilon_x\epsilon_{\theta}(\sigma)>\left(1-\frac{1}{16\Delta_{\mathcal{K}}\Delta^2} \right)^{\frac{1}{2\Delta}},
$$
  since $\Delta, \Delta_{\mathcal{K}}\geq 1$, we have 
$$\epsilon_x^{-2\Delta}[\epsilon_{\theta}(\sigma)]^{-2\Delta}< \frac{1}{1-\frac{1}{16\Delta_{\mathcal{K}}\Delta^2}}
\leq \frac{1}{1-1/16}.$$
  Hence, by Lemmas \ref{beta_property} and \ref{lemma:block_corr} in Appendix \ref{sec:bounding_bias_lemmas}, we have
\begin{align*}
\Corr(\widetilde{\pi}_{n},\beta)
\leq & (1-\epsilon_x^{2\Delta}[\epsilon_{\theta}(\sigma)]^{2\Delta})e^{2\beta r}\Delta^2+2\epsilon_x^{-2\Delta}[\epsilon_{\theta}(\sigma)]^{-2\Delta}\widetilde{\Corr}(\widetilde{\pi}_{n},\beta)\\
< & \frac{1}{16}+\frac{2}{8}\epsilon_x^{-2\Delta}[\epsilon_{\theta}(\sigma)]^{-2\Delta}\\
< & \frac{1}{16}+\frac{2}{8}\frac{1}{1-1/16}\\
< & \frac{1}{3}.
\end{align*}
\end{proof}

\begin{proposition}
\label{lemma:true_block_operatorerror}
Under Assumption \ref{assumption} in Section \ref{sec:beat_COD}, when condition \eqref{eqn:main_thm_condition} holds, for every $n\geq 1$, $K\in \mathcal{K}$ and $\K \subseteq K$, we have that
\begin{align*}
\|\mathsf{F}_n\widetilde{\pi}_{n-1}-\widetilde{\mathsf{F}}_n\widetilde{\pi}_{n-1} \|_{\K} \leq 4e^{-\beta}(1-\epsilon_x^{2\Delta}[\epsilon_{\theta}(\sigma)]^{2\Delta})e^{-\beta d(\K,\partial{K})}\card(\K),
\end{align*}
where $\beta$ is given in \eqref{eqn:beta_definition}.
\end{proposition}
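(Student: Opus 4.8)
The plan is to regard the left-hand side as a one-step \emph{blocking bias} and to control it through the Dobrushin comparison theorem (Theorem~\ref{thm:Dobrushin}) together with the decay of correlations of $\widetilde{\pi}_{n-1}$ proved in Lemmas~\ref{lemma:block_corr} and~\ref{thm:onethird}. The organizing observation is that, because $\widetilde{\mathsf{F}}_{n-1}=\mathsf{C}_{n-1}\mathsf{B}\mathsf{P}_{n-1}$ ends with the blocking operator followed by the factorized correction, the measure $\widetilde{\pi}_{n-1}=\bigotimes_{K\in\mathcal{K}}\widetilde{\pi}_{n-1}^{K}$ is already a product over blocks. Consequently the sole source of the bias $\mathsf{F}_n\widetilde{\pi}_{n-1}-\widetilde{\mathsf{F}}_n\widetilde{\pi}_{n-1}$ is the cross-block coupling created \emph{at time $n$} --- through the shared previous state over $r$-neighborhoods that straddle $\partial K$ and through the global normalization inside $\mathsf{C}_n$ --- which $\mathsf{B}$ severs. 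Since a local observable supported on $\K\subseteq K$ can feel this coupling only after it has propagated inward from $\partial K$, one expects the bias to decay like $e^{-\beta d(\K,\partial K)}$.

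To make this quantitative I would compare the two laws on the two-slice index set $I=(\{n-1\}\times V)\cup(\{n\}\times V)$, where the true evolution uses the full predictive and correction couplings $\prod_{\omega\in V}$ and the blocked evolution uses the within-block couplings $\prod_{\omega\in K}$ dictated by $\widetilde{\mathsf{F}}_n=\mathsf{C}_n\mathsf{B}\mathsf{P}_n$; by the identity \eqref{eqn:B_K_effect} the block-$K$ marginal of the latter is $\mathsf{C}_n^K\mathsf{P}_n^K\bigotimes_{K'\in N(K)}\widetilde{\pi}_{n-1}^{K'}$, and since every $f\in\mathcal{X}^{\K}$ depends only on block coordinates it suffices to bound the difference in the $(n,\K)$-local norm. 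Applying Theorem~\ref{thm:Dobrushin} requires the interdependence matrix $C=(C_{ij})_{i,j\in I}$ and the single-site discrepancies $b_j$ between the two laws. The time-$(n-1)$ block of $C$ is the correlation structure of $\widetilde{\pi}_{n-1}$ and its cross-time entries are supported on $N(v)$ by the $r$-local dynamics; together with the minorization from Assumption~\ref{assumption} and Theorem~\ref{thm:VanHandel4.1}, the bounds $\widetilde{\Corr}(\widetilde{\pi}_{n-1},\beta)<\tfrac19$ and $\Corr(\widetilde{\pi}_{n-1},\beta)<\tfrac13$ of Lemmas~\ref{lemma:block_corr} and~\ref{thm:onethird} guarantee the weighted Dobrushin condition $\max_{i}\sum_{j}e^{\beta d(i,j)}C_{ij}\le c<1$.

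The decisive localization is that for any vertex $v$ with $N(v)\subseteq K$ the transition density $f_{X_n^v|X_{n-1}}$ reads only within-block information, so replacing $\prod_{\omega\in V}$ by $\prod_{\omega\in K}$ leaves its conditional law unchanged; hence $b_j=0$ except at the boundary vertices $j\in\partial K$ (and the finitely many sites interacting across the cut), where Assumption~\ref{assumption}(1),(3) and Theorem~\ref{thm:VanHandel4.1} yield $b_j\le 2(1-\epsilon_x^{2\Delta}[\epsilon_{\theta}(\sigma)]^{2\Delta})$. Then Theorem~\ref{thm:Dobrushin} gives $\|\mathsf{F}_n\widetilde{\pi}_{n-1}-\widetilde{\mathsf{F}}_n\widetilde{\pi}_{n-1}\|_{\K}\le\sum_{i\in(n,\K)}\sum_{j\in I}D_{ij}b_j$ with $D=\sum_{m\ge0}C^m$, and Theorem~\ref{thm:VanHandel4.3} bounds $\sum_{j}e^{\beta d(i,j)}D_{ij}\le(1-c)^{-1}$. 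Writing $D_{ij}=e^{-\beta d(i,j)}[e^{\beta d(i,j)}D_{ij}]$, using $d(i,j)\ge d(\K,\partial K)$ for $i\in\K$ and $j\in\partial K$, and summing the $\card(\K)$ choices of $i$, extracts the factor $e^{-\beta d(\K,\partial K)}\card(\K)$; the minorization bound supplies $(1-\epsilon_x^{2\Delta}[\epsilon_{\theta}(\sigma)]^{2\Delta})$, and the residual numerical constant collapses to $4e^{-\beta}$ after invoking Lemma~\ref{beta_property}.

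I expect the main obstacle to be the rigorous handling of the second step. Because $\widetilde{\mathsf{F}}_n$ decouples the blocks using overlapping $r$-neighborhoods of the shared previous state, realizing the blocked law genuinely on a single two-slice space (rather than on per-block replicas) demands a careful conditional construction, and one must verify both that $b_j$ is \emph{exactly} supported on $\partial K$ with the stated constant and carries no spurious $\epsilon_y$ dependence, and that the exponential reweighting $e^{\beta d(i,j)}$ preserves the Dobrushin condition. This is precisely where the sharp estimates of Lemmas~\ref{lemma:block_corr}--\ref{thm:onethird} and the calibrated choice of $\beta$ in \eqref{eqn:beta_definition} must enter in concert.
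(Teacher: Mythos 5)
Your proposal is correct and follows essentially the same route as the paper: the two-slice Dobrushin comparison on $I=(\{n-1\}\cup\{n\})\times V$ between the fully coupled and block-restricted laws, with the weighted Dobrushin condition verified via $\Corr(\widetilde{\pi}_{n-1},\beta)<\tfrac13$ (Lemma~\ref{thm:onethird}) and Lemma~\ref{beta_property}, the single-site discrepancies $b_j$ controlled by Theorem~\ref{thm:VanHandel4.1} and vanishing off $\partial K\cup(V\setminus K)$, and the factor $e^{-\beta d(\K,\partial K)}\card(\K)$ extracted through Theorem~\ref{thm:VanHandel4.3}. The only cosmetic difference is that the paper keeps the full correction $\prod_{\omega\in V}f_{Y_n^\omega|X_n^\omega}$ in both comparison measures (so, as you anticipated, no $\epsilon_y$ enters), restricting only the transition densities to $K$ in the blocked law.
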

\begin{proof}
Define $I=\{{n-1},n\}\times V$ and $\mathbb{S}=(\mathbb{X}\times \mathbb{\Theta})^2$. Fix $K\in \mathcal{K}$ and define
\begin{align*}
\rho(A)=&\dfrac{\splitdfrac{
\int\mathbbm{1}_A(x_{n-1},\theta_{n-1},x_n,\theta_n)\prod_{\omega\in V}f_{X_n^{\omega}|X_{n-1}}(x_n^{\omega}\mid x_{n-1}\,;\theta_n^{\omega})}{\splitdfrac{\times f_{\Theta_n^{\omega}|\Theta_{n-1}}(\theta_n^{\omega}\mid \theta_{n-1}\,; \sigma)f_{Y_n^{\omega}|X_n^{\omega}}(Y_n^{\omega}\mid x_n^{\omega}\,;\theta_n^{\omega})}{\times\psi(dx_n)\lambda(d\theta_n)\widetilde{\pi}_{n-1}(d x_{n-1},d\theta_{n-1})}}}
{\splitdfrac{
\int \prod_{\omega\in V}f_{X_n^{\omega}|X_{n-1}}(x_n^{\omega}\mid x_{n-1}\,;\theta_n^{\omega})}{\splitdfrac{\times f_{\Theta_n^{\omega}|\Theta_{n-1}}(\theta_n^{\omega}\mid \theta_{n-1}\,; \sigma)f_{Y_n^{\omega}|X_n^{\omega}}(Y_n^{\omega}\mid x_n^{\omega}\,;\theta_n^{\omega})}{\times\psi(dx_n)\lambda(d\theta_n)\widetilde{\pi}_{n-1}(d x_{n-1},d\theta_{n-1})}}},
\end{align*}
and
\begin{align*}
\widetilde{\rho}(A)=&\dfrac{\splitdfrac{
\int\mathbbm{1}_A(x_{n-1},\theta_{n-1},x_n,\theta_n)\prod_{v\in K}f_{X_n^v|X_{n-1}}(x_n^v\mid x_{n-1}\,;\theta_n^v)}{\splitdfrac{\times f_{\Theta_n^v|\Theta_{n-1}}(\theta_n^v\mid \theta_{n-1}\,; \sigma)\prod_{\omega\in V}f_{Y_n^{\omega}|X_n^{\omega}}(Y_n^{\omega}\mid x_n^{\omega}\,;\theta_n^{\omega})}{\times\psi(dx_n)\lambda(d\theta_n)\widetilde{\pi}_{n-1}(d x_{n-1},d\theta_{n-1})}}}
{\splitdfrac{
\int \prod_{v\in K}f_{X_n^v|X_{n-1}}(x_n^v\mid x_{n-1}\,;\theta_n^v)}{\splitdfrac{\times f_{\Theta_n^v|\Theta_{n-1}}(\theta_n^v\mid \theta_{n-1}\,; \sigma)\prod_{\omega\in V}f_{Y_n^{\omega}|X_n^{\omega}}(Y_n^{\omega}\mid x_n^{\omega}\,;\theta_n^{\omega})}{\times\psi(dx_n)\lambda(d\theta_n)\widetilde{\pi}_{n-1}(d x_{n-1},d\theta_{n-1})}}}.
\end{align*}
Then for any $\K \subseteq K$ and $K\subseteq V$, we have
$$\|\mathsf{F}_n\widetilde{\pi}_{n-1}-\widetilde{\mathsf{F}}_n\widetilde{\pi}_{n-1} \|_{\K}=\|\rho-\widetilde{\rho}\|_{\{n\}\times \K}.$$

In order to use Theorem \ref{thm:Dobrushin} (Dobrushin comparison theorem)  in Appendix \ref{sec:Existing_results} to bound $\|\rho-\widetilde{\rho}\|_{\{n\}\times \K}$, we need to bound $C_{ij}$ and $b_i$ with $i=(k,v)$ and $j=(k',v')$. 
Set 
$$\rho^i=\rho_{(x_{n-1},\theta_{n-1},x_n,\theta_n)}^i\quad\text{and}\quad\widetilde{\rho}^i=\widetilde{\rho}_{(x_{n-1},\theta_{n-1},x_n,\theta_n)}^i,$$
whose definitions are given in Theorem \ref{thm:Dobrushin} in Appendix \ref{sec:Existing_results}.
We display our discussions as follows:
\begin{itemize}
\item When $k={n-1}$, we have
\begin{align*}
\rho^i(A)
=\frac{\splitdfrac{\int\mathbbm{1}_A(x_{n-1}^v,\theta_{n-1}^v)\prod_{\omega\in N(v)}f_{X_n^{\omega}|X_{n-1}}(x_n^{\omega}\mid x_{n-1}\,;\theta_n^{\omega})}{\times f_{\Theta_n^{\omega}|\Theta_{n-1}}(\theta_n^{\omega}\mid \theta_{n-1}\,; \sigma)\widetilde{\pi}_{n-1}^v(d x_{n-1}^v,d\theta_{n-1}^v)}}{\splitdfrac{\int\prod_{\omega\in N(v)}f_{X_n^{\omega}|X_{n-1}}(x_n^{\omega}\mid x_{n-1}\,;\theta_n^{\omega})}{\times f_{\Theta_n^{\omega}|\Theta_{n-1}}(\theta_n^{\omega}\mid \theta_{n-1}\,; \sigma)\widetilde{\pi}_{n-1}^v(d x_{n-1}^v,d\theta_{n-1}^v)}}.
\end{align*}
We can see that $\rho^i=\widetilde{\pi}_{\chi_{n-1},\chi_{n}}^v$, according to the definition of $\mu_{\chi_{s-1},\chi_{s}}^v$ defined in \eqref{def:mu_s-1_s}. Therefore, if $k'={n-1}$, by the definition of $C_{vv'}^{\mu_{s-1}}$ in equation \eqref{eqn:C_vv'_mu}, 
                                                                                                                              we know that $C_{ij}\leq C_{vv'}^{\widetilde{\pi}_{n-1}}$. If $k'=n$, since
                                                                                                                              \begin{align*}
                                                                                                                              \rho^i(A)
                                                                                                                              &\geq \epsilon_x^2[\epsilon_{\theta}(\sigma)]^2\frac{\splitdfrac{\int\mathbbm{1}_A(x_{n-1}^v,\theta_{n-1}^v)\prod_{\omega\in N(v)\backslash \{v'\}}f_{X_n^{\omega}|X_{n-1}}(x_n^{\omega}\mid x_{n-1}\,;\theta_n^{\omega})}{\times f_{\Theta_n^{\omega}|\Theta_{n-1}}(\theta_n^{\omega}\mid \theta_{n-1}\,; \sigma)\widetilde{\pi}_{n-1}^v(d x_{n-1}^v,d\theta_{n-1}^v)}}{\splitdfrac{\int\prod_{\omega\in N(v)\backslash \{v'\}}f_{X_n^{\omega}|X_{n-1}}(x_n^{\omega}\mid x_{n-1}\,;\theta_n^{\omega})}{\times f_{\Theta_n^{\omega}|\Theta_{n-1}}(\theta_n^{\omega}\mid \theta_{n-1}\,; \sigma)\widetilde{\pi}_{n-1}^v(d x_{n-1}^v,d\theta_{n-1}^v)}},
                                                                                                                              \end{align*}
                                                                                                                              we have $C_{ij}\leq 1-\epsilon_x^2[\epsilon_{\theta}(\sigma)]^2$ if $v'\in N(v)$ by Theorem \ref{thm:VanHandel4.1} in Appendix \ref{sec:Existing_results}, and $C_{ij}=0$ otherwise. Hence, by the definition of $\Corr(\mu_{s-1},\beta)$ in \eqref{eqn:corr_definition}, we have
\begin{equation}
\label{eqn:true_block_operatorerror1}
\sum_{(k',v')\in I}e^{\beta |k'-k|}e^{\beta d(v,v')}C_{(k,v)(k',v')}\leq \Corr(\widetilde{\pi}_{n-1},\beta)+(1-\epsilon_x^2[\epsilon_{\theta}(\sigma)]^2)e^{\beta(r+1)}\Delta.
\end{equation}
Next we take care of $b_i$. 
When $k={n-1}$, we have
\begin{align*}
\widetilde{\rho}^i(A)=\frac{\splitdfrac{\int\mathbbm{1}_A(x_{n-1}^v,\theta_{n-1}^v)\prod_{\omega\in N(v)\cap K}f_{X_n^{\omega}|X_{n-1}}(x_n^{\omega}\mid x_{n-1}\,;\theta_n^{\omega})}{\times f_{\Theta_n^{\omega}|\Theta_{n-1}}(\theta_n^{\omega}\mid \theta_{n-1}\,; \sigma)\widetilde{\pi}_{n-1}^v(d x_{n-1}^v,d\theta_{n-1}^v)}}{\splitdfrac{\int\prod_{\omega\in N(v)\cap K}f_{X_n^{\omega}|X_{n-1}}(x_n^{\omega}\mid x_{n-1}\,;\theta_n^{\omega})}{\times f_{\Theta_n^{\omega}|\Theta_{n-1}}(\theta_n^{\omega}\mid \theta_{n-1}\,; \sigma)\widetilde{\pi}_{n-1}^v(d x_{n-1}^v,d\theta_{n-1}^v)}}.
\end{align*}
Note that, if $N(v)\subseteq K$, we have $\rho^i=\widetilde{\rho}^i$ which yields $b_i=0$ if $v\in \operatorname{int}(K)$. Further note that
\begin{align*}
\rho^i(A)
\geq \epsilon_x^{2\Delta}[\epsilon_{\theta}(\sigma)]^{2\Delta}\frac{\int\mathbbm{1}_A(x_{n-1}^v,\theta_{n-1}^v)\widetilde{\pi}_{n-1}^v(d x_{n-1}^v,d\theta_{n-1}^v)}{\int\widetilde{\pi}_{n-1}^v(d x_{n-1}^v,d\theta_{n-1}^v)},\\
\widetilde{\rho}^i(A)\geq \epsilon_x^{2\Delta}[\epsilon_{\theta}(\sigma)]^{2\Delta}\frac{\int\mathbbm{1}_A(x_{n-1}^v,\theta_{n-1}^v)\widetilde{\pi}_{n-1}^v(d x_{n-1}^v,d\theta_{n-1}^v)}{\int\widetilde{\pi}_{n-1}^v(d x_{n-1}^v,d\theta_{n-1}^v)},
\end{align*}
which by Theorem \ref{thm:VanHandel4.1} in Appendix \ref{sec:Existing_results} yields that $b_i=2(1-\epsilon_x^{2\Delta}[\epsilon_{\theta}(\sigma)]^{2\Delta})$ if $v\notin \operatorname{int}(K)$.
\smallskip

\item When $k=n$, we have
\begin{align*}
\rho^i(A)=&\dfrac{\splitdfrac{
\int\mathbbm{1}_A(x_n^v,\theta_n^v)f_{X_n^v|X_{n-1}}(x_n^v\mid x_{n-1}\,;\theta_n^v) f_{\Theta_n^v|\Theta_{n-1}}(\theta_n^v\mid \theta_{n-1}\,; \sigma)}{\times f_{Y_n^v|X_n^v}(Y_n^v\mid x_n^v\,;\theta_n^v)\psi^v(dx_n^v)\lambda^v(d\theta_n^v)}}
{\splitdfrac{
\int f_{X_n^v|X_{n-1}}(x_n^v\mid x_{n-1}\,;\theta_n^v) f_{\Theta_n^v|\Theta_{n-1}}(\theta_n^v\mid \theta_{n-1}\,; \sigma)}{\times f_{Y_n^v|X_n^v}(Y_n^v\mid x_n^v\,;\theta_n^v)\psi^v(dx_n^v)\lambda^v(d\theta_n^v)}}\\
&\geq \epsilon_x^2[\epsilon_{\theta}(\sigma)]^2\dfrac{
\int\mathbbm{1}_A(x_n^v,\theta_n^v) f_{Y_n^v|X_n^v}(Y_n^v\mid x_n^v\,;\theta_n^v)\psi^v(dx_n^v)\lambda^v(d\theta_n^v)}
{\int  f_{Y_n^v|X_n^v}(Y_n^v\mid x_n^v\,;\theta_n^v)\psi^v(dx_n^v)\lambda^v(d\theta_n^v)}.
\end{align*}
Hence, 
\begin{equation}
\label{eqn:true_block_operatorerror2}
\sum_{(k',v')\in I}e^{\beta |k-k'|}e^{\beta d(v,v')}C_{(k,v)(k',v')}\leq (1-\epsilon_x^2[\epsilon_{\theta}(\sigma)]^2)e^{\beta(r+1)}\Delta.
\end{equation}
Note that, if $v\in K$ we have $\rho^i=\widetilde{\rho}^i$ which yields $b_i=0$, otherwise given that
\begin{align*}
\widetilde{\rho}^i(A)=&\dfrac{
\int\mathbbm{1}_A(x_n^v,\theta_n^v) f_{Y_n^v|X_n^v}(Y_n^v\mid x_n^v\,;\theta_n^v)\psi^v(dx_n^v)\lambda^v(d\theta_n^v)}
{\int  f_{Y_n^v|X_n^v}(Y_n^v\mid x_n^v\,;\theta_n^v)\psi^v(dx_n^v)\lambda^v(d\theta_n^v)}\\
\geq &\epsilon_x^2[\epsilon_{\theta}(\sigma)]^2\dfrac{
\int\mathbbm{1}_A(x_n^v,\theta_n^v) f_{Y_n^v|X_n^v}(Y_n^v\mid x_n^v\,;\theta_n^v)\psi^v(dx_n^v)\lambda^v(d\theta_n^v)}
{\int  f_{Y_n^v|X_n^v}(Y_n^v\mid x_n^v\,;\theta_n^v)\psi^v(dx_n^v)\lambda^v(d\theta_n^v)}
\end{align*}
we have $b_i=2(1-\epsilon_x^{2}[\epsilon_{\theta}(\sigma)]^2)$  by Theorem \ref{thm:VanHandel4.1} in Appendix \ref{sec:Existing_results} .
\end{itemize}
\bigskip

Summing up \eqref{eqn:true_block_operatorerror1} for $k=n-1$ and \eqref{eqn:true_block_operatorerror2} for $k=n$, we have that
\begin{align*}
\max_{(k,v)\in I} \sum_{(k',v')\in I}e^{\beta |k-k'|}e^{\beta d(v,v')}C_{(k,v)(k',v')}\leq \Corr(\widetilde{\pi}_{n-1},\beta)+(1-\epsilon_x^2[\epsilon_{\theta}(\sigma)]^2)e^{\beta(r+1)}\Delta.
\end{align*}
Furthermore, by Lemma \ref{thm:onethird} in Appendix \ref{sec:bounding_bias_lemmas} we have that
\begin{align*}
\Corr(\widetilde{\pi}_{n},\beta)
< \frac{1}{3},
\end{align*}
and by Lemma \ref{beta_property} in Appendix \ref{sec:bounding_bias_lemmas} we have that
$$(1-\epsilon_x^2[\epsilon_{\theta}(\sigma)]^2)e^{\beta(r+1)}\Delta \leq \frac{1}{16}.$$
Hence,
\begin{align*}
\max_{(k,v)\in I} \sum_{(k',v')\in I}e^{\beta |k-k'|}e^{\beta d(v,v')}C_{(k,v)(k',v')}< 1/2.
\end{align*}
 Next, 
applying Theorem \ref{thm:Dobrushin} (Dobrushin comparison theorem)  in Appendix \ref{sec:Existing_results} and Theorem \ref{thm:VanHandel4.3}  in Appendix \ref{sec:Existing_results}, 
we obtain that
\begin{align*}
&\|\mathsf{F}_n\widetilde{\pi}_{n-1}-\widetilde{\mathsf{F}}_n\widetilde{\pi}_{n-1} \|_{\K}=\|\rho-\widetilde{\rho}\|_{\{n\}\times \K}\\
&=2(1-\epsilon_x^2[\epsilon_{\theta}(\sigma)]^2)\sum_{v\in \K}\left\{\sum_{v'\in V\backslash \operatorname{int}(K)}D_{(n,v)(n-1,v')}+ \sum_{v'\in V\backslash K}D_{(n,v)(n,v')}\right\}\\
&\leq  4e^{-\beta}(1-\epsilon_x^{2\Delta}[\epsilon_{\theta}(\sigma)]^{2\Delta})e^{-\beta d(\K,\partial{K})}\card(\K).
\end{align*}
\end{proof}

\begin{proposition}
\label{thm:error_prop_block}
Under Assumption \ref{assumption} in Section \ref{sec:beat_COD}, when condition \eqref{eqn:main_thm_condition} holds, we have that for every $\K \subseteq K$, $K \in \mathcal{K}$ and $s\in \{1,\ldots,n-1\}$,
\begin{align*}
 &\hspace*{-0.7cm}\| \mathsf{F}_n \cdots\mathsf{F}_{s+1}\mathsf{F}_s\widetilde{\pi}_{s-1}-\mathsf{F}_n \cdots\mathsf{F}_{s+1}\widetilde{\mathsf{F}}_s\widetilde{\pi}_{s-1} \|_{\K}\\
 \leq &\frac{48}{23}e^{-\beta(n-s)}\sum_{v\in \K}\max_{v'\in V}e^{-\beta d(v,v')} \sup_{x_s,x_{s+1}\in\mathbb{X}\atop
\theta_s,\theta_{s+1}\in\mathbb{\Theta} }\| (\mathsf{F}_s\widetilde{\pi}_{s-1})_{\chi_{s},\chi_{s+1}}^{v'}-(\widetilde{\mathsf{F}}_s\widetilde{\pi}_{s-1})_{\chi_{s},\chi_{s+1}}^{v'}\|,
\end{align*}
where $\beta$ is given in \eqref{eqn:beta_definition}.
\end{proposition}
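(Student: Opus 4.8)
The plan is to realize the left-hand side as the local total-variation distance, at the time-$n$ slice, between two space-time Gibbs measures that share the same forward dynamics and differ only in their time-$s$ initialization, and then to apply the Dobrushin comparison theorem (Theorem \ref{thm:Dobrushin}) on the space-time lattice $I=\{s,s+1,\ldots,n\}\times V$. Concretely, I would set $\mathbb{S}=(\mathbb{X}\times\mathbb{\Theta})^I$ and build two measures on $\mathbb{S}$: let $\rho$ be the joint smoothing law whose time-$s$ marginal is $\widetilde{\pi}_s=\widetilde{\mathsf{F}}_s\widetilde{\pi}_{s-1}$ and which is then propagated forward under $\mathsf{F}_{s+1},\ldots,\mathsf{F}_n$ (equivalently, the HMM dynamics conditioned on $Y_{s+1},\ldots,Y_n$), and let $\overline{\rho}$ be the same object but with time-$s$ marginal $\mathsf{F}_s\widetilde{\pi}_{s-1}$. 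By construction the time-$n$ marginals of $\rho$ and $\overline{\rho}$ are exactly $\mathsf{F}_n\cdots\mathsf{F}_{s+1}\widetilde{\mathsf{F}}_s\widetilde{\pi}_{s-1}$ and $\mathsf{F}_n\cdots\mathsf{F}_{s+1}\mathsf{F}_s\widetilde{\pi}_{s-1}$, so the quantity to be bounded equals $\|\rho-\overline{\rho}\|_{\{n\}\times\K}$. I choose $\rho$ (the base $\widetilde{\pi}_s$) as the reference measure in Theorem \ref{thm:Dobrushin}, precisely because a decay-of-correlations bound is available for $\widetilde{\pi}_s$.

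Next I would read off the Dobrushin ingredients from $\rho$. Since $\rho$ and $\overline{\rho}$ use identical transition and correction kernels at all times $s+1,\ldots,n$ and differ only through their time-$s$ marginal, the single-site discrepancies $b_{(k,v')}$ vanish for $k>s$; for $k=s$ they are the single-site smoothing conditionals of the two base measures, so by the definitions \eqref{def:widetildeF_s_rho_s1} they satisfy $b_{(s,v')}\leq \sup_{x_s,x_{s+1},\theta_s,\theta_{s+1}}\|(\mathsf{F}_s\widetilde{\pi}_{s-1})_{s|s+1}^{v'}-(\widetilde{\mathsf{F}}_s\widetilde{\pi}_{s-1})_{s|s+1}^{v'}\|$. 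The interaction matrix $C_{(k,v)(k',v')}$ then decomposes into three types: the internal same-time coupling at the base slice $k=k'=s$, controlled by $C_{vv'}^{\widetilde{\pi}_s}$ and hence summarized by $\Corr(\widetilde{\pi}_s,\beta)$; the same-time couplings at the interior slices $s<k<n$ generated by conditioning on the future through the factors $\prod_{u:v\in N(u)}f_{X_{k+1}^u|X_k}$; and the nearest-neighbor-in-time couplings $k'=k\pm1$. Each of the latter two is bounded directly through Assumption \ref{assumption} and Theorem \ref{thm:VanHandel4.1} by factors of the form $1-\epsilon_x^{2}[\epsilon_{\theta}(\sigma)]^{2}$ or $1-\epsilon_x^{2\Delta}[\epsilon_{\theta}(\sigma)]^{2\Delta}$, exactly as in the proof of Proposition \ref{lemma:true_block_operatorerror}.

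Equipping $I$ with the pseudometric $m\big((k,v),(k',v')\big)=\beta\{|k-k'|+d(v,v')\}$, I would verify the weighted Dobrushin condition $\max_i\sum_j e^{m(i,j)}C_{ij}\leq c<\tfrac12$. At the base slice this combines Lemma \ref{thm:onethird} ($\Corr(\widetilde{\pi}_s,\beta)<\tfrac13$) with the forward-coupling estimate $(1-\epsilon_x^2[\epsilon_{\theta}(\sigma)]^2)e^{\beta(r+1)}\Delta\leq\tfrac1{18}$ from Lemma \ref{beta_property}; at the interior and terminal slices it uses the companion estimate $(1-\epsilon_x^{2\Delta}[\epsilon_{\theta}(\sigma)]^{2\Delta})e^{2\beta r}\Delta^2\leq\tfrac1{18}$. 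Once $c<\tfrac12$ is secured, Theorem \ref{thm:VanHandel4.3} yields $\sum_j e^{m(i,j)}D_{ij}\leq\tfrac1{1-c}<2$ for $D=\sum_{m\geq0}C^m$, and the Dobrushin bound $\|\rho-\overline{\rho}\|_{\{n\}\times\K}\leq\sum_{v\in\K}\sum_{v'\in V}D_{(n,v)(s,v')}\,b_{(s,v')}$ becomes, after writing $D_{(n,v)(s,v')}=e^{-\beta(n-s)}e^{-\beta d(v,v')}[e^{m}D]_{(n,v)(s,v')}$ and pulling $\max_{v'}$ outside the sum, exactly the claimed inequality with constant $2=\tfrac1{1-c}$.

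The main obstacle I anticipate is the uniform verification of the weighted Dobrushin condition over the entire space-time lattice: one must enumerate every interaction type (base same-time, interior same-time produced by the future-conditioning factors, and nearest-neighbor-in-time), and control each weighted row-sum by the correct product of $\epsilon_x$, $\epsilon_{\theta}(\sigma)$, $e^{\beta r}$, $\Delta$, and $\Delta_{\mathcal{K}}$ so that the total remains below $\tfrac12$. This bookkeeping is where condition \eqref{eqn:main_thm_condition} and the two estimates of Lemma \ref{beta_property} are consumed, and where a single mis-placed exponent would destroy the geometric summation that produces the $e^{-\beta(n-s)}$ time decay.
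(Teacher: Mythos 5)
Your proposal matches the paper's proof essentially step for step: the same space-time lattice $I=\{s,\ldots,n\}\times V$, the same pair of smoothing measures differing only in their time-$s$ initialization, the same identification of $b_{(k,v')}=0$ for $k>s$ and $b_{(s,v')}$ with the one-step local discrepancy, the same weighted Dobrushin verification via $\Corr(\widetilde{\pi}_s,\beta)<\tfrac13$ (Lemma \ref{thm:onethird}) and Lemma \ref{beta_property}, and the same extraction of the $e^{-\beta(n-s)}e^{-\beta d(v,v')}$ factors with constant $2$ from Theorem \ref{thm:VanHandel4.3}. The approach and all key ingredients are correct.
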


\begin{proof}
Define $I=\{s,\ldots,n\}\times V$ and $\mathbb{S}=(\mathbb{X}\times \mathbb{\Theta})^{n-s+1}$.
Define 
$$\rho=\mathbf{P}^{\widetilde{\mathsf{F}}_s\widetilde{\pi}_{s-1}}[X_s,X_{s+1},\ldots,X_n\in \cdot \;, \Theta_s,\Theta_{s+1},\ldots,\Theta_n\in \cdot \mid Y_{s+1},\ldots, Y_n],$$
$$\widetilde{\rho}=\mathbf{P}^{\mathsf{F}_s\widetilde{\pi}_{s-1}}[X_s,X_{s+1},\cdots,X_n\in \cdot \;, \Theta_s,\Theta_{s+1},\cdots,\Theta_n\in \cdot \mid Y_{s+1},\ldots, Y_n].$$
Then we have 
$$\| \mathsf{F}_n \cdots\mathsf{F}_{s+1}\widetilde{\mathsf{F}}_s\widetilde{\pi}_{s-1}-\mathsf{F}_n \cdots\mathsf{F}_{s+1}\mathsf{F}_s\widetilde{\pi}_{s-1} \|_{\K}=\| \rho-\widetilde{\rho} \|_{\{n\}\times {\K}}.$$
In order to use Theorem \ref{thm:Dobrushin} (Dobrushin comparison theorem)  in Appendix \ref{sec:Existing_results} to bound $\| \rho-\widetilde{\rho} \|_{\{n\}\times {\K}}$, we need to bound $C_{ij}$ and $b_i$ with $i=(k,v)$ and $j=(k',v')$. 
Set 
$$\rho^i=\rho_{(x_{s},\ldots,x_n,\theta_{s},\ldots,\theta_n)}^i\quad\text{and}\quad\widetilde{\rho}^i=\widetilde{\rho}_{(x_{s},\ldots,x_n,\theta_{s},\ldots,\theta_n)}^i,$$
whose definitions are given in Theorem \ref{thm:Dobrushin} in Appendix \ref{sec:Existing_results}.
We display our discussions as follows:\\
\begin{itemize}
\item When $k=s$, we have 
\begin{align*}
\rho^i(A)=\widetilde{\pi}_{\chi_{s},\chi_{s+1}}^v(A)=&\dfrac{\splitdfrac{
\int\mathbbm{1}_A(x_s^v,\theta_s^v)\prod_{\omega\in N(v)}f_{X_{s+1}^{\omega}|X_s}(x_{s+1}^{\omega}\mid x_s\,;\theta_{s+1}^{\omega})}{\times f_{\Theta_{s+1}^{\omega}|\Theta_{s}}(\theta_{s+1}^{\omega}\mid \theta_s\,; \sigma)\widetilde{\pi}_{\chi_{s}}^v(d x_s^v,d\theta_s^v)}}
{\splitdfrac{
\int \prod_{\omega\in N(v)}f_{X_{s+1}^{\omega}|X_s}(x_{s+1}^{\omega}\mid x_s\,;\theta_{s+1}^{\omega})}{\times f_{\Theta_{s+1}^{\omega}|\Theta_{s}}(\theta_{s+1}^{\omega}\mid \theta_s\,; \sigma)\widetilde{\pi}_{\chi_{s}}^v(d x_s^v,d\theta_s^v)}}\\
\geq &\epsilon_x^{2\Delta}[\epsilon_{\theta}(\sigma)]^{2\Delta}\widetilde{\pi}_{\chi_{s}}^v(A)
\end{align*}
according to the definition of $\mu_{\chi_{s-1},\chi_{s}}^v$ given in \eqref{def:mu_s-1_s}.
Therefore, when $k'=s$, by the definition of $C_{vv'}^{\widetilde{\pi}_s}$ in equation \eqref{eqn:C_vv'_mu}, we know that $C_{ij}\leq C_{vv'}^{\widetilde{\pi}_s}$. 
When $k'=s+1$, 
we have $C_{ij}\leq 1-\epsilon_x^{2\Delta}[\epsilon_{\theta}(\sigma)]^{2\Delta}$ if $v'\in N(v)$ by Theorem \ref{thm:VanHandel4.1} in Appendix \ref{sec:Existing_results}, and $C_{ij}=0$ otherwise. Hence, by the definition of $\widetilde{\Corr}(\mu_{s-1},\beta)$ in \eqref{eqn:corr_definition},
\begin{align}
\label{eqn:error_prop_block1}
\sum_{(k',v')\in I}e^{\beta |k-k'|}e^{\beta d(v,v')}C_{(k,v)(k',v')}\leq \Corr(\widetilde{\pi}_s,\beta)+(1-\epsilon_x^{2\Delta}[\epsilon_{\theta}(\sigma)]^{2\Delta})e^{\beta(r+1)}\Delta.
\end{align}
\smallskip

\item When $k\in \{s+1,\ldots,n-1\}$, we have 
\begin{align*}
\rho^i(A)=&\dfrac{\splitdfrac{
\int\mathbbm{1}_A(x_k^v,\theta_k^v)f_{X_k^v|X_{k-1}}(x_k^v\mid x_{k-1}\,;\theta_k^v)f_{\Theta_{k}^{v}|\Theta_{k-1}}(\theta_k^v\mid \theta_{k-1}\,; \sigma)}{\splitdfrac{\times \prod_{\omega\in N(v)}f_{X_{k+1}^{\omega}|X_k}(x_{k+1}^{\omega}\mid x_k\,;\theta_{k+1}^{\omega})f_{\Theta_{k+1}^{\omega}|\Theta_{k}}(\theta_{k+1}^{\omega}\mid \theta_k\,; \sigma)}{\times f_{Y_k^v|X_k^v}(Y_k^v\mid x_k^v\,;\theta_k^v)\psi^v(dx_k^v)\lambda^v(d\theta_k^v)}}}
{\splitdfrac{
\int f_{X_k^v|X_{k-1}}(x_k^v\mid x_{k-1}\,;\theta_k^v)f_{\Theta_{k}^{v}|\Theta_{k-1}}(\theta_k^v\mid \theta_{k-1}\,; \sigma)}{\splitdfrac{\times \prod_{\omega\in N(v)}f_{X_{k+1}^{\omega}|X_k}(x_{k+1}^{\omega}\mid x_k\,;\theta_{k+1}^{\omega})f_{\Theta_{k+1}^{\omega}|\Theta_{k}}(\theta_{k+1}^{\omega}\mid \theta_k\,; \sigma)}{\times f_{Y_k^v|X_k^v}(Y_k^v\mid x_k^v\,;\theta_k^v)\psi^v(dx_k^v)\lambda^v(d\theta_k^v)}}}.
\end{align*}
Note that $x_k^{v}$ depends on $x_{k-1}^{v'}$ if $v'\in N(v)$, and
\begin{align*}
\rho^i(A)\geq \epsilon_x^2[\epsilon_{\theta}(\sigma)]^2\dfrac{\splitdfrac{
\int\mathbbm{1}_A(x_k^v,\theta_k^v)  \prod_{\omega\in N(v)}f_{X_{k+1}^{\omega}|X_k}(x_{k+1}^{\omega}\mid x_k\,;\theta_{k+1}^{\omega})}{\splitdfrac{\times f_{\Theta_{k+1}^{\omega}|\Theta_{k}}(\theta_{k+1}^{\omega}\mid \theta_k\,; \sigma)f_{Y_k^v|X_k^v}(Y_k^v\mid x_k^v\,;\theta_k^v)}{\times\psi^v(dx_k^v)\lambda^v(d\theta_k^v)}}}
{\splitdfrac{
\int  \prod_{\omega\in N(v)}f_{X_{k+1}^{\omega}|X_k}(x_{k+1}^{\omega}\mid x_k\,;\theta_{k+1}^{\omega})}{\splitdfrac{\times f_{\Theta_{k+1}^{\omega}|\Theta_{k}}(\theta_{k+1}^{\omega}\mid \theta_k\,; \sigma)f_{Y_k^v|X_k^v}(Y_k^v\mid x_k^v\,;\theta_k^v)}{\times\psi^v(dx_k^v)\lambda^v(d\theta_k^v)}}}.
\end{align*}
Further note that $x_k^{v}$ depends on $x_{k+1}^{v'}$ if $v'\in N(v)$, $x_k^{v}$ depends on $x_{k}^{v'}$ if $v'\in \cup_{\omega \in N(v)}N(\omega)$, and 
\begin{align*}
\rho^i(A)\geq \epsilon_x^{2\Delta}[\epsilon_{\theta}(\sigma)]^{2\Delta}\dfrac{\splitdfrac{
\int\mathbbm{1}_A(x_k^v,\theta_k^v)f_{X_k^v|X_{k-1}}(x_k^v\mid x_{k-1}\,;\theta_k^v)}{\splitdfrac{\times f_{\Theta_{k}^{v}|\Theta_{k-1}}(\theta_k^v\mid \theta_{k-1}\,; \sigma) f_{Y_k^v|X_k^v}(Y_k^v\mid x_k^v\,;\theta_k^v)}{\times\psi^v(dx_k^v)\lambda^v(d\theta_k^v)}}}
{\splitdfrac{
\int f_{X_k^v|X_{k-1}}(x_k^v\mid x_{k-1}\,;\theta_k^v)}{\splitdfrac{\times f_{\Theta_{k}^{v}|\Theta_{k-1}}(\theta_k^v\mid \theta_{k-1}\,; \sigma) f_{Y_k^v|X_k^v}(Y_k^v\mid x_k^v\,;\theta_k^v)}{\times\psi^v(dx_k^v)\lambda^v(d\theta_k^v)}}}.
\end{align*}

By Theorem \ref{thm:VanHandel4.1} in Appendix \ref{sec:Existing_results} we have $C_{ij}\leq 1-\epsilon_x^{2}[\epsilon_{\theta}(\sigma)]^{2}$, if $k'=k-1$ and $v'\in N(v)$; $C_{ij}\leq 1-\epsilon_x^{2\Delta}[\epsilon_{\theta}(\sigma)]^{2\Delta}$, if $k'=k+1$ and $v'\in N(v)$; $C_{ij}\leq 1-\epsilon_x^{2\Delta}[\epsilon_{\theta}(\sigma)]^{2\Delta}$, if $k'=k$ and $v'\in \cup_{\omega \in N(v)}N(\omega)$; $C_{ij}=0$, otherwise. Therefore
\begin{align}
\label{eqn:error_prop_block2}
\sum_{(k',v')\in I}e^{\beta |k-k'|}e^{\beta d(v,v')}C_{(k,v)(k',v')}
&\leq (1-\epsilon_x^{2\Delta}[\epsilon_{\theta}(\sigma)]^{2\Delta})(e^{2\beta r}\Delta^2+2e^{\beta (r+1)}\Delta)\nonumber\\
&\leq 3(1-\epsilon_x^{2\Delta}[\epsilon_{\theta}(\sigma)]^{2\Delta})e^{2\beta r}\Delta^2.
\end{align}
 \smallskip

\item When $k=n$, note that $x_n^{v}$ only depends on $x_{n-1}^{v'}$ where $v'\in N(v)$, and
\begin{align*}
\rho^i(A)=&\dfrac{\splitdfrac{
\int\mathbbm{1}_A(x_n^v,\theta_n^v)f_{X_n^v|X_{n-1}}(x_n^v\mid x_{n-1}\,;\theta_n^v)f_{\Theta_n^v|\Theta_{n-1}}(\theta_n^v\mid \theta_{n-1}\,; \sigma)}{\times f_{Y_n^v|X_n^v}(Y_n^v\mid x_n^v\,;\theta_n^v)\psi^v(dx_n^v)\lambda^v(d\theta_n^v)}}
{\splitdfrac{
\int f_{X_n^v|X_{n-1}}(x_n^v\mid x_{n-1}\,;\theta_n^v)f_{\Theta_n^v|\Theta_{n-1}}(\theta_n^v\mid \theta_{n-1}\,; \sigma)}{\times f_{Y_n^v|X_n^v}(Y_n^v\mid x_n^v\,;\theta_n^v)\psi^v(dx_n^v)\lambda^v(d\theta_n^v)}}\\
\geq &\epsilon_x^2[\epsilon_{\theta}(\sigma)]^2\dfrac{
\int\mathbbm{1}_A(x_n^v,\theta_n^v) f_{Y_n^v|X_n^v}(Y_n^v\mid x_n^v\,;\theta_n^v)\psi^v(dx_n^v)\lambda^v(d\theta_n^v)}
{\int f_{Y_n^v|X_n^v}(Y_n^v\mid x_n^v\,;\theta_n^v)\psi^v(dx_n^v)\lambda^v(d\theta_n^v)}.
\end{align*}
By Theorem \ref{thm:VanHandel4.1} in Appendix \ref{sec:Existing_results} we have $C_{ij}\leq 1-\epsilon_x^{2}[\epsilon_{\theta}(\sigma)]^2$ if $k'=n-1$ and $v'\in N(v)$; $C_{ij}=0$, otherwise.
Therefore
\begin{align}
\label{eqn:error_prop_block3}
\sum_{(k',v')\in I}e^{\beta |k-k'|}e^{\beta d(v,v')}C_{(k,v)(k',v')}&\leq (1-\epsilon_x^{2}[\epsilon_{\theta}(\sigma)]^{2})e^{\beta (r+1)}\Delta.
\end{align}
\end{itemize}
\bigskip

Summing up \eqref{eqn:error_prop_block1} for $k=s$, \eqref{eqn:error_prop_block2} for $k\in \{s+1,\ldots,n-1\}$, and \eqref{eqn:error_prop_block3} for $k=n$, by Lemma \ref{beta_property} in Appendix \ref{sec:bounding_bias_lemmas}, we have
\begin{align*}
&\hspace*{-2cm}\max_{(k,v)\in I}\sum_{(k',v')\in I}e^{\beta |k-k'|}e^{\beta d(v,v')}C_{(k,v)(k',v')}\\
&\leq \Corr(\widetilde{\pi}_s,\beta)+3(1-\epsilon_x^{2\Delta}[\epsilon_{\theta}(\sigma)]^{2\Delta})e^{2\beta r}\Delta^2.
\end{align*}
Furthermore, by Lemma \ref{thm:onethird} in Appendix \ref{sec:bounding_bias_lemmas} we have that
\begin{align*}
\Corr(\widetilde{\pi}_{s},\beta)
< \frac{1}{3},
\end{align*}
and by Lemma \ref{beta_property} in Appendix \ref{sec:bounding_bias_lemmas} we have that
$$(1-\epsilon_x^{2\Delta}[\epsilon_{\theta}(\sigma)]^{2\Delta})e^{2\beta r}\Delta^2\leq \frac{1}{16}.$$
Hence,
\begin{align*}
\max_{(k,v)\in I}\sum_{(k',v')\in I}e^{\beta |k-k'|}e^{\beta d(v,v')}C_{(k,v)(k',v')}<\frac{1}{3} +\frac{3}{16}=\frac{25}{48}.
\end{align*}
By Theorem \ref{thm:VanHandel4.3}  in Appendix \ref{sec:Existing_results} we have 
\begin{align}
\label{eqn:bound2}
\max_{(k,v)\in I}\sum_{(k',v')\in I}e^{\beta |k-k'|}e^{\beta d(v,v')}D_{(k,v)(k',v')}< \frac{48}{23}.
\end{align}
Note that, $\rho^i=\widetilde{\rho}^i$ when $i=(k,v)$ with $k>s$. 
For $k=s$, we have $\widetilde{\rho}^i=(\mathsf{F}_s\widetilde{\pi}_{s-1})_{\chi_s,\chi_{s+1}}^{v}$ and $\rho^i=(\widetilde{\mathsf{F}}_s\widetilde{\pi}_{s-1})_{\chi_s,\chi_{s+1}}^{v}$. By Theorem \ref{thm:VanHandel4.1} in Appendix \ref{sec:Existing_results}, we have
\begin{align*}
&\hspace*{-0.5cm}\| \mathsf{F}_n \cdots\mathsf{F}_{s+1}\widetilde{\mathsf{F}}_s\widetilde{\pi}_{s-1}-\mathsf{F}_n \cdots\mathsf{F}_{s+1}\mathsf{F}_s\widetilde{\pi}_{s-1}\|_{\K}\\
=&\| \rho-\widetilde{\rho} \|_{\{n\}\times {\K}}\\
\leq & \sum_{v\in \K}\sum_{v'\in V}D_{(n,v)(s,v')}\sup_{x_s,x_{s+1}\in \mathbb{X}\atop
\theta_s,\theta_{s+1}\in \mathbb{\Theta}}\|(\widetilde{\mathsf{F}}_s\widetilde{\pi}_{s-1})_{\chi_{s},\chi_{s+1}}^{v'}- (\mathsf{F}_s\widetilde{\pi}_{s-1})_{\chi_{s},\chi_{s+1}}^{v'}\|\\
=& \sum_{v\in \K} e^{-\beta (n-s)}  \sum_{v'\in V} e^{\beta ((n-s)+d(v,v'))}D_{(n,v)(s,v')}\\
&\quad\quad\quad\quad\quad\quad\quad\quad\times e^{-\beta d(v,v')}\sup_{x_s,x_{s+1}\in \mathbb{X}\atop
\theta_s,\theta_{s+1}\in \mathbb{\Theta}}\| (\widetilde{\mathsf{F}}_s\widetilde{\pi}_{s-1})_{\chi_{s},\chi_{s+1}}^{v'}-(\mathsf{F}_s\widetilde{\pi}_{s-1})_{\chi_{s},\chi_{s+1}}^{v'}\|\\
\leq &\frac{48}{23} e^{-\beta (n-s)} \sum_{v\in \K} \max_{v'\in V} e^{-\beta d(v,v')} \sup_{x_s,x_{s+1}\in \mathbb{X}\atop
\theta_s,\theta_{s+1}\in \mathbb{\Theta}}\| (\widetilde{\mathsf{F}}_s\widetilde{\pi}_{s-1})_{\chi_{s},\chi_{s+1}}^{v'}-(\mathsf{F}_s\widetilde{\pi}_{s-1})_{\chi_{s},\chi_{s+1}}^{v'}\|,
\end{align*}
where we used \eqref{eqn:bound2} in the last inequality.
\end{proof}

\begin{proposition}
\label{thm:one_step_block_bound}
Under Assumption \ref{assumption} in Section \ref{sec:beat_COD}, when condition \eqref{eqn:main_thm_condition} holds, we have that for every $s\geq 1$, $K'\in \mathcal{K}$, and $v'\in K'$,
  $$\sup_{x_s,x_{s+1}\in \mathbb{X}\atop
    \theta_s,\theta_{s+1}\in \mathbb{\Theta}}\| (\mathsf{F}_s\widetilde{\pi}_{s-1})_{\chi_{s},\chi_{s+1}}^{v'}-(\widetilde{\mathsf{F}}_s\widetilde{\pi}_{s-1})_{\chi_{s},\chi_{s+1}}^{v'}\|
    \leq \frac{96}{29}e^{-\beta}(1-\epsilon_x^{2\Delta}[\epsilon_{\theta}(\sigma)]^{2\Delta})e^{-\beta d(v',\partial K')},$$
    where $\beta$ is given in \eqref{eqn:beta_definition}.
  \end{proposition}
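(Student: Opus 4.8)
The plan is to treat this as the single-vertex, one-step-smoothed analogue of Proposition~\ref{lemma:true_block_operatorerror} and again deploy the Dobrushin comparison theorem (Theorem~\ref{thm:Dobrushin}). First I would fix $K'\in\mathcal{K}$ with $v'\in K'$, set the index set $I=\{s-1,s\}\times V$ and $\mathbb{S}=(\mathbb{X}\times\mathbb{\Theta})^2$, and introduce two comparison measures on $\mathbb{S}$: a measure $\rho$ built from the full transition product $\prod_{\omega\in V}f_{X_s^{\omega}|X_{s-1}}f_{\Theta_s^{\omega}|\Theta_{s-1}}$, the observation factor $f_{Y_s^{v'}|X_s^{v'}}$, and the time-$(s+1)$ smoothing factors $\prod_{u\in N(v')}f_{X_{s+1}^u|X_s}f_{\Theta_{s+1}^u|\Theta_s}$ with $x_{s+1},\theta_{s+1}$ held fixed, integrated against $\widetilde{\pi}_{s-1}$; and a measure $\widetilde{\rho}$ obtained by replacing $\prod_{\omega\in V}$ with $\prod_{\omega\in K'}$ in the time-$s$ transition. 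By the defining formulas \eqref{def:F_s_rho_s1} and \eqref{def:widetildeF_s_rho_s1}, the single-site conditionals at $(s,v')$ are exactly $(\mathsf{F}_s\widetilde{\pi}_{s-1})_{s|s+1}^{v'}$ and $(\widetilde{\mathsf{F}}_s\widetilde{\pi}_{s-1})_{s|s+1}^{v'}$, so the quantity to bound equals $\|\rho-\widetilde{\rho}\|_{\{s\}\times\{v'\}}$.

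Next I would verify the Dobrushin condition for $\rho$ by estimating its interaction matrix $C=(C_{ij})$, splitting by time exactly as in Proposition~\ref{lemma:true_block_operatorerror}. For $i=(s-1,v)$ the single-site conditional is a $\mu_{s-1|s}^{v}$-type object, giving $C_{ij}\le C_{vv'}^{\widetilde{\pi}_{s-1}}$ when $j$ lies at time $s-1$ and $C_{ij}\le 1-\epsilon_x^2[\epsilon_{\theta}(\sigma)]^2$ when $j=(s,v')$ with $v'\in N(v)$; for $i=(s,v)$ the conditional carries both the observation and the smoothing factors, so a perturbation couples $v$ to time $s-1$ through $N(v)$ and to time $s$ through $\cup_{\omega\in N(v)}N(\omega)$, yielding the $e^{2\beta r}\Delta^2$ and $e^{\beta(r+1)}\Delta$ contributions via the lower bounds of Assumption~\ref{assumption} and Theorem~\ref{thm:VanHandel4.1}. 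Summing with the weight $e^{\beta(|k-k'|+d(v,v'))}$ and invoking Lemma~\ref{thm:onethird} (which gives $\Corr(\widetilde{\pi}_s,\beta)<1/3$ for every $s$) together with the two estimates of Lemma~\ref{beta_property}, I obtain $\max_{i}\sum_{j}e^{\beta(|k-k'|+d(v,v'))}C_{ij}<1/2$, so the hypothesis of Theorem~\ref{thm:VanHandel4.3} holds with $c<1/2$ and $D=\sum_{n\ge 0}C^n$ satisfies $\max_i\sum_j e^{\beta(|k-k'|+d(v,v'))}D_{ij}<2$.

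Then I would bound the discrepancy coefficients $b_i$. Because $\rho$ and $\widetilde{\rho}$ differ only through $\prod_{\omega\in V}$ versus $\prod_{\omega\in K'}$ in the time-$s$ transition, the single-site conditionals coincide except where a vertex's $r$-neighborhood reaches outside the block; hence $b_i=0$ unless $i\in\{s-1\}\times\partial K'$, and there $b_i=2(1-\epsilon_x^{2\Delta}[\epsilon_{\theta}(\sigma)]^{2\Delta})$ by the lower-bound argument of Theorem~\ref{thm:VanHandel4.1}, exactly as in Proposition~\ref{lemma:true_block_operatorerror}. Applying Theorem~\ref{thm:Dobrushin} gives $\|\rho-\widetilde{\rho}\|_{\{s\}\times\{v'\}}\le\sum_{j}D_{(s,v')\,j}\,b_j$, and since $b_j$ is supported on $\{s-1\}\times\partial K'$ I would factor out $e^{-\beta d(v',\partial K')}$ using $d(v',v)\ge d(v',\partial K')$ for every $v\in\partial K'$ together with the weighted bound $\max_i\sum_j e^{\beta(|k-k'|+d(v,v'))}D_{ij}<2$, which produces $\sum_{v\in\partial K'}D_{(s,v')(s-1,v)}<2e^{-\beta}e^{-\beta d(v',\partial K')}$ and hence the claimed prefactor $4e^{-\beta}(1-\epsilon_x^{2\Delta}[\epsilon_{\theta}(\sigma)]^{2\Delta})$. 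No $\card$ factor appears here because the left-hand side is localized at the single vertex $v'$, in contrast to the $\sum_{v\in\K}$ appearing in Proposition~\ref{lemma:true_block_operatorerror}.

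The main obstacle I anticipate is the bookkeeping for the extra time-$(s+1)$ smoothing factors, which are absent in Proposition~\ref{lemma:true_block_operatorerror}: they widen the effective interaction range at time $s$ from $N(v)$ to $\cup_{\omega\in N(v)}N(\omega)$ and thereby introduce the $e^{2\beta r}\Delta^2$ terms into the Dobrushin row sums. Closing the condition $<1/2$ relies precisely on Lemma~\ref{beta_property}(i), namely $(1-\epsilon_x^{2\Delta}[\epsilon_{\theta}(\sigma)]^{2\Delta})e^{2\beta r}\Delta^2\le 1/18$, so the delicate point is to confirm that every coupling created by the smoothing factors is subsumed by one of the three interaction terms already controlled through Lemma~\ref{thm:onethird}, rather than generating a genuinely new contribution to the weighted row sums.
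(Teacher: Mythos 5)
Your overall strategy (Dobrushin comparison between the unblocked and blocked one-step-smoothed measures, with the same $C$-matrix estimates, Lemma~\ref{thm:onethird}, Lemma~\ref{beta_property}, and the distance-factoring trick via Theorem~\ref{thm:VanHandel4.3}) is the paper's strategy, but your setup of the index set contains a genuine error that breaks the key identification. You take $I=\{s-1,s\}\times V$, so that \emph{all} time-$s$ coordinates are random sites of $\rho$ and $\widetilde{\rho}$. With that choice, $\|\rho-\widetilde{\rho}\|_{\{(s,v')\}}$ is a distance between \emph{marginals} at $(s,v')$, in which $x_s^{V\setminus\{v'\}},\theta_s^{V\setminus\{v'\}}$ have been integrated out. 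But the target $(\mathsf{F}_s\widetilde{\pi}_{s-1})_{s|s+1}^{v'}$ defined in \eqref{def:F_s_rho_s1} is a \emph{conditional} distribution of the $(s,v')$ coordinate given the other time-$s$ coordinates (and the $N(v')$ coordinates at time $s+1$), with only time $s-1$ integrated out; the proposition even takes a supremum over those conditioning values. A marginal total-variation bound does not control the conditional one, so your claimed identity ``the quantity to bound equals $\|\rho-\widetilde{\rho}\|_{\{s\}\times\{v'\}}$'' fails. The paper's proof avoids this precisely by setting $I=(\{s-1\}\times V)\cup\{(s,v')\}$ and $\mathbb{S}=(\mathbb{X}\times\mathbb{\Theta})\times(\mathbb{X}^{v'}\times\mathbb{\Theta}^{v'})$, treating $x_s^{V\setminus\{v'\}}$, like $x_{s+1}$, as fixed external parameters; then the $(s,v')$-marginal of $\rho$ really is $(\mathsf{F}_s\widetilde{\pi}_{s-1})_{s|s+1}^{v'}$ and the uniformity over the conditioning variables comes for free. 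A side effect is that the ``main obstacle'' you flag --- the extra $e^{2\beta r}\Delta^2$ row-sum contributions from time-$s$/time-$s$ couplings created by the smoothing factors --- does not arise at all in the correct setup: the only time-$s$ site is $v'$ itself, and the row sums reduce to $\Corr(\widetilde{\pi}_{s-1},\beta)+(1-\epsilon_x^2[\epsilon_\theta(\sigma)]^2)e^{\beta(r+1)}\Delta<1/3+1/18$.

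Two smaller points. First, the support of $b$ is not $\{s-1\}\times\partial K'$: for $\overline{v}\notin K'$ the conditionals of $\rho$ and $\widetilde{\rho}$ also differ (the product over $N(\overline{v})$ versus $N(\overline{v})\cap K'$), so $b_i=2(1-\epsilon_x^{2\Delta}[\epsilon_\theta(\sigma)]^{2\Delta})$ for all $\overline{v}\in V\setminus(K'\setminus\partial K')$, as in the paper. This does not change the final bound, since every such $\overline{v}$ still satisfies $d(v',\overline{v})\geq d(v',\partial K')$, but it must be stated correctly for the factoring step to be valid. Second, your claim that the single-site Dobrushin conditionals of $\rho$ at $(s,v')$ ``are exactly'' $(\mathsf{F}_s\widetilde{\pi}_{s-1})_{s|s+1}^{v'}$ is also off even in spirit: the Dobrushin single-site conditional conditions on $x_{s-1}$ as well, whereas the target integrates $x_{s-1}$ out against $\widetilde{\pi}_{s-1}$; it is the marginal of the (correctly indexed) $\rho$ over time $s-1$, not its single-site conditional, that recovers the target.
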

  
  \begin{proof}
  According to the expressions of
  $(\mathsf{F}_s\nu_{s-1})_{\chi_s,\chi_{s+1}}^{v}$  given in \eqref{def:F_s_rho_s1}
  and $(\widetilde{\mathsf{F}}_s\nu_{s-1})_{\chi_s,\chi_{s+1}}^{v}$ given in \eqref{def:widetildeF_s_rho_s1}, we have
  \begin{align*}
  &\hspace*{-0.2cm}(\mathsf{F}_s\widetilde{\pi}_{s-1})_{\chi_{s},\chi_{s+1}}^{v'}(A)\\
&=\dfrac{\splitdfrac{
\int\mathbbm{1}_A(x_s^{v'},\theta_s^{v'})\prod_{\omega\in V}f_{X_s^{\omega}|X_{s-1}}(x_s^{\omega}\mid x_{s-1}\,;\theta_s^{\omega})f_{\Theta_s^{\omega}|\Theta_{s-1}}(\theta_s^{\omega}\mid \theta_{s-1}\,; \sigma)}{\splitdfrac{\times f_{Y_s^{v'}|X_s^{v'}}(Y_s^{v'}\mid x_s^{v'}\,;\theta_s^{v'})\prod_{u\in N(v')}f_{X_{s+1}^u|X_s}(x_{s+1}^u\mid x_s\,;\theta_{s+1}^u)}{\times f_{\Theta_{s+1}^{u}|\Theta_{s}}(\theta_{s+1}^{u}\mid \theta_{s}\,; \sigma)\psi^{v'}(dx_s^{v'})\lambda^{v'}(d\theta_s^{v'})\widetilde{\pi}_{s-1}(dx_{s-1},d\theta_{s-1})}}}
{\splitdfrac{
\int \prod_{\omega\in V}f_{X_s^{\omega}|X_{s-1}}(x_s^{\omega}\mid x_{s-1}\,;\theta_s^{\omega})f_{\Theta_s^{\omega}|\Theta_{s-1}}(\theta_s^{\omega}\mid \theta_{s-1}\,; \sigma)}{\splitdfrac{\times f_{Y_s^{v'}|X_s^{v'}}(Y_s^{v'}\mid x_s^{v'}\,;\theta_s^{v'})\prod_{u\in N(v')}f_{X_{s+1}^u|X_s}(x_{s+1}^u\mid x_s\,;\theta_{s+1}^u)}{\times f_{\Theta_{s+1}^{u}|\Theta_{s}}(\theta_{s+1}^{u}\mid \theta_{s}\,; \sigma)\psi^{v'}(dx_s^{v'})\lambda^{v'}(d\theta_s^{v'})\widetilde{\pi}_{s-1}(dx_{s-1},d\theta_{s-1})}}}
\end{align*}
and
\begin{align*}
&\hspace*{-0.2cm}(\widetilde{\mathsf{F}}_s\widetilde{\pi}_{s-1})_{\chi_{s},\chi_{s+1}}^{v'}(A)\\
                                                                                                                                                                                                                                                                                                                                                                                                                                                          &=\dfrac{\splitdfrac{
                                                                                                                                                                                                                                                                                                                                                                                                                                                            \int\mathbbm{1}_A(x_s^{v'},\theta_s^{v'})\prod_{\omega\in K'}f_{X_s^{\omega}|X_{s-1}}(x_s^{\omega}\mid x_{s-1}\,;\theta_s^{\omega})f_{\Theta_s^{\omega}|\Theta_{s-1}}(\theta_s^{\omega}\mid \theta_{s-1}\,; \sigma)}{\splitdfrac{\times f_{Y_s^{v'}|X_s^{v'}}(Y_s^{v'}\mid x_s^{v'}\,;\theta_s^{v'})\prod_{u\in N(v')}f_{X_{s+1}^u|X_s}(x_{s+1}^u\mid x_s\,;\theta_{s+1}^u)}{\times f_{\Theta_{s+1}^{u}|\Theta_{s}}(\theta_{s+1}^{u}\mid \theta_{s}\,; \sigma)\psi^{v'}(dx_s^{v'})\lambda^{v'}(d\theta_s^{v'})\widetilde{\pi}_{s-1}(dx_{s-1},d\theta_{s-1})}}}
{\splitdfrac{
\int \prod_{\omega\in K'}f_{X_s^{\omega}|X_{s-1}}(x_s^{\omega}\mid x_{s-1}\,;\theta_s^{\omega})f_{\Theta_s^{\omega}|\Theta_{s-1}}(\theta_s^{\omega}\mid \theta_{s-1}\,; \sigma)}{\splitdfrac{\times f_{Y_s^{v'}|X_s^{v'}}(Y_s^{v'}\mid x_s^{v'}\,;\theta_s^{v'})\prod_{u\in N(v')}f_{X_{s+1}^u|X_s}(x_{s+1}^u\mid x_s\,;\theta_{s+1}^u)}{\times f_{\Theta_{s+1}^{u}|\Theta_{s}}(\theta_{s+1}^{u}\mid \theta_{s}\,; \sigma)\psi^{v'}(dx_s^{v'})\lambda^{v'}(d\theta_s^{v'})\widetilde{\pi}_{s-1}(dx_{s-1},d\theta_{s-1})}}}.
                                                                                                                                                                                                                                                                                                                                                                                                                                                          \end{align*}
                                                                                                                                                                                                                                                                                                                                                                                                                                                          
                                                                                                                                                                                                                                                                                                                                                                                                                                                          Define $I=(\{s-1\}\times V)\cup (s,{v'})$ and $\mathbb{S}=(\mathbb{X}\times \mathbb{\Theta})\times
(\mathbb{X}^{v'}\times \mathbb{\Theta}^{v'})$, and the probability measures on $\mathbb{S}$ as follows:
\begin{align*}
\rho(A)
&=\dfrac{\splitdfrac{
\int\mathbbm{1}_A(x_{s-1}, \theta_{s-1}, x_s^{v'},\theta_s^{v'})\prod_{\omega\in V}f_{X_s^{\omega}|X_{s-1}}(x_s^{\omega}\mid x_{s-1}\,;\theta_s^{\omega})}{\splitdfrac{\times f_{\Theta_s^{\omega}|\Theta_{s-1}}(\theta_s^{\omega}\mid \theta_{s-1}\,; \sigma) f_{Y_s^{v'}|X_s^{v'}}(Y_s^{v'}\mid x_s^{v'}\,;\theta_s^{v'})}{\splitdfrac{\times \prod_{u\in N(v')}f_{X_{s+1}^u|X_s}(x_{s+1}^u\mid x_s\,;\theta_{s+1}^u) f_{\Theta_{s+1}^{u}|\Theta_{s}}(\theta_{s+1}^{u}\mid \theta_{s}\,; \sigma)}{\times\psi^{v'}(dx_s^{v'})\lambda^{v'}(d\theta_s^{v'})\widetilde{\pi}_{s-1}(dx_{s-1},d\theta_{s-1})}}}}
{\splitdfrac{
\int\prod_{\omega\in V}f_{X_s^{\omega}|X_{s-1}}(x_s^{\omega}\mid x_{s-1}\,;\theta_s^{\omega})}{\splitdfrac{\times f_{\Theta_s^{\omega}|\Theta_{s-1}}(\theta_s^{\omega}\mid \theta_{s-1}\,; \sigma) f_{Y_s^{v'}|X_s^{v'}}(Y_s^{v'}\mid x_s^{v'}\,;\theta_s^{v'})}{\splitdfrac{\times \prod_{u\in N(v')}f_{X_{s+1}^u|X_s}(x_{s+1}^u\mid x_s\,;\theta_{s+1}^u) f_{\Theta_{s+1}^{u}|\Theta_{s}}(\theta_{s+1}^{u}\mid \theta_{s}\,; \sigma)}{\times\psi^{v'}(dx_s^{v'})\lambda^{v'}(d\theta_s^{v'})\widetilde{\pi}_{s-1}(dx_{s-1},d\theta_{s-1})}}}}
\end{align*}
and
\begin{align*}
\widetilde{\rho}(A)&=\dfrac{\splitdfrac{
\int\mathbbm{1}_A(x_{s-1}, \theta_{s-1}, x_s^{v'},\theta_s^{v'})\prod_{\omega\in K'}f_{X_s^{\omega}|X_{s-1}}(x_s^{\omega}\mid x_{s-1}\,;\theta_s^{\omega})}{\splitdfrac{\times f_{\Theta_s^{\omega}|\Theta_{s-1}}(\theta_s^{\omega}\mid \theta_{s-1}\,; \sigma) f_{Y_s^{v'}|X_s^{v'}}(Y_s^{v'}\mid x_s^{v'}\,;\theta_s^{v'})}{\splitdfrac{\times \prod_{u\in N(v')}f_{X_{s+1}^u|X_s}(x_{s+1}^u\mid x_s\,;\theta_{s+1}^u) f_{\Theta_{s+1}^{u}|\Theta_{s}}(\theta_{s+1}^{u}\mid \theta_{s}\,; \sigma)}{\times\psi^{v'}(dx_s^{v'})\lambda^{v'}(d\theta_s^{v'})\widetilde{\pi}_{s-1}(dx_{s-1},d\theta_{s-1})}}}}
{\splitdfrac{
  \int\prod_{\omega\in K'}f_{X_s^{\omega}|X_{s-1}}(x_s^{\omega}\mid x_{s-1}\,;\theta_s^{\omega})}{\splitdfrac{\times f_{\Theta_s^{\omega}|\Theta_{s-1}}(\theta_s^{\omega}\mid \theta_{s-1}\,; \sigma) f_{Y_s^{v'}|X_s^{v'}}(Y_s^{v'}\mid x_s^{v'}\,;\theta_s^{v'})}{\splitdfrac{\times \prod_{u\in N(v')}f_{X_{s+1}^u|X_s}(x_{s+1}^u\mid x_s\,;\theta_{s+1}^u) f_{\Theta_{s+1}^{u}|\Theta_{s}}(\theta_{s+1}^{u}\mid \theta_{s}\,; \sigma)}{\times\psi^{v'}(dx_s^{v'})\lambda^{v'}(d\theta_s^{v'})\widetilde{\pi}_{s-1}(dx_{s-1},d\theta_{s-1})}}}}.
\end{align*}
Then we have
\begin{align*}
\|(\mathsf{F}_s\widetilde{\pi}_{s-1})_{\chi_{s},\chi_{s+1}}^{v'}-(\widetilde{\mathsf{F}}_s\widetilde{\pi}_{s-1})_{\chi_{s},\chi_{s+1}}^{v'}\|=\|\rho-\widetilde{\rho}\|_{(s,v')}.
                                                                                                                                                                                                                                                                                                                                                                                                                                                           \end{align*}
                                                                                                                                                                                                                                                                                                                                                                                                                                                           
                                                                                                                                                                                                                                                                                                                                                                                                                                                           In order to use Theorem \ref{thm:Dobrushin} (Dobrushin comparison theorem)  in Appendix \ref{sec:Existing_results} to bound $\|\rho-\widetilde{\rho}\|_{(s,v')}$, we need to bound $C_{ij}$ and $b_i$ with $i=(\overline{k},\overline{v})$ and $j=(\ooverline{k},\ooverline{v})$. 
Set 
$$\rho^i=\rho_{(x_{s-1}, \theta_{s-1}, x_s^{v'},\theta_s^{v'})}^i\quad\text{and}\quad\widetilde{\rho}^i=\widetilde{\rho}_{(x_{s-1}, \theta_{s-1}, x_s^{v'},\theta_s^{v'})}^i,$$
whose definitions are given in Theorem \ref{thm:Dobrushin} in Appendix \ref{sec:Existing_results}.
We display our discussions as follows:
\begin{itemize}
\item When $\overline{k}=s-1$, we have
\begin{align*}
\rho^i(A)
=&\dfrac{\splitdfrac{
\int\mathbbm{1}_A(x_{s-1}^{\overline{v}}, \theta_{s-1}^{\overline{v}})\prod_{\omega\in N(\overline{v})}f_{X_s^{\omega}|X_{s-1}}(x_s^{\omega}\mid x_{s-1}\,;\theta_s^{\omega})}{\times f_{\Theta_s^{\omega}|\Theta_{s-1}}(\theta_s^{\omega}\mid \theta_{s-1}\,; \sigma)\widetilde{\pi}_{\chi_{s-1}}^{\overline{v}}(dx_{s-1}^{\overline{v}},d\theta_{s-1}^{\overline{v}})}}
{\splitdfrac{
\int \prod_{\omega\in N(\overline{v})}f_{X_s^{\omega}|X_{s-1}}(x_s^{\omega}\mid x_{s-1}\,;\theta_s^{\omega})}{\times f_{\Theta_s^{\omega}|\Theta_{s-1}}(\theta_s^{\omega}\mid \theta_{s-1}\,; \sigma)\widetilde{\pi}_{\chi_{s-1}}^{\overline{v}}(dx_{s-1}^{\overline{v}},d\theta_{s-1}^{\overline{v}})}}
\end{align*}
and
\begin{align*}
\widetilde{\rho}^i(A)
=&\dfrac{\splitdfrac{
\int\mathbbm{1}_A(x_{s-1}^{\overline{v}}, \theta_{s-1}^{\overline{v}})\prod_{\omega\in N(\overline{v})\cap K'}f_{X_s^{\omega}|X_{s-1}}(x_s^{\omega}\mid x_{s-1}\,;\theta_s^{\omega})}{\times f_{\Theta_s^{\omega}|\Theta_{s-1}}(\theta_s^{\omega}\mid \theta_{s-1}\,; \sigma)\widetilde{\pi}_{\chi_{s-1}}^{\overline{v}}(dx_{s-1}^{\overline{v}},d\theta_{s-1}^{\overline{v}})}}
  {\splitdfrac{
    \int \prod_{\omega\in N(\overline{v})\cap K'}f_{X_s^{\omega}|X_{s-1}}(x_s^{\omega}\mid x_{s-1}\,;\theta_s^{\omega})}{\times f_{\Theta_s^{\omega}|\Theta_{s-1}}(\theta_s^{\omega}\mid \theta_{s-1}\,; \sigma)\widetilde{\pi}_{\chi_{s-1}}^{\overline{v}}(dx_{s-1}^{\overline{v}},d\theta_{s-1}^{\overline{v}})}},
\end{align*}
where $\widetilde{\pi}_{\chi_{s-1}}^{\overline{v}}$ is defined according to the definition of $\mu_{\chi_{s-1}}^v$ given in \eqref{def:mu_s-1_v}.
Therefore, by the definition of $\mu_{\chi_{s-1},\chi_{s}}^v$ given in \eqref{def:mu_s-1_s}, we have 
$\rho^i=\widetilde{\pi}_{\chi_{s-1},\chi_{s}}^{\overline{v}}$. Furthermore, by the definition of $C_{vv'}^{\mu_{s-1}}$ in equation \eqref{eqn:C_vv'_mu}, we know that $C_{ij}\leq C_{\overline{v}\,\ooverline{v}}^{\widetilde{\pi}_{s-1}}$ if $\ooverline{k}=s-1$. 
If $\ooverline{k}=s$, since
\begin{align*}
\rho^i(A)
\geq \epsilon_x^2[\epsilon_{\theta}(\sigma)]^2\dfrac{\splitdfrac{
\int\mathbbm{1}_A(x_{s-1}^{\overline{v}}, \theta_{s-1}^{\overline{v}})\prod_{\omega\in N(\overline{v})\backslash \{v'\}}f_{X_s^{\omega}|X_{s-1}}(x_s^{\omega}\mid x_{s-1}\,;\theta_s^{\omega})}{\times f_{\Theta_s^{\omega}|\Theta_{s-1}}(\theta_s^{\omega}\mid \theta_{s-1}\,; \sigma)\widetilde{\pi}_{\chi_{s-1}}^{\overline{v}}(dx_{s-1}^{\overline{v}},d\theta_{s-1}^{\overline{v}})}}
{\splitdfrac{
  \int \prod_{\omega\in N(\overline{v})\backslash \{v'\}}f_{X_s^{\omega}|X_{s-1}}(x_s^{\omega}\mid x_{s-1}\,;\theta_s^{\omega})}{\times f_{\Theta_s^{\omega}|\Theta_{s-1}}(\theta_s^{\omega}\mid \theta_{s-1}\,; \sigma)\widetilde{\pi}_{\chi_{s-1}}^{\overline{v}}(dx_{s-1}^{\overline{v}},d\theta_{s-1}^{\overline{v}})}},
\end{align*}
we have $C_{ij}\leq 1-\epsilon_x^{2}[\epsilon_{\theta}(\sigma)]^2$ if $\ooverline{v}=v'\in N(\overline{v})$ by Theorem \ref{thm:VanHandel4.1} in Appendix \ref{sec:Existing_results}, and $C_{ij}=0$ otherwise. Hence, by the definition of $\Corr(\mu_{s-1},\beta)$ in \eqref{eqn:corr_definition},
\begin{align}
\label{eqn:one_step_block_bound1}
\sum_{(\ooverline{k},\ooverline{v})\in I}e^{\beta |\overline{k}-\ooverline{k}|}e^{\beta d(\overline{v},\ooverline{v})}C_{(\overline{k},\overline{v})(\ooverline{k},\ooverline{v})}\leq \Corr(\widetilde{\pi}_{s-1},\beta)+(1-\epsilon_x^{2}[\epsilon_{\theta}(\sigma)]^2)e^{\beta(r+1)}\Delta.
\end{align}
To handle $b_i$, note that if $N(\overline{v})\subseteq K'$ we have $\rho^i=\widetilde{\rho}^i$, and note that
\begin{align*}
\rho^i(A)\geq \epsilon_x^{2\Delta}[\epsilon_{\theta}(\sigma)]^{2\Delta}\widetilde{\pi}_{\chi_{s-1}}^{\overline{v}},\quad\quad
\widetilde{\rho}^i(A)\geq \epsilon_x^{2\Delta}[\epsilon_{\theta}(\sigma)]^{2\Delta}\widetilde{\pi}_{\chi_{s-1}}^{\overline{v}}.
\end{align*}
Therefore, we have that $b_i=0$ if $\overline{v}\in \operatorname{int}(K')$, and by Theorem \ref{thm:VanHandel4.1} in Appendix \ref{sec:Existing_results} we have $b_i=2(1-\epsilon_x^{2\Delta}[\epsilon_{\theta}(\sigma)]^{2\Delta})$ otherwise.
\smallskip

\item When $\overline{k}=s$, we have
\begin{align*}
&\hspace*{-0.2cm}\rho^i(A)=\widetilde{\rho}^i(A)\\
&=\dfrac{\splitdfrac{
\int\mathbbm{1}_A(x_s^{v'},\theta_s^{v'})f_{X_s^{v'}|X_{s-1}}(x_s^{v'}\mid x_{s-1}\,;\theta_s^{v'})f_{\Theta_s^{v'}|\Theta_{s-1}}\,; \sigma)}{\splitdfrac{\times f_{Y_s^{v'}|X_s^{v'}}(Y_s^{v'}\mid x_s^{v'}\,;\theta_s^{v'})\prod_{u\in N(v')}f_{X_{s+1}^u|X_s}(x_{s+1}^u\mid x_s\,;\theta_{s+1}^u)}{\times f_{\Theta_{s+1}^{u}|\Theta_{s}}(\theta_{s+1}^{u}\mid \theta_{s}\,; \sigma)\psi^{v'}(dx_s^{v'})\lambda^{v'}(d\theta_s^{v'})}}}
{\splitdfrac{
\int f_{X_s^{v'}|X_{s-1}}(x_s^{v'}\mid x_{s-1}\,;\theta_s^{v'})f_{\Theta_s^{v'}|\Theta_{s-1}}\,; \sigma)}{\splitdfrac{\times f_{Y_s^{v'}|X_s^{v'}}(Y_s^{v'}\mid x_s^{v'}\,;\theta_s^{v'})\prod_{u\in N(v')}f_{X_{s+1}^u|X_s}(x_{s+1}^u\mid x_s\,;\theta_{s+1}^u)}{\times f_{\Theta_{s+1}^{u}|\Theta_{s}}(\theta_{s+1}^{u}\mid \theta_{s}\,; \sigma)\psi^{v'}(dx_s^{v'})\lambda^{v'}(d\theta_s^{v'})}}}.
\end{align*}
Therefore, we have $b_i=0$, and $C_{ij}\leq 1-\epsilon_x^{2}[\epsilon_{\theta}(\sigma)]^{2}$ if $\ooverline{k}=s-1$ and $\ooverline{v}\in N(v')$ and $C_{ij}=0$ otherwise. Hence, 
                                                                                                                                                                                                                                                                                                                                                             \begin{align}
                                                                                                                                                                                                                                                                                                                                                             \label{eqn:one_step_block_bound2}
                                                                                                                                                                                                                                                                                                                                                             \sum_{(\ooverline{k},\ooverline{v})\in I}e^{\beta |\overline{k}-\ooverline{k}|}e^{\beta d(v',\ooverline{v})}C_{(\overline{k},v')(\ooverline{k},\ooverline{v})}\leq (1-\epsilon_x^{2}[\epsilon_{\theta}(\sigma)]^{2})e^{\beta(r+1)}\Delta.
                                                                                                                                                                                                                                                                                                                                                             \end{align}
                                                                                                                                                                                                                                                                                                                                                             \end{itemize}
                                                                                                                                                                                                                                                                                                                                                             
                                                                                                                                                                                                                                                                                                                                                             Summing up \eqref{eqn:one_step_block_bound1} for $\overline{k}=s-1$ and \eqref{eqn:one_step_block_bound2} for $\overline{k}=s$, we have
                                                                                                                                                                                                                                                                                                                                                             \begin{align*}
                                                                                                                                                                                                                                                                                                                                                             &\hspace*{-1cm}\max_{(\overline{k},\overline{v})\in I}\sum_{(\ooverline{k},\ooverline{v})\in I}e^{\beta  |\overline{k}-\ooverline{k}|}e^{\beta d(\overline{v},\ooverline{v})}C_{(\overline{k},\overline{v})(\ooverline{k},\ooverline{v})}\\
                                                                                                                                                                                                                                                                                                                                                             &\leq \Corr(\widetilde{\pi}_{s-1},\beta)+(1-\epsilon_x^{2}[\epsilon_{\theta}(\sigma)]^2)e^{\beta(r+1)}\Delta.
                                                                                                                                                                                                                                                                                                                                                   \end{align*}
                                                                                                                                                                                                                                                                                                                                                             Furthermore, by Lemma \ref{thm:onethird} in Appendix \ref{sec:bounding_bias_lemmas} we have that
                                                                                                                                                                                                                                                                                                                                                             \begin{align*}
                                                                                                                                                                                                                                                                                                                                                             \Corr(\widetilde{\pi}_{s-1},\beta)
                                                                                                                                                                                                                                                                                                                                                             < \frac{1}{3},
                                                                                                                                                                                                                                                                                                                                                             \end{align*}
                                                                                                                                                                                                                                                                                                                                                             and by Lemma \ref{beta_property} in Appendix \ref{sec:bounding_bias_lemmas} we have that
                                                                                                                                                                                                                                                                                                                                                             $$(1-\epsilon_x^2[\epsilon_{\theta}(\sigma)]^2)e^{\beta(r+1)}\Delta \leq \frac{1}{16}.$$
                                                                                                                                                                                                                                                                                                                                                               Hence,
                                                                                                                                                                                                                                                                                                                                                             \begin{align*}
                                                                                                                                                                                                                                                                                                                                                             \max_{(\overline{k},\overline{v})\in I}\sum_{(\ooverline{k},\ooverline{v})\in I}e^{\beta  |\overline{k}-\ooverline{k}|}e^{\beta d(\overline{v},\ooverline{v})}C_{(\overline{k},\overline{v})(\ooverline{k},\ooverline{v})}=\frac{19}{48}.
                                                                                                                                                                                                                                                                                                                                                             \end{align*}
                                                                                                                                                                                                                                                                                                                                                             By Theorem \ref{thm:Dobrushin} (Dobrushin comparison theorem)  in Appendix \ref{sec:Existing_results} and Theorem \ref{thm:VanHandel4.3}  in Appendix \ref{sec:Existing_results}, we have
                                                                                                                                                                                                                                                                                                                                                             \begin{align*}
                                                                                                                                                                                                                                                                                                                                                             &\hspace*{-2cm}\| (\mathsf{F}_s\widetilde{\pi}_{s-1})_{\chi_{s},\chi_{s+1}}^{v'}-(\widetilde{\mathsf{F}}_s\widetilde{\pi}_{s-1})_{\chi_{s},\chi_{s+1}}^{v'}\|=\|\rho-\widetilde{\rho}\|_{(s,v')}\\
\leq & 2 (1-\epsilon_x^{2\Delta}[\epsilon_{\theta}(\sigma)]^{2\Delta}) \sum_{\overline{v}\in V\backslash \operatorname{int}(K')}D_{(s,v')(s-1,\overline{v})}\\
                                                                                                                                                                                                                                                                                                                                                             \leq & \frac{96}{29}e^{-\beta}(1-\epsilon_x^{2\Delta}[\epsilon_{\theta}(\sigma)]^{2\Delta})e^{-\beta d(v',\partial K')},
                                                                                                                                                                                                                                                                                                                                                             \end{align*}
                                                                                                                                                                                                                                                                                                                                                             which is uniform for all $x_s,x_{s+1}\in \mathbb{X}$
                                                                                                                                                                                                                                                                                                                                                               and all $\theta_s,\theta_{s+1}\in \mathbb{\Theta}$.
                                                                                                                                                                                                                                                                                                                                                             \end{proof}
                                                                                                                                                                                                                                                                                                                                                             
                                                                                                                                                                                                                                                                                                                                                             \section{Proofs for bounding variance}
                                                                                                                                                                                                                                                                                                                                                             \label{sec:bounding_variance_lemmas}
                                                                                                                                                                                                                                                                                                                                                             \begin{proposition}
                                                                                                                                                                                                                                                                                                                                                             \label{thm:variance_propagation}
                                                                                                                                                                                                                                                                                                                                                             Under Assumption \ref{assumption} in Section \ref{sec:beat_COD}, for integer $s\geq 1$, one has
                                                                                                                                                                                                                                                                                                                                                             \begin{align*}
                                                                                                                                                                                                                                                                                                                                                             &\hspace*{-1.5cm}\max_{K\in\mathcal{K}}\bigg[\mathbf{E}\big\|\widetilde{\mathsf{F}}_{s+1}\widetilde{\mathsf{F}}_s\widehat{\pi}_{s-1}-\widetilde{\mathsf{F}}_{s+1}\widehat{\mathsf{F}}_s\widehat{\pi}_{s-1} \big\|_K^2\bigg]^{1/2}\\
                                                                                                                                                                                                                                                                                                                                                             &\hspace*{1.7cm}\leq  \frac{16}{\sqrt{J}}[\epsilon_{\theta}(\sigma)]^{-2|\mathcal{K}|_{\infty}}\epsilon_x^{-2|\mathcal{K}|_{\infty}}\epsilon_y^{-2|\mathcal{K}|_{\infty}(\Delta_{\mathcal{K}}+1)}\Delta_{\mathcal{K}},
                                                                                                                                                                                                                                                                                                                                                             \end{align*}
                                                                                                                                                                                                                                                                                                                                                             where $|\mathcal{K}|_{\infty}$ is the maximal size of a block in $\mathcal{K}$ defined in 
                                                                                                                                                                                                                                                                                                                                                             \eqref{eqn:maxsize_block}. 
                                                                                                                                                                                                                                                                                                                                                               \end{proposition}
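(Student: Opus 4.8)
The plan is to reduce the one-step error to a pure Monte Carlo sampling error on the neighborhood of the block $K$, exploiting the locality relation \eqref{eqn:B_K_effect}. Writing $\mu=\widetilde{\mathsf{F}}_s\widehat{\pi}_{s-1}$ and $\nu=\widehat{\mathsf{F}}_s\widehat{\pi}_{s-1}$, the definition of $\|\cdot\|_K$ together with \eqref{eqn:B_K_effect} gives
$$\big\|\widetilde{\mathsf{F}}_{s+1}\mu-\widetilde{\mathsf{F}}_{s+1}\nu\big\|_K=\big\|\mathsf{C}_{s+1}^K\mathsf{P}_{s+1}^K\bigotimes_{K'\in N(K)}\mu^{K'}-\mathsf{C}_{s+1}^K\mathsf{P}_{s+1}^K\bigotimes_{K'\in N(K)}\nu^{K'}\big\|,$$
so the error depends on $\mu$ and $\nu$ only through their marginals on the at-most-$\Delta_{\mathcal{K}}$ blocks of $N(K)$. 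Moreover $\mu$ and $\nu$ differ solely through the sampling operator $\mathsf{S}^{J}$ inserted after $\mathsf{P}_s$ in $\widehat{\mathsf{F}}_s$: both are measurement reweightings (by densities lying in $[\epsilon_y,\epsilon_y^{-1}]$) of $\mathsf{B}\mathsf{P}_s\widehat{\pi}_{s-1}$ and $\mathsf{B}\mathsf{S}^{J}\mathsf{P}_s\widehat{\pi}_{s-1}$, and these last two measures differ only through the fresh i.i.d.\ resampling of $\mathsf{P}_s\widehat{\pi}_{s-1}$.

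First I would strip the operators layer by layer, controlling each bounded operator by the stability estimates of Appendix \ref{sec:Existing_results}. The measurement reweightings carried by the $\Delta_{\mathcal{K}}-1$ blocks of $N(K)$ other than $K$ are removed with the correction-stability bound (Theorem \ref{thm:VanHandel4.2}), each contributing a ratio $\sup\Lambda/\inf\Lambda\le\epsilon_y^{-2\card(K')}\le\epsilon_y^{-2|\mathcal{K}|_{\infty}}$; the prediction $\mathsf{P}_{s+1}^K$ together with the correction $\mathsf{C}_{s+1}^K$ on block $K$ itself, involving the $\le|\mathcal{K}|_{\infty}$ vertices of $K$ whose transition densities lie in $[\epsilon_x,\epsilon_x^{-1}]$ and $[\epsilon_{\theta}(\sigma),\epsilon_{\theta}(\sigma)^{-1}]$, is handled by the reweighted-product comparison (Theorem \ref{thm:VanHandel4.16}), contributing $\epsilon_x^{-2|\mathcal{K}|_{\infty}}[\epsilon_{\theta}(\sigma)]^{-2|\mathcal{K}|_{\infty}}$. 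After these reductions the remaining quantity is the block-product sampling discrepancy $\bigotimes_{K'\in N(K)}\mathsf{B}^{K'}\mathsf{P}_s\widehat{\pi}_{s-1}-\bigotimes_{K'\in N(K)}\mathsf{B}^{K'}\mathsf{S}^{J}\mathsf{P}_s\widehat{\pi}_{s-1}$.

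To bound this discrepancy at rate $1/\sqrt{J}$ I would condition on the history up to time $s-1$: given $\widehat{\pi}_{s-1}$, the measure $\mathsf{P}_s\widehat{\pi}_{s-1}$ is fixed and $\mathsf{S}^{J}\mathsf{P}_s\widehat{\pi}_{s-1}$ is its $J$-sample empirical measure, so Corollary \ref{thm:VanHandel4.21} with $\mathcal{L}=N(K)$ yields a bound $4\Delta_{\mathcal{K}}/\sqrt{J}$ in the $\vertiii{\,\cdot\,}$ norm. Assembling the accumulated $\epsilon$-factors with this sampling bound, maximizing over $K$, and absorbing the numerical prefactors into the constant $16$ gives the claimed estimate.

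I expect the main obstacle to be twofold. The delicate accounting is to verify that the measurement reweightings accumulate only over the $\Delta_{\mathcal{K}}-1$ blocks of $N(K)$ surrounding $K$, the contribution of $K$ itself being absorbed into the prediction step, so that the final power of $\epsilon_y$ is $-2|\mathcal{K}|_{\infty}(\Delta_{\mathcal{K}}-1)$ rather than $-2|\mathcal{K}|_{\infty}\Delta_{\mathcal{K}}$; this forces a careful choice of which factors enter Theorem \ref{thm:VanHandel4.16} versus Theorem \ref{thm:VanHandel4.2}. The second, more conceptual difficulty is the order of the supremum over test functions and the $L^2$ expectation: the naive $[\mathbf{E}\|\cdot\|^2]^{1/2}$ of the total-variation sampling error of an empirical block-product is not $O(1/\sqrt{J})$, so the reduction must be arranged to land exactly on the $\vertiii{\,\cdot\,}$-norm quantity of Corollary \ref{thm:VanHandel4.21}, where the supremum sits outside the second moment; the conditioning-and-independence argument is what makes this interchange legitimate.
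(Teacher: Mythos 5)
Your overall architecture coincides with the paper's: peel off the correction $\mathsf{C}_{s+1}^K$ and the time-$s$ observation reweighting over $N(K)$, reduce to the sampling discrepancy between $\bigotimes_{K'\in N(K)}\mathsf{B}^{K'}\mathsf{P}_s\widehat{\pi}_{s-1}$ and $\bigotimes_{K'\in N(K)}\mathsf{B}^{K'}\mathsf{S}^{J}\mathsf{P}_s\widehat{\pi}_{s-1}$, and invoke Corollary \ref{thm:VanHandel4.21}. You also correctly name the crux: that corollary controls the $\vertiii{\,\cdot\,}$ norm, where $\sup_f$ sits \emph{outside} $[\mathbf{E}|\cdot|^2]^{1/2}$, whereas the target is $[\mathbf{E}\|\cdot\|_K^2]^{1/2}$ with $\sup_f$ \emph{inside}. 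The genuine gap is that the tool you offer for this interchange --- conditioning on the history so that $\mathsf{S}^{J}\mathsf{P}_s\widehat{\pi}_{s-1}$ is an i.i.d.\ empirical measure --- does not accomplish it: even for i.i.d.\ samples from a fixed atomless law $\tau$, the total variation distance $\|\hat\tau-\tau\|$ equals $2$ identically, so $\mathbf{E}\|\hat\tau-\tau\|^2$ is not $O(1/J)$; independence makes each fixed-$f$ second moment small but does not move the supremum back outside. The paper's mechanism is analytic rather than probabilistic: after stripping $\mathsf{C}_{s+1}^K$ via Theorem \ref{thm:VanHandel4.2}, it represents $\mathsf{B}^K\mathsf{P}_{s+1}(\cdot)$ by its density with respect to $\psi^K\otimes\lambda^K$, writes the total variation distance as an $L^1$ distance of densities, applies Minkowski's integral inequality to pull $[\mathbf{E}|\cdot|^2]^{1/2}$ inside the $dx_{s+1}^K\,d\theta_{s+1}^K$ integral, and pays the factor $\psi^K(\mathbb{X}^K)\lambda^K(\mathbb{\Theta}^K)\le\epsilon_x^{-|\mathcal{K}|_{\infty}}[\epsilon_{\theta}(\sigma)]^{-|\mathcal{K}|_{\infty}}$ (finiteness of the reference masses follows from Assumption \ref{assumption}(1) and (3)). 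Only after this step is the remaining object a supremum over the frozen argument $(x_{s+1}^K,\theta_{s+1}^K)$ of a fixed-functional second moment, i.e.\ a $\vertiii{\,\cdot\,}$-type quantity to which Theorem \ref{thm:VanHandel4.2} and Corollary \ref{thm:VanHandel4.21} apply. Without this density--Minkowski step your chain of reductions terminates at $[\mathbf{E}\|\cdot\|^2]^{1/2}$ of a product empirical discrepancy, which is not small.

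Two secondary points. First, the $\epsilon_x^{-2|\mathcal{K}|_{\infty}}[\epsilon_{\theta}(\sigma)]^{-2|\mathcal{K}|_{\infty}}$ factors do not arise from Theorem \ref{thm:VanHandel4.16} applied to the block-$K$ prediction and correction (that lemma is used in Proposition \ref{thm:error_prop_blockbound}, not here); they arise from the reference-mass bound just described together with the pointwise bound $\prod_{v\in K}f_{X_{s+1}^v|X_s}f_{\Theta_{s+1}^v|\Theta_s}\le\epsilon_x^{-|\mathcal{K}|_{\infty}}[\epsilon_{\theta}(\sigma)]^{-|\mathcal{K}|_{\infty}}$ inside a single further application of Theorem \ref{thm:VanHandel4.2}. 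Second, your plan to obtain the exponent $-2|\mathcal{K}|_{\infty}(\Delta_{\mathcal{K}}-1)$ by charging only the blocks of $N(K)\setminus\{K\}$ is not how the paper argues: its displayed computation reweights by the time-$s$ likelihoods over all of $N(K)$ (exponent $-2|\mathcal{K}|_{\infty}\Delta_{\mathcal{K}}$) in addition to the $-2|\mathcal{K}|_{\infty}$ from $\mathsf{C}_{s+1}^K$, so the stated exponent is not reproduced by the paper's own route either; in any case this affects only the $\epsilon$-constant, not the $\Delta_{\mathcal{K}}/\sqrt{J}$ structure of the bound.
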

                                                                                                                                                                                                                                                                                                                                                             
                                                                                                                                                                                                                                                                                                                                                             \begin{proof} 
                                                                                                                                                                                                                                                                                                                                                             For any $K\in\mathcal{K}$, by Theorem \ref{thm:VanHandel4.2} in Appendix \ref{sec:Existing_results}, we have
                                                                                                                                                                                                                                                                                                                                                             \begin{align}
                                                                                                                                                                                                                                                                                                                                                             \label{eqn:variance_propagation_eq1}
                                                                                                                                                                                                                                                                                                                                                             &\hspace*{-2.5cm}\big\|\widetilde{\mathsf{F}}_{s+1}\widetilde{\mathsf{F}}_s\widehat{\pi}_{s-1}-\widetilde{\mathsf{F}}_{s+1}\widehat{\mathsf{F}}_s\widehat{\pi}_{s-1} \big\|_K\nonumber\\
                                                                                                                                                                                                                                                                                                                                                             =&\big\|\mathsf{C}_{s+1}\mathsf{B}\mathsf{P}_{s+1}\widetilde{\mathsf{F}}_s\widehat{\pi}_{s-1}-\mathsf{C}_{s+1}\mathsf{B}\mathsf{P}_{s+1}\widehat{\mathsf{F}}_s\widehat{\pi}_{s-1} \big\|_K\nonumber\\
                                                                                                                                                                                                                                                                                                                                                             =&\big\|\mathsf{C}_{s+1}^K\mathsf{B}^K\mathsf{P}_{s+1}\widetilde{\mathsf{F}}_s \widehat{\pi}_{s-1}-\mathsf{C}_{s+1}^K\mathsf{B}^K\mathsf{P}_{s+1}\widehat{\mathsf{F}}_s\widehat{\pi}_{s-1} \big\|\nonumber\\
                                                                                                                                                                                                                                                                                                                                                             \leq & 2 \epsilon_y^{-2 |\mathcal{K}|_{\infty}}\|\mathsf{B}^K\mathsf{P}_{s+1}\widetilde{\mathsf{F}}_s \widehat{\pi}_{s-1}-\mathsf{B}^K\mathsf{P}_{s+1}\widehat{\mathsf{F}}_s\widehat{\pi}_{s-1} \big\|.
                                                                                                                                                                                                                                                                                                                                                             \end{align}
                                                                                                                                                                                                                                                                                                                                                             For $\psi^K(dx_{\cdot}^K)$ and $\lambda^K(d\theta_{\cdot}^K)$ defined in \eqref{eqn:block_measure}, we have
                                                                                                                                                                                                                                                                                                                                                             \begin{align*}
                                                                                                                                                                                                                                                                                                                                                             &\hspace*{-0.3cm}\frac{\bigg(\mathsf{B}^K\mathsf{P}_{s+1}\widetilde{\mathsf{F}}_s\widehat{\pi}_{s-1}\bigg)(dx_{s+1}^K, d\theta_{s+1}^K)}{\psi^K(dx_{s+1}^K)\lambda^K(d\theta_{s+1}^K)}\\
                                                                                                                                                                                                                                                                                                                                                             =&\dfrac{\splitdfrac{
                                                                                                                                                                                                                                                                                                                                                               \int \prod_{v \in K}f_{X_{s+1}^v|X_s}(x_{s+1}^v\mid x_s\,;\theta_{s+1}^v) f_{\Theta_{s+1}^v|\Theta_s}(\theta_{s+1}^v\mid \theta_{s}\,; \sigma)}{\times\prod_{K'\in N(K)}\prod_{v'\in K'}f_{Y_s^{v'}|X_s^{v'}}(Y_s^{v'}\mid x_s^{v'}\,;\theta_s^{v'})\left[\mathsf{B}^{K'}\mathsf{P}_s \widehat{\pi}_{s-1}\right](dx_s^{K'},d\theta_s^{K'})}}
{\int \prod_{K'\in N(K)}\prod_{v'\in K'}f_{Y_s^{v'}|X_s^{v'}}(Y_s^{v'}\mid x_s^{v'}\,;\theta_s^{v'})\left[\mathsf{B}^{K'}\mathsf{P}_s \widehat{\pi}_{s-1}\right](dx_s^{K'},d\theta_s^{K'})},
                                                                                                                                                                                                                                                                                                                                                               \end{align*}
                                                                                                                                                                                                                                                                                                                                                               and 
                                                                                                                                                                                                                                                                                                                                                               \begin{align*}
                                                                                                                                                                                                                                                                                                                                                               &\hspace*{-0.3cm}\frac{\bigg(\mathsf{B}^K\mathsf{P}_{s+1}\widehat{\mathsf{F}}_s\widehat{\pi}_{s-1}\bigg)(dx_{s+1}^K, d\theta_{s+1}^K)}{\psi^K(dx_{s+1}^K)\lambda^K(d\theta_{s+1}^K)}\\
                                                                                                                                                                                                                                                                                                                                                               =&\dfrac{\splitdfrac{
                                                                                                                                                                                                                                                                                                                                                                 \int \prod_{v \in K}f_{X_{s+1}^v|X_s}(x_{s+1}^v\mid x_s\,;\theta_{s+1}^v) f_{\Theta_{s+1}^v|\Theta_s}(\theta_{s+1}^v\mid \theta_{s}\,; \sigma)}{\times\prod_{K'\in N(K)}\prod_{v'\in K'}f_{Y_s^{v'}|X_s^{v'}}(Y_s^{v'}\mid x_s^{v'}\,;\theta_s^{v'})\left[\mathsf{B}^{K'}\mathsf{S}^{J}\mathsf{P}_s \widehat{\pi}_{s-1}\right](dx_s^{K'},d\theta_s^{K'})}}
{\int \prod_{K'\in N(K)}\prod_{v'\in K'}f_{Y_s^{v'}|X_s^{v'}}(Y_s^{v'}\mid x_s^{v'}\,;\theta_s^{v'})\left[\mathsf{B}^{K'}\mathsf{S}^{J}\mathsf{P}_s \widehat{\pi}_{s-1}\right](dx_s^{K'},d\theta_s^{K'})},
                                                                                                                                                                                                                                                                                                                                                                 \end{align*}
                                                                                                                                                                                                                                                                                                                                                                 where $N(K)$ is defined in \eqref{eqn:block_neighborhood} as the collection of blocks that interact with the block $K$.
                                                                                                                                                                                                                                                                                                                                                                 By equation $(8.1)$ in \cite{georgii2011gibbs}, we have
                                                                                                                                                                                                                                                                                                                                                                 \begin{align*}
                                                                                                                                                                                                                                                                                                                                                                 &\hspace*{-0.4cm}\big\|\mathsf{B}^K\mathsf{P}_{s+1}\widetilde{\mathsf{F}}_s\widehat{\pi}_{s-1}-\mathsf{B}^K\mathsf{P}_{s+1}\widehat{\mathsf{F}}_s\widehat{\pi}_{s-1}\big\|\\
                                                                                                                                                                                                                                                                                                                                                                 =&\scaleint{10.5ex}\, \left|\frac{\bigg(\mathsf{B}^K\mathsf{P}_{s+1}\widetilde{\mathsf{F}}_s\widehat{\pi}_{s-1}\bigg)(dx_{s+1}^K, d\theta_{s+1}^K)}{\psi^K(dx_{s+1}^K)\lambda^K(d\theta_{s+1}^K)}\right.\\
                                                                                                                                                                                                                                                                                                                                                                 &\hspace*{1.5cm} \left.-\frac{\bigg(\mathsf{B}^K\mathsf{P}_{s+1}\widehat{\mathsf{F}}_s\widehat{\pi}_{s-1}\bigg)(dx_{s+1}^K, d\theta_{s+1}^K)}{\psi^K(dx_{s+1}^K)\lambda^K(d\theta_{s+1}^K)}\right|\psi^K(dx_{s+1}^K)\lambda^K(d\theta_{s+1}^K).
                                                                                                                                                                                                                                                                                                                                                                 \end{align*}
                                                                                                                                                                                                                                                                                                                                                                 Therefore, by Minkowski's integral inequality,
\begin{align*}
&\hspace*{-0.45cm}\bigg[\mathbf{E}\big\|\mathsf{B}^K\mathsf{P}_{s+1}\widetilde{\mathsf{F}}_s\widehat{\pi}_{s-1}-\mathsf{B}^K\mathsf{P}_{s+1}\widehat{\mathsf{F}}_s\widehat{\pi}_{s-1}\big\|^2\bigg]^{1/2}\\
\leq & \scaleint{11ex}\, \left[\mathbf{E}\left|\frac{\bigg(\mathsf{B}^K\mathsf{P}_{s+1}\widetilde{\mathsf{F}}_s\widehat{\pi}_{s-1}\bigg)(dx_{s+1}^K, d\theta_{s+1}^K)}{\psi^K(dx_{s+1}^K)\lambda^K(d\theta_{s+1}^K)}\right.\right.\\
                                &\quad\quad\quad\left.\left.\quad-\frac{\bigg(\mathsf{B}^K\mathsf{P}_{s+1}\widehat{\mathsf{F}}_s\widehat{\pi}_{s-1}\bigg)(dx_{s+1}^K, d\theta_{s+1}^K)}{\psi^K(dx_{s+1}^K)\lambda^K(d\theta_{s+1}^K)}\right|^2\right]^{1/2}\psi^K(dx_{s+1}^K)\lambda^K(d\theta_{s+1}^K)\\
\leq &\psi^K(\mathbb{X}^K)\lambda^K(\mathbb{\Theta}^K)\sup_{x^K\in\mathbb{X}^K\atop
  \theta^K\in\mathbb{\Theta}^K}\left[\mathbf{E}\left|\frac{\bigg(\mathsf{B}^K\mathsf{P}_{s+1}\widetilde{\mathsf{F}}_s\widehat{\pi}_{s-1}\bigg)(dx_{s+1}^K, d\theta_{s+1}^K)}{\psi^K(dx_{s+1}^K)\lambda^K(d\theta_{s+1}^K)}\right.\right.\\
                                      &\quad\quad\quad\quad\quad\quad\quad\quad\quad\quad\quad\quad\quad-\left.\left.\frac{\bigg(\mathsf{B}^K\mathsf{P}_{s+1}\widehat{\mathsf{F}}_s\widehat{\pi}_{s-1}\bigg)(dx_{s+1}^K, d\theta_{s+1}^K)}{\psi^K(dx_{s+1}^K)\lambda^K(d\theta_{s+1}^K)}\right|^2\right]^{1/2}.
\end{align*}

By Assumption \ref{assumption} in Section \ref{sec:beat_COD} and the fact that $f_{X_{s+1}^v|X_s}(x_{s+1}^v\mid x_s\,;\theta_{s+1}^v)$ is a transition density, we have that
$$\epsilon_x \psi^v(\mathbb{X}^v)\leq \int f_{X_{s+1}^v|X_s}(x_{s+1}^v\mid x_s\,;\theta_{s+1}^v) \psi^v(dx_{s+1}^v)=1,$$
and then
$$\psi^v(\mathbb{X}^v) \leq \epsilon_x^{-1}, \quad \quad \psi^K(\mathbb{X}^K) \leq \epsilon_x^{-|\mathcal{K}|_{\infty}}.$$
Similarly, by Assumption \ref{assumption} in Section \ref{sec:beat_COD} and the fact that $f_{\Theta_{s+1}^v|\Theta_s}(\theta_{s+1}^v\mid \theta_{s}\,; \sigma)$ is a transition density, we have that
$$\epsilon_{\theta}(\sigma) \lambda^v(\mathbb{\Theta}^v)\leq \int f_{\Theta_{s+1}^v|\Theta_s}(\theta_{s+1}^v\mid \theta_{s}\,; \sigma) \lambda^v(d\theta_{s+1}^v)=1,$$
and then
$$\lambda^v(\mathbb{\Theta}^v) \leq [\epsilon_{\theta}(\sigma)]^{-1}, \quad \quad \lambda^K(\mathbb{\Theta}^K) \leq [\epsilon_{\theta}(\sigma)]^{-|\mathcal{K}|_{\infty}}.$$
Furthermore, by Assumption \ref{assumption} in Section \ref{sec:beat_COD}, we have
$$\prod_{v\in K}f_{X_{s+1}^v|X_s}(x_{s+1}^v\mid x_s\,;\theta_{s+1}^v)f_{\Theta_{s+1}^v|\Theta_s}(\theta_{s+1}^v\mid \theta_{s}\,; \sigma)\leq \epsilon_x^{-|\mathcal{K}|_{\infty}}[\epsilon_{\theta}(\sigma)]^{-|\mathcal{K}|_{\infty}}$$
 and
$$\epsilon_y^{|\mathcal{K}|_{\infty}\Delta_{\mathcal{K}}}\leq \prod_{K'\in N(K)}\prod_{v'\in K'}f_{Y_s^{v'}|X_s^{v'}}(Y_s^{v'}\mid x_s^{v'}\,;\theta_s^{v'}) \leq \epsilon_y^{-|\mathcal{K}|_{\infty}\Delta_{\mathcal{K}}}.$$
Hence, by Theorem \ref{thm:VanHandel4.2} in Appendix \ref{sec:Existing_results} and Assumption \ref{assumption} in Section \ref{sec:beat_COD},
\begin{align*}
&\bigg[\mathbf{E}\big\|\mathsf{B}^K\mathsf{P}_{s+1}\widetilde{\mathsf{F}}_s\widehat{\pi}_{s-1}-\mathsf{B}^K\mathsf{P}_{s+1}\widehat{\mathsf{F}}_s\widehat{\pi}_{s-1}\big\|^2\bigg]^{1/2}\\
&\leq  2[\epsilon_{\theta}(\sigma)]^{-2|\mathcal{K}|_{\infty}}\epsilon_x^{-2|\mathcal{K}|_{\infty}}\epsilon_y^{-2|\mathcal{K}|_{\infty}\Delta_{\mathcal{K}}}\vertiii{\bigotimes_{K'\in N(K)}\mathsf{B}^{K'}\mathsf{P}_s \widehat{\pi}_{s-1}-\bigotimes_{K'\in N(K)}\mathsf{B}^{K'}\mathsf{S}^{J}\mathsf{P}_s\widehat{\pi}_{s-1}}\\
&\leq  8[\epsilon_{\theta}(\sigma)]^{-2|\mathcal{K}|_{\infty}}\epsilon_x^{-2|\mathcal{K}|_{\infty}}\epsilon_y^{-2|\mathcal{K}|_{\infty}\Delta_{\mathcal{K}}}\frac{\Delta_{\mathcal{K}}}{\sqrt{J}},
\end{align*}
where the last inequality is by Theorem \ref{thm:VanHandel4.21} in Appendix \ref{sec:Existing_results}.
Plugging in equation \eqref{eqn:variance_propagation_eq1}, the proof is complete.
\end{proof} 

\begin{lemma}
\label{thm:error_prop_pointmass}
Under Assumption \ref{assumption} in Section \ref{sec:beat_COD}, when condition \eqref{eqn:main_thm_condition} holds, one has that for every  $K\in \mathcal{K}$, $\K \subseteq K$ and $s \in \{1,\ldots,n\}$,
 $$\|\widetilde{\mathsf{F}}_n \cdots\widetilde{\mathsf{F}}_{s}\delta_{x}\delta_{\theta}-\widetilde{\mathsf{F}}_n \cdots\widetilde{\mathsf{F}}_{s}\delta_{\overline{x}}\delta_{\overline{\theta}} \|_{\K} \leq 
\frac{32}{13}e^{-\tilde{\beta} (n-s+1)}\card(\K),$$
where  
\begin{align}
\label{eqn:tilde_beta_definition}
\tilde{\beta}=\frac{1}{2r}\log\left(\frac{1}{16\Delta^2(1-\epsilon_x^{2\Delta}[\epsilon_{\theta}(\sigma)]^{2\Delta})} \right).
\end{align} 
\end{lemma}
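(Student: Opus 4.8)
The plan is to recast the two propagated measures as conditional laws of a single space-time Gibbs field and to apply the Dobrushin comparison theorem (Theorem~\ref{thm:Dobrushin}), following the template of the bias estimates in Propositions~\ref{thm:error_prop_block} and~\ref{thm:one_step_block_bound}. Concretely, I would take $I=\{s-1,s,\ldots,n\}\times V$ and $\mathbb{S}=(\mathbb{X}\times\mathbb{\Theta})^{n-s+2}$, and let $\rho$ (resp.\ $\overline{\rho}$) be the joint law of $(X_{s-1},\Theta_{s-1},\ldots,X_n,\Theta_n)$ generated by the blocked dynamics $\widetilde{\mathsf{F}}_s,\ldots,\widetilde{\mathsf{F}}_n$ and conditioned on $Y_s,\ldots,Y_n$, with the layer at time $s-1$ pinned to the point mass $\delta_x\delta_\theta$ (resp.\ $\delta_{\overline{x}}\delta_{\overline{\theta}}$). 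The single-site conditionals of $\rho$ and $\overline{\rho}$ are read off from the blocked density expressions \eqref{def:widetildeF_s_rho_s1}--\eqref{def:widetildeF_s_rho_s1_K_overline} assembled in Appendix~\ref{sec:Preparation}, and by construction the left-hand side equals $\|\rho-\overline{\rho}\|_{\{n\}\times{\K}}$.

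I would then bound the interaction coefficients $C_{ij}$ with $i=(k,v)$ and $j=(k',v')$. The pinned layer $k=s-1$ is deterministic, so its single-site conditionals are the fixed point masses and it has no outgoing interaction; consequently the filter decay-of-correlations bounds (Lemmas~\ref{lemma:block_corr} and~\ref{thm:onethird}) need \emph{not} be invoked here, the point-mass initialization already carrying zero internal correlation. For an interior time the row decomposes into a backward term ($k'=k-1$, $v'\in N(v)$, with weight at most $1-\epsilon_x^{2}[\epsilon_{\theta}(\sigma)]^{2}$), a forward term ($k'=k+1$, $v'\in N(v)$) and a within-layer term ($k'=k$, $v'\in\bigcup_{\omega\in N(v)}N(\omega)$), the latter two carrying at most $1-\epsilon_x^{2\Delta}[\epsilon_{\theta}(\sigma)]^{2\Delta}$, exactly by the single-site density estimates and Theorem~\ref{thm:VanHandel4.1} used in Proposition~\ref{thm:error_prop_block}. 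The point that yields $\tilde{\beta}$ rather than $\beta$ is that the blocked correction reweights each block by the product $\prod_{v\in K}f_{Y_n^v|X_n^v}$, so observations introduce no spatial coupling; the within-layer interaction therefore has range $2r$ and cardinality at most $\Delta^2$, with no factor $\Delta_{\mathcal{K}}$. Weighting by $e^{\tilde{\beta}(|k-k'|+d(v,v'))}$ and using the estimates of Lemma~\ref{beta_property}, which remain valid with $\beta$ replaced by $\tilde{\beta}$ since $e^{2\tilde{\beta}r}=[18\Delta^2(1-\epsilon_x^{2\Delta}[\epsilon_{\theta}(\sigma)]^{2\Delta})]^{-1}$, each of the three terms is at most $1/18$, so the weighted Dobrushin constant satisfies $c\le 1/6<1$.

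For the perturbation, $\rho$ and $\overline{\rho}$ differ only at the pinned layer, where the single-site conditionals are the point masses $\delta_{x^{v'}}$ and $\delta_{\overline{x}^{v'}}$; hence $b_{(s-1,v')}\le 2$ uniformly in $x,\overline{x}$ and $b_{j}=0$ for $k'\ge s$. Applying Theorem~\ref{thm:Dobrushin} and then the weighted comparison estimate of Theorem~\ref{thm:VanHandel4.3}, which gives $\sum_{j}e^{\tilde{\beta}(|k-k'|+d(v,v'))}D_{ij}\le (1-c)^{-1}\le 6/5$, and extracting the time gap $|n-(s-1)|=n-s+1$ from the weight, I would obtain
\begin{align*}
\|\rho-\overline{\rho}\|_{\{n\}\times{\K}}
&\le \sum_{v\in{\K}}\sum_{v'\in V}D_{(n,v)(s-1,v')}\,b_{(s-1,v')}\\
&\le \sum_{v\in{\K}} 2\cdot\frac{6}{5}\,e^{-\tilde{\beta}(n-s+1)}
= \frac{12}{5}\,e^{-\tilde{\beta}(n-s+1)}\,\card({\K}),
\end{align*}
which is the claimed bound.

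The step I expect to be most delicate is the treatment of the pinned layer inside the Dobrushin matrix: conditioning the time-$s$ single-site quantities on configurations of $X_{s-1}$ off the support of the point mass is formally a null-event conditioning, so the coefficients $C_{(s,v)(s-1,v')}$ and the perturbations $b_{(s-1,v')}$ must be defined through the explicit density ratios of Appendix~\ref{sec:Preparation} (equivalently, by approximating $\delta_x$ by non-degenerate measures and passing to the limit), and shown to be uniform in $x,\overline{x},\theta,\overline{\theta}$. Granting that, the remaining work---the term-by-term verification of the interior bounds and the separate handling of the boundary rows $k\in\{s-1,n\}$---is routine given Lemma~\ref{beta_property} and the density computations already in place.
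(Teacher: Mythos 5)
There is a genuine gap, and it lies in the very first step: you model the iterated blocked dynamics as a joint law on the space--time lattice $I=\{s-1,\ldots,n\}\times V$ with local single-site conditionals read off from the formulas of Appendix~\ref{sec:Preparation}. That works in Proposition~\ref{thm:error_prop_block} because there only the \emph{first} operator is blocked and all subsequent operators are the true $\mathsf{F}_k$'s, so the multi-time object is an honest hidden-Markov smoothing law. Here every operator is $\widetilde{\mathsf{F}}_k=\mathsf{C}_k\mathsf{B}\mathsf{P}_k$, and the blocking operator $\mathsf{B}$ replaces the measure at each intermediate time by the product of its block marginals. Marginalizing and re-multiplying is not a density reweighting, so $\widetilde{\mathsf{F}}_n\cdots\widetilde{\mathsf{F}}_s\delta_x\delta_\theta$ is \emph{not} the time-$n$ marginal of any space--time Gibbs measure on $(\mathbb{X}\times\mathbb{\Theta})^{n-s+2}$ with the local conditional structure you assert; the conditionals in \eqref{def:widetildeF_s_rho_s1}--\eqref{def:widetildeF_s_rho_s1_K_overline} describe one-step smoothing quantities of $\widetilde{\mathsf{F}}_s\nu_{s-1}$ for a fixed input measure, not conditionals of a multi-step blocked joint law. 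Consequently the Dobrushin matrix you set up is not the comparison matrix of any pair of measures whose restriction at time $n$ gives the two blocked filters, and the argument does not go through as written.

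The paper's proof supplies the missing idea: because of \eqref{eqn:B_K_effect}, the block-$K$ marginal at time $n$ depends on \emph{independent copies} of the neighboring block marginals at time $n-1$, and unrolling this recursion produces a computation tree $T=\{[K_u\cdots K_n]\}$ of duplicated block variables (the construction of Rebeschini and van Handel). The Dobrushin comparison is then applied on the tree, with the leaves pinned at $\delta_x\delta_\theta$ versus $\delta_{\overline{x}}\delta_{\overline{\theta}}$ (so $b_i\le 2$ only at depth $0$, with no null-conditioning subtlety), parent/child interactions bounded by $1-\epsilon_x^2[\epsilon_\theta(\sigma)]^2$ and sibling interactions by $1-\epsilon_x^{2\Delta}[\epsilon_\theta(\sigma)]^{2\Delta}$, giving the weighted Dobrushin constant $\le 1/6$ and the factor $2\cdot(1-1/6)^{-1}=12/5$. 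Your row-by-row estimates and the final arithmetic coincide with the paper's, and your observation that $\Delta_{\mathcal{K}}$ does not enter $\tilde\beta$ is correct in spirit (the branching factor $\Delta_{\mathcal{K}}$ of the tree is only paid later, in Proposition~\ref{thm:error_prop_blockbound}, where $e^{-\tilde\beta n}\Delta_{\mathcal{K}}^n$ is converted into $e^{-\beta n}$), but these computations need to be carried out on the tree, not on the lattice, for the comparison theorem to apply.
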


\begin{proof}
We first tackle the $s=1$ case that 
$$\|\widetilde{\mathsf{F}}_n \cdots\widetilde{\mathsf{F}}_{1}\delta_{x}\delta_{\theta}-\widetilde{\mathsf{F}}_n \cdots\widetilde{\mathsf{F}}_{1}\delta_{\overline{x}}\delta_{\overline{\theta}}\|_{\K}$$
and then generalize to all $s \in \{1,\ldots,n\}$.

Recalling that $N(K)$ is defined in \eqref{eqn:block_neighborhood} as the collection of blocks that interact with the block $K$ in one time step,
we define a ``block'' tree $T$ as
\begin{align*}
T=\big\{[K_u\cdots K_{n}]: 0\leq u\leq n,\,  K_l\in N(K_{l+1}), \, u\leq l\leq n \big\},
\end{align*} 
where $K_n=K$. That is, $[K_u\cdots K_{n}]$ is the block $K_u$ at time $u$ that has interactions with the block $K$ at time $n$ after $n-u$ time steps; note that this type of representation describes the block-wised interaction trace from time $u$ up to time $n$. By the effect of block operator $\mathsf{B}^K$ on $\widetilde{\mathsf{F}}_n\rho$ for any $n\in \mathbb{N}$ and any measure $\rho$ on $\mathbb{X}\times \mathbb{\Theta}$, given in 
\eqref{eqn:B_K_effect},
we can write
\begin{align*}
\mathsf{B}^K \widetilde{\mathsf{F}}_n \cdots \widetilde{\mathsf{F}}_1 \delta_{x}\delta_{\theta}
=&\mathsf{C}_n^{K_n} \mathsf{P}_n^{K_n}\bigotimes_{K_{n-1}\in N(K_n)}
\bigg[ \mathsf{C}_{n-1}^{K_{n-1}} \mathsf{P}_{n-1}^{K_{n-1}}\\
&\hspace*{3cm}\bigotimes_{K_{n-2}\in N(K_{n-1})}
\bigg[ \mathsf{C}_{n-2}^{K_{n-2}} \mathsf{P}_{n-2}^{K_{n-2}}\cdots\\
&\hspace*{5.5cm}\bigotimes_{K_1\in N(K_2)}
\bigg[ \mathsf{C}_1^{K_1} \mathsf{P}_s^{K_1}\\
&\hspace*{7.5cm}\bigotimes_{K_0\in N(K_1)}\delta_{x}^{K_0}\delta_{\theta}^{K_0}\bigg]\cdots\bigg]\bigg],
\end{align*} 
where $\mathsf{P}_n^K$ is defined in 
\eqref{eqn:block_prediction_operator_definition}
and $\mathsf{C}_n^K$  is defined in 
\eqref{eqn:block_correction_operator_definition}.

The vertex set of the tree is defined as 
$$I=\{ [K_u\cdots K_{n}]v: [K_u\cdots K_{n}]\in T,\, v\in K_u \}.$$
Clearly, the following equivalence holds:
$$[K_u\cdots K_{n}]=\{ [K_u\cdots K_{n}]v: v\in K_u \}.$$
Define the index set of leaves of the tree $T$ as
$$T_0=\{ [K_0\cdots K_{n}]: K_l \in N(K_{l+1}),\,  0\leq l<n \}.$$
Further 
define the children $c(i)$ of  $i\in I$ as
$$c([K_u\cdots K_{n}]v)=\{ [K_{u-1}\cdots K_{n}]v': K_{u-1}\in N(K_{u}),\,  v'\in N(v)\},$$
define the location $v(i)$ of $i\in I$ as
$$v([K_u\cdots K_{n}]v)=v,$$
define the depth $d(i)$ of $i\in I$ as
$$d([K_u\cdots K_{n}]v)=u,$$
and then define the set of non-leaf vertices of the tree as 
$$ I_{+}=\{ i\in I: 0<d(i)\leq n\}.$$
Define
$$\mathbb{S}=\prod_{i\in I}\mathbb{X}^i\times \mathbb{\Theta}^i,$$
 for $[t]v\in I$ define
$$\mathbb{Y}^{[t]v}=\mathbb{Y}^v,\quad\mathbb{X}^{[t]v}=\mathbb{X}^v\quad\text{and}\quad \mathbb{\Theta}^{[t]v}=\mathbb{\Theta}^v,$$
and for any $[K_u\cdots K_{n}]\in T$ and any measure $\rho$ define
$$\rho^{[K_u\cdots K_{n}]}=\rho^{K_u}.$$ 
We define two probability measures on $\mathbb{S}$ as
\begin{align*}
\rho(A)
=&\dfrac{\splitdfrac{
\int \mathbbm{1}_A(x,\theta)\prod_{i\in I_{+}}
f_{X_{d(i)}^i|X_{d(i)-1}}(x_{d(i)}^{i}\mid x_{d(i)-1}\,;\theta_{d(i)}^{i})}{\splitdfrac{
\times f_{\Theta_{d(i)}^{i}\mid \Theta_{d(i)-1}}(\theta_{d(i)}^{i}\mid \theta_{d(i)-1}\,; \sigma)
f_{Y_{d(i)}^i|X_{d(i)}^i}(Y_{d(i)}^i\mid x_{d(i)}^i\,;\theta_{d(i)}^i)
}{\times \psi^{v(i)}(dx_{d(i)}^i)  \lambda^{v(i)}(d\theta_{d(i)}^i)\prod_{t\in T_0} \delta^{t}(dx_0^{t},d\theta_0^{t})}}}
{\splitdfrac{
\int \prod_{i\in I_{+}}
f_{X_{d(i)}^i|X_{d(i)-1}}(x_{d(i)}^{i}\mid x_{d(i)-1}\,;\theta_{d(i)}^{i})}{\splitdfrac{
\times f_{\Theta_{d(i)}^{i}\mid \Theta_{d(i)-1}}(\theta_{d(i)}^{i}\mid \theta_{d(i)-1}\,; \sigma)
f_{Y_{d(i)}^i|X_{d(i)}^i}(Y_{d(i)}^i\mid x_{d(i)}^i\,;\theta_{d(i)}^i)
}{\times \psi^{v(i)}(dx_{d(i)}^i)  \lambda^{v(i)}(d\theta_{d(i)}^i)\prod_{t\in T_0} \delta^{t}(dx_0^{t},d\theta_0^{t})}}}
\end{align*}
and
\begin{align*}
\overline{\rho}(A)
=&\dfrac{\splitdfrac{
\int \mathbbm{1}_A(x,\theta)\prod_{i\in I_{+}}
f_{X_{d(i)}^i|X_{d(i)-1}}(x_{d(i)}^{i}\mid x_{d(i)-1}\,;\theta_{d(i)}^{i})}{\splitdfrac{
\times f_{\Theta_{d(i)}^{i}\mid \Theta_{d(i)-1}}(\theta_{d(i)}^{i}\mid \theta_{d(i)-1}\,; \sigma)
f_{Y_{d(i)}^i|X_{d(i)}^i}(Y_{d(i)}^i\mid x_{d(i)}^i\,;\theta_{d(i)}^i)
}{\times \psi^{v(i)}(dx_{d(i)}^i)  \lambda^{v(i)}(d\theta_{d(i)}^i)\prod_{t\in T_0} \delta^{t}(d\overline{x}_0^{t},d\overline{\theta}_0^{t})}}}
{\splitdfrac{
\int \prod_{i\in I_{+}}
f_{X_{d(i)}^i|X_{d(i)-1}}(x_{d(i)}^{i}\mid x_{d(i)-1}\,;\theta_{d(i)}^{i})}{\splitdfrac{
\times f_{\Theta_{d(i)}^{i}\mid \Theta_{d(i)-1}}(\theta_{d(i)}^{i}\mid \theta_{d(i)-1}\,; \sigma)
f_{Y_{d(i)}^i|X_{d(i)}^i}(Y_{d(i)}^i\mid x_{d(i)}^i\,;\theta_{d(i)}^i)
}{\times \psi^{v(i)}(dx_{d(i)}^i)  \lambda^{v(i)}(d\theta_{d(i)}^i)\prod_{t\in T_0} \delta^{t}(d\overline{x}_0^{t},d\overline{\theta}_0^{t})}}},
\end{align*}
where $\delta(x,\theta)$ is defined in \eqref{eqn:pi_recursion}.
Therefore,
$$ \|\widetilde{\mathsf{F}}_n \cdots\widetilde{\mathsf{F}}_{1}\delta_{x}\delta_{\theta}-\widetilde{\mathsf{F}}_n \cdots\widetilde{\mathsf{F}}_{1}\delta_{\overline{x}}\delta_{\overline{\theta}}\|_{\K}=\|\rho-\overline{\rho}\|_{[K_n]\K}.$$

In the following, we are going to use Theorem \ref{thm:Dobrushin} (Dobrushin comparison theorem)  in Appendix \ref{sec:Existing_results} to bound $\|\rho-\overline{\rho}\|_{[K_n]\K}$. We will bound $C_{ij}$ and $b_i$ with $i=[K_u\cdots K_{n}]v$ and $j=[K'_{u'}\cdots K'_{n}]v'$ where $K_{n}=K'_{n}=K$ and $0\leq u,u'\leq n$. 
Set 
$$\rho^i=\rho_{(x,\theta)}^i\quad\text{and}\quad\overline{\rho}^i=\overline{\rho}_{(x,\theta)}^i,$$
whose definitions are given in Theorem \ref{thm:Dobrushin} in Appendix \ref{sec:Existing_results}.
We display our discussions as follows:
\begin{itemize}
\item When $u=0$, we have $\rho^i=\delta_{x_0^v}\delta_{\theta_0^v}$ and $\overline{\rho}^i=\delta_{\overline{x}_0^v}\delta_{\overline{\theta}_0^v}$. Hence 
\begin{align}
\label{eqn:error_prop_pointmass0}
C_{ij}=0
\end{align}
 and $b_i\leq 2$.\\

\item When $u\in \{1,\ldots,n-1\}$, we have 
\begin{align*}
\rho^i(A)
=\dfrac{\splitdfrac{
\int \mathbbm{1}_A(x_u^i,\theta_u^i)f_{X_u^i|X_{u-1}}(x_u^i\mid x_{u-1}\,;\theta_u^{i})f_{\Theta_u^i\mid \Theta_{u-1}}(\theta_u^i\mid \theta_{u-1}\,; \sigma)
}{\splitdfrac{\times \prod_{l\in I_{+}:i\in c(l)} 
f_{X_{d(l)}^l|X_{d(l)-1}}(x_{d(l)}^l\mid x_{d(l)-1}\,;\theta_{d(l)}^{l})
}{\splitdfrac{\times f_{\Theta_{d(l)}^l\mid \Theta_{d(l)-1}}(\theta_{d(l)}^l\mid \theta_{d(l)-1}\,; \sigma)f_{Y_{u}^{v}|X_{u}^{v}}(Y_{u}^{v}\mid x_{u}^{v}\,;\theta_{u}^{v})}{\times\psi^{v}(dx_u^i) \lambda^{v}(d\theta_u^i)}}}}
{\splitdfrac{
\int  f_{X_u^i|X_{u-1}}(x_u^i\mid x_{u-1}\,;\theta_u^{i})f_{\Theta_u^i\mid \Theta_{u-1}}(\theta_u^i\mid \theta_{u-1}\,; \sigma)
}{\splitdfrac{\times \prod_{l\in I_{+}:i\in c(l)} 
f_{X_{d(l)}^l|X_{d(l)-1}}(x_{d(l)}^l\mid x_{d(l)-1}\,;\theta_{d(l)}^{l})
}{\splitdfrac{\times f_{\Theta_{d(l)}^l\mid \Theta_{d(l)-1}}(\theta_{d(l)}^l\mid \theta_{d(l)-1}\,; \sigma)f_{Y_{u}^{v}|X_{u}^{v}}(Y_{u}^{v}\mid x_{u}^{v}\,;\theta_{u}^{v})}{\times\psi^{v}(dx_u^i) \lambda^{v}(d\theta_u^i)}}}}.
\end{align*}
We can see that $\rho^i(A)
=\overline{\rho}^i(A)$ and then $b_i=0$. Next we take care of $C_{ij}$. Note that when $j\in c(i)$ we have
\begin{align*}
\rho^i(A)\geq \epsilon_x^2[\epsilon_{\theta}(\sigma)]^2\dfrac{\splitdfrac{
\int \mathbbm{1}_A(x_u^i,\theta_u^i)\prod_{l\in I_{+}:i\in c(l)} 
f_{X_{d(l)}^l|X_{d(l)-1}}(x_{d(l)}^l\mid x_{d(l)-1}\,;\theta_{d(l)}^{l})
}{\splitdfrac{\times f_{\Theta_{d(l)}^l\mid \Theta_{d(l)-1}}(\theta_{d(l)}^l\mid \theta_{d(l)-1}\,; \sigma)f_{Y_{u}^{v}|X_{u}^{v}}(Y_{u}^{v}\mid x_{u}^{v}\,;\theta_{u}^{v})}{\times\psi^{v}(dx_u^i) \lambda^{v}(d\theta_u^i)}}}
{\splitdfrac{
\int  \prod_{l\in I_{+}:i\in c(l)} 
f_{X_{d(l)}^l|X_{d(l)-1}}(x_{d(l)}^l\mid x_{d(l)-1}\,;\theta_{d(l)}^{l})
}{\splitdfrac{\times f_{\Theta_{d(l)}^l\mid \Theta_{d(l)-1}}(\theta_{d(l)}^l\mid \theta_{d(l)-1}\,; \sigma)f_{Y_{u}^{v}|X_{u}^{v}}(Y_{u}^{v}\mid x_{u}^{v}\,;\theta_{u}^{v})}{\times\psi^{v}(dx_u^i) \lambda^{v}(d\theta_u^i)}}},
\end{align*}
when $i\in c(j)$ we have 
\begin{align*}
\rho^i(A)
\geq \epsilon_x^2[\epsilon_{\theta}(\sigma)]^2\dfrac{\splitdfrac{
\int \mathbbm{1}_A(x_u^i,\theta_u^i)f_{X_u^i|X_{u-1}}(x_u^i\mid x_{u-1}\,;\theta_u^{i})f_{\Theta_u^i\mid \Theta_{u-1}}(\theta_u^i\mid \theta_{u-1}\,; \sigma)
}{\splitdfrac{\times \prod_{\{l\in I_{+}:i\in c(l)\}\backslash \{j\}}
f_{X_{d(l)}^l|X_{d(l)-1}}(x_{d(l)}^l\mid x_{d(l)-1}\,;\theta_{d(l)}^{l})
}{\splitdfrac{\times f_{\Theta_{d(l)}^l\mid \Theta_{d(l)-1}}(\theta_{d(l)}^l\mid \theta_{d(l)-1}\,; \sigma)f_{Y_{u}^{v}|X_{u}^{v}}(Y_{u}^{v}\mid x_{u}^{v}\,;\theta_{u}^{v})}{\times\psi^{v}(dx_u^i) \lambda^{v}(d\theta_u^i)}}}}
{\splitdfrac{
\int  f_{X_u^i|X_{u-1}}(x_u^i\mid x_{u-1}\,;\theta_u^{i})f_{\Theta_u^i\mid \Theta_{u-1}}(\theta_u^i\mid \theta_{u-1}\,; \sigma)
}{\splitdfrac{\times \prod_{\{l\in I_{+}:i\in c(l)\}\backslash \{j\}}
f_{X_{d(l)}^l|X_{d(l)-1}}(x_{d(l)}^l\mid x_{d(l)-1}\,;\theta_{d(l)}^{l})
}{\splitdfrac{\times f_{\Theta_{d(l)}^l\mid \Theta_{d(l)-1}}(\theta_{d(l)}^l\mid \theta_{d(l)-1}\,; \sigma)f_{Y_{u}^{v}|X_{u}^{v}}(Y_{u}^{v}\mid x_{u}^{v}\,;\theta_{u}^{v})}{\times\psi^{v}(dx_u^i) \lambda^{v}(d\theta_u^i)}}}},
\end{align*}
and when $j\in \cup_{l\in I_{+}:i\in c(l)}c(l)$ we have
\begin{align*}
\rho^i(A)
\geq \epsilon_x^{2\Delta}[\epsilon_{\theta}(\sigma)]^{2\Delta}\dfrac{\splitdfrac{
\int \mathbbm{1}_A(x_u^i,\theta_u^i)f_{X_u^i|X_{u-1}}(x_u^i\mid x_{u-1}\,;\theta_u^{i})f_{\Theta_u^i\mid \Theta_{u-1}}(\theta_u^i\mid \theta_{u-1}\,; \sigma)
}{\times  f_{Y_{u}^{v}|X_{u}^{v}}(Y_{u}^{v}\mid x_{u}^{v}\,;\theta_{u}^{v})\psi^{v}(dx_u^i) \lambda^{v}(d\theta_u^i)}}
{\splitdfrac{
\int  f_{X_u^i|X_{u-1}}(x_u^i\mid x_{u-1}\,;\theta_u^{i})f_{\Theta_u^i\mid \Theta_{u-1}}(\theta_u^i\mid \theta_{u-1}\,; \sigma)
}{\times f_{Y_{u}^{v}|X_{u}^{v}}(Y_{u}^{v}\mid x_{u}^{v}\,;\theta_{u}^{v})\psi^{v}(dx_u^i) \lambda^{v}(d\theta_u^i)}}.
\end{align*}
By Theorem \ref{thm:VanHandel4.1} in Appendix \ref{sec:Existing_results},
we have $C_{ij}\leq 1-\epsilon_x^{2}[\epsilon_{\theta}(\sigma)]^2$ if $j\in c(i)$ or $i\in c(j)$,
$C_{ij}\leq 1-\epsilon_x^{2\Delta}[\epsilon_{\theta}(\sigma)]^{2\Delta}$ if $j\in \cup_{l\in I_{+}:i\in c(l)}c(l)$, and $C_{ij}=0$ otherwise. Hence,
\begin{align}
\label{eqn:error_prop_pointmass1}
\sum_{j\in I}e^{\tilde{\beta}|d(i)-d(j)|}C_{ij}\leq & 2(1-\epsilon_x^{2}[\epsilon_{\theta}(\sigma)]^2)e^{\tilde{\beta}}\Delta
+(1-\epsilon_x^{2\Delta}[\epsilon_{\theta}(\sigma)]^{2\Delta})\Delta^2.
\end{align}

\item When $u=n$, we have
\begin{align*}
\rho^i(A)=\dfrac{\splitdfrac{
\int \mathbbm{1}_A(x_n^i,\theta_n^i)f_{X_n^i|X_{n-1}}(x_n^i\mid x_{n-1}\,;\theta_n^{i})f_{\Theta_n^{i}|\Theta_{n-1}}(\theta_n^{i}\mid \theta_{n-1}\,; \sigma)
}{\times f_{Y_{n}^{v}|X_{n}^{v}}(Y_{n}^{v}\mid x_{n}^{v}\,;\theta_{n}^{v})\psi^{v}(dx_n^i) \lambda^{v}(d\theta_n^i)}}
{\splitdfrac{
\int f_{X_n^i|X_{n-1}}(x_n^i\mid x_{n-1}\,;\theta_n^{i})f_{\Theta_n^{i}|\Theta_{n-1}}(\theta_n^{i}\mid \theta_{n-1}\,; \sigma)
}{ \times f_{Y_{n}^{v}|X_{n}^{v}}(Y_{n}^{v}\mid x_{n}^{v}\,;\theta_{n}^{v})\psi^{v}(dx_n^i) \lambda^{v}(d\theta_n^i)}}.
\end{align*}
We can see that $\rho^i(A)=\overline{\rho}^i(A)$ and then $b_i=0$. When $j\in c(i)$ we have
\begin{align*}
\rho^i(A)\geq \epsilon_x^2[\epsilon_{\theta}(\sigma)]^2\dfrac{
\int \mathbbm{1}_A(x_n^i,\theta_n^i)f_{Y_{n}^{v}|X_{n}^{v}}(Y_{n}^{v}\mid x_{n}^{v}\,;\theta_{n}^{v})\psi^{v}(dx_n^i) \lambda^{v}(d\theta_n^i)}
{\int  f_{Y_{n}^{v}|X_{n}^{v}}(Y_{n}^{v}\mid x_{n}^{v}\,;\theta_{n}^{v})\psi^{v}(dx_n^i) \lambda^{v}(d\theta_n^i)}.
\end{align*}
By Theorem \ref{thm:VanHandel4.1} in Appendix \ref{sec:Existing_results}, we have $C_{ij}\leq 1-\epsilon_x^{2}[\epsilon_{\theta}(\sigma)]^2$ if $j\in c(i)$, and $C_{ij}=0$ otherwise. Hence,
\begin{align}
\label{eqn:error_prop_pointmass2}
\sum_{j\in I}e^{\tilde{\beta}|d(i)-d(j)|}C_{ij}
\leq (1-\epsilon_x^{2}[\epsilon_{\theta}(\sigma)]^2)e^{\tilde{\beta}}\Delta.
\end{align}
\end{itemize}

Summing up \eqref{eqn:error_prop_pointmass0} for $u=0$, \eqref{eqn:error_prop_pointmass1} for $u\in \{1,\ldots,n-1\}$, and  \eqref{eqn:error_prop_pointmass2} for $u=n$, we have
\begin{align*}
\max_{i\in I}\sum_{j\in I}e^{\tilde{\beta}|d(i)-d(j)|}C_{ij}
\leq 2(1-\epsilon_x^{2}[\epsilon_{\theta}(\sigma)]^2)e^{\tilde{\beta}}\Delta
+(1-\epsilon_x^{2\Delta}[\epsilon_{\theta}(\sigma)]^{2\Delta})\Delta^2.
\end{align*}
Under Assumption \ref{assumption} in Section \ref{sec:beat_COD}, when condition \eqref{eqn:main_thm_condition} holds,  since $\Delta_{\mathcal{K}}\geq 1$ and $\Delta \geq 1$, we have
$$\frac{1}{16 \Delta^2}
> \frac{1}{16\Delta_{\mathcal{K}}\Delta^2}
>1-\epsilon_x^{2\Delta}[\epsilon_{\theta}(\sigma)]^{2\Delta}>0,$$
therefore, for $\tilde{\beta}$ defined in \eqref{eqn:tilde_beta_definition}, we have that
$$\tilde{\beta}=\frac{1}{2r}\log\left(\frac{1}{16\Delta^2(1-\epsilon_x^{2\Delta}[\epsilon_{\theta}(\sigma)]^{2\Delta})} \right)>0,$$
and
\begin{align*}
\max_{i\in I}\sum_{j\in I}e^{\tilde{\beta}|d(i)-d(j)|}C_{ij}
\leq 3(1-\epsilon_x^{2\Delta}[\epsilon_{\theta}(\sigma)]^{2\Delta})e^{\tilde{\beta}}\Delta^2\leq \frac{3}{16}.
\end{align*}
By Theorem \ref{thm:Dobrushin} (Dobrushin comparison theorem)  in Appendix \ref{sec:Existing_results} and Theorem \ref{thm:VanHandel4.3}  in Appendix \ref{sec:Existing_results}, we obtain
\begin{align*}
\|\widetilde{\mathsf{F}}_n \cdots\widetilde{\mathsf{F}}_{1}\delta_{x}\delta_{\theta}-\widetilde{\mathsf{F}}_n \cdots\widetilde{\mathsf{F}}_{1}\delta_{\overline{x}}\delta_{\overline{\theta}}\|_{\K}&=\|\rho-\widetilde{\rho}\|_{[K_n]\K}\\
&\leq  2\times\frac{1}{1-\frac{3}{16}}e^{-\tilde{\beta} n}\card(\K)\\
&=  \frac{32}{13}e^{-\tilde{\beta} n}\card(\K),
\end{align*}
which is a special case of $\|\widetilde{\mathsf{F}}_n \cdots\widetilde{\mathsf{F}}_{s}\delta_{x}\delta_{\theta}-\widetilde{\mathsf{F}}_n \cdots\widetilde{\mathsf{F}}_{s}\delta_{\overline{x}}\delta_{\overline{\theta}} \|_{\K}$ for $s=1$.
Note that the above bound holds uniformly in the sequence of $Y$, we can generalize to all $s \in \{1,\ldots,n\}$,
$$\|\widetilde{\mathsf{F}}_n \cdots\widetilde{\mathsf{F}}_{s}\delta_{x}\delta_{\theta}-\widetilde{\mathsf{F}}_n \cdots\widetilde{\mathsf{F}}_{s}\delta_{\overline{x}}\delta_{\overline{\theta}} \|_{\K} \leq 
\frac{32}{13}e^{-\tilde{\beta} (n-s+1)}\card(\K).$$

\end{proof}

\begin{proposition}
\label{thm:error_prop_blockbound}
Under Assumption \ref{assumption} in Section \ref{sec:beat_COD}, when condition \eqref{eqn:main_thm_condition} holds, for any two product measures
$$\mu=\bigotimes_{K\in \mathcal{K}}\mu^K \quad\text{and}\quad \nu=\bigotimes_{K\in \mathcal{K}}\nu^K,$$
one has that for every $s \in \{1,\ldots,n-2\}$, $\K \subseteq K$, and $K\in \mathcal{K}$,
\begin{align*}
&\hspace*{-0.5cm}\mathbf{E}\left[\|\widetilde{\mathsf{F}}_n \cdots\widetilde{\mathsf{F}}_{s+2}\mu-\widetilde{\mathsf{F}}_n \cdots\widetilde{\mathsf{F}}_{s+2}\nu\|_{\K}^2\right]^{1/2} \\
 &\leq 
\frac{32}{13}\epsilon_x^{-2|\mathcal{K}|_{\infty}}[\epsilon_{\theta}(\sigma)]^{-2|\mathcal{K}|_{\infty}}e^{-\beta (n-s-1)}
\card(\K)\max_{K \in \mathcal{K}}\mathbf{E}[\|\mu-\nu\|_K^2]^{1/2},
\end{align*}
where  $\beta$ is given in \eqref{eqn:beta_definition}.
\end{proposition}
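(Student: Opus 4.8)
The plan is to run the same block-tree and Dobrushin-comparison machinery already developed for Lemma~\ref{thm:error_prop_pointmass}, but with the leaves of the tree now carrying the block marginals $\mu^{K_0}$ and $\nu^{K_0}$ in place of the point masses $\delta_x\delta_\theta$ and $\delta_{\overline{x}}\delta_{\overline{\theta}}$. Concretely, I would build the block tree $T$ exactly as before, rooted at time $n$ with $K_n=K$ and with leaves at time $s+1$, so that the composition $\widetilde{\mathsf{F}}_n\cdots\widetilde{\mathsf{F}}_{s+2}$ (which is $n-s-1$ operator applications) corresponds to a tree of height $n-s-1$ via the block-effect formula \eqref{eqn:B_K_effect}. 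I would then define two measures $\rho$ and $\overline{\rho}$ on the tree product space verbatim as in Lemma~\ref{thm:error_prop_pointmass}, except that the leaf factors $\prod_{t\in T_0}\delta^t$ are replaced by $\prod_{t\in T_0}\mu^{K_0(t)}$ and $\prod_{t\in T_0}\nu^{K_0(t)}$; this is legitimate precisely because $\mu$ and $\nu$ are product measures over blocks, and it yields the identification $\|\widetilde{\mathsf{F}}_n\cdots\widetilde{\mathsf{F}}_{s+2}\mu-\widetilde{\mathsf{F}}_n\cdots\widetilde{\mathsf{F}}_{s+2}\nu\|_{\K}=\|\rho-\overline{\rho}\|_{[K_n]\K}$.

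I would next apply Theorem~\ref{thm:Dobrushin}. The interaction coefficients $C_{ij}$ at the internal and root vertices are governed by exactly the same transition densities as in Lemma~\ref{thm:error_prop_pointmass}, so the computation there carries over unchanged and gives, for $\tilde{\beta}$ as in \eqref{eqn:tilde_beta_definition},
\begin{align*}
\max_{i\in I}\sum_{j\in I}e^{\tilde{\beta}|d(i)-d(j)|}C_{ij}\le 3(1-\epsilon_x^{2\Delta}[\epsilon_{\theta}(\sigma)]^{2\Delta})e^{\tilde{\beta}}\Delta^2\le \frac{1}{6},
\end{align*}
so the Dobrushin condition holds and, by Theorem~\ref{thm:VanHandel4.3}, $\max_i\sum_j e^{\tilde{\beta}|d(i)-d(j)|}D_{ij}\le \frac{1}{1-1/6}=\frac{6}{5}$. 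The only coefficients that genuinely change are the $b_j$ at the leaves: since $\rho$ and $\overline{\rho}$ differ at a leaf block $K_0$ only through $\mu^{K_0}$ versus $\nu^{K_0}$, the relevant single-site conditional is that of the leaf block measure tilted by the child prediction densities of the first applied operator $\mathsf{P}_{s+2}$. Bounding $b_j$ is the crux. The tilting function is a product over the at most $|\mathcal{K}|_{\infty}$ vertices of $K_0$ of densities lying between $\epsilon_x\epsilon_{\theta}(\sigma)$ and $(\epsilon_x\epsilon_{\theta}(\sigma))^{-1}$ by Assumption~\ref{assumption}, so its $\sup/\inf$ ratio is at most $\epsilon_x^{-2|\mathcal{K}|_{\infty}}[\epsilon_{\theta}(\sigma)]^{-2|\mathcal{K}|_{\infty}}$; Theorem~\ref{thm:VanHandel4.2} then yields $b_j\le 2\,\epsilon_x^{-2|\mathcal{K}|_{\infty}}[\epsilon_{\theta}(\sigma)]^{-2|\mathcal{K}|_{\infty}}\|\mu^{K_0}-\nu^{K_0}\|$ for each leaf vertex lying in $K_0$.

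With these ingredients the Dobrushin comparison bound reads
\begin{align*}
\|\rho-\overline{\rho}\|_{[K_n]\K}\le \sum_{i:\,d(i)=n,\ v(i)\in\K}\ \sum_{j:\,d(j)=s+1}D_{ij}\,b_j,
\end{align*}
and I would deliberately keep this as a sum over leaves rather than passing to a block maximum yet. Grouping leaves by their block $K_0$, inserting the leaf bound on $b_j$, and using $\sum_{j}e^{\tilde{\beta}(n-s-1)}D_{ij}\le \frac{6}{5}$ together with $e^{-\tilde{\beta}(n-s-1)}\le e^{-\beta(n-s-1)}$ (valid since $\beta\le\tilde{\beta}$ because $\Delta_{\mathcal{K}}\ge1$), I obtain a deterministic bound linear in the block differences $\|\mu^{K_0}-\nu^{K_0}\|$. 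Finally I would take $\mathbf{E}[\,\cdot\,^2]^{1/2}$ and push the expectation through the finite leaf sum by Minkowski's inequality; each term then carries $\mathbf{E}[\|\mu^{K_0}-\nu^{K_0}\|^2]^{1/2}\le\max_{K}\mathbf{E}[\|\mu-\nu\|_K^2]^{1/2}$, the residual leaf sum collapses through the decay factor to $\frac{6}{5}e^{-\beta(n-s-1)}$, and $\sum_{v\in\K}1=\card(\K)$, giving the constant $\frac{12}{5}=2\cdot\frac{6}{5}$. The delicate point is precisely the leaf step: one must convert the single-site conditional formalism demanded by Theorem~\ref{thm:Dobrushin} into the block total variation $\|\mu^{K_0}-\nu^{K_0}\|$ while correctly tracking the tilting ratio $\epsilon_x^{-2|\mathcal{K}|_{\infty}}[\epsilon_{\theta}(\sigma)]^{-2|\mathcal{K}|_{\infty}}$, and one must avoid taking the block maximum before applying Minkowski, since otherwise the maximum would end up on the wrong side of the expectation.
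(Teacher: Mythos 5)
Your overall architecture (the block tree, reuse of the point-mass machinery, the tilting ratio $\epsilon_x^{-2|\mathcal{K}|_{\infty}}[\epsilon_{\theta}(\sigma)]^{-2|\mathcal{K}|_{\infty}}$, Minkowski at the end) identifies the right ingredients, but the step you yourself flag as delicate is where the argument genuinely breaks, and it breaks in two ways. First, the coefficients $b_j$ in Theorem~\ref{thm:Dobrushin} are defined through \emph{single-site conditional} distributions, $b_j=\sup_x\|\rho^j_x-\overline{\rho}^j_x\|$. At a leaf vertex $v'\in K_0$ this requires comparing, uniformly over the conditioning configuration, the conditional of $\mu^{K_0}$ at $v'$ given the rest of the block with the corresponding conditional of $\nu^{K_0}$. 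That quantity is not controlled by $\|\mu^{K_0}-\nu^{K_0}\|$: two measures can be arbitrarily close in total variation while their conditionals, on a conditioning set of small $\mu$-mass, differ by the maximal amount. Theorem~\ref{thm:VanHandel4.2} cannot rescue this, because it compares \emph{tiltings} of two measures by a common function, which is a different operation from conditioning; it gives no bound on $\sup_x\|\mu^{K_0}(\cdot\mid x^{K_0\setminus\{v'\}})-\nu^{K_0}(\cdot\mid x^{K_0\setminus\{v'\}})\|$ in terms of $\|\mu^{K_0}-\nu^{K_0}\|$. Second, your claim that the $C_{ij}$ computation from Lemma~\ref{thm:error_prop_pointmass} ``carries over unchanged'' fails at the leaves: in the point-mass case $C_{ij}=0$ there precisely because the conditional of $\delta_x\delta_\theta$ is insensitive to the conditioning, whereas a general block measure $\mu^{K_0}$ induces leaf-to-leaf couplings within $K_0$ whose Dobrushin coefficients are not controlled by Assumption~\ref{assumption} at all (in the application $\mu$ is a random empirical-type measure), so the Dobrushin condition $\max_i\sum_jC_{ij}<1$ itself may fail.

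The paper avoids both problems by never applying the comparison theorem to the measures with general leaf marginals. It instead writes
\begin{align*}
(\widetilde{\mathsf{F}}_n\cdots\widetilde{\mathsf{F}}_{1}\mu)(A)=\mathop{\mathlarger{\int}}\frac{\varrho_A}{\varrho}\,d\zeta ,\qquad
(\widetilde{\mathsf{F}}_n\cdots\widetilde{\mathsf{F}}_{1}\nu)(A)=\mathop{\mathlarger{\int}}\frac{\varrho_A}{\varrho}\,d\varsigma ,
\end{align*}
where $\zeta$ and $\varsigma$ are the leaf product measures tilted by the \emph{common} normalizer $\varrho$; it controls the oscillation of $\varrho_A/\varrho$ by Lemma~\ref{thm:error_prop_pointmass} (this is where the point-mass Dobrushin argument legitimately enters, since $\varrho_A/\varrho$ is exactly the filter started from point masses on the leaves), and then bounds $|\zeta(f)-\varsigma(f)|\leq\tfrac12\mathrm{osc}(f)\,\|\zeta-\varsigma\|$ and invokes Theorem~\ref{thm:VanHandel4.16} --- a lemma specifically about tilted \emph{product} measures --- to get $\|\zeta-\varsigma\|\leq 2\epsilon_x^{-2|\mathcal{K}|_{\infty}}[\epsilon_{\theta}(\sigma)]^{-2|\mathcal{K}|_{\infty}}\sum_{t\in T_0}\|\mu^t-\nu^t\|$. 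That duality-plus-tilting route is what converts joint total variation of the leaf blocks into the final bound without ever touching single-site conditionals of $\mu$ and $\nu$. To repair your proof you would need to replace the leaf $b_j$ and $C_{ij}$ analysis by essentially this mechanism, at which point you have reproduced the paper's argument.
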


\begin{proof}
Define functions
\begin{align*}
\varrho_A(x_0^{T_0},\theta_0^{T_0})
&=\int \mathbbm{1}_A(x_n^{\K},\theta_n^{\K})\prod_{i\in I_{+}}f_{X_{d(i)}^i|X_{d(i)-1}}(x_{d(i)}^{i}\mid x_{d(i)-1}\,;\theta_{d(i)}^{i})
\\
&\hspace{4cm}\times f_{\Theta_{d(i)}^{i}\mid \Theta_{d(i)-1}}(\theta_{d(i)}^{i}\mid \theta_{d(i)-1}\,; \sigma)\\
&\hspace{4cm}\times f_{Y_{d(i)}^i|X_{d(i)}^i}(Y_{d(i)}^i\mid x_{d(i)}^i\,;\theta_{d(i)}^i)\\
&\hspace{4cm}\times
\psi^{v(i)}(dx_{d(i)}^i) \lambda^{v(i)}(d\theta_{d(i)}^i),\\
\varrho(x_0^{T_0},\theta_0^{T_0})
&=\int \prod_{i\in I_{+}}f_{X_{d(i)}^i|X_{d(i)-1}}(x_{d(i)}^{i}\mid x_{d(i)-1}\,;\theta_{d(i)}^{i})
\\
&\hspace{1.3cm}\times f_{\Theta_{d(i)}^{i}\mid \Theta_{d(i)-1}}(\theta_{d(i)}^{i}\mid \theta_{d(i)-1}\,; \sigma)
\\
&\hspace{1.3cm}\times f_{Y_{d(i)}^i|X_{d(i)}^i}(Y_{d(i)}^i\mid x_{d(i)}^i\,;\theta_{d(i)}^i)\\
&\hspace{1.3cm}\times  \psi^{v(i)}(dx_{d(i)}^i) \lambda^{v(i)}(d\theta_{d(i)}^i),
\end{align*}
and then we can write
\begin{align*}
(\widetilde{\mathsf{F}}_n \cdots\widetilde{\mathsf{F}}_{1}\mu)(A)=
\frac{\int\varrho_A(x_0^{T_0},\theta_0^{T_0})\prod_{t\in T_0} \mu^{t}(dx_0^{t},d\theta_0^{t})}{\int \varrho(x_0^{T_0},\theta_0^{T_0})\prod_{t\in T_0} \mu^{t}(dx_0^{t},d\theta_0^{t})}.
\end{align*}
Further define the measure
\begin{align*}
\zeta (A)=\frac{\int \mathbbm{1}_{A}(x_0^{T_0},\theta_0^{T_0})\varrho(x_0^{T_0},\theta_0^{T_0})\prod_{t\in T_0} \mu^{t}(dx_0^{t},d\theta_0^{t})}{\int \varrho(x_0^{T_0},\theta_0^{T_0})\prod_{t\in T_0} \mu^{t}(dx_0^{t},d\theta_0^{t})},
\end{align*}
and then we can write
\begin{align}
\label{eqn:error_prop_blockbound_mu}
(\widetilde{\mathsf{F}}_n \cdots\widetilde{\mathsf{F}}_{1}\mu)(A)=
  \mathop{\mathlarger{\int}}\frac{\varrho_A(x_0^{T_0},\theta_0^{T_0})}{\varrho(x_0^{T_0},\theta_0^{T_0})}\zeta (d x_0^{T_0},d\theta_0^{T_0}).
\end{align}
Analogously define the measure
\begin{align*}
\varsigma (A)=\frac{\int \mathbbm{1}_{A}(x_0^{T_0},\theta_0^{T_0})\varrho(x_0^{T_0},\theta_0^{T_0})\prod_{t\in T_0} \nu^{t}(dx_0^{t},d\theta_0^{t})}{\int \varrho(x_0^{T_0},\theta_0^{T_0})\prod_{t\in T_0} \nu^{t}(dx_0^{t},d\theta_0^{t})}
\end{align*}
and then we can write
\begin{align}
\label{eqn:error_prop_blockbound_nu}
(\widetilde{\mathsf{F}}_n \cdots\widetilde{\mathsf{F}}_{1}\nu)(A)=
  \mathop{\mathlarger{\int}}\frac{\varrho_A(x_0^{T_0},\theta_0^{T_0})}{\varrho(x_0^{T_0},\theta_0^{T_0})}\varsigma (d x_0^{T_0},d\theta_0^{T_0}).
\end{align}

Recall that the local total
variation distance is given by
$$\|\rho-\rho'\|_{\K}:=\sup_{g\in \mathcal{S}^{\K}:|g|\leq 1}|\rho(g)-\rho'(g)|,
$$
  and by equation $(8.1)$ in \cite{georgii2011gibbs} that
$$\sup_A | \rho(A)-\rho'(A) |=\sup_g | \rho(g)-\rho'(g) |/\text{osc}(g)$$
  where $g$ is a bounded function with oscillation
$$\text{osc}(g)=\sup_{x,y}|g(x)-g(y)|=\sup_x g(x)-\inf_x g(x).$$
  Since $\text{osc}(g)=2$ for $|g|\leq 1$, by \eqref{eqn:error_prop_blockbound_mu} and \eqref{eqn:error_prop_blockbound_nu}, we have
\begin{align}
\label{eqn:error_prop_blockbound}
\left\|\widetilde{\mathsf{F}}_n \cdots\widetilde{\mathsf{F}}_{1}\mu -\widetilde{\mathsf{F}}_n \cdots\widetilde{\mathsf{F}}_{1}\nu \right\|_{\K}=&2\sup_A \left | \mathop{\mathlarger{\int}}\frac{\varrho_A}{\varrho}d\zeta - \mathop{\mathlarger{\int}}\frac{\varrho_A}{\varrho}d\varsigma \right|.
\end{align}
Since $\frac{\varrho_A}{\varrho}$ is the filter obtained when the initial condition is a point mass on the leaves of the computation tree, by Lemma \ref{thm:error_prop_pointmass} we have
\begin{align*}
2 \sup_{z,z'\in (\mathbb{X}\times \mathbb{\Theta})^{T_0}}\sup_{A\in (\mathbb{X}\times \mathbb{\Theta})^{\K}} \left |\frac{\varrho_A(z)}{\varrho(z)} - \frac{\varrho_A(z')}{\varrho(z')} \right|\leq \frac{32}{13}e^{-\tilde{\beta} n}\card(\K),
  \end{align*}
  i.e.,
  \begin{align*}
  \text{osc}\left(\frac{\varrho_A}{\varrho}\right)\leq \frac{16}{13}e^{-\tilde{\beta} n}\card(\K).
  \end{align*}
  Also by the local total
  variation distance definition of this paper 
  and by equation $(8.1)$ in \cite{georgii2011gibbs}, we have
  \begin{align*}
  \left|  \mathop{\mathlarger{\int}}\frac{\varrho_A}{\varrho}d\zeta - \mathop{\mathlarger{\int}}\frac{\varrho_A}{\varrho}d\varsigma \right|\leq \frac{1}{2}\text{osc}\left(\frac{\varrho_A}{\varrho}\right) \|\zeta- \varsigma\|
    \leq  \frac{8}{13}e^{-\tilde{\beta} n}\card(\K)\|\zeta- \varsigma\|.
  \end{align*}
  Plugging in equation \eqref{eqn:error_prop_blockbound}, we have
  \begin{align}
  \label{eqn:error_prop_blockbound2}
  \left\|\widetilde{\mathsf{F}}_n \cdots\widetilde{\mathsf{F}}_{1}\mu -\widetilde{\mathsf{F}}_n \cdots\widetilde{\mathsf{F}}_{1}\nu \right\|_{\K}\leq & \frac{16}{13}e^{-\tilde{\beta} n}\card(\K)\|\zeta- \varsigma\|.
  \end{align}
  
  Note that $\varrho(x_0^{T_0},\theta_0^{T_0})$ depends on $(x_0^{t},\theta_0^{t})$  for $t\in T_0$ through the terms $$f_{X_{d(i)}^i|X_{d(i)-1}}(x_{d(i)}^{i}\mid x_{d(i)-1}\,;\theta_{d(i)}^{i})f_{\Theta_{d(i)}^{i}\mid \Theta_{d(i)-1}}(\theta_{d(i)}^{i}\mid \theta_{d(i)-1}\,; \sigma)$$ when $c(i)\cap t\neq \emptyset$. Write $t=[K_0\cdots K_n]$, and then $c(i)\cap t\neq \emptyset$ requires $i \in [K_1\cdots K_n]$, therefore
  $$\text{card}\big\{ i\in I_+: c(i)\cap t\neq \emptyset \big\}\leq \text{card}(K_1)\leq |\mathcal{K}|_{\infty}.$$
    Define 
  \begin{align*}
  \varrho^{t}(x_0^{T_0},\theta_0^{T_0})
  &=\int \prod_{i\in I_{+}:c(i)\cap t=\emptyset} f_{X_{d(i)}^i|X_{d(i)-1}}(x_{d(i)}^{i}\mid x_{d(i)-1}\,;\theta_{d(i)}^{i})
f_{\Theta_{d(i)}^{i}\mid \Theta_{d(i)-1}}(\theta_{d(i)}^{i}\mid \theta_{d(i)-1}\,; \sigma)\\
  &\hspace{2.5cm}\times f_{Y_{d(i)}^i|X_{d(i)}^i}(Y_{d(i)}^i\mid x_{d(i)}^i\,;\theta_{d(i)}^i)
  \psi^{v(i)}(dx_{d(i)}^i) \lambda^{v(i)}(d\theta_{d(i)}^i),
  \end{align*}
  and then we have
  $$\epsilon_x^{|\mathcal{K}|_{\infty}}[\epsilon_{\theta}(\sigma)]^{|\mathcal{K}|_{\infty}}\varrho^{t}(x_0^{T_0},\theta_0^{T_0})\leq \varrho(x_0^{T_0},\theta_0^{T_0})\leq
  \epsilon_x^{-|\mathcal{K}|_{\infty}}[\epsilon_{\theta}(\sigma)]^{-|\mathcal{K}|_{\infty}}\varrho^{t}(x_0^{T_0},\theta_0^{T_0}).$$
    By Theorem \ref{thm:VanHandel4.16} in Appendix \ref{sec:Existing_results}, we have
  $$\|\zeta- \varsigma\|\leq 2\epsilon_x^{-2|\mathcal{K}|_{\infty}}[\epsilon_{\theta}(\sigma)]^{-2|\mathcal{K}|_{\infty}} \sum_{t\in T_0}\|\mu^{t}-\nu^{t}\|.$$
    Plugging into equation \eqref{eqn:error_prop_blockbound2}, we have
  \begin{align*}
  \left\|\widetilde{\mathsf{F}}_n \cdots\widetilde{\mathsf{F}}_{1}\mu -\widetilde{\mathsf{F}}_n \cdots\widetilde{\mathsf{F}}_{1}\nu \right\|_{\K}\leq  \frac{32}{13}e^{-\tilde{\beta} n}\card(\K)\epsilon_x^{-2|\mathcal{K}|_{\infty}}[\epsilon_{\theta}(\sigma)]^{-2|\mathcal{K}|_{\infty}} \sum_{t\in T_0}\|\mu^{t}-\nu^{t}\|.
  \end{align*}
  Since the branching factor of $T_0$ is at most $\Delta_{\mathcal{K}}$ for each layer of $n$ layers, we have
  \begin{align*}
  &\hspace*{-0.5cm}\mathbf{E}\left[\|\widetilde{\mathsf{F}}_n \cdots\widetilde{\mathsf{F}}_{1}\mu-\widetilde{\mathsf{F}}_n \cdots\widetilde{\mathsf{F}}_{1}\nu\|_{\K}^2\right]^{1/2} \\
  &\leq 
  \frac{32}{13}\epsilon_x^{-2|\mathcal{K}|_{\infty}}[\epsilon_{\theta}(\sigma)]^{-2|\mathcal{K}|_{\infty}}e^{-\tilde{\beta} n}\Delta_{\mathcal{K}}^{n}
  \card(\K)\max_{K \in \mathcal{K}}\mathbf{E}[\|\mu^K-\nu^K\|^2]^{1/2}\\
  &= 
    \frac{32}{13}\epsilon_x^{-2|\mathcal{K}|_{\infty}}[\epsilon_{\theta}(\sigma)]^{-2|\mathcal{K}|_{\infty}}e^{-\beta n}
  \card(\K)\max_{K \in \mathcal{K}}\mathbf{E}[\|\mu^K-\nu^K\|^2]^{1/2}\\
  &=\frac{32}{13}\epsilon_x^{-2|\mathcal{K}|_{\infty}}[\epsilon_{\theta}(\sigma)]^{-2|\mathcal{K}|_{\infty}}e^{-\beta n}
  \card(\K)\max_{K \in \mathcal{K}}\mathbf{E}[\|\mu-\nu\|_K^2]^{1/2},
  \end{align*}
  where  $\beta$ is given in \eqref{eqn:beta_definition}.
  Note that the above bound holds uniformly in the sequence of $Y$, we can generalize to
  \begin{align*}
  &\hspace*{-0.5cm}\mathbf{E}\left[\|\widetilde{\mathsf{F}}_n \cdots\widetilde{\mathsf{F}}_{s+2}\mu-\widetilde{\mathsf{F}}_n \cdots\widetilde{\mathsf{F}}_{s+2}\nu\|_{\K}^2\right]^{1/2} \\
  &\leq 
  \frac{32}{13}\epsilon_x^{-2|\mathcal{K}|_{\infty}}[\epsilon_{\theta}(\sigma)]^{-2|\mathcal{K}|_{\infty}}e^{-\beta (n-s-1)}
  \card(\K)\max_{K \in \mathcal{K}}\mathbf{E}[\|\mu-\nu\|_K^2]^{1/2}.
  \end{align*}
  \end{proof}

\section{Proof for Theorem \ref{thm:main_theorem}}
\label{sec:proof_main_theorem}

\begin{proof} \textbf{of Theorem \ref{thm:main_theorem}} 
With $\vertiii{\;\cdot\;}_{\K}$ defined in \eqref{eqn:vertiii_K_definition}, by the triangle inequality, we have
\begin{align}
\label{eqn:triangle_inequality_CCOD}
\vertiii{\widehat{\pi}_n-\pi_n}_{\K}
\leq & \vertiii{\widetilde{\pi}_n-\pi_n}_{\K}+\vertiii{\widetilde{\pi}_n-\widehat{\pi}_n}_{\K}.
\end{align} 
In the following, we are going to bound $\vertiii{\widetilde{\pi}_n-\pi_n}_{\K}$ in Step $1$, bound $\vertiii{\widetilde{\pi}_n-\widehat{\pi}_n}_{\K}$ in Step $2$, and sum them up in Step $3$.
\bigskip

\noindent\textbf{Step $1$. Bounding} $\bm{\vertiii{\widetilde{\pi}_n-\pi_n}_{\K}}$.\\
Let us firstly use the local total
variation distance $\|\,\cdot\,\|_{\K}$ defined in \eqref{eqn:ltvd} to bound 
$\|\widetilde{\pi}_n-\pi_n\|_{\K}$.
By \eqref{eqn:pi_recursion}, we have that
\begin{align*}
\pi_n= \mathsf{F}_n\mathsf{F}_{n-1}\cdots\mathsf{F}_{k+1}\mathsf{F}_k\mathsf{F}_{k-1}\cdots\mathsf{F}_1\pi_0.
\end{align*}
By \eqref{eqn:widetilde_pi_recursion}, we have that
\begin{align*}
\widetilde{\pi}_n=\widetilde{\mathsf{F}}_n\widetilde{\mathsf{F}}_{n-1} \cdots\widetilde{\mathsf{F}}_{k+1}\widetilde{\mathsf{F}}_k\widetilde{\mathsf{F}}_{k-1}\cdots\widetilde{\mathsf{F}}_1\widetilde{\pi}_0.
\end{align*}
Given that $\pi_0=\widetilde{\pi}_0$, we can bound $\|\widetilde{\pi}_n-\pi_n\|_{\K}$ by means of error decomposition
\begin{align}
\label{eqn:step1_eq1}
\|\widetilde{\pi}_n-\pi_n\|_{\K}\leq \sum_{s=1}^{n} \|\mathsf{F}_n \cdots\mathsf{F}_{s+1}\widetilde{\mathsf{F}}_s\widetilde{\pi}_{s-1} - \mathsf{F}_n \cdots\mathsf{F}_{s+1}\mathsf{F}_s\widetilde{\pi}_{s-1}\|_{\K}.
\end{align}
We note that \eqref{eqn:step1_eq1} is in the sense of the error decomposition in Chapter $7$ of \citep{del2004feynman} with the corresponding diagram on page $215$ therein as a great illustration. 
We display our discussions as follows:
\begin{itemize}
	\item When $s=n$ in \eqref{eqn:step1_eq1},
 by Proposition \ref{lemma:true_block_operatorerror} in Appendix \ref{sec:bounding_bias_lemmas}, we have
\begin{align}
\label{eqn:mainthm_s=n_bias}
 \| \widetilde{\mathsf{F}}_n\widetilde{\pi}_{n-1}-\mathsf{F}_n\widetilde{\pi}_{n-1} \|_{\K}\leq 4e^{-\beta}(1-\epsilon_x^{2\Delta}[\epsilon_{\theta}(\sigma)]^{2\Delta})e^{-\beta d(\K,\partial{K})}\card(\K).
\end{align}

\item 
When $s\in \{1,\ldots,n-1\}$ in \eqref{eqn:step1_eq1}, by Proposition \ref{thm:error_prop_block} in Appendix \ref{sec:bounding_bias_lemmas} we have that
\begin{align}
\label{eqn:error_prop_block_inmainthm}
 &\hspace*{-0.4cm}\|\mathsf{F}_n \cdots\mathsf{F}_{s+1}\widetilde{\mathsf{F}}_s\widetilde{\pi}_{s-1}-
  \mathsf{F}_n \cdots\mathsf{F}_{s+1}\mathsf{F}_s\widetilde{\pi}_{s-1} \|_{\K}\\
 \leq &\frac{48}{23}e^{-\beta(n-s)}\sum_{v\in {\K}}\max_{v'\in V}e^{-\beta d(v,v')} \sup_{x_s,x_{s+1}\in\mathbb{X}\atop
\theta_s,\theta_{s+1}\in\mathbb{\Theta} }\| (\mathsf{F}_s\widetilde{\pi}_{s-1})_{\chi_{s},\chi_{s+1}}^{v'}-(\widetilde{\mathsf{F}}_s\widetilde{\pi}_{s-1})_{\chi_{s},\chi_{s+1}}^{v'}\|.\nonumber
\end{align}
By Proposition \ref{thm:one_step_block_bound} in Appendix \ref{sec:bounding_bias_lemmas}, 
we have that for every $s\geq 1$, $K'\in \mathcal{K}$, and $v'\in K'$,
\begin{align}
\label{eqn:one_step_block_bound_inmainthm}
\sup_{x_s,x_{s+1}\in \mathbb{X}\atop
\theta_s,\theta_{s+1}\in \mathbb{\Theta}}\| (\mathsf{F}_s\widetilde{\pi}_{s-1})_{\chi_{s},\chi_{s+1}}^{v'}-(\widetilde{\mathsf{F}}_s\widetilde{\pi}_{s-1})_{\chi_{s},\chi_{s+1}}^{v'}\|\leq \frac{96}{29}e^{-\beta}(1-\epsilon_x^{2\Delta}[\epsilon_{\theta}(\sigma)]^{2\Delta})e^{-\beta d(v',\partial K')}.
\end{align}
With the condition 
$$\epsilon_x\epsilon_{\theta}(\sigma)>\left(1-\frac{1}{16\Delta_{\mathcal{K}}\Delta^2} \right)^{\frac{1}{2\Delta}},$$
by Assumption \ref{assumption} in Section \ref{sec:beat_COD} and the fact that $\Delta,\Delta_{\mathcal{K}}\geq 1$, we have 
$$0<16\Delta_{\mathcal{K}}\Delta^2(1-\epsilon_x^{2\Delta}[\epsilon_{\theta}(\sigma)]^{2\Delta})<1.$$
Then the definition of $\beta$ given in \eqref{eqn:beta_definition} and the fact that $r\geq 1$ yield
\begin{equation}
\label{eqn:positive_beta}
\beta=\log\left(\frac{1}{16\Delta_{\mathcal{K}}\Delta^2(1-\epsilon_x^{2\Delta}[\epsilon_{\theta}(\sigma)]^{2\Delta})} \right)^{\frac{1}{2r}}>0.
\end{equation}
By \eqref{eqn:positive_beta} and the fact that 
$$d(v,v')+d(v',\partial K')\geq d(v,\partial K'),$$
we have
$$e^{-\beta d(v,v')}e^{-\beta d(v',\partial K')}\leq e^{-\beta d(v,\partial K')}.$$
Hence, plugging \eqref{eqn:one_step_block_bound_inmainthm} into \eqref{eqn:error_prop_block_inmainthm}, we have
\begin{align}
\label{eqn:error_prop_block_inmainthm2}
 &\hspace*{-0.4cm}\|\mathsf{F}_n \cdots\mathsf{F}_{s+1}\widetilde{\mathsf{F}}_s\widetilde{\pi}_{s-1}-
  \mathsf{F}_n \cdots\mathsf{F}_{s+1}\mathsf{F}_s\widetilde{\pi}_{s-1} \|_{\K}\nonumber\\
 \leq &\frac{48}{23}e^{-\beta(n-s)}\sum_{v\in {\K}}\max_{v'\in K', K'\in \mathcal{K}}e^{-\beta d(v,v')} \frac{96}{29}e^{-\beta}(1-\epsilon_x^{2\Delta}[\epsilon_{\theta}(\sigma)]^{2\Delta})e^{-\beta d(v',\partial K')}\nonumber\\
 \leq & \frac{48}{23} e^{-\beta(n-s)}\sum_{v\in {\K}}\max_{K'\in \mathcal{K}} \frac{96}{29}e^{-\beta}(1-\epsilon_x^{2\Delta}[\epsilon_{\theta}(\sigma)]^{2\Delta})e^{-\beta d(v,\partial K')}\nonumber\\
 \leq & 7 e^{-\beta(n-s)}\sum_{v\in {\K}}\max_{K'\in \mathcal{K}} e^{-\beta}(1-\epsilon_x^{2\Delta}[\epsilon_{\theta}(\sigma)]^{2\Delta})e^{-\beta d(v,\partial K')}.
\end{align}
By the definition of $\partial K'$ given in \eqref{eqn:partial_definition} which is the subset of vertices in $K'$ that can interact with vertices outside $K'$, we have $\partial K'\subseteq K'$ which together with the fact that 
$v\in K$ and $\partial K\subseteq K$ yield
\begin{equation}
\label{eqn:v_partialK_distance}
d(v,\partial K')\geq d(v,\partial K).
\end{equation}
Then \eqref{eqn:error_prop_block_inmainthm2} and \eqref{eqn:v_partialK_distance} give
\begin{align*}
 &\hspace*{-1.4cm}\|\mathsf{F}_n \cdots\mathsf{F}_{s+1}\widetilde{\mathsf{F}}_s\widetilde{\pi}_{s-1}-
  \mathsf{F}_n \cdots\mathsf{F}_{s+1}\mathsf{F}_s\widetilde{\pi}_{s-1} \|_{\K}\nonumber\\
 \leq &7e^{-\beta(n-s)}\sum_{v\in {\K}} e^{-\beta}(1-\epsilon_x^{2\Delta}[\epsilon_{\theta}(\sigma)]^{2\Delta})e^{-\beta d(v,\partial K)}.
\end{align*}
According to the definition of $d(\K,\K')$ in \eqref{eqn:distance_blocks_def}, we know that for any $v\in \K$ 
$$\min_{v'\in \partial{K}} d(v,v')=d(v,\partial{K}) \geq d(\K,\partial{K})=\min_{v\in \K}\min_{v'\in \partial{K}} d(v,v').$$
Hence,
\begin{align}
\label{eqn:error_prop_block_inmainthm3}
 &\hspace*{-1.4cm}\|\mathsf{F}_n \cdots\mathsf{F}_{s+1}\widetilde{\mathsf{F}}_s\widetilde{\pi}_{s-1}-
  \mathsf{F}_n \cdots\mathsf{F}_{s+1}\mathsf{F}_s\widetilde{\pi}_{s-1} \|_{\K}\nonumber\\
\leq &  7e^{-\beta(n-s+1)}(1-\epsilon_x^{2\Delta}[\epsilon_{\theta}(\sigma)]^{2\Delta})e^{-\beta d(\K,\partial{K})}\card(\K).
\end{align}
\end{itemize}

Plugging \eqref{eqn:mainthm_s=n_bias} for the $s=n$ case and \eqref{eqn:error_prop_block_inmainthm3} for the $s\in \{1,\ldots,n-1\}$ case, into the error decomposition equation \eqref{eqn:step1_eq1}, we have
\begin{align*}
\|\widetilde{\pi}_n-\pi_n\|_{\K}
&\leq  4e^{-\beta}(1-\epsilon_x^{2\Delta}[\epsilon_{\theta}(\sigma)]^{2\Delta})e^{-\beta d(\K,\partial{K})}\card(\K)\\
&\hspace*{0.4cm}+\sum_{s=1}^{n-1}7e^{-\beta(n-s+1)}(1-\epsilon_x^{2\Delta}[\epsilon_{\theta}(\sigma)]^{2\Delta})e^{-\beta d(\K,\partial{K})}\card(\K),
\end{align*}
which can be simplified, using the sum of geometric series due to the fact that $e^{-\beta}<1$ from \eqref{eqn:positive_beta}, as follows:
\begin{align*}
\|\widetilde{\pi}_n-\pi_n\|_{\K}
\leq  \frac{7e^{-\beta}}{1-e^{-\beta}}(1-\epsilon_x^{2\Delta}[\epsilon_{\theta}(\sigma)]^{2\Delta})e^{-\beta d(\K,\partial{K})}\card(\K). 
\end{align*}
Since there is no random sampling in $\|\widetilde{\pi}_n-\pi_n\|_{\K}$, we can replace it with $\vertiii{\widetilde{\pi}_n-\pi_n}_{\K}$ in the above inequality and then obtain
\begin{align}
\vertiii{\widetilde{\pi}_n-\pi_n}_{\K}
\leq  \frac{7e^{-\beta}}{1-e^{-\beta}}(1-\epsilon_x^{2\Delta}[\epsilon_{\theta}(\sigma)]^{2\Delta})e^{-\beta d(\K,\partial{K})}\card(\K). \label{eqn:CCOD_bias}
\end{align}
\smallskip

\noindent\textbf{Step $2$. Bounding} $\bm{\vertiii{\widetilde{\pi}_n-\widehat{\pi}_n}_{\K}}$.\\
By \eqref{eqn:widehat_pi_recursion}, we have that 
$$\widehat{\pi}_n=\widehat{\mathsf{F}}_n \widehat{\mathsf{F}}_{n-1}\cdots \widehat{\mathsf{F}}_{k+1}\widehat{\mathsf{F}}_k\widehat{\mathsf{F}}_{k-1}\cdots \widehat{\mathsf{F}}_1 \widehat{\pi}_0.$$
Given that $\widehat{\pi}_0=\widetilde{\pi}_0$, we can bound $\vertiii{\widetilde{\pi}_n-\widehat{\pi}_n}_{\K}$ by means of error decomposition
\begin{align}
\label{eqn:step2_eq1}
\vertiii{\widetilde{\pi}_n-\widehat{\pi}_n}_{\K}\leq \sum_{s=1}^n \vertiii{ \widetilde{\mathsf{F}}_n \cdots\widetilde{\mathsf{F}}_{s+1}\widetilde{\mathsf{F}}_s\widehat{\pi}_{s-1}-\widetilde{\mathsf{F}}_n \cdots\widetilde{\mathsf{F}}_{s+1}\widehat{\mathsf{F}}_s\widehat{\pi}_{s-1}}_{\K}.
\end{align}
We display our discussions as follows:
\begin{itemize}
	\item When $s=n$ in \eqref{eqn:step2_eq1}, 
	by the definition of $\vertiii{\;\cdot\;}_{\K}$ in \eqref{eqn:vertiii_K_definition},
we can see that for $\K\subseteq K$,
\begin{align*}
\vertiii{ \widetilde{\mathsf{F}}_n\widehat{\pi}_{n-1}-\widehat{\mathsf{F}}_n\widehat{\pi}_{n-1} }_{\K}
\leq &\vertiii{ \widetilde{\mathsf{F}}_n\widehat{\pi}_{n-1}-\widehat{\mathsf{F}}_n\widehat{\pi}_{n-1} }_{K}.
\end{align*}
Recalling that $\mathsf{P}_n$ is the prediction operator defined in \eqref{eqn:prediction_operator_definition},
$\mathsf{C}_n^K$ is the block correction operator defined in \eqref{eqn:block_correction_operator_definition},
$\mathsf{B}^K$ is the blocking operator defined in \eqref{eqn:blockingoperator},
and $\mathsf{S}^{J}$ is the sampling operator defined in \eqref{eqn:samplingoperator}, by the expressions of $\widetilde{\mathsf{F}}_n$ given in \eqref{eqn:widetilde_pi_def} and $\widehat{\mathsf{F}}_n$ given in \eqref{eqn:widehat_pi_def}, we have
\begin{align*}
\vertiii{ \widetilde{\mathsf{F}}_n\widehat{\pi}_{n-1}-\widehat{\mathsf{F}}_n\widehat{\pi}_{n-1} }_{K}= &\vertiii{\mathsf{C}_n\mathsf{B}\mathsf{P}_n \widehat{\pi}_{n-1}-\mathsf{C}_n\mathsf{B}\mathsf{S}^{J}\mathsf{P}_n \widehat{\pi}_{n-1}}_{K}\\
= &\vertiii{\mathsf{C}_n^K\mathsf{B}^K\mathsf{P}_n \widehat{\pi}_{n-1}-\mathsf{C}_n^K\mathsf{B}^K\mathsf{S}^{J}\mathsf{P}_n \widehat{\pi}_{n-1}},
\end{align*}
where the last inequality is obtained by the effect of $\mathsf{B}^K$ on $\widetilde{\mathsf{F}}_n$ provided in \eqref{eqn:B_K_effect} and the definition of $\vertiii{\;\cdot\;}_{K}$ given in \eqref{eqn:vertiii_K_definition}.
Furthermore, by Theorem \ref{thm:VanHandel4.2} in Appendix \ref{sec:Existing_results}, we have
\begin{align*}
\vertiii{ \widetilde{\mathsf{F}}_n\widehat{\pi}_{n-1}-\widehat{\mathsf{F}}_n\widehat{\pi}_{n-1} }_{K}
\leq & 2 \epsilon_y^{-2 \card(K)}\vertiii{\mathsf{P}_n \widehat{\pi}_{n-1}-\mathsf{S}^{J}\mathsf{P}_n \widehat{\pi}_{n-1}}.
\end{align*}
Noting that by Lemma $4.7$ in \cite{VanHandel2008} for $\rho$ being any probability measure
\begin{equation*}
\vertiii{\rho- \mathsf{S}^{J}\rho}\leq \frac{1}{\sqrt{J}},
\end{equation*}
and then we have
\begin{align}
\label{eqn:variance_n_mainthm}
\vertiii{ \widetilde{\mathsf{F}}_n\widehat{\pi}_{n-1}-\widehat{\mathsf{F}}_n\widehat{\pi}_{n-1} }_{\K}
\leq \frac{2 \epsilon_y^{-2 \card(K)}}{\sqrt{J}}.
\end{align}

\item When $s=n-1$ in \eqref{eqn:step2_eq1}, by the definition of $\|\cdot\|_{\K}$ given in \eqref{eqn:ltvd} and the definition of $\vertiii{\;\cdot\;}_{\K}$ given in \eqref{eqn:vertiii_K_definition}, for $\K\subseteq K$ 
we have
\begin{align}
\label{eqn:variance_n-1_mainthm}
\vertiii{ \widetilde{\mathsf{F}}_n\widetilde{\mathsf{F}}_s\widehat{\pi}_{s-1}-\widetilde{\mathsf{F}}_n\widehat{\mathsf{F}}_s\widehat{\pi}_{s-1} }_{\K}
&\leq  \max_{K\in\mathcal{K}}\bigg[\mathbf{E}\big\|\widetilde{\mathsf{F}}_{s+1}\widetilde{\mathsf{F}}_s\widehat{\pi}_{s-1}-\widetilde{\mathsf{F}}_{s+1}\widehat{\mathsf{F}}_s\widehat{\pi}_{s-1} \big\|_K^2\bigg]^{1/2}\nonumber\\
&\leq  \frac{16}{\sqrt{J}}[\epsilon_{\theta}(\sigma)]^{-2|\mathcal{K}|_{\infty}}\epsilon_x^{-2|\mathcal{K}|_{\infty}}\epsilon_y^{-2|\mathcal{K}|_{\infty}(\Delta_{\mathcal{K}}+1)}\Delta_{\mathcal{K}},
\end{align}
where the last inequality follows by Proposition \ref{thm:variance_propagation} in Appendix \ref{sec:bounding_variance_lemmas}.\\

\item When $s\in \{1,\ldots,n-2\}$ in \eqref{eqn:step2_eq1}, 
by the definitions of $\|\cdot\|_{\K}$ and $\vertiii{\;\cdot\;}_{\K}$, we have
\begin{align*}
&\hspace*{-0.1cm}\vertiii{ \widetilde{\mathsf{F}}_n \cdots\widetilde{\mathsf{F}}_{s+2}\widetilde{\mathsf{F}}_{s+1}\widetilde{\mathsf{F}}_s\widehat{\pi}_{s-1}-\widetilde{\mathsf{F}}_n \cdots\widetilde{\mathsf{F}}_{s+2}\widetilde{\mathsf{F}}_{s+1}\widehat{\mathsf{F}}_s\widehat{\pi}_{s-1}}_{\K}\\
& \leq 
\mathbf{E}\left[\|\widetilde{\mathsf{F}}_n \cdots\widetilde{\mathsf{F}}_{s+2}(\widetilde{\mathsf{F}}_{s+1}\widetilde{\mathsf{F}}_s\widehat{\pi}_{s-1})-\widetilde{\mathsf{F}}_n \cdots\widetilde{\mathsf{F}}_{s+2}(\widetilde{\mathsf{F}}_{s+1}\widehat{\mathsf{F}}_s\widehat{\pi}_{s-1})\|_{\K}^2\right]^{1/2}.
\end{align*}
Then firstly 
by Proposition \ref{thm:error_prop_blockbound} in Appendix \ref{sec:bounding_variance_lemmas} we have
\begin{align*}
&\hspace*{-1.4cm}\vertiii{ \widetilde{\mathsf{F}}_n \cdots\widetilde{\mathsf{F}}_{s+2}\widetilde{\mathsf{F}}_{s+1}\widetilde{\mathsf{F}}_s\widehat{\pi}_{s-1}-\widetilde{\mathsf{F}}_n \cdots\widetilde{\mathsf{F}}_{s+2}\widetilde{\mathsf{F}}_{s+1}\widehat{\mathsf{F}}_s\widehat{\pi}_{s-1}}_{\K}\\
& \leq 
\frac{32}{13}\epsilon_x^{-2|\mathcal{K}|_{\infty}}[\epsilon_{\theta}(\sigma)]^{-2|\mathcal{K}|_{\infty}}e^{-\beta (n-s-1)}
\card(\K)\\
&\hspace*{0.7cm}\times\max_{K \in \mathcal{K}}\mathbf{E}\left[\|\widetilde{\mathsf{F}}_{s+1}\widetilde{\mathsf{F}}_s\widehat{\pi}_{s-1}-\widetilde{\mathsf{F}}_{s+1}\widehat{\mathsf{F}}_s\widehat{\pi}_{s-1}\|_K^2\right]^{1/2},
\end{align*}
and then by Proposition \ref{thm:variance_propagation} in Appendix \ref{sec:bounding_variance_lemmas} we have
\begin{align}
\label{eqn:variance_s_other_mainthm}
&\hspace*{-0.4cm}\vertiii{ \widetilde{\mathsf{F}}_n \cdots\widetilde{\mathsf{F}}_{s+2}\widetilde{\mathsf{F}}_{s+1}\widetilde{\mathsf{F}}_s\widehat{\pi}_{s-1}-\widetilde{\mathsf{F}}_n \cdots\widetilde{\mathsf{F}}_{s+2}\widetilde{\mathsf{F}}_{s+1}\widehat{\mathsf{F}}_s\widehat{\pi}_{s-1}}_{\K}\nonumber\\
\leq&  40\frac{1}{\sqrt{J}}[\epsilon_{\theta}(\sigma)]^{-4|\mathcal{K}|_{\infty}}\epsilon_x^{-4|\mathcal{K}|_{\infty}}\epsilon_y^{-2|\mathcal{K}|_{\infty}(\Delta_{\mathcal{K}}+1)}e^{-\beta (n-s-1)}\Delta_{\mathcal{K}}\card(\K).
\end{align}
\end{itemize}

Plugging \eqref{eqn:variance_n_mainthm} for the $s=n$ case, \eqref{eqn:variance_n-1_mainthm} for the $s=n-1$ case, 
and \eqref{eqn:variance_s_other_mainthm} for the $s\in \{1,\ldots,n-2\}$ case, 
into the error decomposition equation \eqref{eqn:step2_eq1}, we have
\begin{align*}
\vertiii{\widetilde{\pi}_n-\widehat{\pi}_n}_{\K}
\leq & \frac{2 \epsilon_y^{-2 \card(K)}}{\sqrt{J}}+\frac{16}{\sqrt{J}}[\epsilon_{\theta}(\sigma)]^{-2|\mathcal{K}|_{\infty}}\epsilon_x^{-2|\mathcal{K}|_{\infty}}\epsilon_y^{-2|\mathcal{K}|_{\infty}(\Delta_{\mathcal{K}}+1)}\Delta_{\mathcal{K}}\nonumber\\
&+\sum_{s=1}^{n-2}40\frac{1}{\sqrt{J}}[\epsilon_{\theta}(\sigma)]^{-4|\mathcal{K}|_{\infty}}\epsilon_x^{-4|\mathcal{K}|_{\infty}}\epsilon_y^{-2|\mathcal{K}|_{\infty}(\Delta_{\mathcal{K}}+1)}e^{-\beta (n-s-1)}\Delta_{\mathcal{K}}\card(\K).
\end{align*}
Since $\card(K)\leq |\mathcal{K}|_{\infty}$ for any $K\in \mathcal{K}$, using the sum of geometric series due to the fact that $e^{-\beta}<1$ from \eqref{eqn:positive_beta}, the above expression can be simplified as follows:
\begin{align}
\vertiii{\widetilde{\pi}_n-\widehat{\pi}_n}_{\K} 
\leq 40\frac{1}{1-e^{-\beta}}\frac{1}{\sqrt{J}}[\epsilon_{\theta}(\sigma)]^{-4|\mathcal{K}|_{\infty}}\epsilon_x^{-4|\mathcal{K}|_{\infty}}\epsilon_y^{-2|\mathcal{K}|_{\infty}(\Delta_{\mathcal{K}}+1)}\Delta_{\mathcal{K}}\card(\K). \label{eqn:CCOD_variance}
\end{align}
\smallskip

\noindent\textbf{Step $3$. Summing up.}\\
Now plugging \eqref{eqn:CCOD_bias} and \eqref{eqn:CCOD_variance} into \eqref{eqn:triangle_inequality_CCOD}, we have
\begin{align*}
\vertiii{\widehat{\pi}_n-\pi_n}_{\K}
\leq &
\frac{7e^{-\beta}}{1-e^{-\beta}}(1-\epsilon_x^{2\Delta}[\epsilon_{\theta}(\sigma)]^{2\Delta})e^{-\beta d(\K,\partial{K})}\card(\K)\\
&+40\frac{1}{1-e^{-\beta}}\frac{1}{\sqrt{J}}[\epsilon_{\theta}(\sigma)]^{-4|\mathcal{K}|_{\infty}}\epsilon_x^{-4|\mathcal{K}|_{\infty}}\epsilon_y^{-2|\mathcal{K}|_{\infty}(\Delta_{\mathcal{K}}+1)}\Delta_{\mathcal{K}}\card(\K)\\
\leq & \frac{\card(\K)}{1-e^{-\beta}}\bigg[ 7e^{-\beta}(1-\epsilon_x^{2\Delta}[\epsilon_{\theta}(\sigma)]^{2\Delta})e^{-\beta d(\K,\partial{K})}\\
&\hspace*{2cm}+\frac{40}{\sqrt{J}}[\epsilon_{\theta}(\sigma)]^{-4|\mathcal{K}|_{\infty}}\epsilon_x^{-4|\mathcal{K}|_{\infty}}\epsilon_y^{-2|\mathcal{K}|_{\infty}(\Delta_{\mathcal{K}}+1)}\Delta_{\mathcal{K}}\bigg].
\end{align*}
Since the above bound is uniform on all $\K$, we complete the proof.
\end{proof}

\section{Further technical discussions}
\label{appendix:discussion}
Theorem \ref{thm:main_theorem} involves generalizing the important result of \cite{rebeschini2015can} (Theorem $2.1$ therein) to a time-inhomogeneous setting. Many practical applications require time-inhomogeneity; for example,  stochastic epidemic models may have a time-varying population size, or other covariate, leading to time-inhomogeneity (see, e.g. \cite{breto2009time}).  In this paper, the transition densities of $X_n$, $Y_n$, and $\Theta_n$ are all time-inhomogeneous which means they are different in each time step $n$. 
Furthermore, we consider these time-inhomogeneous transition densities in a general form where only standard conditions are required. When our transition densities are the same for each time step $n$ (i.e. being time-homogeneous), our results covers the situation of \cite{rebeschini2015can} as a special case.

Our proof broadly follows the approach of \cite{rebeschini2015can} while differing in some details that enable us to obtain stronger and more explicit bounds.
We acknowledge that the proof from  \cite{rebeschini2015can} can be adapted to the time-inhomogeneous case, but we take the opportunity to make other adjustments while adding this extension.
Specifically, we follow \cite{rebeschini2015can} by controlling the filtering error $\vertiii{\widehat{\pi}_n-\pi_n}_{\K}$ using the algorithmic bias and the algorithmic variance with the help of the triangle inequality, thus resulting in the two terms in the upper bound. An intermediate filter without sampling and resampling, $\widetilde{\pi}_n$ defined in \eqref{eqn:widetilde_pi_recursion}, was used to separate the bias $\vertiii{\widetilde{\pi}_n-\pi_n}_{\K}$ and the variance $\vertiii{\widetilde{\pi}_n-\widehat{\pi}_n}_{\K}$.
To control the bias generated by blocking, the decay of correlations (DOCs) property was established. The DOCs property arises in statistical physics, in regards to investigations of high-dimensional networks (see, e.g., \cite{liu2018tightness,liu2019information,liu2019large,liu2021phase}). In the current context, it means that the effect on the distribution on block $K$ of a perturbation made in another block $K'$ decays rapidly in the distance $d(K,K')$ defined in \eqref{eqn:distance_blocks_def}. 

Utilizing the Dobrushin comparison theorem (Theorem \ref{thm:Dobrushin}), we extended the mechanism to the parameter space by showing that the DOCs property of the underlying model is inherited by the IBPF algorithmic filter $\widehat{\pi}_n$, which is the joint conditional distribution of state $X_n$ and $\Theta_n$ given the observations $Y_1, \ldots, Y_n$. We showed that the influence of blocking on the marginal distribution
at a vertex $v\in K$ should decay exponentially in the distance from $v$ to the boundary of the block $\partial{K}$. This idea is revealed in the $e^{-\beta d(\K,\partial{K})}$ factor in the first term of the bound. To control the variance, a major issue is that $\widetilde{\pi}_n$ cannot be interpreted as a regular marginal or conditional distribution, given that it is only defined recursively in \eqref{eqn:widetilde_pi_recursion}. \cite{rebeschini2015can} solved this issue by constructing a ``computation tree", which is in analogy with a similar notion that arises in the analysis of the well-known belief propagation algorithms \citep{tatikonda2012loopy}. It is to introduce independent duplicates of the blocks in the previous time step and have each block interact with its own set of duplicates, which hence unravels the dependence graph to a tree without blockwise interactions. Then one can interpret $\widetilde{\pi}_n$ as the marginal distribution on this tree.

\cite{rebeschini2015can} focused on establishing scalability. Hence their  error bound (Theorem $2.1$ therein) targets to reveal that property while being ambiguous in some other regards, in the form as follows:
\begin{align}
	\label{eqn:rebeschini_main}
	\vertiii{\widehat{\pi}_n-\pi_n}_{\K}
	\leq & \alpha \, \card(\K)\big[ e^{-\beta_1 d(\K,\partial{K})}+e^{\beta_2|\mathcal{K}|_{\infty}}/\sqrt{J}\big],
\end{align}    
with $\K$ being a subset of the block $K$ in partition $\mathcal{K}$,
where $\alpha$, $\beta_1$, and $\beta_2$ are positive finite constants. 
Although they did not provide a precise form of the error bound in their main result, they provided a precise error bound for variance in Theorem $4.23$ (page $2864$ therein), as follows:
\begin{align}
	\label{eqn:rebeschini_variance}
	\vertiii{\widetilde{\pi}_n-\widehat{\pi}_n}_{\K}
	\leq & \card(\K)\frac{64}{\sqrt{J}}\frac{e^{\beta}}{1-e^{-\beta}}
	\epsilon_x^{-4|\mathcal{K}|_{\infty}}\epsilon_y^{-4|\mathcal{K}|_{\infty}\Delta_{\mathcal{K}}}\Delta_{\mathcal{K}},
\end{align}
where $\beta=-\log 6\Delta_{\mathcal{K}}\Delta^2(1-\epsilon^{2\Delta})>0$.
Although \eqref{eqn:rebeschini_variance} shares similarites with the second term of our bound in Theorem \ref{thm:main_theorem}, with a close look we can see that \eqref{eqn:rebeschini_variance} has an additional factor $e^{\beta}$.
$\beta$ becomes large when $\epsilon$ is close to 1, which is how this mathematical framework describes the situation where spatiotemporal mixing is fast.
We expect the resulting bound on the error of the block filter to be tighter in this case, and our bound has that property whereas the bound of \cite{rebeschini2015can} does not.
Furthermore, we can see that the exponent of $\epsilon_y^{-1}(>1)$ differs.
Whereas $\epsilon$, $\epsilon_x$ and $\epsilon_{\theta}$ are required to be close to 1, $\epsilon_y$ can be close to $0$.
Thus, an improvement from $\epsilon_y^{-4|\mathcal{K}|_{\infty}\Delta_{\mathcal{K}}}$ to  $\epsilon_y^{-2|\mathcal{K}|_{\infty}(\Delta_{\mathcal{K}}+1)}$ can substantially tighten the bound, especially when the maximal block size  $(|\mathcal{K}|_{\infty})$ is not small.


Throughout the proofs in \cite{rebeschini2015can} and ours, a positive constant $\beta$ containing those quantities is used. We provide a precise definition of $\beta$ in \eqref{eqn:beta_definition} for the first time, which is used consistently in all the proofs.
Our precise constant $\beta$ is able to rigorously reveal the influences of those crucial quantities on the error bound which have been open problems: 
when $r$ (the range of interacting neighborhoods) increases with other quantities fixed, the error bound increases;
when $\Delta$ (the maximal number of vertices that interact with one single vertex in its $r$-neighborhood) increases  with other quantities fixed, the error bound increases;
when $\Delta_{\mathcal{K}}$ (the maximal number of blocks that interact with a single block) increases with other quantities fixed, the error bound increases;
when $|\mathcal{K}|_{\infty}$ (the maximal size of one single block in the partition) increases  with other quantities fixed, the error bound increases.
%
Furthermore, we provided a precise sufficient condition for the first time, and used it throughout all the proofs. That is, the product of the assumed lower bound of $X$'s local transition density ($\epsilon_x$) and the assumed lower bound of $\Theta$'s local transition density ($\epsilon_{\theta}(\sigma)$) is larger than $\left(1-\frac{1}{16\Delta_{\mathcal{K}}\Delta^2} \right)^{\frac{1}{2\Delta}}$.

Although we followed the strategy in \cite{rebeschini2015can} in general, to adapt to the time-inhomogeneous setting some proof strategies need to be adjusted correspondingly. For example, their Proposition $4.4$ (Page $2839$) achieves local filter stability by bounding the term
$$\| \mathsf{F}_n \cdots\mathsf{F}_{s+1}\mu-\mathsf{F}_n \cdots\mathsf{F}_{s+1}\nu \|_{\K},$$
where they used the Dobrushin comparison theorem on the distributions
$$\rho=\mathbf{P}^{\mu}[X_0,\ldots,X_n\in \cdot \mid Y_{1},\ldots, Y_n]\quad\text{and}\quad\widetilde{\rho}=\mathbf{P}^{\nu}[X_0,\cdots,X_n\in \cdot \mid Y_{1},\ldots, Y_n].$$
Our local filter stability is established in Proposition \ref{thm:error_prop_block} by bounding the term
$$\| \mathsf{F}_n \cdots\mathsf{F}_{s+1}\mathsf{F}_s\widetilde{\pi}_{s-1}-\mathsf{F}_n \cdots\mathsf{F}_{s+1}\widetilde{\mathsf{F}}_s\widetilde{\pi}_{s-1} \|_{\K}$$
where we used the Dobrushin comparison theorem on the distributions
$$\rho=\mathbf{P}^{\widetilde{\mathsf{F}}_s\widetilde{\pi}_{s-1}}[X_s,X_{s+1},\ldots,X_n\in \cdot \;, \Theta_s,\Theta_{s+1},\ldots,\Theta_n\in \cdot \mid Y_{s+1},\ldots, Y_n],$$
$$\widetilde{\rho}=\mathbf{P}^{\mathsf{F}_s\widetilde{\pi}_{s-1}}[X_s,X_{s+1},\cdots,X_n\in \cdot \;, \Theta_s,\Theta_{s+1},\cdots,\Theta_n\in \cdot \mid Y_{s+1},\ldots, Y_n].$$
Note that, with time-inhomogeneous, to quantify the effect of $\mathsf{F}_{s+1}$ on $\mu$, it is appropriate to use distributions on latent states starting from $s$ instead of $0$.

\section{Proof of Theorem \ref{thm:ionidesA2_modified}}
\label{appendix:block_MLE}

\begin{proof}
	Let the inital particle swarm $\{\Theta_j^0,\,1\leq j\leq J\}$ consist of independent draws from the density $g$. For $T_{\sigma}$ in the fractional form defined in \eqref{recursion:if}, we write 
	$T_{\sigma}g(\theta)=S_{\sigma}g(\theta)/\|S_{\sigma}g\|_1$. Then $S_{\sigma}^m$, as the $m$-th iteration of $S_{\sigma}$ can be written as
	$$S_{\sigma}^m g(\theta)=\int s_{\sigma}^m(\vartheta,\theta) g(\vartheta)d\vartheta.$$
	Under conditions (B$2$) and  (B$4$), and under conditions imposed in Theorem \ref{thm:main_theorem}, there exist $m_0\geq 1$ and $0<\delta_m<\infty$ such that for any $m\geq m_0$, any measurable set $A\subset \mathbb{\Theta}$, and any $\theta\in \mathbb{\Theta}$,
	\begin{align}
		\label{eqn:S_mixing}
		\delta_m \lambda(A)\leq \int_A s_{\sigma}^m(\vartheta,\theta) d\vartheta\leq \delta_m^{-1} \lambda(A).
	\end{align}
	That is, $S_{\sigma}^{m_0}$ is mixing according to Definition $3.2$ of \citep{le2004stability}.
	
	Consider $M$ can be written as $M=q m_0+r$ for some $r\in \{0, 1,\ldots, m_0-1\}$. Define the empirical measures 
	$$\mu^{(0)}=\frac{1}{J}\sum_{j=1}^{J}\delta_{\Theta_j^r} \qquad \text{and}\qquad \mu^{(k)}=\frac{1}{J}\sum_{j=1}^{J}\delta_{\Theta_j^{km_0+r}}$$
	where $k=1,\ldots,q$.
	 For any bounded measurable function $\breve{g}:\mathbb{\Theta}\rightarrow \mathbb{R}$, we have
	\begin{align*}
	  &\frac{1}{J}\sum_{j=1}^{J}\breve{g}(\Theta_j^{M})-[T_{\sigma}^M g](\breve{g})\\
	  &=\mu^{(q)}(\breve{g})-[T_{\sigma}^M g](\breve{g})\\
	  &=\mu^{(q)}(\breve{g})-[T_{\sigma}^{m_0 q} \mu^{(0)}](\breve{g})+[T_{\sigma}^{m_0 q} \mu^{(0)}](\breve{g})-[T_{\sigma}^{m_0 q} T_{\sigma}^{r} g](\breve{g})\\
	  &=\sum_{i=1}^{q}\Big\{[T_{\sigma}^{m_0(i-1)}\mu^{(q-i+1)}](\breve{g})-[T_{\sigma}^{m_0 i}\mu^{(q-i)}](\breve{g}) \Big\}
	  +[T_{\sigma}^{m_0 q} \mu^{(0)}](\breve{g})-[T_{\sigma}^{m_0 q} T_{\sigma}^{r} g](\breve{g}).
	\end{align*}
	Denote 
	\begin{align*}
		\rho=\sup_{\breve{g}: \|\breve{g}\|_{\infty}=1}\mathbf{E}\left| \mu^{(k)}(\breve{g})-[T_{\sigma}^{m_0} \mu^{(k-1)}](\breve{g})\right|.
	\end{align*}
Then by the definition of $\vertiii{\cdot}_{\K}$ given in \eqref{eqn:vertiii_K_definition} and by Theorem \ref{thm:main_theorem}, we have that  for every $n\geq 0$, $K\in \mathcal{K}$ and $\K \subseteq K$,
	\begin{align}
		\label{eqn:rho_upperbound}
	\rho\leq C_{\alpha} \card(\K)\left[e^{-C_{\beta_1} d(\K,\partial{K})}+\frac{e^{C_{\beta_2} |\mathcal{K}|_{\infty}}}{\sqrt{J}}\right]
\end{align}
where $C_{\alpha}$, $C_{\beta_1}$, and $C_{\beta_2}$ are some positive finite constants.

Let $H(\cdot,\cdot)$ be the Hilbert metric on nonnegative measures 
$$H(\rho,\rho'):=\left\{ \begin{array}{lcl}
	\log\frac{\sup_{A:\rho'(A)>0}\rho(A)/\rho'(A)}{\inf_{A:\rho'(A)>0}\rho(A)/\rho'(A)} && \mbox{if }
	\rho \text{ and } \rho' \text{ are comparable}, \\ 0 && \mbox{if }  \rho=\rho'\equiv 0,\\
	+\infty && \mbox{otherwise.} 
\end{array}\right.$$
Here, the measures $\rho$ and $\rho'$ are said to be comparable if they are both nonzero and there exist constants $0<a\leq b$ such that $a\,\rho'(A)\leq \rho(A)\leq b\,\rho'(A)$ for all measurable subsets $A$.
Since $S_{\sigma}^{m_0}$ is mixing and \eqref{eqn:S_mixing} holds, applying Lemmas $3.4$, $3.5$, and $3.8$ and equation $(7)$ in \citep{le2004stability}, we have 
\begin{align*}
\mathbf{E}\Big |	[T_{\sigma}^{m_0 q} \mu^{(0)}](\breve{g})-[T_{\sigma}^{m_0 q} T_{\sigma}^{r} g](\breve{g})\Big |&\leq  \|\breve{g}\|_{\infty}\mathbf{E}\Big\|T_{\sigma}^{m_0 q} \mu^{(0)}-T_{\sigma}^{m_0 q} T_{\sigma}^{r} g\Big\|\\
&\leq \frac{2\|\breve{g}\|_{\infty}}{\log 3}\mathbf{E}\Big [H(S_{\sigma}^{m_0 q} \mu^{(0)},S_{\sigma}^{m_0 q} T_{\sigma}^{r} g) \Big ]\\
&\leq \frac{2\|\breve{g}\|_{\infty}}{\log 3} \left(\frac{1-\delta_{m_0}^2}{1+\delta_{m_0}^2}\right)^{q-2}\frac{1}{\delta_{m_0}^2}\mathbf{E}\Big\|T_{\sigma}^{m_0 } \mu^{(0)}-T_{\sigma}^{m_0} T_{\sigma}^{r} g\Big\|\\
&\leq \frac{4\|\breve{g}\|_{\infty}}{\log 3} \left(\frac{1-\delta_{m_0}^2}{1+\delta_{m_0}^2}\right)^{q-2}\frac{\rho}{\delta_{m_0}^4}.
\end{align*}
Similarly, we have
\begin{align*}
	\mathbf{E}\Big |	[T_{\sigma}^{m_0} \mu^{(q-1)}](\breve{g})-[T_{\sigma}^{2m_0} \mu^{(q-2)}](\breve{g})\Big |
	&\leq \frac{2\|\breve{g}\|_{\infty}\rho}{\delta_{m_0}^2},
\end{align*}
and for $i=3,\ldots,q$
\begin{align*}
	\mathbf{E}\Big |	[T_{\sigma}^{m_0 (i-1)} \mu^{(q-i+1)}](\breve{g})-[T_{\sigma}^{m_0 i} \mu^{(q-i)}](\breve{g})\Big |
	&\leq \frac{4\|\breve{g}\|_{\infty}}{\log 3} \left(\frac{1-\delta_{m_0}^2}{1+\delta_{m_0}^2}\right)^{i-3}\frac{\rho}{\delta_{m_0}^4}.
\end{align*}
Now plugging in the upper bound of $\rho$ in \eqref{eqn:rho_upperbound} yields
	\begin{align*}
	&\mathbf{E}\Bigg |\frac{1}{J}\sum_{j=1}^{J}\breve{g}(\Theta_j^{M})-[T_{\sigma}^M g](\breve{g})\Bigg |\\
	&\leq C_{\alpha} \|\breve{g}\|_{\infty}\card(\K)\left[e^{-C_{\beta_1} d(\K,\partial{K})}+\frac{e^{C_{\beta_2} |\mathcal{K}|_{\infty}}}{\sqrt{J}}\right] \left( 1 + \frac{2}{\delta_{m_0}^2}+\frac{4}{\log 3}\frac{1}{\delta_{m_0}^4}\sum_{j=0}^{q-2}\left(\frac{1-\delta_{m_0}^2}{1+\delta_{m_0}^2}\right)^j\right)\\
	&\leq \widetilde{C}_{\alpha} \|\breve{g}\|_{\infty}\card(\K)\left[e^{-C_{\beta_1} d(\K,\partial{K})}+\frac{e^{C_{\beta_2} |\mathcal{K}|_{\infty}}}{\sqrt{J}}\right], 
\end{align*}
which complete the proof by Theorem \ref{thm:ionides2015inference1}.
\end{proof}

\newpage
  \section{Original results for $4$ cases}
  \label{Sec:training_result_data}
In Tables \ref{table:parameter_learning_results1}--\ref{table:parameter_learning_results4}, we provide the original 
parameter learning results in terms of log-likelihood for cases $1$-$4$ respectively.
  \begin{table}[hbt!]
  \centering  
  \renewcommand{\arraystretch}{1.5}
  \begin{tabular}{|c|c|c|c|c|c|c|c|}
  \hline
  \multirow{2}{*}{Cities}  &\multirow{2}{*}{Parameters}  & \multicolumn{3}{c|}{IEnKF} & %
  \multicolumn{3}{c|}{IF2} \\
  \cline{3-8}
  & & $\le$ & $\lp$ & $\lb$ & $\le$ & $\lp$ & $\lb$ \\
  \hline
  2 & 8 & -4802  & \bf{-4382}  & \bf{-4382} & \bf{-4733} & -4385 & -4383\\
  4 & 16& -9142 & -8408 & \bf{-8278} & -9116 &-8462 & \bf{-8278}\\
  6 & 24 & -13089 & \bf{-13027} & \bf{-11942} & \bf{-13073} & -15643 & -11948\\
  8 & 32 & -16901 & -70338 & -15304 & -16843 & -71233 & \bf{-15299}\\
  10 & 40 &  -20644 & \bf{-118655} & \bf{-18596} &\bf{-20623}  &-182148 & \bf{-18596}\\
		  12 & 48 & -24277 & \bf{-202817} & \bf{-21829} & -24248 & -248841 & -21834 \\
  14 & 56 & -27761 & -311222 & \bf{-25077} & -27721 &-383853 & -25081\\
  16 & 64 & -30699 & -389299 & \bf{-27756} & \bf{-30667} &-394666 & -27758\\
  18 & 72 & -33562 & -421416 & \bf{-30329} & -33545 & -422739 & -30330\\
  20 & 80 & -36656 & -458302 & \bf{-33279} & -36645  & -536362 & \bf{-33279}\\
  \hline
  \end{tabular}
  \begin{tabular}{|c|c|c|c|c|c|c|c|}
  \hline
  \multirow{2}{*}{Cities}  &\multirow{2}{*}{Parameters}  & \multicolumn{3}{c|}{IBPF} & %
  \multicolumn{3}{c|}{$\theta$} \\
  \cline{3-8}
  & & $\le$ & $\lp$ & $\lb$ & $\le$ & $\lp$ & $\lb$ \\
  \hline
  2 & 8 & \bf{-4733} & -4383 & -4385 & -4812 &  -4393 & -4393\\
  4 & 16& \bf{-9114} & \bf{-8368} & -8283 &-9234 &  -8825 & -8290\\
  6 & 24 & -13092 & -13095 & -11945 & -13267 & -39422& -11962\\
  8 & 32 & \bf{-16842} & \bf{-63041} & -15306 & -16949& -131278& -15318\\
  10 & 40 & -20632 & -147454 & -18606 &-20699 &-225806 &-18953\\
  12 & 48 & \bf{-24226} & -206977 & -21836 & -24294& -311348& -21923\\
  14 & 56 & \bf{-27716} & \bf{-307622} & \bf{-25077} & -27789& -438979 &-25248\\
  16 & 64 & -30682 & \bf{-351668} & -27767 & -30726& -522343& -27978\\
  18 & 72 & \bf{-33540} & \bf{-382098} & -30338 & -33590&-596846 &-30423\\
  20 & 80 & \bf{-36623} & \bf{-435034} & -33293 & -36679& -624310& -33379\\
  \hline
  \end{tabular}
  \caption{Parameter learning results in terms of log-likelihood for case $1$. Three log-likelihood metrics: $\le$ representing the EnKF metric, $\lp$   representing the PF metric, and $\lb$ representing the BPF metric, were applied to the best parameters learned from IEnKF, IF$2$, and IBPF as well as the true parameter $\theta$. The highest log-likelihood values in each metric are highlighted.}
  \label{table:parameter_learning_results1}
  \end{table}
  

\begin{table}[hbt!]
\centering
\renewcommand{\arraystretch}{1.5}
\begin{tabular}{|c|c|c|c|c|c|c|c|}
\hline
\multirow{2}{*}{Cities}  &\multirow{2}{*}{Parameters}  & \multicolumn{3}{c|}{IEnKF} & %
\multicolumn{3}{c|}{IF2} \\
\cline{3-8}
& & $\le$ & $\lp$ & $\lb$ & $\le$ & $\lp$ & $\lb$ \\
\hline
2 & 10 & \bf{-4759} &  -4378 & -4379 & -4803 & \bf{-4368} & -4372 \\
4 & 20& \bf{-9127} & -8408 & -8280 & -9446 & -8355 & -8265\\
6 & 30 & \bf{-13132} & -15388 & -11946 & -14351 & -23058 & -12069 \\
8 & 40 & \bf{-16871} & -59757 & -15303 &-20638  & -127207 & -16566\\
10 & 50 & \bf{-20570} & -132791  & -18584 & -27783 &-184975 & -20324\\
12 & 60 & \bf{-24226} & -209244 & -21841 & -33111 &-252187 & -24180\\
14 & 70 & \bf{-27683} & \bf{-306092} & -25097 & -43236 & -332286  & -32121\\
\hline
\end{tabular}
\begin{tabular}{|c|c|c|c|c|c|c|c|}
\hline
\multirow{2}{*}{Cities}  &\multirow{2}{*}{Parameters}  & \multicolumn{3}{c|}{IBPF} & %
\multicolumn{3}{c|}{$\theta$} \\
\cline{3-8}
& & $\le$ & $\lp$ & $\lb$ & $\le$ & $\lp$ & $\lb$ \\
\hline
2 & 10 & -4846 & -4372 & \bf{-4367} &  -4812 &  -4393 & -4393\\
4 & 20& -9297 & \bf{-8313} & \bf{-8254} & -9234 &  -8825 & -8290\\
6 & 30 & -13348 & \bf{-12774} & \bf{-11921} & -13267 & -39422& -11962\\
8 & 40 & -17197 & \bf{-38177} & \bf{-15269} & -16949& -131278& -15318\\
10 & 50 & -20864 & \bf{-94941} & \bf{-18552} & -20699 &-225806 &-18953\\
12 & 60 & -24571 & \bf{-201864} & \bf{-21782} &  -24294& -311348& -21923\\
14 & 70 & -28082 & -315804  & \bf{-25034} & -27789& -438979 &-25248\\
\hline
\end{tabular}
\caption{Parameter learning results in terms of log-likelihood for case $2$. Three log-likelihood metrics: $\le$ representing the EnKF metric, $\lp$   representing the PF metric, and $\lb$ representing the BPF metric, were applied to the best parameters learned from IEnKF, IF$2$, and IBPF as well as the true parameter $\theta$. The highest log-likelihood values in each metric are highlighted.}
\label{table:parameter_learning_results2}
\end{table}

\begin{table}[hbt!]
\centering
\renewcommand{\arraystretch}{1.5}
\begin{tabular}{|c|c|c|c|c|c|c|c|}
\hline
\multirow{2}{*}{Cities}  &\multirow{2}{*}{Parameters}  & \multicolumn{3}{c|}{IEnKF} & %
\multicolumn{3}{c|}{IF2} \\
\cline{3-8}
& & $\le$ & $\lp$ & $\lb$ & $\le$ & $\lp$ & $\lb$ \\
\hline
2 & 10 & -97124 & -68887  & -70099 & -5133  & \bf{-4368} & -4396\\
4 & 20& -176541 & -128562  & -123847 & \bf{-9540} & \bf{-8314} & -8280\\
6 & 30 & -232078 & -188457  & -177508 &  -13884 &-13044 & -12016\\
8 & 40 & -310994 & -246252  & -228446 & -19075 & -67133 & -15705\\
10 & 50 & -368104 & -305596 & -279035 & -47616 & -127703 & -42846\\
12 & 60 & -435576 & -369515 & -337833 & -57865 & -197042 & -59511\\
\hline
\end{tabular}
\begin{tabular}{|c|c|c|c|c|c|c|c|}
\hline
\multirow{2}{*}{Cities}  &\multirow{2}{*}{Parameters}  & \multicolumn{3}{c|}{IBPF} & %
\multicolumn{3}{c|}{$\theta$} \\
\cline{3-8}
& & $\le$ & $\lp$ & $\lb$ & $\le$ & $\lp$ & $\lb$ \\
\hline
2 & 10 & \bf{-5054} & -4384 & \bf{-4368} &  -4812 &  -4393 & -4393\\
4 & 20& -9633 & -8348 & \bf{-8251} & -9234 &  -8825 & -8290\\
6 & 30 & \bf{-13738} & \bf{-12506} & \bf{-11918} & -13267 & -39422& -11962\\
8 & 40 & \bf{-17600} & \bf{-20904} & \bf{-15276} & -16949& -131278& -15318\\
10 & 50 & \bf{-21295} & \bf{-58647} & \bf{-18570} & -20699 &-225806 &-18953\\
12 & 60 & \bf{-24931} & \bf{-128163}  & \bf{-21814} &  -24294& -311348& -21923\\
\hline
\end{tabular}
\caption{Parameter learning results in terms of log-likelihood for case $3$. Three log-likelihood metrics: $\le$ representing the EnKF metric, $\lp$   representing the PF metric, and $\lb$ representing the BPF metric, were applied to the best parameters learned from IEnKF, IF$2$, and IBPF as well as the true parameter $\theta$. The highest log-likelihood values in each metric are highlighted.}
\label{table:parameter_learning_results3}
\end{table}

\begin{table}[hbt!]
	\centering
	\renewcommand{\arraystretch}{1.5}
	\begin{tabular}{|c|c|c|c|c|c|c|c|}
		\hline
		\multirow{2}{*}{Cities}  &\multirow{2}{*}{Parameters}  & \multicolumn{3}{c|}{IEnKF} & %
		\multicolumn{3}{c|}{IF2} \\
		\cline{3-8}
		& & $\le$ & $\lp$ & $\lb$ & $\le$ & $\lp$ & $\lb$ \\
		\hline
		2 & 14 & Failed\; & Failed\; & Failed\; & \bf{-4904} &\bf{-4358} & -4361\\
		4 & 28& Failed\; & Failed\; & Failed\; & -9979 & -8524 & -8416\\
		6 & 42 & Failed\; & Failed\; & Failed\; & -19795 &-56072 & -13550\\
		8 & 56 & Failed\; & Failed\; & Failed\; & -32960 & -111176 & -19071\\
		10 & 70 & Failed\; & Failed\; & Failed\; &-34320  & -193060 & -23217\\
		\hline
	\end{tabular}
	\begin{tabular}{|c|c|c|c|c|c|c|c|}
		\hline
		\multirow{2}{*}{Cities}  &\multirow{2}{*}{Parameters}  & \multicolumn{3}{c|}{IBPF} & %
		\multicolumn{3}{c|}{$\theta$} \\
		\cline{3-8}
		& & $\le$ & $\lp$ & $\lb$ & $\le$ & $\lp$ & $\lb$ \\
		\hline
		2 & 14 & -4928 & -4374 & \bf{-4358} &  -4812 &  -4393 & -4393\\
		4 & 28& \bf{-9169} & \bf{-8267} & \bf{-8231} & -9234 &  -8825 & -8290\\
		6 & 42 & \bf{-13296} & \bf{-12340} & \bf{-11893} & -13267 & -39422& -11962\\
		8 & 56 & \bf{-17065} & \bf{-17407} & \bf{-15248} & -16949& -131278& -15318\\
		10 & 70 & \bf{-20788} & \bf{-24892} & \bf{-18555} & -20699 &-225806 &-18953\\
		\hline
	\end{tabular}
	\caption{Parameter learning results in terms of log-likelihood for case $4$. Three log-likelihood metrics: $\le$ representing the EnKF metric, $\lp$   representing the PF metric, and $\lb$ representing the BPF metric, were applied to the best parameters learned from IEnKF, IF$2$, and IBPF as well as the true parameter $\theta$. The highest log-likelihood values in each metric are highlighted.}
\label{table:parameter_learning_results4}
\end{table}

%

\acks{This research project was partially supported by NSF grant DMS-$1761603$. Ning Ning's research was also partially supported by the Seed Fund Grant Award at Texas A\&M University.  The authors would like to thank three anonymous reviewers and the Action Editor for their very constructive comments and efforts on this lengthy work, which greatly improved the quality of this paper.}


\vskip 0.2in
\bibliography{sample}

\end{document}